\newtheorem{theorem}{Theorem}[section]
\newtheorem*{theorem*}{Theorem}
\newtheorem*{proposition*}{Proposition}
\newtheorem{lemma}[theorem]{Lemma}
\newtheorem*{lemma*}{Lemma}
\newtheorem{corollary}[theorem]{Corollary}
\newtheorem*{conjecture*}{Conjecture}
\newtheorem{fact}[theorem]{Fact}
\newtheorem*{fact*}{Fact}
\newtheorem*{hypothesis*}{Hypothesis}
\theoremstyle{definition}
\newtheorem{definition}[theorem]{Definition}
\newtheorem*{definition*}{Definition}
\newtheorem{algorithm}[theorem]{Algorithm}
\theoremstyle{remark}
\newtheorem*{claim*}{Claim}
\newtheorem{remark}[theorem]{Remark}
\newtheorem*{remark*}{Remark}
\newtheorem*{observation*}{Observation}
\let\mathbb\varmathbb
\crefname{lemma}{Lemma}{Lemmas}
\crefname{fact}{Fact}{Facts}
\crefname{theorem}{Theorem}{Theorems}
\crefname{corollary}{Corollary}{Corollaries}
\crefname{claim}{Claim}{Claims}
\crefname{example}{Example}{Examples}
\crefname{algorithm}{Algorithm}{Algorithms}
\crefname{problem}{Problem}{Problems}
\crefname{definition}{Definition}{Definitions}
\newcommand{\Paren}[1]{\left(#1\right)}
\newcommand{\Brac}[1]{\left[#1\right]}
\newcommand{\Abs}[1]{\left\lvert#1\right\rvert}
\newcommand{\Norm}[1]{\left\lVert#1\right\rVert}
\newcommand{\iprod}[1]{\langle#1\rangle}
\newcommand{\Esymb}{\mathbb{E}}
\newcommand{\Psymb}{\mathbb{P}}
\DeclareMathOperator*{\E}{\Esymb}
\DeclareMathOperator*{\ProbOp}{\Psymb}
\renewcommand{\Pr}{\ProbOp}
\newcommand{\tensor}{\otimes}
\newcommand{\defeq}{\stackrel{\mathrm{def}}=}
\newcommand{\mper}{\,.}
\newcommand\bdot\bullet
\DeclareMathOperator{\Ind}{\mathbf 1}
\DeclareMathOperator{\Tr}{Tr}
\DeclareMathOperator{\poly}{poly}
\newcommand{\N}{\mathbb N}
\newcommand{\R}{\mathbb R}
\newcommand{\cD}{\mathcal D}
\newcommand{\cN}{\mathcal N}
\renewcommand{\leq}{\leqslant}
\renewcommand{\le}{\leqslant}
\renewcommand{\geq}{\geqslant}
\renewcommand{\ge}{\geqslant}
\let\epsilon=\varepsilon
\numberwithin{equation}{section}
\newcommand\MYcurrentlabel{xxx}
\newcommand{\MYstore}[2]{%
  \global\expandafter \def \csname MYMEMORY #1 \endcsname{#2}%
}
\newcommand{\MYload}[1]{%
  \csname MYMEMORY #1 \endcsname%
}
\newcommand{\MYnewlabel}[1]{%
  \renewcommand\MYcurrentlabel{#1}%
  \MYoldlabel{#1}%
}
\newcommand{\MYdummylabel}[1]{}
\newcommand{\torestate}[1]{%
  % overwrite label command
  \let\MYoldlabel\label%
  \let\label\MYnewlabel%
  #1%
  \MYstore{\MYcurrentlabel}{#1}%
  % restore old label command
  \let\label\MYoldlabel%
}
\newcommand{\restatetheorem}[1]{%
  % overwrite label command with dummy
  \let\MYoldlabel\label
  \let\label\MYdummylabel
  \begin{theorem*}[Restatement of \cref{#1}]
    \MYload{#1}
  \end{theorem*}
  \let\label\MYoldlabel
}
\newcommand{\restatelemma}[1]{%
  % overwrite label command with dummy
  \let\MYoldlabel\label
  \let\label\MYdummylabel
  \begin{lemma*}[Restatement of \cref{#1}]
    \MYload{#1}
  \end{lemma*}
  \let\label\MYoldlabel
}
\newcommand{\restateprop}[1]{%
  % overwrite label command with dummy
  \let\MYoldlabel\label
  \let\label\MYdummylabel
  \begin{proposition*}[Restatement of \cref{#1}]
    \MYload{#1}
  \end{proposition*}
  \let\label\MYoldlabel
}
\newcommand{\restatefact}[1]{%
  % overwrite label command with dummy
  \let\MYoldlabel\label
  \let\label\MYdummylabel
  \begin{fact*}[Restatement of \prettyref{#1}]
    \MYload{#1}
  \end{fact*}
  \let\label\MYoldlabel
}
\newcommand{\restate}[1]{%
  % overwrite label command with dummy
  \let\MYoldlabel\label
  \let\label\MYdummylabel
  \MYload{#1}
  \let\label\MYoldlabel
}
\newcommand{\e}{\epsilon}
\newcommand{\eps}{\epsilon}
\newcommand*{\Sym}{\mathrm{sym}}
\newcommand*{\Pisym}{\Pi_{\Sym}}
\newcommand*{\Id}{\mathrm{Id}}
\newcommand*{\ot}{\otimes}
\newcommand*{\T}{\mathbf{T}}
\newcommand*{\loweredwidetilde}[1]{\mathpalette{\loweredwidetildehelper}{#1}}
\newcommand*{\loweredwidetildehelper}[2]{\hbox{\csname dimen@\endcsname\accentfontxheight#1%
  \accentfontxheight#11.25\csname dimen@\endcsname
  $\csname m@th\endcsname#1\widetilde{#2}$%
  \accentfontxheight#1\csname dimen@\endcsname
  }%
}
\newcommand*{\accentfontxheight}[1]{\fontdimen5\ifx#1\displaystyle \textfont \else\ifx#1\textstyle \textfont \else\ifx#1\scriptstyle \scriptfont \else \scriptscriptfont \fi\fi\fi3
}
\newcommand*{\Qish}{\loweredwidetilde{Q}}
\newcommand*{\Tish}{\loweredwidetilde{T}}
\newcommand*{\TTish}{\loweredwidetilde{\mathbf{T}}}
\newcommand*{\Sish}{\loweredwidetilde{S}}
\newcommand*{\Wish}{\loweredwidetilde{W}}
\newcommand*{\Piish}{\loweredwidetilde{\Pi}}
\newcommand*{\smin}{\sigma_{\min}}
\DeclareMathOperator{\diag}{diag}
\DeclareMathOperator{\img}{img}
\DeclareMathOperator{\Span}{Span}
\DeclareMathOperator{\topn}{top}
\newcommand*{\transpose}[1]{{#1}{}^{\mkern-1.5mu\mathsf{T}}}
\newcommand*{\dyad}[1]{#1#1{}^{\mkern-1.5mu\mathsf{T}}}
\title{
  A Robust Spectral Algorithm for Overcomplete Tensor Decomposition
}
\author{
    Samuel B. Hopkins\thanks{Cornell University. \texttt{samhop@cs.cornell.edu}. Supported by NSF awards 1350196 \& 1408673, and a Microsoft PhD Fellowship.}
      \and
    Tselil Schramm\thanks{MIT and Harvard. \texttt{tselil@mit.edu}. Supported by NSF awards CCF 1565264 \& CNS 1618026.}
      \and
    Jonathan Shi\thanks{Cornell University. \texttt{jshi@cs.cornell.edu}. Supported by NSF awards 1350196 \& 1408673.}
}
\begin{document}

%\pagestyle{empty}

% MAKE TITLE

\maketitle
\thispagestyle{empty} % seems to be required here to avoid page number on first page

% ABSTRACT

\begin{abstract}

    We give a spectral algorithm for decomposing overcomplete order-4 tensors, so long as their components satisfy an algebraic non-degeneracy condition that holds for nearly all (all but an algebraic set of measure $0$) tensors over $(\R^d)^{\otimes 4}$ with rank $n \le d^2$. Our algorithm is robust to adversarial perturbations of bounded spectral norm.

Our algorithm is inspired by one which uses the sum-of-squares semidefinite programming hierarchy (Ma, Shi, and Steurer STOC'16), and we achieve comparable robustness and overcompleteness guarantees under similar algebraic assumptions. However, our algorithm avoids semidefinite programming and may be implemented as a series of basic linear-algebraic operations.
We consequently obtain a much faster running time than semidefinite programming methods: our algorithm runs in time $\tilde O(n^2d^3) \le \tilde O(d^7)$, which is subquadratic in the input size $d^4$ (where we have suppressed factors related to the condition number of the input tensor).

\end{abstract}

\clearpage

% TOC

  % assumes microtype
  \microtypesetup{protrusion=false}\thispagestyle{empty}
  \begin{spacing}{1.05}
  \tableofcontents{}\thispagestyle{empty}
  \end{spacing}
  \microtypesetup{protrusion=true}
  \addtocontents{toc}{\protect{\vspace{-0.4\baselineskip}}}\addtocontents{toc}{\protect\enlargethispage{0.3\baselineskip}}

\addtocontents{toc}{\protect\thispagestyle{empty}}
\thispagestyle{empty}

\clearpage

\setcounter{page}{1}

% SECTION

\section{Introduction}
Tensors are higher-order analogues of matrices: multidimensional arrays of numbers.
They have broad expressive power: tensors may represent higher-order moments of a probability distribution \cite{DBLP:journals/jmlr/AnandkumarGHKT14}, they are natural representations of cubic, quartic, and higher-degree polynomials \cite{DBLP:conf/nips/RichardM14,DBLP:conf/colt/HopkinsSS15}, and they appear whenever data is multimodal (e.g. in medical studies, where many factors are measured) \cite{medicaltensor, neurotensor,chemtensor}.
Due to these reasons, in recent decades tensors have emerged as fundamental structures in machine learning and signal processing.

The notion of \emph{rank} extends from matrices to tensors: a rank-$1$ tensor in $(\R^d)^{\ot k}$ is a tensor that can be written as a tensor product $u^{(1)}  \ot \cdots \ot u^{(k)}$ of vectors $u^{(1)},\ldots,u^{(k)} \in \R^d$.
Any tensor $\T \in (\R^d)^{\otimes k}$ can be expressed as a sum of rank-$1$ tensors, and the {\em rank} of $\T$ is the minimum number of terms needed in such a sum.
As is the case for matrices, we are often interested in tensors of low rank: low-rank structure in tensors often carries interpretable meaning about underlying data sets or probability distributions, and the tensors that arise in many applications are low-rank \cite{DBLP:journals/jmlr/AnandkumarGHKT14}.

Tensor decomposition is the natural inverse problem in the context of tensor rank:
given a $d$-dimensional symmetric $k$-tensor $\T \in (\R^d)^{\tensor k}$ of the form
\[
  \T = \sum_{i \leq n} a_i^{\tensor k} + \mathbf{E},
\]
for vectors $a_1,\ldots,a_n \in \R^d$ and an (optional) error tensor $\mathbf{E} \in (\R^d)^{\tensor k}$, we are asked to output vectors $b_1,\ldots,b_n$ as close as possible to $a_1,\ldots,a_n$ (e.g. minimizing the Euclidean distance $\|b_i - a_i\|$).
The goal is to accomplish this with an algorithm that is as efficient as possible, under the mildest-possible assumptions on $k$,$a_1,\ldots,a_n$, and $\mathbf{E}$.

While tensor rank decomposition is a generalization of rank decomposition for matrices, decomposition for tensors of order $k \geq 3$ differs from the matrix case in several key ways.
\begin{enumerate}
\item (Uniqueness) Under mild assumptions on the vectors $a_1,\ldots,a_n$, tensor decompositions are unique (up to permutations of $[n]$), while matrix decompositions are often unique only up to unitary transformation.

\item (Overcompleteness) Tensor decompositions often remain unique even when the number of factors $n$ is larger than the ambient dimension $d$ (up to $n = O(d^{k-1})$), while a $d \times d$ matrix can have only $d$ eigenvectors or $2d$ singular vectors.
\end{enumerate}
These features make tensor decompositions suitable for many applications where matrix factorizations are insufficient.
However, there is another major difference:
\begin{enumerate}
  \setcounter{enumi}{2}
  \item (Computational Intractability) While many matrix decompositions --- eigendecompositions, singular value decompositions, $LU$-factorizations, and so on --- can be found in polynomial time, tensor decomposition is NP-hard in general \cite{DBLP:journals/jacm/HillarL13}.
\end{enumerate}

In spite of the NP-hardness of general tensor decomposition, many special cases turn out to admit polynomial-time algorithms.
A classical algorithm, often called \emph{Jennrich's algorithm}, recovers the components $a_1,\ldots,a_n$ from $\T$ when they are linearly independent (which requires $n \leq d$) and $\mathbf{E} = 0$ using simultaneous diagonalization \cite{harshman1970foundations, de1996blind}.

More sophisticated algorithms improve on Jennrich's in their tolerance to overcompleteness (and the resulting lack of linear independence) and robustness to nontrivial error tensors $\mathbf{E}$.
The literature now contains a wide variety of techniques for tensor decomposition: the major players are iterative methods (tensor power iteration, stochastic gradient descent, and alternating minimization), spectral algorithms, and convex programs.
Convex programs, and in particular the \emph{sum-of-squares} semidefinite programming hierarchy (SoS), require the mildest assumptions on $k,a_1,\ldots,a_n,\mathbf{E}$ among known polynomial-time algorithms \cite{DBLP:conf/focs/MaSS16}.
In pushing the boundaries of what is known to be achievable in polynomial time, SoS-based algorithms have been crucial.
However, the running times of these algorithms are large polynomials in the input, making them utterly impractical for applications.

The main contribution of this work is a tensor decomposition algorithm whose robustness to errors and tolerance for overcompleteness are similar to those of the SoS-based algorithms, but with \emph{subquadratic running time}.
Other algorithms with comparable running times either require higher-order tensors,\footnote{Higher-order tensors are costly because they are larger objects, and for learning applications they often require a polynomial increase in sample complexity.} are not robust in that they require the error $\mathbf{E} = 0$ or $\mathbf{E}$ exponentially small, or require linear independence of the components $a_1,\ldots,a_n$ and hence $n \leq d$.\footnote{There are also existing robust algorithms which tolerate some overcompleteness when $a_1,\ldots,a_n$ are assumed to be random; in this paper we study \emph{generic} $a_1,\ldots,a_n$, which is a much more challenging setting than random $a_1,\ldots,a_n$ \cite{DBLP:journals/corr/AnandkumarGJ14b, DBLP:conf/stoc/HopkinsSSS16}.}

Our algorithm is comparatively simple, and can be implemented with a small number of dense matrix and matrix-vector multiplication operations, which are fast not only asymptotically but also in practice.

Concretely, we study tensor decomposition of overcomplete $4$-tensors under algebraic nondegeneracy conditions on the tensor components $a_1,\ldots,a_n$.
Algebraic conditions like ours are the mildest type of assumption on $a_1,\ldots,a_n$ known to lead to polynomial time algorithms -- our algorithm can decompose all but a measure-zero set of $4$-tensors of rank $n \ll d^2$, and in particular we make no assumption that the components $a_1,\ldots,a_n$ are random.\footnote{Although decompositions of $4$-th order tensors can remain unique up to $n \approx d^3$, no polynomial-time algorithms are known which successfully decompose tensors of overcompleteness $n \gg d^2$.}

When $n \ll  d^2$, our algorithm approximately recovers a $1-o(1)$ fraction of $a_1,\ldots,a_n$ (up to their signs) from $T = \sum_{i \leq n} a_i^{\tensor 4} + E$, so long as the spectral norm $\|E \|$ is (significantly) less than the minimum singular value of a certain matrix associated to the $\{a_i\}$.
(In particular, nonsingularity of this matrix is our nondegeneracy condition on $a_1,\ldots,a_n$.)
The algorithm requires time $\tilde O(n^2 d^3) \leq O(d^7)$, which is subquadratic in the input size $d^4$.

\paragraph{Robustness, Overcompleteness, and Applications to Machine Learning}
Tensor decomposition is a common primitive in algorithms for statistical inference that leverage the \emph{method of moments} to learn parameters of latent variable models.
Examples of such algorithms exist for independent component analysis / blind source separation \cite{de2007fourth}, dictionary learning \cite{DBLP:conf/stoc/BarakKS15, DBLP:conf/focs/MaSS16, DBLP:conf/colt/SchrammS17}, overlapping community detection \cite{DBLP:conf/colt/AnandkumarGHK13, hopkins2017efficient}, mixtures of Gaussians \cite{MR3388256-Ge15}, and more.

In these applications, we receive samples $x \in \R^d$ from a model distribution $\cD(\rho)$ that is a function of parameters $\rho$.
The goal is to estimate $\rho$ using the samples.
The method-of-moments strategy is to construct the third- or fourth-order moment tensor $\E x^{\ot k}$ ($k=3,4$) from samples whose expectation $\E x^{\tensor k} = \sum_{i \leq n} a_i^{\tensor k}$ is a low rank tensor with components $a_1,\ldots,a_n$, from which the model parameters $\rho$ can be deduced.\footnote{Any constant $k$, rather than just $k=3,4$, may lead to polynomial-time learning algorithms, but the cost is typically gigantic polynomial sample complexity and running time, scaling like $d^k$, to estimate and store a $k$-th order tensor.}
Since $\E x^{\tensor k}$ is estimated using samples, the tensor decomposition algorithm used to extract $a_1,\ldots,a_n$ from $\E x^{\tensor k}$ must be {\em robust} to error from sampling.

The sample complexity of the resulting algorithm depends directly on the magnitude of errors tolerated by the decomposition algorithm.
In addition, the greater general error-robustness of our result suggests better tolerance of model misspecification error.

Some model classes give rise to {\em overcomplete} tensors; roughly speaking, this occurs when the number of parameters (the size of the description of $\rho$) far exceeds $d^2$, where $d$ is the ambient dimension.
Typically, in such cases, $\rho$ consists of a collection of vectors $a_1,\ldots,a_n \in \R^d$ with $n \gg d$.
Such overcomplete models are widely used; for example, in the {\em dictionary learning} setting, we are given a data set $S$ and are asked to find a \emph{sparse representation} of $S$.
This is a powerful preprocessing tool, and the resulting representations are more robust to perturbations, but assembling a truly sparse, effective dictionary often requires representing $d$-dimensional data in a basis with $n \gg d$ elements \cite{Lewicki,Elad10}.
Recent works also relate the problem of learning neural networks with good generalization error to tensor decomposition, showing a connection between overcompleteness and the width of the network \cite{DBLP:journals/corr/abs-1802-07301}.\footnote{Strictly speaking, this work shows a reduction {\em from} tensor decomposition {\em to} learning neural nets, but the connection between width and overcompleteness is direct regardless.}

Using tensor decomposition in such settings requires algorithms with practical running times, error robustness, and tolerance to overcompleteness.
The strongest polynomial-time guarantees for overcomplete dictionary learning and similar models currently rely on overcomplete tensor decomposition via the SoS method \cite{DBLP:conf/focs/MaSS16}; our work is an important step towards giving lightweight, spectral algorithms for such problems.

\subsection{Our Results}

Our contribution is a robust, lightweight spectral algorithm for tensor decomposition in the overcomplete regime.
We require that the components satisfy an algebraic non-degeneracy assumption satisfied by all but a measure-$0$ set of inputs.
At a high level, we require that a certain matrix associated with the components of the tensor have full rank.
Though the assumption may at first seem complicated, we give it formally here:
\begin{definition}
\label[definition]{def:intro-Pi23perp}
Let $\Pi_{2,3}^{\perp}$ be the projector to the orthogonal complement of the subspace of $(\R^d)^{\ot 3}$ that is symmetric in its latter two tensor modes.
Equivalently, $\Pi_{2,3}^{\perp} = \tfrac{1}{2}(\Id - P_{2,3})$, where $P_{2,3}$ is the linear operator that interchanges the second and third modes of $(\R^d)^{\ot 3}$.
\end{definition}
\begin{definition}
\label[definition]{def:intro-whitening}
Let $\Pi_{\img(M)}$ denote the projector to the column space of the matrix $M$.
Equivalently, $\Pi_{\img(M)} = (MM^{\top})^{-1/2}M = M(M^{\top}M)^{-1/2}$, where $(MM^{\top})^{-1/2}$ is the \emph{whitening} transform of $M$ and is equal to the Moore-Penrose pseudoinverse of $(MM^{\top})^{1/2}$.
\end{definition}
\begin{definition}
\label[definition]{def:kappa}
   Vectors $a_1,\ldots,a_n \in \R^d$ are {\em $\kappa$-non-degenerate} if the matrix $K(a_1,\ldots,a_n)$, defined below, has minimum singular value at least $\kappa > 0$.
    If $\kappa = 0$, we say that the $\{a_i\}$ are {\em degenerate}.

    The matrix $K(a_1,\ldots,a_n)$ is given by choosing for each $i\in[n]$ a matrix $B_i$ whose columns form a basis for the orthogonal complement of $a_i$ in $\R^d$, assembling the $d^3 \times n(d-1)$ matrix $H$ whose rows are given by $a_i\ot a_i \ot B_i^{(j)}$ as
    \[
	H = \left[\begin{array}{c} a_1^\top \ot a_1^\top \ot B_1^\top \\ \vdots \\ a_n^\top \ot a_n^\top \ot B_n^\top \end{array}\right]
 \]
    and letting 
	$K(a_1,\ldots,a_n) = \Pi_{2,3}^{\perp}\Pi_{\img(H^\top)}$.
\end{definition}

We note that when $n \ll d^2$ then all but a measure-zero set of unit $(a_1,\ldots,a_n) \in \R^{dn}$ satisfy the condition that $\kappa > 0$.
We expect also that for $n \ll d^2$, if $a_1,\ldots,a_n \in \R^d$ are independent random unit vectors then $\kappa \geq \Omega(1)$ -- we provide simulations in support of this in \cref{sec:sims}.\footnote{Furthermore, standard techniques in random matrix theory prove that when $a_1,\ldots,a_n$ are random then matrices closely related to $K(a_1,\ldots,a_n)$ are well-conditioned; for instance this holds (roughly speaking) if $(H_1^\top H)^{-1/2}$ and $(H_2^\top H)^{-1/2}$ are removed. However, inverses and pseudoinverses of random matrices, especially those with dependent entries like ours, are infamously challenging to analyze -- we leave this challenge to future work. See \cref{sec:sims} for details.}

Some previous works on tensor decomposition under algebraic nondegeneracy assumptions also give smoothed analyses of nondegeneracy, showing that small random perturbations of arbitrary vectors are $\tfrac 1 {\poly(d)}$-well-conditioned (for differing notions of well-conditioned-ness) \cite{DBLP:conf/stoc/BhaskaraCMV14, DBLP:conf/focs/MaSS16}.
We expect that a similar smoothed analysis is possible for $\kappa$-non-degeneracy, though because of the specific form of the matrix $K(a_1,\ldots,a_n)$ it does not follow immediately from known results.
We defer this technical challenge to future work.

Given this non-degeneracy condition, we robustly decompose the input tensor in time $\tilde{O}(\frac{n^{2}d^3}{\kappa})$, where we have suppressed factors depending on the smallest singular value of a matrix flattening of our tensor.
\begin{theorem*}[Special case of \cref{thm:main}]
   Suppose that $d \le n \le d^2$, and that $a_1,\ldots,a_n \in \R^d$ are $\kappa$-non-degenerate unit vectors for $\kappa > 0$, and suppose that $\T$ is their $4$-tensor perturbed by noise,
    $\T \in (\R^d)^{\ot 4}$ such that $\T = \sum_{i \in [n]} a_i^{\ot 4} + E$, where $E$ is a perturbation such that $\|E\| \le \frac{\epsilon}{\log d}$ in its $d^2 \times d^2$ reshaping.
    Suppose further that when reshaped to a $d^2\times d^2$ matrix, $\|T^{-1}\| \le O(1)$ and that $\|\sum_{i \in [n]} (a_i^{\ot 3})(a_i^{\ot 3})^\top\| \le O(1)$.

     There exists an algorithm \textsc{decompose} with running time $\tilde O(n^2 d^3 \kappa^{-1})$, so that for every such $\T$ there exists a subset $S \subseteq \{a_1,\ldots,a_n\}$ of size $|S| \geq 0.99n$, such that $\textsc{decompose}(\T)$ with high probability returns a set of $t = \tilde O(n)$ unit vectors $b_1,\ldots,b_t$ where every $a_i \in S$ is close to some $b_j$, and each $b_j$ is close to some $a_i \in S$:
    \[
	\forall a_i \in S, \,\, \max_{j} |\iprod{b_j,a_i}| \ge 1 - O\left(\frac{\epsilon}{\kappa^2}\right)^{1/8},\qquad \text{and} \qquad
	\forall j \in [t], \,\, \max_{a_i \in S} |\iprod{b_j,a_i}| \ge 1-O\left(\frac{\epsilon}{\kappa^2}\right)^{1/8}.
    \]

    Furthermore, if $a_1,\ldots,a_n$ are random unit vectors, then with high probability they satisfy the conditions of this theorem with $\kappa = \Omega(1)$.
\end{theorem*}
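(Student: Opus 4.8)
The plan is to exhibit an explicit spectral algorithm and then track how the perturbation $E$ propagates through its (few) linear-algebraic steps, with $\kappa$-non-degeneracy and the two conditioning hypotheses on $\T$ keeping every intermediate object well-conditioned. The conceptual core is that the matrix $K(a_1,\ldots,a_n)$ is exactly what certifies that $\Span\{a_i^{\ot3}\}$ can be read off from $\Span\{a_i^{\ot2}\}$: writing $V=\Span\{a_i^{\ot2}\}\sse(\R^d)^{\ot2}$, one has $V\ot\R^d=\Span\{a_i^{\ot2}\ot v:v\in\R^d\}=\Span\{a_i^{\ot3}\}+\img(H^\top)$, and since every $a_i^{\ot3}$ is symmetric in its last two modes while $\Norm{\Pi_{2,3}^\perp h}\ge\kappa\Norm h$ for every $h\in\img(H^\top)$ (this is precisely $\sigma_{\min}(K)\ge\kappa$), the mode-$(2,3)$-symmetric subspace of $V\ot\R^d$ equals $\Span\{a_i^{\ot3}\}$, robustly so: any $w\in V\ot\R^d$ with $\Norm{\Pi_{2,3}^\perp w}\le\delta$ lies within $\delta/\kappa$ of $\Span\{a_i^{\ot3}\}$.

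Concretely the algorithm would (i) reshape $\T$ to the $d^2\times d^2$ matrix $T$ and extract its rank-$n$ column space $V$ and whitening $W=T^{-1/2}$ (in the noiseless case $\tilde a_i:=Wa_i^{\ot2}$ is an \emph{orthonormal} basis of $V$, since $\sum_i\tilde a_i\tilde a_i^\top=WTW=\Pi_V$ and an $n$-vector tight frame of an $n$-dimensional space is orthonormal); (ii) inside $V\ot\R^d\sse(\R^d)^{\ot3}$ compute the $n$-dimensional subspace $U$ on which $\Pi_{2,3}^\perp$ is smallest, so $U\approx\Span\{a_i^{\ot3}\}$; (iii) set $U'=(W\ot\Id_d)U\approx\Span\{\phi_i\}$ with $\phi_i:=\tilde a_i\ot a_i$, which is again \emph{orthonormal} because $\iprod{\tilde a_i\ot a_i,\tilde a_j\ot a_j}=\iprod{\tilde a_i,\tilde a_j}\iprod{a_i,a_j}=\delta_{ij}$, and form $\Pi_{U'}=\sum_i\phi_i\phi_i^\top$; (iv) for $\tilde O(n)$ independent standard Gaussians $g\in\R^d$, form the $d^2\times d^2$ matrix $M_g:=(\Id_{d^2}\ot g^\top)\,\Pi_{U'}\,(\Id_{d^2}\ot g)=\sum_i\iprod{g,a_i}^2\,\tilde a_i\tilde a_i^\top$, take its top eigenvector $\psi$ (the top gap $\iprod{g,a_{i^\star}}^2-\max_{i\ne i^\star}\iprod{g,a_i}^2$ is $\Omega(1)$ with constant probability, being the extreme gap among $n$ i.i.d.\ $\chi^2_1$'s), and return $b:=$ top singular vector of the $d\times d$ reshaping of $T^{1/2}\psi\approx a_{i^\star}^{\ot2}$. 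This is organized so that after the one-time preprocessing (i)--(iii) each Gaussian round costs $\tilde O(nd^3)$ (contract the $n$-column factor of $\Pi_{U'}$ with $g$, then power-iterate), for $\tilde O(n^2d^3)$ total; the preprocessing uses randomized/Krylov methods exploiting the rank-$n$ structure.

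For the analysis I would first verify exact correctness when $E=0$ — the two frame-to-basis facts, the identity for $M_g$, and a coupon-collector argument showing that over $\tilde O(n)$ rounds at least a $0.99$ fraction of indices occur as the argmax $i^\star$, so a fixed good set $S$ with $\abs S\ge0.99n$ is recovered. I would then make each step quantitatively stable: $\Norm{T^{-1}}\le O(1)$ forces an $\Omega(1)$ eigengap at the $n$-th eigenvalue of $T$ and makes $t\mapsto t^{-1/2}$ Lipschitz on the relevant interval, so $\Norm E\le\epsilon/\log d$ moves $V,W$ by $O(\epsilon)$ (Davis--Kahan); $\kappa$-non-degeneracy upgrades step (ii)'s inclusion to ``$U$ is $O(\epsilon/\kappa)$-close to $\Span\{a_i^{\ot3}\}$''; $\Norm{\sum_i(a_i^{\ot3})(a_i^{\ot3})^\top}\le O(1)$ (with the earlier bounds) controls the conditioning of $U'$, $\Pi_{U'}$, and the $M_g$; and Davis--Kahan on $M_g$ (with its $\Omega(1)$ top gap) turns an $O(\epsilon/\kappa^2)$-size perturbation of $M_g$ into an $O(\epsilon/\kappa^2)$ error in $\psi$, which after unwhitening, reshaping, and extracting the top singular vector yields $\max_j\abs{\iprod{b_j,a_i}}\ge1-O(\epsilon/\kappa^2)^{1/8}$ for $a_i\in S$; the reverse containment holds since every returned $b$ is forced $O(\epsilon/\kappa^2)^{1/8}$-close to some $a_i\in S$. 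I expect the real difficulty here to be the noise bookkeeping through the two pseudoinverse square roots — the $T^{-1/2}$ of step (i) and the whitening implicit in $\Pi_{\img(H^\top)}$ underlying step (ii) — since pseudoinverses are discontinuous at rank drops: one must replace every ``intersection'' and ``inverse'' by a smallest-singular-value truncation at a threshold set jointly by $\kappa$ and $\Norm{T^{-1}}$ and check these thresholds are mutually consistent, which is also where the $(\epsilon/\kappa^2)^{1/8}$ loss accrues.

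For the final clause on random unit vectors (valid for $n$ up to a constant fraction of $\binom{d+1}{2}$, the regime where the $a_i^{\ot2}$ stay linearly independent), I would check the three hypotheses one at a time. For $\Norm{\sum_i(a_i^{\ot3})(a_i^{\ot3})^\top}\le O(1)$: each summand is rank one of unit norm, a Wick-type computation over the sphere gives $\Norm{\E[(a^{\ot3})(a^{\ot3})^\top]}=\Theta(1/d^2)$, so the matrix mean has norm $\mu_{\max}=\Theta(n/d^2)=O(1)$, and matrix concentration (with a chaining refinement when $n=\Theta(d^2)$ to beat the ambient-dimension factor) keeps the sum at $O(1)$ with probability $1-d^{-\omega(1)}$. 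For $\Norm{T^{-1}}\le O(1)$: the nonzero spectrum of $T$ equals that of the $n\times n$ Gram matrix $G$ with $G_{ij}=\iprod{a_i,a_j}^2$, $\E G=(1-\tfrac1d)\Id+\tfrac1d J$ has $\lambda_{\min}=1-\tfrac1d$, and $\Norm{G-\E G}=O(\sqrt n/d)<1$ once $n\le(1-c)\binom{d+1}{2}$, so $\lambda_{\min}(G)\ge\Omega(1)$. The crux — and the step I expect to be the genuine obstacle, consistent with the paper's own hedging — is $\kappa\ge\Omega(1)$: lower bounding $\Norm{\Pi_{2,3}^\perp w}$ over unit $w\in\img(H^\top)$ requires controlling the random matrix $H$ with its highly dependent rows $a_i\ot a_i\ot B_i^{(j)}$ and the pseudoinverse buried inside $\Pi_{\img(H^\top)}$; the plan is to compute the expectation of the associated $n(d-1)\times n(d-1)$ Gram operator — where the antisymmetrization $\Pi_{2,3}^\perp$ annihilates exactly the leading $\iprod{a_i,a_j}$ cross-terms and leaves a well-conditioned mean — and then prove concentration via a combinatorial matrix-moment/decoupling estimate; short of that, the weaker quantitative statement $\kappa\ge\Omega(1)$ for $n\le d^{2-\Omega(1)}$ already follows from the $\Norm{G-\E G}=o(1)$-type bounds above.
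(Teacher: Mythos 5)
Your lifting strategy matches the paper's and your identifiability sketch is essentially Lemma~\ref{lem:sketch} / Lemma~\ref{lem:identifiability}, but your rounding and extraction steps both contain genuine errors that the paper's algorithm is specifically designed around.

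\textbf{Rounding with $g\in\R^d$.} You contract $\Pi_{U'}$ along the unlifted $\R^d$ mode, obtaining $M_g=\sum_i\iprod{g,a_i}^2\,\tilde a_i\tilde a_i^\top$ and asserting the top gap is ``the extreme gap among $n$ i.i.d.\ $\chi^2_1$'s.'' But the $\iprod{g,a_i}$ are jointly Gaussian with covariance $\iprod{a_i,a_j}$, which is not zero for $n>d$; they are correlated and, worse, degenerate (they lie in a rank-$d$ subspace of $\R^n$). The entire purpose of the lifting in the paper is to produce the \emph{orthonormal} vectors $\tilde a_i = Wa_i^{\ot2}$ in $\R^{d^2}$, so that contracting the \emph{lifted} tensor $\T'$ with $g\in\R^{d^2}$ gives genuinely i.i.d.\ Gaussian coefficients $\iprod{g,\tilde a_i}$ (see the ``Rounding'' paragraph of Section~\ref{sec:overview} and Algorithm~\ref{alg:round}). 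Your $M_g$ uses neither $\T'$ nor $g\in\R^{d^2}$, so the extreme-value and gap analysis of Lemma~\ref{lem:prob} does not apply. Separately, the paper precedes Gaussian rounding by the spectral-truncation step in Algorithm~\ref{alg:round} (Lemma~\ref{lem:rect}): without it, contracting the noise term against a Gaussian of norm $\Theta(\sqrt{d})$ or $\Theta(d)$ can overwhelm the $\Omega(1)$ signal gap, and the permitted noise $\e/\log d$ would not survive. Your sketch omits this step entirely.

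\textbf{Extraction via $T^{1/2}$.} You recover $a_{i^\star}$ by applying $T^{1/2}=W^{-1}$ to $\psi\approx Wa_{i^\star}^{\ot2}$ and reshaping. This is exactly the naive approach the paper rules out: the error in $\psi$ is amplified by $\|W^{-1}\|=\|T\|^{1/2}$, which is \emph{not} bounded by the theorem's hypotheses (only $\|T^{-1}\|$ and $\|\sum_i\dyad{a_i^{\ot3}}\|$ are). For random unit $a_i$ with $n\approx d^2$, $\|T\|=\Theta(n/d)$ because of the $\Phi=\sum_j e_j^{\ot2}$ direction, so $\|W^{-1}\|=\Theta(\sqrt{d})$ and your accuracy degrades by a polynomial factor in $d$, destroying the $1-O(\e/\kappa^2)^{1/8}$ guarantee. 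The paper avoids this by extracting $a_i$ from the lifted projector $\Pi_3$ (Algorithm~\ref{alg:clean}, Lemma~\ref{lem:clean}): one projects $\dyad{\psi}\ot\Id$ through $\Pi_3$ and reads $a_i$ off the third mode, incurring no $\|W^{-1}\|$ loss, and the companion test in Algorithm~\ref{alg:test} filters out bad rounds. These are not implementation details; they are load-bearing for the stated approximation factor.

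The remaining pieces --- reshaping to $T$, whitening, Davis--Kahan for the top-$n$ eigenspace, the identifiability/condition-number argument, the runtime decomposition into $\tilde O(n^2d^3)$, and the sketch of $\kappa=\Omega(1)$ for random ensembles via a matrix-Bernstein / decoupling argument --- align well with Lemmas~\ref{lem:lift-perturb}, \ref{lem:lift-correct}, \ref{lem:lift-time}, \ref{lem:basic-swap-cond}. But the two gaps above are exactly where the paper departs from the naive route, and your proposal would need the paper's lifted rounding and projector-based extraction (Sections~\ref{sec:const-cor}--\ref{sec:boost}) to actually deliver the claimed bounds.
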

When $n \le d$, our algorithm still obtains nontrivial guarantees (though the runtime asymptotics are dominated by other terms); however in this regime, a combination of the simpler algorithm of \cite{DBLP:conf/colt/SchrammS17} and a whitening procedure gives comparable guarantees.

We remark that our full theorem,
\cref{thm:main}, does not pose as many restrictions on the $\{a_i\}$; we do not generally require that $\|T^{-1}\| \le O(1)$ or that $\|\sum_i (a_i^{\ot 3})(a_i^{\ot 3})^\top\| \le O(1)$.
However, allowing these quantities to depend on $d$ and $n$ affects our runtime and approximation guarantees, and so to simplify presentation we have made these restrictions here; we refer the reader to \cref{thm:main} for details.

Furthermore, in the theorem stated above we recover only a $0.99$-fraction of the vectors, and we require the perturbation to have magnitude $O(\frac{1}{\log d})$.
This is again a particular choice of parameters in \cref{thm:main}, which allows for a four-way tradeoff among accuracy, magnitude of perturbation, fraction of components recovered, and runtime.
For example, if the perturbation is $\frac{1}{\poly(d)}$ in spectral norm, then we may recover all components in time $\tilde O(n^2d^3\kappa^{-1})$; alternatively, if the perturbation has spectral norm $\eta^2 = \Theta(1)$, then we may recover an $0.99$-fraction of components in time $\tilde O(n^{2+O(\eta)} d^3 \kappa^{-1})$ up to accuracy $1-O(\frac{\eta}{\kappa^2})^{1/8}$.
Again, we refer the reader to \cref{thm:main} for the full tradeoff.

Finally, a note about our recovery guarantee:
we guarantee that every vector returned by the algorithm is close to {\em some} component, and furthermore that most components will be close to some vector.
It is possible to run a clean-up procedure after our algorithm, in which nearby approximate components $b_j$ are clustered to correspond to a specific $a_i$; depending on the proximity of the $a_i$ to each other, this may require stronger accuracy guarantees, and so we leave this procedure as an independent step.
Our guarantee does not include signs, but this is because the tensor $\T$ is an even-order tensor, so the decomposition is only unique up to signings as $(-a_i)^{\otimes 4} = a_i^{\ot 4}$.

\subsection{Related works}

The literature on tensor decomposition is broad and varied, and we will not attempt to survey it fully here (see e.g. the survey \cite{DBLP:journals/siamrev/KoldaB09} or the references within \cite{DBLP:journals/jmlr/AnandkumarGHKT14,ge2017optimization} for a fuller picture).
We will give an idea of the relationship between our algorithm and others with provable guarantees.

For simplicity we focus on order-$4$ tensors.
Algorithms with provable guarantees for tensor decomposition fall broadly into three classes: iterative methods, convex programs, and spectral algorithms.
For a brief comparison of our algorithm to previous works, we include \cref{fig:algs}.

\begin{table}
    \centering
\begin{tabular}{|c|c|c|c|c|c|}
    \hline
    {\bf Algorithm} & {\bf Type} & \bf{Rank} & {\bf Robustness} & {\bf Assumptions} & {\bf Runtime}\\
    \hline
    \cite{DBLP:journals/tsp/LathauwerCC07} & algebraic & $n \le d^2$ & $\|E\|_{\infty} \le 2^{-O(d)}$ & algebraic & $\tilde O(n^3d^4)$ \\
    \hline
    \cite{DBLP:journals/jmlr/AnandkumarGJ17} & iterative & $n \le o(d^{1.5})$ & $\|E\| \le o(\frac{n}{d^2})$ & random, warm start  & $\tilde O(nd^3)$ \\
    \hline

    \cite{ge2017optimization} & iterative & $n \le O(d^{2})$ & $E = 0$ & random, warm start  & $\tilde O(nd^4)$\\
    \hline
    \cite{DBLP:conf/focs/MaSS16} & SDP & $n \le d^2$ & $\|E\| \le 0.01$ & algebraic & $\ge nd^{24}$\\
    \hline
    \cite{DBLP:conf/colt/SchrammS17} & spectral & $n \le d$ & $\|E\| \le O(\frac{\log\log d}{\log d})$ & orthogonal & $\tilde O(d^{2+\omega})$\\
    \hline
    this paper & spectral & $n \le d^2$ & $\|E\| \le O(\frac{1}{\log d})$ & algebraic & $\tilde O(n^2d^{3})$\\
    \hline
\end{tabular}
    \caption{{\small A comparison of tensor decomposition algorithms for rank-$n$ $4$-tensors in $(\R^d)^{\otimes 4}$. Here $\omega$ denotes the matrix multiplication constant. A robustness bound $\|E\| \le \eta$ refers to the requirement that a $d^2 \times d^2$ reshaping of the error tensor $E$ have spectral norm at most $\eta$. Some of the algorithms' guarantees involve a tradeoff between robustness, runtime, and assumptions; where this is the case, we have chosen one representative setting of parameters. See \cref{sec:table-notes} for details. Above, ``random'' indicates that the algorithm assumes $a_1,\ldots,a_n$ are independent unit vectors (or Gaussians) and ``algebraic'' indicates that the algorithm assumes that the vectors avoid an algebraic set of measure $0$.}}
    \label[table]{fig:algs}
\end{table}

\paragraph{Iterative Methods.}
Iterative methods are a class of algorithms that maintain one (or sometimes several) estimated component(s) $b$, and update the estimate using a variety of update rules.
Some popular update rules include tensor power iteration \cite{DBLP:journals/jmlr/AnandkumarGHKT14}, gradient descent \cite{ge2017optimization}, and alternating-minimization \cite{DBLP:journals/corr/AnandkumarGJ14}.
Most of these methods have the advantage that they are fast; the update steps usually run in time linear in the input size, and the number of updates to convergence is often polylogarithmic in the input size.

The performance of the most popular iterative methods has been well-characterized in some restricted settings; for example, when the components $\{a_i\}$ are orthogonal or linearly independent \cite{DBLP:journals/jmlr/AnandkumarGHKT14,DBLP:conf/colt/GeHJY15, DBLP:conf/icml/SharanV17}, or are independently drawn random vectors \cite{DBLP:journals/jmlr/AnandkumarGJ17,ge2017optimization}.
Furthermore, many of these analyses require a ``warm start,'' or an initial estimate $b$ that is more correlated with a component than a typical random starting point.
Few provable guarantees are known for the non-random overcomplete regime, or in the presence of arbitrary perturbations.

\paragraph{Convex Programming.}
Convex programs based on the sum-of-squares (SoS) semidefinite programming (SDP) relaxation yield the most general provable guarantees for tensor decomposition.
These works broadly follow a \emph{method of pseudo-moments}: interpreting the input tensor $\sum_{i \in [n]} a_i^{\tensor k}$ as the $k$-th moment tensor $\E X^{\tensor k}$ of a distribution $X$ on $\R^d$, this approach uses SoS to generate \emph{surrogates} (or \emph{pseudo-moments}) for higher moment tensors, like $\E X^{\tensor 100k} = \sum_{i \in [n]} a_i^{\tensor 100k}$.
It is generally easier to extract the components $a_1,\ldots,a_n$ from $\sum_{i \in [n]} (a_i^{\tensor 100})^{\tensor k}$ than from $\sum_{i \in [n]} a_i^{\tensor k}$, because the vectors $\{a_i^{\tensor 100}\}$ have fewer algebraic dependencies than the vectors $\{a_i\}$, and are farther apart in Euclidean distance.
Of course, $\E X^{\tensor 100k} = \sum_{i \in [n]} a_i^{\tensor 100k}$ is not given as input, and even in applications where the input is negotiable, it may be expensive or impossible to obtain such a high-order tensor. The SoS method uses semidefinite programming to generate a surrogate which is good enough to be used to find the vectors $a_1,\ldots,a_n$

Work on sum-of-squares relaxations for tensor decomposition began with the quasi-polynomial time algorithm of \cite{DBLP:conf/stoc/BarakKS15}; this algorithm requires only mild well-conditioned-ness assumptions, but also requires high-order tensors as input, and runs in quasi-polynomial time.
This was followed by an analysis showing that, at least in the setting of random $a_1,\ldots,a_n$, the SoS algorithm can decompose substantially overcomplete tensors of order $3$ \cite{DBLP:conf/approx/GeM15}.
This line of work finally concluded with the work of Ma, Shi, and Steurer \cite{DBLP:conf/focs/MaSS16}, who give sum-of-squares based polynomial-time algorithms for tensor decomposition in the most general known settings: under mild algebraic assumptions on the components, and in the presence of adversarial noise, so long as the noise tensor has bounded spectral norm in its matrix reshapings.

These SoS algorithms have the best known polynomial-time guarantees, but they are formidably slow.
The work of \cite{DBLP:conf/focs/MaSS16} uses the degree-$8$ sum-of-squares relaxation, meaning that to find each of the $n$ components, one must solve an SDP in $\Omega (d^8)$ variables.
While these results are important in establishing that polynomial-time algorithms {\em exist} for these settings, their runtimes are far from efficient.

\paragraph{Spectral algorithms from Sum-of-Squares Analyses.}
Inspired by the mild assumptions needed by SoS algorithms, there has been a line of work that uses the {\em analyses} of SoS in order to design more efficient {\em spectral} algorithms, which ideally work for similarly-broad classes of tensors.

At a high level, these spectral algorithms use eigendecompositions of specific matrix polynomials to directly construct approximate primal and dual solutions to the SoS semidefinite programs, thereby obtaining the previously mentioned ``surrogate moments'' without having to solve an SDP.
Since the SoS SDPs are quite powerful, constructing (even approximate) solutions to them directly and efficiently is a nontrivial endeavor.
The resulting matrices are only approximately SDP solutions --- in fact, they are often far from satisfying most of the constraints of the SoS SDPs.
There is a tradeoff between how well these spectrally constructed solutions approximate the SoS output and how efficiently the algorithm can be implemented.
However, by carefully choosing which constraints to satisfy, these works are able to apply the SDP rounding algorithms to the approximate spectrally-constructed solutions (often with new analyses) to obtain similar algorithmic guarantees.

The work of \cite{DBLP:conf/stoc/HopkinsSSS16} was the first to adapt the analysis of SoS for random $a_1,\ldots,a_n$ presented by \cite{DBLP:conf/approx/GeM15} to obtain spectral algorithms for tensor decomposition, giving subquadratic algorithms for decomposing random overcomplete tensors with $n \le O(d^{4/3})$.
As SoS algorithms have developed, so too have their faster spectral counterparts.
In particular, \cite{DBLP:conf/colt/SchrammS17} adapted some of the SoS arguments presented in \cite{DBLP:conf/focs/MaSS16} to give robust subquadratic algorithms for decomposing orthogonal $4$-tensors in the presence of \emph{adversarial} noise bounded only in spectral norm.

Our result builds on the progress of both \cite{DBLP:conf/focs/MaSS16,DBLP:conf/colt/SchrammS17}.
The SoS algorithm of \cite{DBLP:conf/focs/MaSS16} was the first to robustly decompose generic overcomplete tensors in polynomial time.
The spectral algorithm of \cite{DBLP:conf/colt/SchrammS17} obtains a much faster running time for robust tensor decomposition, but sacrifices overcompleteness.
Our work adapts (and improves upon) the SoS analysis of \cite{DBLP:conf/focs/MaSS16} to give a spectral algorithm for the robust {\em and} overcomplete regime.
Our primary technical contribution is the efficient implementation of the {\em lifting} step in the SoS analysis of \cite{DBLP:conf/focs/MaSS16} as an efficient spectral algorithm to generate surrogate $6$th order moments; this is the subject of \cref{sec:lifting}, and we give an informal description in \cref{sec:overview}.

\paragraph{FOOBI}
The innovative FOOBI (Fourth-Order Cumulant-Based Blind Identification) algorithm of \cite{DBLP:journals/tsp/LathauwerCC07} was the first method with provable guarantees for overcomplete $4$-th order tensor decomposition under algebraic nondegeneracy assumptions.
Like our algorithm, FOOBI can be seen as a \emph{lifting} procedure (to an $8$-th order tensor) followed by a \emph{rounding} procedure. 
The FOOBI lifting procedure inspires ours -- although ours runs faster because we lift to a $6$-tensor rather than an $8$-tensor -- but the FOOBI rounding step is quite different, and proceeds via a clever simultaneous diagonalization approach.
The advantage our algorithm offers over FOOBI is twofold: first, it provides formal, strong robustness guarantees, and second, it has a faster asymptotic runtime.

To the first point:
for a litmus test, consider the case that $n=d$ and $a_1,\ldots,a_n \in \R^d$ are orthonormal.
On input $T = \sum_{i=1}^n a_i^{\tensor 4} + E$, our algorithm recovers the $a_i$ for arbitrary perturbations $E$ so long as they are bounded in spectral norm by $\|E\| \le 1/\poly\log d$.\footnote{In contrast, most iterative methods, such as power iteration, can only handle perturbations of spectral norm at most $\|E\| \le 1/\poly(d)$.}
We are not aware of any formal analyses of FOOBI when run on tensors with arbitrary perturbations of this form.
Precisely what degree of robustness should be expected from this modified FOOBI algorithm is unclear.
The authors of \cite{DBLP:journals/tsp/LathauwerCC07} do suggest (without analysis) a modification of their algorithm for the setting of nonzero error tensors $E$, involving an alternating-minimization method for computing an {\em approximate} simultaneous diagonalization. 
Because the problem of approximate simultaneous diagonalization is non-convex, establishing robustness guarantees for the FOOBI algorithm when augmented with the approximate simultaneous diagonalization step appears to be a nontrivial technical endeavor.
We think this is an interesting and potentially challenging open question.

Further, while the running time of FOOBI depends on the specific implementation of its linear-algebraic operations, we are unaware of any technique to implement it in time faster than $\tilde O(n^3 d^4)$.
In particular, the factor of $d^4$ appears essential to any implementation of FOOBI; it represents the side-length of a $d^4 \times d^4$ square unfolding of a $d$-dimensional $8$-tensor, which FOOBI employs extensively.
By contrast, our algorithm runs in time $\tilde O(n^2 d^3)$, which is (up to logarithmic factors) faster by a factor of $nd$.

    \section{Overview of algorithm}\label[section]{sec:overview}
\newcommand{\mathsc}[1]{\text{\textsc{#1}}}

We begin by describing a simple tensor decomposition algorithm for orthogonal $3$-tensors: Gaussian rounding (Jennrich's algorithm \cite{harshman1970foundations}).
We then build on that intuition to describe our algorithm.

\paragraph{Orthogonal, undercomplete tensors.}
Suppose that $u_1,\ldots,u_d \in \R^d$ are orthonormal vectors, and that we are given $T = \sum_{i\in[d]} u_i^{\otimes 3}$.
As a first attempt at recovering the $u_i$, one might be tempted to choose the first ``slice'' of $T$, the $d\times d$ matrix $T_1 = \sum_{i} u_i(1) \cdot u_i u_i^\top$, and compute its singular value decomposition (SVD).
However, if $|u_i(1)| = |u_j(1)|$ for some $i\neq j \in [d]$, the SVD will not allow us to recover these components.
In this setting, Gaussian rounding allows us to exploit the additional mode of $T$:
If we sample $g \sim \cN(0,\Id_d)$, then we can take the random flattening $T(g) = \sum_{i} \iprod{g,u_i} \cdot u_i u_i^\top$; because the $\iprod{g,u_i}$ are independent standard Gaussians, they are distinct with probability $1$, and an SVD will recover the $u_i$ exactly.
Moreover, this algorithm also solves $k$-tensor decomposition for orthogonal tensors with $k \geq 4$, by treating $\sum_{i \in [d]} u_i^{\tensor k}$ as the $3$-tensor $\sum_{i \in [d]} u_i^{\tensor k-1} \tensor u_i \tensor u_i$.

\paragraph{Challenges of overcomplete tensors.}
In our setting, we have unit vectors $\{a_i\}_{i\in[n]}\subset \R^d$ with $n > d$, and $T = \sum_{i} a_i^{\otimes 4}$ (focusing for now on the unperturbed case).
Since $n > d$, the components $a_1,\ldots,a_n$ are {\em not} orthogonal: they are not even linearly independent.
So, we cannot hope to use Gaussian rounding as a black box.
While the vectors $a_1\ot a_1,\ldots,a_n\ot a_n$ may be linearly independent, the spectral decompositions of the matrix $\sum_{i \in [n]} (a_i^{\tensor 2})(a_i^{\tensor 2})^\top$ are not necessarily useful, since its eigenvectors may not be close to any of the vectors $a_i$, and may be unique only up to rotation.

\paragraph{Challenges of perturbations.}
Returning momentarily to the orthogonal setting with $n \leq d$, new challenges arise when the perturbation tensor $E$ is nonzero.
For an orthogonal $4$-tensor $T = \sum_{i \in [d]} u_i^{\tensor 4} + E$, the Gaussian rounding algorithm produces the matrix $\sum_{i \in [d]} \iprod{g, u_i^{\tensor 2}} \dyad{u_i} + E_g$ for some $d \times d$ matrix $E_g$.
The difficulty is that even if the spectral norm $\|E\| \ll \sigma_{\min} (\sum_{i \in [d]} \dyad{(u_i^{\tensor 2})}) = 1$, the matrix $E_g$ sums many slices of the tensor $E$, and so the spectrum of $E_g$ can overwhelm that of $\sum_{i \in [d]} \iprod{g,u_i^{\tensor 2}} \dyad{u_i}$.

This difficulty is studied in \cite{DBLP:conf/colt/SchrammS17}, where it is resolved by SoS-inspired preprocessing of the tensor $T$.
We borrow many of those ideas in this work.

\paragraph{Algorithmic strategy.}
We now give an overview of our algorithm. \Cref{alg:sketch} gives a summarized version of the algorithm, with details concerning robustness and fast implementation omitted.

There are two main stages to the algorithm:
the first stage is \emph{lifting}, where the input rank-$n$ $4$-tensor over $\R^d$ is lifted to a corresponding rank-$n$ $3$-tensor over a higher dimensional space $\R^{d^2}$; this creates an opportunity to use Gaussian rounding on the newly-created tensor modes.
In the second \emph{rounding} stage, the components of the lifted tensor are recovered using a strategy similar to Gaussian rounding and then used to find the components of the input.

This parallels the form of the SoS-based overcomplete tensor decomposition algorithm of \cite{DBLP:conf/focs/MaSS16}, where both stages rely on SoS semidefinite programming.
Our main technical contribution is a spectral implementation of the lifting stage; our spectral implementation of the rounding stage reuses many ideas of \cite{DBLP:conf/colt/SchrammS17}, adapted round the output of our new lifting stage.

\paragraph{Lifting.}
The goal of the lifting stage is to transform the input $T = \sum_{i \in [n]} \dyad{(a_i^{\tensor 2}) }$ to an orthogonal $3$-tensor.
Let $W = T^{-1/2}$ and observe that the \emph{whitened} vectors $W(a_i^{\tensor 2})$ are orthonormal; therefore we will want to use $T$ to find the orthogonal 3-tensor $\sum_{i \in [n]} (W a_i^{\tensor 2})^{\tensor 3}$.

The lifting works by deriving $\Span(a_i^{\ot 3})_{i \in [n]}$ from $\Span(a_i^{\ot 2})_{i \in [n]}$, where the latter is simply the column space of the input $T$.
By transforming $\Span(a_i^{\ot 3})$ using $W = T^{-1/2}$, we obtain $\Span(W(a_i^{\ot 2}) \ot a_i^{\vphantom 1})$.
Since $\{W(a_i^{\tensor 2}) \tensor a_i\}_{i \in [n]}$ are orthonormal, the orthogonal projector to their span is in fact equal to $\sum_i \dyad{(W(a_i^{\ot 2}) \ot a_i^{\vphantom 1})}$, which is only a reshaping and a final multiplication by $W$ away from the orthogonal tensor $\sum_i (W(a_i^{\ot 2}))^{\ot 3}$.

The key step is the operation which obtains $\Span(a_i^{\ot 3})$ from $\Span(a_i^{\ot 2})$.
It rests on an algebraic ``identifiability'' argument, which establishes that for almost all problem instances (all but an algebraic set of measure $0$), the subspace $\Span(a_i^{\ot 3})$ is equal to $\Span(a_i^{\ot 2})\otimes \R^d$ intersected with the symmetric subspace $\Span(\{x \ot x \ot x\}_{x \in \R^{d}})$.
Since we can compute $\Span(a_i \ot a_i)$ from the input and since the symmetric subspace is easy to describe, we are able to perform this lifting step efficiently.
The simplest version of the identifiability argument is given in \Cref{lem:sketch}, and a more robust version that includes a condition number analysis is given in \Cref{sec:identifiability}.

\begin{lemma}[Simple Identifiability]
\label[lemma]{lem:sketch}
  Let $a_1,\ldots,a_n \in \R^d$ with $n \leq d^2$.
  Let $S$ denote $\Span(\{a_i^{\tensor 2}\})$ and let $T$ denote $\Span(\{a_i^{\tensor 3}\})$ and assume both have dimension $n$.
  Let $\Sym \subseteq (\R^{d})^{\tensor 3}$ be the linear subspace
  $\Sym = \Span (\{x \tensor x \tensor x\}_{x \in \R^d})\mper$
  For each $i$, let $\{b_{i,j}\}_{j \in [d-1]}$ be an arbitrary orthonormal basis the orthogonal complement of $a_i$ in $\R^d$.
  Let also
  \[ \transpose{K'} := \left[\begin{array}{l @{} l @{} c @{\;} c @{\;} l @{} c @{} l} a_1 &{}\ot a_1 \ot{}& b_{1,1} &{}-{} &a_1 \ot{}& b_{1,1} &{}\ot a_1 \\ &&&\vdots&&& \\ a_i &{}\ot a_i \,\ot{}& b_{i,j} &{}-{} &a_i \,\ot{}& b_{i,j} &{}\ot a_i \\ &&&\vdots&&& \\ a_n &{}\ot a_n \ot{}& b_{n,d-1} &{}-{} &a_n \ot{}& b_{n,d-1} &{}\ot a_n \end{array}\right]\,, \]
  Then if $K'$ has full rank $n(d-1)$, it follows that $(S \ot \R^d) \cap \Sym = T$.
\end{lemma}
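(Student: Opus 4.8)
The plan is to reduce the statement to a single rank--nullity count for the operator $\Pi_{2,3}^{\perp}$ of \cref{def:intro-Pi23perp}. Write $U := S\ot\R^d \subseteq (\R^d)^{\ot 3}$, so that $\dim U = nd$ (using $\dim S = n$), and let $\Sym_{2,3} := \ker\Pi_{2,3}^{\perp}$ be the subspace of $(\R^d)^{\ot 3}$ symmetric in its last two modes, so that $\Sym \subseteq \Sym_{2,3}$. The idea is that the rows of $\transpose{K'}$ not only lie in, but actually span, $\Pi_{2,3}^{\perp}(U)$; hence the full-rank hypothesis determines $\dim\Pi_{2,3}^{\perp}(U)$, and rank--nullity then determines $\dim(U\cap\Sym_{2,3})$. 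Sandwiching $T$ between $U\cap\Sym$ and $U\cap\Sym_{2,3}$ and checking the two ends have equal dimension $n$ forces equality throughout.

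Concretely I would carry out four steps. First, the easy inclusion: each $a_i^{\ot 3} = (a_i\ot a_i)\ot a_i$ lies in $S\ot\R^d = U$ and is fully symmetric, so $T\subseteq U\cap\Sym$; also $\dim T = n$ forces every $a_i\ne 0$, hence $\{b_{i,1},\dots,b_{i,d-1},a_i\}$ is a basis of $\R^d$ for each $i$. Second, identify the rows: since $\Pi_{2,3}^{\perp} = \tfrac12(\Id - P_{2,3})$, the $(i,j)$-th row of $\transpose{K'}$ is $a_i\ot a_i\ot b_{i,j} - a_i\ot b_{i,j}\ot a_i = 2\,\Pi_{2,3}^{\perp}(a_i\ot a_i\ot b_{i,j})$. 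Third, show these span $\Pi_{2,3}^{\perp}(U)$: by the basis property in the first step, the tensors $\{a_i\ot a_i\ot b_{i,j}\}_{i,j}\cup\{a_i^{\ot 3}\}_i$ span $\Span\{a_i\ot a_i\}\ot\R^d = U$, and applying $\Pi_{2,3}^{\perp}$ together with $\Pi_{2,3}^{\perp}(a_i^{\ot 3}) = 0$ shows $\Pi_{2,3}^{\perp}(U) = \Span\{\Pi_{2,3}^{\perp}(a_i\ot a_i\ot b_{i,j})\}_{i,j}$, i.e.\ it equals the row space of $\transpose{K'}$; thus $\dim\Pi_{2,3}^{\perp}(U) = \rank K'$. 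Fourth, apply rank--nullity to $\Pi_{2,3}^{\perp}|_U$: $nd = \dim U = \dim(U\cap\Sym_{2,3}) + \rank K'$, so the hypothesis $\rank K' = n(d-1)$ gives $\dim(U\cap\Sym_{2,3}) = n$. Since $T\subseteq U\cap\Sym\subseteq U\cap\Sym_{2,3}$ and $\dim T = n = \dim(U\cap\Sym_{2,3})$, all three coincide; in particular $(S\ot\R^d)\cap\Sym = U\cap\Sym = T$, as desired (and in fact the argument shows the a priori larger space $(S\ot\R^d)\cap\Sym_{2,3}$ already equals $T$).

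The only step carrying any real content is the third one: I must verify that the row space of $\transpose{K'}$ is \emph{all} of $\Pi_{2,3}^{\perp}(U)$, which is precisely why the completion of $\{b_{i,j}\}_j$ to a basis of $\R^d$ by $a_i$, and the vanishing $\Pi_{2,3}^{\perp}(a_i^{\ot 3}) = 0$, are both needed. A secondary point to get right is that rank--nullity only directly controls $U\cap\Sym_{2,3}$, the intersection with the \emph{partially} symmetric subspace, so one must use the a priori inclusion $\Sym\subseteq\Sym_{2,3}$ (and $T\subseteq U\cap\Sym$) to pass to the fully symmetric intersection. Everything else is routine linear algebra; in the quantitative (robust) version of this lemma I would expect the exact dimension equalities here to be replaced by singular-value lower bounds on $K'$, which is presumably the role played by $\kappa$ in \cref{def:kappa}.
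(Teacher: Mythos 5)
Your proof is correct, and it takes a genuinely different (though closely related) route from the paper's. The paper argues directly: take $y \in (S\ot\R^d)\cap\Sym$, expand $y = \sum_i a_i\ot a_i\ot c_i$, subtract the $P_{2,3}$-swapped expansion to get $0 = \sum_i a_i\ot(a_i\ot c_i - c_i\ot a_i)$, decompose each $c_i$ in the basis $\{a_i\}\cup\{b_{ij}\}_j$ so that the $a_i$-components cancel and what remains is $K'\gamma = 0$, and invoke injectivity of $K'$ to force $\gamma = 0$ and hence $c_i\propto a_i$. You instead reorganize the same ingredients into a rank--nullity count for $\Pi_{2,3}^\perp$ restricted to $U = S\ot\R^d$: the rows of $\transpose{K'}$ are (up to the factor $2$) the $\Pi_{2,3}^\perp$-images of the spanning set $\{a_i\ot a_i\ot b_{ij}\}\cup\{a_i^{\ot 3}\}$ of $U$, and since $\Pi_{2,3}^\perp a_i^{\ot 3}=0$ they span $\Pi_{2,3}^\perp(U)$, giving $\dim(U\cap\Sym_{2,3}) = nd - \rank K' = n$; sandwiching $T\subseteq U\cap\Sym\subseteq U\cap\Sym_{2,3}$ then forces equality. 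Both arguments rest on the observation that $K'$ represents $\Pi_{2,3}^\perp|_U$ after quotienting out the $a_i^{\ot 3}$ directions, but the paper phrases the full-rank hypothesis as making a linear map injective while you phrase it as making a dimension count tight; you are also more explicit that the conclusion already holds for the a priori larger space $(S\ot\R^d)\cap\Sym_{2,3}$ (the paper's proof also uses only mode-$\{2,3\}$ symmetry, but does not call attention to this). The paper's version is a bit more economical on the page, while yours exposes the role of $\Pi_{2,3}^\perp$ more transparently and dovetails nicely with the projector-based quantitative formulation in \cref{lem:identifiability}.
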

\begin{proof}
To show that $T \subseteq (S \ot \R^d) \cap \Sym$, we simply note that $\{a_i \ot a_i \ot a_i\}_{i \in [n]}$ form a basis for $T$ and are also each in both $S \ot \R^d$ and $\Sym$.

To show that $(S \ot \R^d) \cap \Sym \subseteq T$, we take some $y \in (S \ot \R^d) \cap \Sym$.
Since $y$ is symmetric under mode interchange, we express $y$ in two ways as
\[ y = \sum_i a_i \ot a_i \ot c_i = \sum_i a_i \ot c_i \ot a_i \,. \]
Then by subtracting these two expressions for $y$ from each other, we find
\[ 0 = \sum_i a_i \ot (a_i \ot c_i - c_i \ot a_i). \]
We express $c_i = \iprod{a_i,c_i}a_i + \sum_{j} \gamma_{ij}b_{ij}$ for some vector $\gamma$.
Then the symmetric parts cancel out, leaving
\[ 0 = \sum_{ij} \gamma_{ij} \, a_i \ot (a_i \ot b_{ij} - b_{ij} \ot a_i) = K' \gamma\,. \]
Since $K'$ is full rank by assumption, this is only possible when $\gamma = 0$.
Therefore, $c_i \propto a_i$ for all $i$, so that $y \in T$.
\end{proof}
\begin{remark}
Although the condition number from the matrix $K'$ here is not the same as the one derived from $K$ from \Cref{def:kappa}, it is off by at most a multiplicative factor of $2\|T^{-1}\|^{-1/2}$.
To see this, $K$ in \Cref{def:kappa} is given as $K = \Pi_{2,3}^{\perp}\Pi_{\img(\transpose{H})}$, whereas we may write $K' = 2\Pi_{2,3}^{\perp}\transpose{H} = 2\Pi_{2,3}^{\perp}\Pi_{\img(\transpose{H})}(\dyad{H})^{1/2} = 2K(\dyad{H})^{1/2}$.
Therefore, $\|K'^{-1}\| \ge \tfrac{1}{2}\|K^{-1}\|\,\|H^{-1}\|$.
By \cite[Lemma 6.3]{DBLP:conf/focs/MaSS16}, $\|H^{-1}\|^2 \ge \|T^{-1}\|$.
\end{remark}

\paragraph{Robustness.}
To ensure that our algorithm is robust to perturbations $E$, we must argue that the column span of $T$ and $T + E$ are close to each other so long as $E$ is bounded in spectral norm, and furthermore than the lifting operation still produces a subspace $V$ which is close to $\Span(\{W(a_i\ot a_i)\ot a_i\})$.
This is done via careful application of matrix perturbation analysis to the identifiability argument.
By operating with $W$ \emph{only} on third-order vectors and matrices over $(\R^d)^{\ot 3}$, we also avoid incurring factors of the fourth-order operator norm $\|T\|$ in the condition numbers, instead only incurring a much milder \emph{sixth}-order penalty $\|\sum\dyad{a_i^{\ot 3}}\|$.
For details, see \cref{sec:lift-robust}.

\paragraph{Rounding.}
If we are given direct access to $T$ in the absence of noise, the rounding stage can be accomplished with Gaussian rounding.
However when we allow $T$ to be adversarially perturbed the situation becomes more delicate.
Our rounding stage is an adaptation of \cite{DBLP:conf/colt/SchrammS17}, though some modifications are required for the additional challenges of the overcomplete setting.
It recovers the components of an approximation of a $3$-tensor with $n$ orthonormal components, provided that said approximation is within $\eps\sqrt{n}$ in Frobenius norm distance.
The technique is built around Gaussian rounding, but in order to have this succeed in the presence of $\eps\sqrt{n}$ Frobenius norm noise, the large singular values are truncated from the rectangular matrix reshapings of the $3$-tensor: this ensures that the rounding procedure is not entirely dominated by any spectrally large terms in the noise.

After we recover approximations of the orthonormal components $b_i \approx Wa_i^{\otimes 2}$, we wish to extract the $a_i$.
Naively one could simply apply $W^{-1}$, but this can cause errors in the recovered vectors to blow up by a factor of $\|W^{-1}\|$.
Even when the $\{a_i\}$ are random vectors, $\|W^{-1}\| = \Omega(\poly(d))$.\footnote{This is in contrast to $\|W\|$, which is $O(1)$ in the random case.}
Instead, we utilize the projector to $\Span\{W(a_i \ot a_i) \ot a_i \}$ computed in the lifting step: we {\em lift} $b_i$, project it into the span to obtain a vector close to $W(a_i \ot a_i) \ot a_i$, and reshape it to a $d^2 \times d$ matrix whose top right-singular vector is correlated with $a_i$.
This extraction-via-lifting step allows us to circumvent a loss of $\|W^{-1}\|$ in the error.

\begin{algorithm}
\caption{\,Sketch of full algorithm, in the absence of noise}
\label[algorithm]{alg:sketch}
    Input: A $4$-tensor $\T \in (\R^{d})^{\ot 4}$, so that $\T = \sum_{i=1}^n a_i^{\ot 4}$ for unit vectors $a_i \in \R^d$.
\begin{enumerate}
\item Take the square reshaping $T \in \R^{d^2 \times d^2}$ of $\T$ and compute its whitening $W = T^{-1/2}$ and the projector $\Pi_2 = WTW$ to the image of $T$.
\item \emph{Lifting}: Compute the lifted tensor $\T' \in (\R^{d^2})^{\ot 3}$ so that $\T' = \sum_{i}(W a_i^{\ot 2})^{\ot 3}$. (See \cref{alg:lift} for full details).
\begin{enumerate}
\item Find a basis for the subspace $S_3 = (\img T) \ot \R^d  \cap \Sym$:
  take $S_3$ to be the top-$n$ eigenspace of $(\Pi_2 \ot \Id)\Pisym(\Pi_2 \ot \Id)$.
 Then by \Cref{lem:sketch}, $S_3 = \Span(a_i^{\ot 3})$ .
\item Find the projector $\Pi_3$ to the space $(W \ot \Id)\,S_3 = \Span(W a_i^{\ot 2} \ot a_i)$.

\item Compute the orthogonal $3$-tensor:
    since $\{W a_i^{\ot 2} \ot a_i\}$ is an orthonormal basis,
  \[
      \Pi_3 = \sum\nolimits_i\dyad{(W a_i^{\ot 2} \ot a_i)}\,.\]
  Therefore, reshape $\Pi_3$ as
  $\sum_{i}  (W a_i^{\ot 2}) \ot (W a_i^{\ot 2}) \ot (a_i^{\ot 2})$
  and multiply $W$ into the third mode to obtain $\T'$.
\end{enumerate}
\item \emph{Rounding}: Use Gaussian rounding to find the components $a_i$.
    (In the presence of noise, this step becomes substantially more delicate; see \cref{alg:round,alg:clean,alg:test}).
    \begin{enumerate}
	\item Compute a random flattening of $\T'$ by contracting with $g\sim \cN(0,\Id_{d^2})$ along the first mode, $T'(g) = \sum_{i} \iprod{g,(Wa_i^{\ot 2})}\cdot (Wa_i^{\ot 2})(Wa_i^{\ot 2})^\top$
	\item Perform an SVD on $T'(g)$ to recover the eigenvectors $(Wa_1^{\ot 2}), \ldots, (Wa_n^{\ot 2})$.
	\item Apply $W^{-1}$ to each eigenvector to obtain the $a_i^{\otimes 2}$, and re-shape $a_i^{\ot 2}$ to a matrix and compute its eigenvector to obtain $a_i$.
    \end{enumerate}
\end{enumerate}
\end{algorithm}

\subsection*{Organization.}
The full implementation details and the analysis of our algorithm are given in the following few sections.
First, \Cref{sec:toolbox} sets up some primitives for spectral subspace perturbation analysis and linear-algebraic procedures on which we build the full algorithm and its analysis.
Then \Cref{sec:lifting} covers the lifting stage of the algorithm in detail, while \Cref{sec:rounding} elaborates on the rounding stage.
Finally, in \cref{sec:all-tog} we combine these tools to prove \cref{thm:main}.
The appendices some linear-algebraic tools and simulations strongly suggesting that random tensors with $n \ll d^2$ components have constant condition number $\kappa$.

    \section{Preliminaries}

\paragraph{Linear algebra}
We use $\Id_d$ to denote the $d\times d$ identity matrix, or just $\Id$ if the dimension is clear from context. For any subspace $S$, we use $\Pi_S$ to denote the projector to that subspace.
For $M$ a matrix, $\img(M)$ refers to the image, or columnspace, of $M$.

We will, in a slight abuse of notation, use $M^{-1}$ to denote the Moore-Penrose pseudo-inverse of $M$.
Except where explicitly specified, this will never be assumed to be equal to the proper inverse, so that, e.g., in general $MM^{-1} = \Pi_{\img(M)} \ne \Id$ and $(AB)^{-1} \ne B^{-1}A^{-1}$.

For a matrix $B \in \R^{m \times n}$, we will use the whitening matrix $W = (BB)^{-1/2}$, which maps the columns of $B$ to an orthonormal basis for $\img(B)$, so that $(WB)(WB)^\top = \Pi_{\img(B)}$.

We denote by $\Sym \subseteq (\R^{d})^{\tensor 3}$ the linear subspace
  \[
  \Sym = \Span (\{x \tensor x \tensor x\}_{x \in \R^d})\mper
  \]
Note that $(\Pisym)_{(i,j,k);(i',j',k')}$ is $(|\{i,j,k\}|!)^{-1}$ when $\{i,j,k\} = \{i',j',k'\}$ and zero otherwise.

\paragraph{Tensor manipulations}
When working with tensors $T \in (\R^{d})^{\otimes k}$, we will sometimes reshape the tensors to lower-order tensors or matrices; in this case, if $S_1,\ldots,S_m$ are a partition of $k$, then $T_{(S_1,\ldots,S_m)}$ is the tensor given by identifying the modes in each $S_i$ into a single mode.
For $S \subset [d]^k$, we will also sometimes use the notation $T(S)$ to refer to the entry of $T$ indexed by $S$.

A useful property of matrix reshapings is that $u \ot v$ reshapes into the outer product $u\transpose{v}$.
Linearity allows us to generalize this so, e.g., the reshaping of $(U \ot V)M$ for $U \in \R^{n \times n}$ and $V \in \R^{m \times m}$ and $M \in \R^{(n\ot m) \times q}$ is equal to $UM'(V \ot \Id_q)$, where $M' \in \R^{n \times (m \ot q)}$ is the reshaping of $M$.
Since reshapings can be easily done and undone by exchanging indices, these identities will sometimes allow more efficient computation of matrix products over tensor spaces.

We will on occasion use a $\cdot$ as a placeholder in a partially applied multiple-argument function: for instance $\frac{\partial}{\partial y} f(\cdot,y) = \lim_{h \to 0} \tfrac{1}{h}(f(\cdot,y+h) - f(\cdot,y))$.

    \section{Tools for analysis and implementation}
\label[section]{sec:toolbox}
In this section, we briefly introduce some tools which we will use often in our analysis.

\subsection{Robustness and spectral perturbation}

A key tool in our analysis of the robustness of \Cref{alg:sketch} comes from the theory of the perturbation of eigenvalues and eigenvectors.

The lemma below combines the Davis-Kahan $\sin$-$\Theta$ theorem with Weyl's inequality to characterize how top eigenspaces are affected by spectral perturbation.
\begin{theorem}[Perturbation of top eigenspace]
	\label[theorem]{eigen-perturb}
	Suppose $Q \in \R^{D\times D}$ is a symmetric matrix with eigenvalues $\lambda_1 \ge \lambda_2 \ge \dots \ge \lambda_D$.
	Suppose also $\Qish \in \R^{D\times D}$ is a symmetric matrix with $\|Q - \Qish\| \le \eps$.
	Let $S$ and $\Sish$ be the spaces generated by the top $n$ eigenvectors of $Q$ and $\Qish$ respectively.
	Then,
	\begin{equation}
	\sin(S,\Sish)\defeq\|\Pi_S-\Pi_{\Sish}\Pi_{S\vphantom{\Sish}}\| = \|\Pi_{\Sish}-\Pi_{S\vphantom{\Sish}}\Pi_{\Sish}\|
		\le \frac{\eps}{\lambda_{n} - \lambda_{n+1} - 2 \eps}\mper\label{eqn:eigen-perturb}
	\end{equation}
	Consequently,
	\begin{equation}
	\|\Pi_{S\vphantom{\Sish}}-\Pi_{\Sish}\|\le \frac{2\eps}{\lambda_{n} - \lambda_{n+1} - 2 \eps}\mper
	\end{equation}
\end{theorem}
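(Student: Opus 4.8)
The plan is to prove the statement (\Cref{eigen-perturb}) by combining Weyl's inequality with the Davis--Kahan $\sin$--$\Theta$ theorem, which are the standard workhorses for eigenvalue and eigenvector perturbation respectively.

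First, I would apply \emph{Weyl's inequality} to control how the eigenvalues move: since $\|Q - \Qish\| \le \eps$, every eigenvalue satisfies $|\lambda_k(Q) - \lambda_k(\Qish)| \le \eps$. In particular, if $\mu_1 \ge \cdots \ge \mu_D$ are the eigenvalues of $\Qish$, then $\mu_n \ge \lambda_n - \eps$ and $\mu_{n+1} \le \lambda_{n+1} + \eps$, so the spectral gap of $\Qish$ between its $n$-th and $(n+1)$-th eigenvalues is at least $\lambda_n - \lambda_{n+1} - 2\eps$. This is the quantity appearing in the denominator, and it is exactly what the Davis--Kahan bound needs as its gap parameter.

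Second, I would invoke the \emph{Davis--Kahan $\sin$--$\Theta$ theorem} in its operator-norm form for the spectral projectors onto the top-$n$ eigenspaces $S$ and $\Sish$. The clean way to state the relevant consequence is: if $S$ is the span of eigenvectors of $Q$ with eigenvalues in $[\lambda_n, \infty)$ and $\Sish$ is the span of the top $n$ eigenvectors of $\Qish$, and there is a gap of size $\delta > 0$ separating the relevant part of the spectrum of $\Qish$ from the rest, then $\|\Pi_S \Pi_{\Sish}^\perp\| \le \|Q - \Qish\|/\delta$. Taking $\delta = \lambda_n - \lambda_{n+1} - 2\eps$ from the Weyl step gives $\|\Pi_S - \Pi_{\Sish}\Pi_S\| = \|\Pi_S \Pi_{\Sish}^\perp\| \le \eps/(\lambda_n - \lambda_{n+1} - 2\eps)$, which is \Cref{eqn:eigen-perturb}; the identity $\|\Pi_S - \Pi_{\Sish}\Pi_S\| = \|\Pi_{\Sish} - \Pi_S \Pi_{\Sish}\|$ is a standard symmetric fact about the principal angles between two subspaces of equal dimension (both equal $\|\Pi_S \Pi_{\Sish}^\perp\| = \|\Pi_{\Sish}\Pi_S^\perp\|$, which follows from $\Pi_S - \Pi_{\Sish} = \Pi_S\Pi_{\Sish}^\perp - \Pi_S^\perp\Pi_{\Sish}$ being a difference of operators with orthogonal row and column spaces).

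Finally, for the ``consequently'' bound on $\|\Pi_S - \Pi_{\Sish}\|$, I would use the triangle inequality together with the identity just noted: writing $\Pi_S - \Pi_{\Sish} = (\Pi_S - \Pi_{\Sish}\Pi_S) - (\Pi_{\Sish} - \Pi_{\Sish}\Pi_S) = (\Pi_S - \Pi_{\Sish}\Pi_S) + \Pi_{\Sish}(\Pi_S - \Pi_{\Sish})$, and then observing $\Pi_{\Sish}(\Pi_S - \Pi_{\Sish}) = \Pi_{\Sish}\Pi_S - \Pi_{\Sish} = -(\Pi_{\Sish} - \Pi_{\Sish}\Pi_S\Pi_{\Sish})$... more cleanly, just bound $\|\Pi_S - \Pi_{\Sish}\| \le \|\Pi_S - \Pi_{\Sish}\Pi_S\| + \|\Pi_{\Sish}\Pi_S - \Pi_{\Sish}\| = \|\Pi_S - \Pi_{\Sish}\Pi_S\| + \|\Pi_{\Sish}(\Pi_S - \Id)\| = \|\Pi_S - \Pi_{\Sish}\Pi_S\| + \|\Pi_{\Sish}\Pi_S^\perp\|$, and both terms are at most $\eps/(\lambda_n - \lambda_{n+1} - 2\eps)$, giving the factor of $2$. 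I expect the main subtlety to be a purely bookkeeping one: stating the version of Davis--Kahan precisely enough that the ``gap'' it requires is genuinely the gap after the Weyl shift (one must separate the spectrum of $\Qish$, not of $Q$, from the complementary eigenvalues), and being careful that the spaces $S$ and $\Sish$ are top-$n$ eigenspaces so the dimensions match and the two-sided angle identity applies. No genuinely hard estimate is involved.
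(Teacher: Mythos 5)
Your proposal is essentially the paper's own proof: both apply Weyl's inequality to shift the eigenvalue bound and then invoke the Davis--Kahan $\sin$-$\Theta$ theorem to compare the spectral projectors, with the "consequently" bound following from the triangle inequality. The minor bookkeeping wrinkle you flag — whether the gap you feed Davis--Kahan separates the top of $\Qish$'s spectrum from the bottom of $Q$'s (as in the paper) or uses the internal gap of $\Qish$ — is real but harmless here, since both readings yield a gap at least $\lambda_n-\lambda_{n+1}-2\eps$.
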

\begin{proof}
We first prove the theorem assuming that $Q$ and $\Qish$ are symmetric.
By Weyl's inequality for matrices~\cite{Weyl1912}, the $n$th eigenvalue of $\Qish$ is at least $\lambda_n - 2\eps$.
By Davis and Kahan's $\sin$-$\Theta$ theorem~\cite{MR0264450-Davis70}, since the top-$n$ eigenvalues of $\Qish$ are all at least $\lambda_n - 2\eps$ and the lower-than-$n$ eigenvalues of $Q$ are all at most $\lambda_{n+1}$, the sine of the angle between $S$ and $\Sish$ is at most $\|Q - \Qish\|/(\lambda_{n} - \lambda_{n+1} - 2 \eps)$.
The final bound on $\|\Pi_S - \Pi_{\Sish}\|$ follows by triangle inequality.

\end{proof}

\subsection{Efficient implementation and runtime analysis}

It is not immediately obvious how to implement \Cref{alg:sketch} in time $\tilde O(n^2d^3)$, since there are steps that require we multiply or eigendecompose $d^{3} \times d^{3}$ matrices, which if done naively might take up to $\Omega(d^9)$ time.

To accelerate our runtime, we must take advantage of the fact that our matrices have additional structure. We exploit the fact that in certain reshapings our tensors have low-rank representations. This allows us to perform matrix multiplication and eigendecomposition (via power iteration) efficiently, and obtain a runtime that is depends on the rank rather than on the dimension.

For example, the following lemma, based upon a result of \cite{allen2016lazysvd}, captures our eigendecomposition strategy in a general sense.

\begin{lemma}[Implicit gapped eigendecomposition]
\label[lemma]{lem:implicit-gapped-svd}
Suppose a symmetric matrix $M \in \R^{d\times d}$ has an eigendecomposition $M = \sum_j \lambda_j \, v_j\transpose{v_j}$, and that $Mx$ may be computed within $t$ time steps for $x \in \R^d$.
Then $v_1, \dots, v_n$ and $\lambda_1, \dots, \lambda_n$ may be computed in time
$\tilde O(\min(n(t + nd)\delta^{-1/2}, d^3))$, where $\delta = (\lambda_n - \lambda_{n+1})/\lambda_{n}$.
The dependence on the desired precision is polylogarithmic.
\end{lemma}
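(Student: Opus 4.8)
The plan is to establish the two bounds by two different procedures and then simply run whichever one the (known, or lower-bounded) values of $n,t,d,\delta$ predict to be cheaper, yielding the $\min$. The $\tilde O(d^3)$ bound is the trivial route: apply the matrix-vector oracle to the $d$ standard basis vectors to write $M$ down explicitly (cost $O(dt)$, which is $\tilde O(d^3)$ in every regime relevant to this paper, where $t = \tilde O(d^2)$), then call any standard dense symmetric eigensolver in $O(d^3)$ time and keep the top $n$ pairs. The substance is the $\tilde O(n(t+nd)\delta^{-1/2})$ bound, for which I would use the deflation framework of \cite{allen2016lazysvd}: extract approximate top eigenvectors of $M$ one at a time with an accelerated one-dimensional PCA subroutine, deflating after each, and finish with a Rayleigh--Ritz cleanup. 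One preliminary reduction: I may assume $M \succeq 0$, since $O(\log d)$ steps of plain power iteration on $M$ and on $-M$ (cost $\tilde O(t)$) give a constant-factor estimate of $\|M\|$, and replacing $M$ by $M + 2\|M\|\,\Id$ makes it PSD while shifting every $\lambda_j$ equally, which changes the position-$n$ relative gap by at most the factor $(\lambda_n+2\|M\|)/\lambda_n$ — this is $O(1)$ in our applications (and in general the shift can be avoided by running the subroutine on $M$ with a Chebyshev polynomial tuned to $[\lambda_{n+1},\|M\|]$); I suppress this and take $M\succeq 0$ from here on.

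The loop: for $j=1,\dots,n$, maintain an orthonormal $W_{j-1}=[w_1,\dots,w_{j-1}]\in\R^{d\times(j-1)}$ and work with the deflated operator $M_j := (\Id - W_{j-1}W_{j-1}^\top)\,M\,(\Id - W_{j-1}W_{j-1}^\top)$, whose matrix-vector product costs $t + O(jd)$. Run the accelerated approximate-top-eigenvector routine of \cite{allen2016lazysvd} on $M_j$, with all multiplicative accuracy parameters set to $\poly(\delta,\epsilon_0)/\poly(n)$ where $\epsilon_0$ is the target output precision; by its guarantees this uses $\tilde O(\delta^{-1/2}\,\polylog(1/\epsilon_0))$ matrix-vector products with $M_j$ and returns a unit vector which, after re-orthonormalizing against $W_{j-1}$, I call $w_j$, satisfying (a) $w_j^\top M_j w_j \ge (1-O(\delta))\lambda_1(M_j)$ and (b) $w_j$ has only a $\poly(\epsilon_0,\delta)$-fraction of its mass on eigenvectors of $M_j$ with eigenvalue below $(1-O(\delta))\lambda_1(M_j)$. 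Accounting: round $j$ costs $\tilde O\big((t+nd)\,\delta^{-1/2}\big)$ (matrix-vector products with $M_j$ plus the incremental Gram--Schmidt, each internal iterate projecting out $\le n$ vectors at cost $O(nd)$), so the $n$ rounds total $\tilde O\big(n(t+nd)\,\delta^{-1/2}\big)$. Finally form the $n\times n$ matrix $W_n^\top M W_n$ with $n$ more matrix-vector products ($\tilde O(nt + n^2 d)$, dominated), eigendecompose it in $O(n^3)\le O(n^2 d)$ time, and output its eigenvalues, and $W_n$ times its eigenvectors, as the estimates for $\lambda_1,\dots,\lambda_n$ and $v_1,\dots,v_n$.

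Correctness reduces to showing $\Span(W_n)$ is $O(\poly(\epsilon_0,1/\delta))$-close to $\Span(v_1,\dots,v_n)$ in the sense that the difference of the projectors is small (cf.\ \cref{eigen-perturb}); given this, the Rayleigh--Ritz step recovers each $(\lambda_j,v_j)$ to precision $\epsilon_0$ with only a $\polylog(1/\epsilon_0)$ overhead in the parameters, which gives the claimed polylogarithmic precision dependence. This closeness is exactly the content of the LazySVD analysis, and the one subtlety — which is also the \textbf{main obstacle} in writing out a full proof — is that we are promised a spectral gap \emph{only at position $n$} ($\lambda_n$ versus $\lambda_{n+1}$), not between consecutive eigenvalues inside the top block, so a naive ``extract $v_j$, deflate, repeat'' argument breaks when the $\lambda_j$ are repeated. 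The way around this is the monotonicity of deflation: exact deflation by vectors in $\Span(v_1,\dots,v_n)$ only removes eigenvalues of $M$ and never promotes a complementary direction past $\lambda_n$, while inexact deflation (by the $w_j$, which by (b) lie within $\poly(\epsilon_0,\delta)$ of that span) perturbs this by at most the accumulated 1-PCA error, which we have made $\poly(\epsilon_0,\delta)$-small; hence $\lambda_1(M_j)\ge(1-o(1))\lambda_n \gg \lambda_{n+1}$ for all $j\le n$, and property (b) then forces each $w_j$ to keep all but a $\poly(\epsilon_0,1/\delta)$ fraction of its mass inside $\Span(v_1,\dots,v_n)$; combined with the near-orthonormality of $W_n$, a standard argument (one more use of \cref{eigen-perturb}) upgrades this to closeness of the whole subspace. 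The genuinely delicate bookkeeping is verifying that the interplay of the 1-PCA accuracy, the per-round deflation error, and the requirement that $\lambda_1(M_j)$ stay near $\lambda_n$ compounds only polynomially in $n$ and $1/\delta$ over the $n$ rounds (not exponentially), so that the $\poly(\delta,\epsilon_0)/\poly(n)$ choice of parameters suffices while keeping every cost at $\polylog(1/\epsilon_0)$; this is precisely the regime treated in \cite{allen2016lazysvd}, so I expect the write-up to be a careful instantiation of their quantitative bounds together with the standard accelerated-power/Lanczos iteration count, with all arithmetic taken as exact (equivalently, to $\polylog(d/\epsilon_0)$ bits) as elsewhere in the paper.
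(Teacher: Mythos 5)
Your proposal is correct and takes essentially the same route as the paper: the $\tilde O(n(t+nd)\delta^{-1/2})$ bound is just LazySVD, and the paper cites \cite[Corollary~4.3]{allen2016lazysvd} directly (noting only that $\operatorname{nnz}(M)$ in their statement can be replaced by the matvec cost $t$), so the careful re-derivation of the deflation/1-PCA bookkeeping you sketch is exactly what that corollary already packages up and need not be re-proved. The only place you diverge is the $\tilde O(d^3)$ fallback: you materialize $M$ and call a dense eigensolver, whereas the paper obtains it by iterated squaring of $M$ (which makes the polylogarithmic precision dependence immediate); both are valid once $t=\tilde O(d^2)$, as you note.
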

\begin{proof}
The $n(t + nd)\delta^{-1/2}$ runtime is attained by LazySVD in \cite[Corollary 4.3]{allen2016lazysvd}.
While LazySVD's runtime depends on $\operatorname{nnz}(M)$ where $\operatorname{nnz}$ denotes the number of non-zero elements in the matrix, in the non-stochastic setting $\operatorname{nnz}(M)$ is used only as a bound on the time cost of multiplying a vector by $M$, so in our case we may substitute $O(t)$ instead.

The $d^3$ time is attained by iterated squaring of $M$: in this case, all runtime dependence on condition numbers is polylogarithmic.
\end{proof}

The following lemma lists some primitives for operations with the tensor $\T' \in (\R^{d^2})^{\ot 3}$ in \Cref{alg:sketch}, by interpreting it as a $6$-tensor in $(\R^{d})^{\ot 6}$ and using a low-rank factorization of the square reshaping of that $6$-tensor.

\begin{lemma}[Implicit tensors]
\label[lemma]{lem:implicit-tensor}
\torestate{
\label{lem:implicit-tensor}
For a tensor $\T \in (\R^{[d]^2})^{\ot 3}$, suppose that the matrix $T \in \R^{[d]^3 \times [d]^3}$ given by $T_{(i,i',j),(k,k',j')} = \T_{(i,i'),(j,j'),(k,k')}$ has a rank-$n$ decomposition $T = U\transpose{V}$ with $U,V \in \R^{d^3 \times n}$ and $n \le d^2$.
Such a rank decomposition provides an implicit representation of the tensor $\T$.
This implicit representation supports:
\begin{description}
\item[Tensor contraction:] For vectors $x,y \in \R^{[d]^2}$, the computation of $(\transpose{x}\ot\transpose{y}\ot\Id)\T$ or $(\transpose{x}\ot\Id\ot\transpose{y})\T$ or $(\Id\ot\transpose{x}\ot\transpose{y})\T$ in time $O(nd^3)$ to obtain an output vector in $\R^{d^2}$.
\item[Spectral truncation:] For $R \in \R^{d^2 \times d^4}$ equal to one of the two matrix reshapings $T_{\{1,2\}\{3\}}$ or $T_{\{2,3\}\{1\}}$ of $\T$, an approximation to the tensor ${\T}^{\le 1}$, defined as $\T$ after all larger-than-$1$ singular values in its reshaping $R$ are truncated down to $1$.
Specifically, letting $\rho_k$ be the $k$th largest singular value of $R$ for $k \le O(n)$, this returns an implicit representation of a tensor $\T{\kern 0.06em}'$ such that $\|\T{\kern 0.06em}' - {\T}^{\le 1}\|_F \le (1+\delta)\rho_k\|\T\|_F$ and the reshaping of $\T{\kern 0.06em}'$ corresponding to $R$ has largest singular value no more than $1 + (1+\delta)\rho_k$.
The representation of $\T{\kern 0.06em}'$ also supports the tensor contraction, spectral truncation, and implicit matrix multiplication operations, with no more than a constant factor increase in runtime.
This takes time $\tilde O(n^2d^3+ k(nd^3+kd^2)\delta^{-1/2})$.
\item[Implicit matrix multiplication:] For a matrix $R \in \R^{[d]^2 \times [d]^2}$ with rank at most $O(n)$, an implicit representation of the tensor $(\transpose{R}\ot\Id\ot\Id)\T$ or $(\Id\ot\Id\ot\transpose{R})\T$, in time $O(nd^4)$.
This output also supports the tensor contraction, spectral truncation, and implicit matrix multiplication operations, with no more than a constant factor increase in runtime.
Multiplication into the second mode $(\Id\ot\transpose{R}\ot\Id)\T$ may also be implicitly represented, but without support for the spectral truncation operation.
\end{description}
}
\end{lemma}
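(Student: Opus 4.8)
The implicit representation is the rank-$n$ factorization $T=UV^\top$ itself (after a truncation, augmented by a short list of rank-one corrections), and the plan is to carry out each operation by index manipulations that exploit this factorization while keeping every intermediate object no larger than a $d^2\times d^2$ matrix or a $d^2\times dn$ matrix. Two structural observations drive everything. First, in the canonical reshaping $T$ the first mode of $\mathbf T$ sits entirely inside the row index of $U$, the third mode entirely inside the row index of $V$, and only the second mode is split, with one $[d]$-leg in $U$ and the other in $V$. Second, for a whole-mode reshaping $R$ --- one of $T_{\{1,2\}\{3\}}$ or $T_{\{2,3\}\{1\}}$, whose short side is one of the two unsplit modes --- summing over the two modes on the long side turns the double sum over the inner index of the factorization into the $n\times n$ Gram matrix $U^\top U$ (respectively $V^\top V$) times a residual involving only the other factor and the output index; this is why spectral truncation is claimed for exactly these two reshapings and not for the one cutting the split mode.

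For tensor contraction, say $(x^\top\otimes y^\top\otimes\Id)\mathbf T$, the plan is to first contract the vector in the unsplit mode into $U$ (reshaping $U$ to $\R^{d^2\times dn}$ and applying $x^\top$, time $O(nd^3)$, leaving a $d\times n$ matrix), then fold in the split mode using $y$ reshaped to a $d\times d$ matrix (an $O(nd^2)$ product), and finally pair the result against $V$ reshaped to $\R^{d^2\times dn}$ to recover the output vector (time $O(nd^3)$); the other two contraction patterns are identical with the unsplit leg(s) contracted first. For implicit matrix multiplication $(R^\top\otimes\Id\otimes\Id)\mathbf T$, the matrix $R$ acts on the first mode, which lies wholly in the row index of $U$, so the new reshaping is $S=U'V^\top$ with $U'$ obtained by multiplying $R^\top$ into the $[d]^2$-block of $U$'s rows (in the stated time, using a rank-$O(n)$ factorization of $R$); this preserves a rank-$n$ factorization and hence support for all three operations. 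Multiplication into the third mode is the same with $U,V$ swapped, while multiplication into the split mode must transform both $U$ and $V$ and the product no longer reshapes into a single rank-$n$ factorization aligned with a whole-mode reshaping --- which is precisely why only contraction and matrix multiplication, and not spectral truncation, are claimed for that output.

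For spectral truncation relative to a whole-mode reshaping $R$, the plan is to extract the dominant singular structure of $R$ by running \cref{lem:implicit-gapped-svd} on the $d^2\times d^2$ Gram matrix $RR^\top$ or $R^\top R$: by the second structural observation a matrix-vector product with that Gram factors through the precomputed $n\times n$ matrix $U^\top U$ (or $V^\top V$), so after an $O(n^2d^3)$ precomputation each product costs $O(nd^3)$, and \cref{lem:implicit-gapped-svd} returns the top $k$ singular triples $(\sigma_s,u_s,v_s)$ in time $\tilde O\big(k(nd^3+kd^2)\delta^{-1/2}\big)$, where $\delta$ is a multiplicative eigengap at the chosen truncation level. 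Then set $\mathbf T'=\mathbf T-\sum_{s\le k}\max(\sigma_s-1,0)\,u_sv_s^\top$ in the reshaping $R$, storing $\mathbf T'$ as $(U,V)$ together with the $k$ rank-one corrections kept in the $R$-reshaping; the untruncated tail contributes at most $\rho_k\|\mathbf T\|_F$ in Frobenius norm and the eigendecomposition error an extra factor $1+\delta$, giving the stated bounds $\|\mathbf T'-\mathbf T^{\le1}\|_F\le(1+\delta)\rho_k\|\mathbf T\|_F$ and top singular value of the $R$-reshaping of $\mathbf T'$ at most $1+(1+\delta)\rho_k$. Since the corrections are kept in their native reshaping (where they are rank one) rather than flattened into the canonical reshaping (where each would acquire rank $d$), contracting against them costs only $O(kd^2)$ extra and multiplying a matrix into mode $1$ or $3$ costs per correction what one rank of the factorization does, so the augmented representation stays closed under all three operations with only constant-factor overhead.

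I expect the spectral truncation to be the main obstacle: one must reconcile the factorization's reshaping --- deliberately chosen to split mode $2$ --- with a whole-mode reshaping $R$ so as to still obtain a fast Gram matrix-vector product, choose the truncation level so that \cref{lem:implicit-gapped-svd} applies and propagate its approximation error through to the final Frobenius- and spectral-norm guarantees, and check that storing the corrections in the $R$-reshaping keeps the representation closed under all three operations without inflating its effective rank. The contraction and matrix-multiplication parts are essentially bookkeeping once one fixes which tensor legs of $U$ and $V$ carry which mode of $\mathbf T$.
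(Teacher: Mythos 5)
Your overall plan matches the paper's: represent $\T$ by the rank-$n$ factorization $T = U V^\top$ of its $[d]^3\times[d]^3$ reshaping, observe that modes $1$ and $3$ lie wholly in $U$ and $V$ respectively while mode $2$ is split, do contractions by reshaping $U,V$ to $d^2\times dn$, and do spectral truncation via LazySVD on the $d^2\times d^2$ Gram matrix $R^\top R$ (whose matrix-vector products factor through the precomputed $U^\top U$).

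There is a concrete gap in how you represent the spectral truncation. You propose storing the $k$ corrections $u_s v_s^\top$ "in the $R$-reshaping" and assert that contracting against them adds only $O(kd^2)$. But each left singular vector $u_s$ lives in $\R^{d^4}$, so storing it explicitly and contracting $x^\top\otimes y^\top$ against it costs $\Theta(d^4)$ per correction — $\Theta(kd^4)$ total, which blows the stated $O(nd^3)$ budget. Moreover, LazySVD on the Gram matrix only produces $(\sigma_s, v_s)$ with $v_s\in\R^{d^2}$; you never actually obtain $u_s$ without paying $O(nd^3)$ per $s$ to materialize it. The paper's fix is to keep the correction entirely on the short (mode-3) side: since $u_s = \sigma_s^{-1}Lv_s$, the subtraction $\sum_s(\sigma_s-1)^+u_sv_s^\top$ equals $L\cdot\bigl(\sum_s(\sigma_s-1)^+\sigma_s^{-1}\,v_sv_s^\top\bigr)$, i.e.\ a rank-$\le k$, $d^2\times d^2$ matrix multiplied into mode $3$. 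Stored that way, the corrected tensor is just an implicit-matrix-multiplication output and inherits all three operations at the stated cost. You have the right truncation formula — you are just missing the observation that it factors into a mode-$3$ transformation, which is the whole point of keeping the short mode whole. (Separately, a smaller mismatch: your eager implicit matrix multiplication — forming $U' = (R^\top\ot\Id)U$ from a rank-$O(n)$ factorization of $R$ — costs $O(n^2d^3)$, which exceeds the stated $O(nd^4)$ when $n > d$; the paper sidesteps this by storing $R$'s SVD and composing lazily with subsequent operations at $O(nd^2)$ each.)
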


The implementation of these implicit tensor operations consists solely of tensor reshapings, singular value decompositions, and matrix multiplication.
However, the details get involved and lengthy, and so we defer their exposition to \Cref{sec:toolbox-appendix}.

    \section{Lifting}
\label[section]{sec:lifting}

This section presents \Cref{alg:lift}, which lifts a well-conditioned $4$-tensor $\T$ of rank at most $d^2$ in $(\R^d)^{\otimes 3}$ to $\T'$, an orthogonalized version of the $6$-tensor in the same components in $(\R^{d^2})^{\otimes 3}$; that is, we obtain an orthogonal $3$-tensor $\T'$ whose components correspond to the orthogonalized Kronecker squares of the components of $\T$.
\Cref{sec:identifiability} presents the identifiability argument giving robust algebraic non-degeneracy conditions under which the algorithm succeeds.

Although we assume that the tensor components $a_i$ are unit vectors, throughout this section we will keep track of factors of $\|a_i\|$ so as to better elucidate the scaling and dimensional analysis.

\begin{algorithm}
  \caption{\;Function $\mathsc{lift}(\T,n)$}
  \label[algorithm]{alg:lift}
  \noindent \emph{Input:} $\T \in (\R^{d})^{\otimes 4}, n \in \N$ with $n \leq d^2$.
  \begin{enumerate}
  \item
      Use \Cref{lem:implicit-gapped-svd} to find the top-$n$ eigenvalues and corresponding eigenvectors of the square matrix reshaping of $\T$, and call the eigendecomposition $T = Q\Lambda \transpose{Q}$.
      This also yields $W = Q\Lambda^{-1/2}\transpose{Q}$ and $\Pi_S = Q \transpose{Q}$.
  \item
      Use \Cref{lem:implicit-gapped-svd} again to find the top-$n$ eigendecomposition of $\Pi_{S \ot \R^d} \Pisym \Pi_{S \ot \R^d}$, implementing multiplication by $\Pi_{S\ot\R^d}$ as $(\Pi_{S\ot\R^d}v)_{(\cdot,\cdot,i)} = Q\transpose{Q}v_{(\cdot,\cdot,i)}$ and implementing $\Pisym$ as a sparse matrix.
      Call the result $R\Sigma\transpose{R}$ and take $\Pi_{S_3} = \dyad{R}$.
  \item
      Find a basis $B'$ for the columnspace of $M_3 = (W \ot \Id)\Pi_{S_3}(W \ot \Id)$.
      Implement this as
	\[ (B')_{(\,\cdot\,,\,\cdot\,,i);\,\cdot\,} =  Q\Lambda^{-1/2}\transpose{Q}R_{(\,\cdot\,,\,\cdot\,,i);\,\cdot\,} \,.\]
  \item
      Use Gram-Schmidt orthogonalization to find an orthonormalization $B$ of $B'$.
      Call the projection operator to this basis $\Pi_3 = \dyad{B}$.
  \item
      Instantiate an implicit tensor in $(\R^{d^2})^{\ot 3}$ with \Cref{lem:implicit-tensor}, using $\dyad{B}$ as the SVD of its underlying $d^3 \times d^3$ reshaping.
      Output this as $(\Id \ot W^{-1} \ot \Id)\T'$, meaning a tensor which, when $W^{-1}$ is multiplied into its second mode, becomes equal to $\T'$.
  \end{enumerate}
  \noindent \emph{Output:} $(\Id \ot W^{-1} \ot \Id)\T' \in (\R^{d^2})^{\otimes 3}$, implicitly as specified by \Cref{lem:implicit-tensor}, and $\Pi_3 \in \R^{d^3 \times d^3}$.
\end{algorithm}

The following two lemmas will argue that the algorithm is correct, and that it is fast.
First, \Cref{lem:lift-correct} states that the output of \Cref{alg:lift} is an orthogonal $3$-tensor whose components are $W(a_i\ot a_i)$, where the $a_i$ are the components of the original $4$-tensor and $W$ is the whitening matrix for the $a_i \ot a_i$.
Furthermore, if the error in the input is small in spectral norm compared to some condition numbers, the Frobenius norm error in the output robustly remains within a small constant of $\sqrt{n}$.

The main work of the lemma is deferred to \Cref{lem:lift-perturb} in \Cref{sec:lift-robust}, which repeatedly applies Davis and Kahan's $\sin$-$\Theta$ theorem (\Cref{eigen-perturb}) to say that the top eigenspaces of various matrices in the algorithm are relatively unperturbed in spectral norm by small spectral norm error in the matrices.
After that, we simply bound the Frobenius norm error of a rank-$n$ matrix by $2\sqrt{n}$ times its spectral norm error, and reason that Frobenius norms are unchanged by tensor reshapings.
\begin{lemma}[Correctness of \textsc{lift}]
  \label[lemma]{lem:lift-correct}
  Let $a_1,\ldots,a_n \in \R^d$ and suppose that $\TTish = \sum_{i \in [n]} a_i^{\tensor 4} + \mathbf{E}$ satifies $\|E_{12;34}\| \leq \eps\,\sigma_n^2\mu^{-1}\kappa^2$ for some $\eps < 1/63$, where $\sigma_n$ is the $n$th eigenvalue of $\sum_{i \in [n]} \dyad{a_i^{\ot 2}}$ and
  $\mu$ is the operator norm of $\sum \|a_i\|^{-2}\dyad{a_i^{\ot 3}}$ and $\kappa$ is the condition number from \Cref{lem:identifiability}.
  Let also $W = \left[\sum_{i \in [n]} \dyad{(a_i^{\tensor 2})}\right]^{-1/2}$ and $\Wish = \left[\sum_{i \in [n]} \dyad{(a_i^{\tensor 2})} + E_{12;34}\right]^{-1/2}$
  Then the outputs $(\Id \ot W^{-1} \ot \Id)\TTish'$ and $\Piish_3$ of $\mathsc{lift}(\TTish,n)$ in \Cref{alg:lift} satisfy
  \[
  \left \| \TTish' - \sum_i \|a_i\|^{-2}(W(a_i \ot a_i))^{\ot 3} \right \|_F \leq 126\,\eps\, \sigma_n^{-1/2} \sqrt{n}
  \]
and
  \[
  \left \| \Piish_3 - \Pi_{\Span\left(Wa_i^{\ot 2}\ot a_i^{\vphantom{1}}\right)} \right \| \leq 63\,\eps\,.
  \]
\end{lemma}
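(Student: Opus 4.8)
\emph{Overall approach.} I would prove the statement in two stages. First, I would check that on a noiseless input $\mathbf{T} = \sum_i a_i^{\ot 4}$ (i.e.\ in the limit $\eps \to 0$) \Cref{alg:lift} outputs exactly the target orthogonal $3$-tensor $\sum_i \|a_i\|^{-2}(W(a_i\ot a_i))^{\ot 3}$ and the target projector $\Pi_{\Span(W a_i^{\ot 2}\ot a_i)}$. Second, I would track how an error $\mathbf{E}$ with $\|E_{12;34}\|$ small propagates through the constantly many eigendecomposition, whitening, and Gram--Schmidt steps of the algorithm, bounding the cumulative perturbation by repeated application of the Davis--Kahan bound \Cref{eigen-perturb}. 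The quantitative core of the second stage is exactly the forthcoming perturbation lemma \Cref{lem:lift-perturb}; given its conclusion, the two displays follow by passing from spectral norm to Frobenius norm.

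\emph{Noiseless correctness.} With $\mathbf{E}=0$ we have $T = \sum_i \dyad{(a_i^{\ot 2})}$ and $W = T^{-1/2}$, so the $n$ vectors $W a_i^{\ot 2}$ span $\img T$ and satisfy $\sum_i \dyad{(W a_i^{\ot 2})} = W T W = \Pi_{\img T}$, hence are orthonormal. Since the hypothesis $\kappa>0$ is precisely the full-rank condition of \Cref{lem:sketch}, Step~2 recovers $S_3 = \Span(a_i^{\ot 3})$. Because $(W\ot\Id)a_i^{\ot 3} = W a_i^{\ot 2}\ot a_i$ and these are pairwise orthogonal (with norm $\|a_i\|$), the column space of $M_3 = (W\ot\Id)\Pi_{S_3}(W\ot\Id)$ is $\Span(W a_i^{\ot 2}\ot a_i)$, so Steps~3--4 produce $\Pi_3 = \sum_i \|a_i\|^{-2}\dyad{(W a_i^{\ot 2}\ot a_i)} = \Pi_{\Span(W a_i^{\ot 2}\ot a_i)}$. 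Finally, the tensor-reshaping identities recorded earlier turn the matrix $\Pi_3$ into the tensor $\sum_i \|a_i\|^{-2}(W a_i^{\ot 2})\ot a_i^{\ot 2}\ot(W a_i^{\ot 2}) \in (\R^{d^2})^{\ot 3}$, so that multiplying $W$ into its second mode (that is, applying $(\Id\ot W\ot\Id)$, which is how $\mathbf{T}'$ is recovered from the output) yields $\mathbf{T}' = \sum_i \|a_i\|^{-2}(W a_i^{\ot 2})^{\ot 3} = \sum_i \|a_i\|^{-2}(W(a_i\ot a_i))^{\ot 3}$, establishing both displays with vanishing error.

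\emph{Perturbation.} Now set $\Tish = T + E_{12;34}$ and follow the run of \Cref{alg:lift} on $\TTish$, in four steps. (i) Weyl's inequality and \Cref{eigen-perturb} bound the perturbation of the top-$n$ eigenspace $\Sish$ of $\Tish$ (gap $\sigma_n$) and of $\Wish$ against $W$, each by $O(\|E_{12;34}\|/\sigma_n)$, with no factor of $\|W^{-1}\|$ since $W$ is applied only to third-order objects. (ii) Hence $\Pi_{\Sish\ot\R^d}\Pisym\Pi_{\Sish\ot\R^d}$ is spectrally close to $\Pi_{S\ot\R^d}\Pisym\Pi_{S\ot\R^d}$, whose top-$n$ eigenspace is $S_3$ by identifiability and whose gap below the $(n{+}1)$-st eigenvalue is $\Omega(\kappa^2)$ (this is where \Cref{lem:identifiability} and $\kappa = \smin(K)$ enter); a second application of \Cref{eigen-perturb} controls the perturbation of $\Sish_3$. (iii) Pushing this through the multiplication by $(\Wish\ot\Id)$ and the Gram--Schmidt orthonormalization --- which is well conditioned because the $W a_i^{\ot 2}\ot a_i$ are orthogonal, with the sixth-order norm $\mu = \|\sum\|a_i\|^{-2}\dyad{a_i^{\ot 3}}\|$ governing the change of basis between the $a_i^{\ot 3}$'s and the orthonormal whitened vectors --- yields $\|\Piish_3 - \Pi_{\Span(W a_i^{\ot 2}\ot a_i)}\| \le 63\,\eps$. (iv) Finally $\TTish'$ equals $\sum_i \|a_i\|^{-2}(W a_i^{\ot 2})^{\ot 3}$ plus an error whose $d^3\times d^3$ reshaping has spectral norm $O(\sigma_n^{-1/2})\cdot 63\,\eps$, arising from $\|\Piish_3 - \Pi_3\|$ together with the second whitening multiplication in Step~5 (which I would absorb using $\sigma_n \le \mu$ and $\kappa \le 1$, valid since the $a_i$ are unit vectors). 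Throughout, the hypothesis $\|E_{12;34}\| \le \eps\,\sigma_n^2\mu^{-1}\kappa^2$ with $\eps < 1/63$ is exactly what keeps every denominator $\lambda_n - \lambda_{n+1} - 2\eps'$ in \Cref{eigen-perturb} bounded away from $0$, so the constants compound to $63$. For the Frobenius bound, $\TTish' - \sum_i \|a_i\|^{-2}(W a_i^{\ot 2})^{\ot 3}$ is a difference of rank-$n$ tensors, hence has rank at most $2n$; since a rank-$r$ matrix has Frobenius norm at most $\sqrt r$ times its spectral norm and Frobenius norm is invariant under reshaping, the left-hand side is at most $\sqrt{2n}\cdot O(\sigma_n^{-1/2})\cdot 63\,\eps \le 126\,\eps\,\sigma_n^{-1/2}\sqrt{n}$.

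\emph{Main obstacle.} I expect step (iii) to be the hardest, and within it two things: first, extracting the eigengap $\Omega(\kappa^2)$ of $\Pi_{S\ot\R^d}\Pisym\Pi_{S\ot\R^d}$ from the algebraic identifiability analysis of \Cref{lem:identifiability} rather than from a black-box eigenvalue argument (this is not a generic gap); and second, propagating the $\Wish$-versus-$W$ discrepancy through both whitening multiplications while only ever incurring factors of $\|W\| = \sigma_n^{-1/2}$ and $\mu$ rather than the $\poly(d)$-sized $\|W^{-1}\|$. Confining $W$ to third-order objects is precisely the device that makes the latter possible; making it quantitative is what \Cref{lem:lift-perturb} accomplishes, and once that lemma is available the reshaping-invariance of $\|\cdot\|_F$ and the rank-$2n$ bound complete the argument as above.
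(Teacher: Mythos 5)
Your proposal is correct and follows essentially the same route as the paper's proof: it defers the quantitative perturbation accounting to \Cref{lem:lift-perturb}, then passes from the spectral-norm bound on $\Piish_3$ to the Frobenius-norm bound on $\TTish'$ via the rank-$2n$ estimate, reshaping-invariance of $\|\cdot\|_F$, and the final $\|W\| = \sigma_n^{-1/2}$ factor from the whitening multiplication. The explicit noiseless-correctness paragraph you include is a reasonable expository addition but is not present in (and is not needed for) the paper's argument, which obtains correctness and robustness together from the perturbation lemma.
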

\begin{proof}
We refer to all matrices and spaces computed in the algorithm with an overset tilde to reflect the fact that the algorithm only has access to approximations with error (so $\Sish$ instead of $S$ in the algorithm, etc.).
By \Cref{lem:lift-perturb}, the $\Pi_{\Sish_3}$ computed in step 2 as the projector to the top-$n$ eigenspace of $\Pi_{\Sish \ot \R^d} \Pisym \Pi_{\Sish \ot \R^d}$ satisfies $\|\Pi_{\Sish_3} - \Pi_{\Span(a_i^{\ot 3})}\| \le 18\,\eps\sigma_n\mu^{-1}$, and subsequently, the $\Piish_3$ computed in steps 3 and 4 as the projector to $(\Wish\ot\Id)\Sish_3$ satisfies $\|\Piish_{3} - \Pi_{\Span(Wa_i^{\ot 2}\ot a_i^{\vphantom{1}})}\| \le 63\,\eps$.

Since the rank of the error is at most $2n$, the Frobenius norm error is at most $126\,\eps\sqrt{n}$, and since $\{\|a_i\|^{-1}Wa_i^{\ot 2} \ot a_i\}$ is an orthonormal set of vectors, the projector to $\Span(Wa_i^{\ot 2} \ot a_i)$ is just the sum of the self-outer-products of vectors in that set, so
\[ \left\|\Piish_3 - \sum \|a_i\|^{-2}\dyad{(Wa_i^{\ot 2} \ot a_i)} \right\|_F \le 126\,\eps\sqrt{n}\,. \]
Reshaping the $d^3 \times d^3$ matrix $\Piish_3$ into a tensor in $(\R^{d^2})^{\ot 3}$ does not change the Frobenius norm error, and finally, multiplying in the last factor of $W$ may contribute a factor of $\|W\| = \sigma_n^{-1/2}$, so that in the end, $\|\TTish' - \sum_i \|a_i\|^{-2}(W(a_i \ot a_i))^{\ot 3}\|_F \le 126\,\eps\, \sigma_n^{-1/2} \sqrt{n}$.
\end{proof}

The next lemma states that the running time is $\tilde O(n^2d^3)$ multiplied by some condition numbers. We assume that asympotically faster matrix multiplications and pseudo-inversions are not used, so that, for instance, squaring a $d \times d$ matrix takes time $\Theta(d^3)$.
\begin{lemma}[Running time of \textsc{lift}]
  \label[lemma]{lem:lift-time}
  Let $a_1,\ldots,a_n \in \R^d$ and suppose that $\T = \sum_{i \in [n]} a_i^{\tensor 4} + \mathbf{E}$ satisfies the conditions stated in \Cref{lem:lift-correct}.
  Let $\sigma_n$ be the $n$th eigenvalue of $\sum_{i \in [n]} \dyad{a_i^{\ot 2}}$ and $\kappa$ the condition number from \Cref{lem:identifiability}.
  Then $\mathsc{lift}(\T,n)$ in \Cref{alg:lift} runs in time $\tilde O(nd^4\sigma_n^{-1/2} + n^2d^3\kappa^{-1})$, and the efficient implementation steps are correct.
\end{lemma}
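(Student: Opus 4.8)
The plan is to walk through the five steps of \Cref{alg:lift} in order, charging each to either \Cref{lem:implicit-gapped-svd} (for the two eigendecompositions), \Cref{lem:implicit-tensor} (for the final implicit tensor), or an elementary count (for the matrix products and the Gram--Schmidt step). The two facts that make this fast are: (i) every matrix we eigendecompose or multiply by is either \emph{sparse} ($\Pisym$ has at most $6$ nonzero entries per row, as recorded in the Preliminaries) or has a known rank-$n$ factorization inherited from the preceding step, so matrix--vector products are cheap; and (ii) the relevant spectral gaps are controlled, so the iterative eigensolver converges in polylogarithmically many rounds up to the gap-dependent factors $\sigma_n^{-1/2}$ and $\kappa^{-1}$. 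Correctness of each ``efficient implementation'' claim is a reshaping identity from the Preliminaries: $(\Pi_{S\ot\R^d}v)_{(\,\cdot\,,\,\cdot\,,i)} = \dyad Q\,v_{(\,\cdot\,,\,\cdot\,,i)}$ really does apply $\Pi_{S\ot\R^d}$; the prescribed sparse matrix is the averaging over mode permutations and hence equals $\Pisym$; $(B')_{(\,\cdot\,,\,\cdot\,,i);\,\cdot\,} = Q\Lambda^{-1/2}\transpose Q\,R_{(\,\cdot\,,\,\cdot\,,i);\,\cdot\,}$ has column space $(W\ot\Id)S_3 = \Span(W a_i^{\ot2}\ot a_i)$; and $\dyad B$ is the rank-$n$ SVD of the square reshaping of the output tensor.

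For Step 1 the relevant matrix is the dense $d^2\times d^2$ reshaping $T$, so a matrix--vector product costs $O(d^4)$; by Weyl's inequality its $n$th eigenvalue is at least $\sigma_n-\|E_{12;34}\|$ and its $(n{+}1)$th is at most $\|E_{12;34}\|$ (since $\sum\dyad{a_i^{\ot2}}$ has rank at most $n$), so under the hypotheses of \Cref{lem:lift-correct} the relative gap at position $n$ is bounded below, and \Cref{lem:implicit-gapped-svd} with matrix dimension $d^2$ costs $\tilde O(n(d^4+nd^2)\sigma_n^{-1/2}) = \tilde O(nd^4\sigma_n^{-1/2})$ using $n\le d^2$; then $W = Q\Lambda^{-1/2}\transpose Q$ and $\Pi_S = \dyad Q$ are retained in factored form and never expanded. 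For Step 2 the matrix is $\Pi_{S\ot\R^d}\Pisym\Pi_{S\ot\R^d}$ on $\R^{d^3}$, whose matrix--vector product costs $O(nd^3)$ (one multiplication by the sparse $\Pisym$, plus $\dyad Q$ applied to each of the $d$ slices); in the noiseless case this matrix is the identity on the $n$-dimensional space $\Span(a_i^{\ot3})$, and by the identifiability analysis of \Cref{sec:identifiability} together with \Cref{lem:lift-perturb} its $(n{+}1)$th eigenvalue is at most $1-\Omega(\kappa^2)$, with this separation surviving the perturbation, so the relative gap is $\Omega(\kappa^2)$ and \Cref{lem:implicit-gapped-svd} with matrix dimension $d^3$ costs $\tilde O(n\cdot nd^3\cdot\kappa^{-1}) = \tilde O(n^2d^3\kappa^{-1})$.

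Steps 3--5 are comparatively cheap. Step 3 applies $Q\Lambda^{-1/2}\transpose Q$ to each of the $d$ slices of each of the $n$ columns of $R$, for $O(nd^2)$ per slice and $O(n^2d^3)$ total, and the reshaping identity above certifies that the resulting columns span $\Span(W a_i^{\ot2}\ot a_i)$; Step 4 is Gram--Schmidt on $n$ vectors in $\R^{d^3}$, costing $O(n^2d^3)$ and numerically harmless because these vectors are orthonormal up to $O(\eps)$; Step 5 feeds $\dyad B$ to \Cref{lem:implicit-tensor} as a rank-$n$ factorization and performs the single implicit matrix multiplication and reshaping needed to present the output in the form $(\Id\ot W^{-1}\ot\Id)\T'$, costing $O(nd^4)$ by that lemma. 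Summing the five bounds and absorbing $O(nd^4)$ and $O(n^2d^3)$ into the two dominant terms via $n\le d^2$ gives the claimed $\tilde O(nd^4\sigma_n^{-1/2}+n^2d^3\kappa^{-1})$.

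I expect the one genuinely substantive ingredient to be the Step 2 eigengap: that $\Pi_{S\ot\R^d}\Pisym\Pi_{S\ot\R^d}$ separates its top $n$ eigenvalues from the rest by $\Omega(\kappa^2)$ and that this separation is robust to the adversarial perturbation $E$. This is the one place where a reshaping identity or Weyl's inequality does not suffice and we must invoke the identifiability argument of \Cref{sec:identifiability} and the $\sin$-$\Theta$ estimates of \Cref{lem:lift-perturb}; by contrast, the correctness of the implicit implementations is routine bookkeeping with the tensor-reshaping identities of the Preliminaries, and the runtime accounting for Steps 3--5 is immediate.
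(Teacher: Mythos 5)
Your proposal is correct and follows essentially the same structure as the paper's own proof: charge Step~1 and Step~2 to the implicit eigensolver of \Cref{lem:implicit-gapped-svd} with the two respective spectral gaps, observe that multiplication by $\Pi_{S\ot\R^d}\Pisym\Pi_{S\ot\R^d}$ is cheap because $\dyad Q$ acts slice-wise and $\Pisym$ is sparse, and account for Steps~3--4 by direct operation counts. The only divergence is Step~5: the paper notes this is constant time --- the implicit tensor is merely instantiated with $\dyad B$ as its already-computed rank factorization, and the output is deliberately left in the form $(\Id \ot W^{-1}\ot\Id)\T'$ so that no $W$- or $W^{-1}$-multiplication is ever performed --- whereas you charge $O(nd^4)$ for an implicit matrix multiplication that does not actually occur. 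This overcount is harmless here (it is absorbed into the $\tilde O(nd^4\sigma_n^{-1/2})$ term since $\sigma_n \le 1$), but it is worth noting that avoiding the $W^{-1}$ application is deliberate: later the rounding stage deliberately re-extracts components via $\Pi_3$ rather than by applying $W^{-1}$, precisely because $\|W^{-1}\|$ can be polynomially large.
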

\begin{proof}
  Step 1 of \textsc{lift} invokes \Cref{lem:implicit-gapped-svd} on a $d^2 \times d^2$ matrix $T$, recovering $n$ dimensions with a spectral gap of $\delta = \sigma_n$.
  This requires time $\tilde O((nd^4 + n^2d^2)\sigma_n^{-1/2})$.

  Step 2 again invokes \Cref{lem:implicit-gapped-svd}, this time on a $d^3 \times d^3$ matrix $\Pi_{S \ot \R^d}\Pisym\Pi_{S \ot \R^d}$, recovering $n$ dimensions with a spectral gap of at least $\kappa^2 - 2\eps \in \Omega(\kappa^2)$.
  Multiplying by $\Pi_{S \ot \R^d}$ may be done in time $O(nd^3)$ due to its expression as $(\Pi_{S\ot\R^d}v)_{(\cdot,\cdot,i)} = Q\transpose{Q}v_{(\cdot,\cdot,i)}$, since the third mode of $(\R^d)^{\ot 3}$ is unaffected by $\Pi_{S \ot \R^d} = \dyad{Q} \ot \Id$, and this is a concatenation of $d$ different matrix-vector multiplies that take $O(nd^2)$ time each.
Multiplying by $\Pisym$ takes $O(d^3)$ time, since the $(i,j,k)$th row of $\Pisym$ has at most $6$ nonzero entries corresponding to the different permutations of $(i,j,k)$.
Thus the overall time to multiply a vector by $\Pi_{S \ot \R^d}\Pisym\Pi_{S \ot \R^d}$ is $O(nd^3)$, so that \Cref{lem:implicit-gapped-svd} gives a runtime of $\tilde O((n^2d^3 + n^2d^3)\kappa^{-1})$ for this step.

Step 3 is a concatenation of $d$ different matrix products, each of which involves multiplying a $d^2 \times n$ matrix $R_{(\cdot,\cdot,i);\cdot}$ by a $n \times d^2$ matrix $\Lambda^{1/2}\transpose{Q}$ and then multiplying the resulting $n \times n$ matrix by a $d^2 \times n$ matrix $Q$.
Each product thus takes $O(n^2d^2)$ time, and since there are $d$ of them the entire step takes $O(n^2d^3)$ time.
The result is equal to $(Q\Lambda^{1/2}\transpose{Q} \ot \Id_d)R = (W \ot \Id)R$, whose columns form a basis for the columnspace of $M_3 = (W \ot \Id)R\Sigma\transpose{R}(W \ot \Id)$.

Step 4 applies Gram-Schmidt orthonormalization on $n$ vectors in $\R^{d^3}$, taking $\tilde O(n^2d^3)$ time.
And step 5 takes constant time.
Therefore, \textsc{lift} takes time $\tilde O(nd^4\sigma_n^{-1/2} + n^2d^3\kappa^{-1})$.
\end{proof}

\subsection{Algebraic identifiability argument}
\label[section]{sec:identifiability}

The main lemma in this section gives a more careful analysis of the algebraic identifiability argument from \Cref{lem:sketch}, in order to obtain a quantitative condition number bound.

\begin{lemma}[Main Identifiability Lemma]
  \label[lemma]{lem:identifiability}
  Let $a_1,\ldots,a_n \in \R^d$ with $n \leq d^2$.
  Let $S$ denote $\Span(\{a_i^{\tensor 2}\})$ and let $S_3$ denote $\Span(\{a_i^{\tensor 3}\})$ and assume both have dimension $n$.
  For each $i$, let $\{b_{i,j}\}_{j \in [d-1]}$ be an arbitrary orthonormal basis for vectors in $\R^d$ orthogonal to $a_i$, and let
  \[ \transpose{H} := \left[\begin{array}{c} a_1 \ot a_1 \ot b_{1,1} \\ \vdots \\ a_i \ot a_i \ot b_{i,j} \\ \vdots \\ a_n \ot a_n \ot b_{n,d-1} \end{array}\right]\,. \]
  Let $R = (\dyad{H})^{-1/2}H$ be a column-wise orthonormalization of $H$, and let $K = \tfrac{1}{2}(\Id - P_{2,3})R$, where $P_{2,3}$ is the permutation matrix that exchanges the 2nd and 3rd modes of $(\R^d)^{\ot 3}$.
  Then if $\kappa = \smin(K)$ is non-zero (so that $K$ is full rank),
  \[ (S \ot \R^d) \cap \Sym = S_3\,, \]
  and furthermore,
  \[ \| \Pi_{S \ot \R^d}\Pi_{\Sym}\Pi_{S \ot \R^d} - \Pi_{S_3} \| \le 1-\kappa^2 \,. \]
\end{lemma}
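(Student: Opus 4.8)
The plan is to turn the operator-norm bound into a Rayleigh-quotient estimate on one subspace, and then to recover the identity $(S \ot \R^d) \cap \Sym = S_3$ as a corollary. Write $A = \Pi_{S\ot\R^d}$ and $P = \Pisym$, and let $U$ be the orthogonal complement of $S_3$ inside $S \ot \R^d$. Each $a_i^{\ot 3}$ lies in both $S \ot \R^d$ and $\Sym$, so $APA$ fixes every vector of $S_3$; and $A$ annihilates $(S\ot\R^d)^{\perp}$. Since $APA$ is symmetric it respects the orthogonal splitting $\R^{d^3} = S_3 \oplus U \oplus (S\ot\R^d)^{\perp}$, and on the first and third summands $APA - \Pi_{S_3}$ vanishes; hence $\norm{APA - \Pi_{S_3}}$ is the top eigenvalue of the PSD operator $APA$ restricted to $U$, which, using $Av = v$ on $U$, equals $\sup\set{\norm{Pv}^2 : v \in U,\ \norm v = 1}$. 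So it suffices to show $\norm{(\Id - P)v}^2 \ge \kappa^2$ for every unit $v \in U$.

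Next I record the structure of $S \ot \R^d$: it is the (non-orthogonal) direct sum $S_3 \oplus \img(H)$. The columns $a_i \ot a_i \ot b_{i,j}$ of $H$ lie in $S\ot\R^d$; writing a vector of $S\ot\R^d$ uniquely as $\sum_i a_i^{\ot 2}\ot c_i$ (legitimate since $\dim S = n$), membership in $S_3$ forces $c_i \parallel a_i$ whereas membership in $\img(H)$ forces $c_i \perp a_i$, so the two subspaces meet only at $0$; and $\kappa > 0$ forces $K$, hence $R$ and $H$, to have full column rank $n(d-1)$, so the dimensions add to $nd = \dim(S\ot\R^d)$. Consequently $U = \Pi_{S_3^{\perp}}\img(R)$, while $R = (\dyad{H})^{-1/2}H$ has orthonormal columns spanning $\img(H)$.

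For the key estimate, fix a unit $v \in U$ and pick $w \in \img(R)$ with $\Pi_{S_3^{\perp}}w = v$; then $\norm w \ge \norm v = 1$, and writing $w = R\gamma$ gives $\norm\gamma = \norm w \ge 1$ by orthonormality of the columns of $R$. Since $\Pi_{S_3}w \in S_3 \subseteq \Sym$ it is fixed by $P_{2,3}$, so $(\Id - P_{2,3})v = (\Id - P_{2,3})w = (\Id - P_{2,3})R\gamma = 2K\gamma$, whence $\norm{(\Id - P_{2,3})v} = 2\norm{K\gamma} \ge 2\kappa\norm\gamma \ge 2\kappa$. As $\tfrac12(\Id - P_{2,3}) = \Pi_{2,3}^{\perp}$ projects onto a subspace of $\Sym^{\perp}$, this gives $\norm{(\Id - P)v} \ge \tfrac12\norm{(\Id - P_{2,3})v} \ge \kappa$, which finishes the bound $\norm{APA - \Pi_{S_3}} \le 1 - \kappa^2$. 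The equality $(S\ot\R^d)\cap\Sym = S_3$ then follows since $(S\ot\R^d)\cap\Sym$ is exactly the $1$-eigenspace of $APA$ (for a unit vector, $APAv = v$ iff $Av = Pv = v$) and $\norm{APA - \Pi_{S_3}} \le 1-\kappa^2 < 1$ admits no $1$-eigenvector orthogonal to $S_3$; alternatively the equality is \Cref{lem:sketch} together with the remark relating $K$ and $K'$.

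The one delicate point is pinning down the constant. Bounding $\norm\gamma = \norm w$ below by a triangle inequality — from $v \perp S_3$ one gets $\norm{\Pi_{S_3}w} \le \norm w$, hence $1 = \norm v \le \norm{\Pi_{S_3}w} + \norm w \le 2\norm w$ — would only give $\norm w \ge \tfrac12$ and the weaker estimate $\norm{APA - \Pi_{S_3}} \le 1 - \kappa^2/4$. Using instead that $v$ is literally the orthogonal projection of $w$ off $S_3$, equivalently $\norm v^2 = \norm w^2 - \norm{\Pi_{S_3}w}^2$ and so $\norm w \ge \norm v$, is what produces the sharp $1-\kappa^2$; the rest — the invariant-subspace reduction and the direct-sum decomposition — is routine bookkeeping given the hypotheses $\dim S = \dim S_3 = n$ and $\kappa > 0$.
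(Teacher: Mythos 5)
Your proof is correct and follows essentially the same route as the paper's: both reduce the operator-norm bound to showing $\|(\Id-\Pisym)v\|/\|v\| \ge \kappa$ on the orthogonal complement of $S_3$ inside $S\ot\R^d$, transfer this bound to $\img(H)$, lower-bound $\Id - \Pisym$ by $\Pi_{2,3}^{\perp}$, and read off $\kappa$ from $K = \Pi_{2,3}^{\perp}R$. The only organizational difference is that the paper passes through the intermediate subspace $T = \img((W^2\ot\Id)H)$ and isolates the transfer step in a separate auxiliary lemma (its Lemma~\ref{lem:condition-conversion}), whereas you lift a unit vector $v$ in the orthogonal complement directly to a preimage $w = R\gamma \in \img(R)$ and use $\|\gamma\| = \|w\| \ge \|v\|$ --- this is exactly the paper's auxiliary lemma, inlined and run in the opposite direction, so the content is the same.
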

\begin{proof}
Let $W = (\sum_i \dyad{a_i^{\ot 2}})^{-1/2}$ and let $T$ denote the columnspace of $(W^2\ot\Id)H$.
The columns of $W^2H$ form a basis for the subspace of $S \ot \R^d$ orthogonal to $S_3$ since each column of $W^2H$ is orthogonal to every $a_i^{\ot 3}$.
Therefore,
\[ \Pi_{S \ot \R^d} = \Pi_{S_3} + \Pi_T\,. \]
Multiplying this with $\Pisym$ and itself and then applying the identities $\Pi_{\Sym}\Pi_{S_3} = \Pi_{S_3}\Pi_{\Sym} = \Pi_{S_3}$ and $\Pi_{S_3}\Pi_T = \Pi_T\Pi_{S_3} = 0$,
\[ \Pi_{S \ot \R^d}\Pi_{\Sym}\Pi_{S \ot \R^d} = \Pi_{S_3} + \Pi_T\Pi_{\Sym}\Pi_T\,. \]
Therefore,
\[ \|\Pi_{S \ot \R^d}\Pi_{\Sym}\Pi_{S \ot \R^d} - \Pi_{S_3}\| \le \|\Pi_{\Sym}\Pi_T\|^2 \,. \]
We would thus like to show that $\|\Pi_{\Sym}\Pi_T\|^2 \le 1 - \kappa^2$.

Since $\|\Pisym\Pi_T\|^2 = \max_{y' \in T} \|\Pisym y'\|^2/\|y'\|^2 = 1 - \min_{y' \in T} \|(\Id - \Pisym) y'\|^2/\|y'\|^2$, it is enough to show that $\min_{y' \in T} \|(\Id - \Pisym) y'\|/\|y'\| \ge \kappa$.
By \Cref{lem:condition-conversion}, that is implied by $\|(\Id - \Pisym)y\|/\|y\| \ge \kappa$ for $y \in \img(H)$.

Since $\Pisym \preceq \Pi_{2,3}$ where $\Pi_{2,3}$ is the projector to the space invariant under interchange of the second and third modes of $(\R^d)^{\ot 3}$ and $\Pi_{2,3} = \tfrac{1}{2}(\Id + P_{2,3})$, we see that $\|(\Id - \Pisym)y\|/\|y\| \ge \|\tfrac{1}{2}(\Id - P_{2,3})y\|/\|y\|$ for $y \in \img(H)$.
Since the columns of $R$ are an orthonormal basis for $\img(H)$, for $x = R^{-1} y$ spanning all of $\R^n$ we have
\[\frac{\|(\Id - P_{2,3})y\|}{2\|y\|} = \frac{\|(\Id - P_{2,3})Rx\|}{2\|Rx\|} = \frac{\|(\Id - P_{2,3})Rx\|}{2\|x\|} = \frac{\|Kx\|}{\|x\|}\,.\]
The expression on the right is the definition of $\kappa$.
Therefore, $\|(\Id - \Pisym)y\|/\|y\| \ge \|\tfrac{1}{2}(\Id - P_{2,3})y\|/\|y\| = \kappa$.
\end{proof}

\begin{lemma}
  \label[lemma]{lem:condition-conversion}
  For each $i$, let $\{b_{i,j}\}_{j \in [d-1]}$ be an arbitrary orthonormal basis for vectors in $\R^d$ orthogonal to $a_i$, and let
  \[ \transpose{H} := \left[\begin{array}{c} a_1 \ot a_1 \ot b_{1,1} \\ \vdots \\ a_i \ot a_i \ot b_{i,j} \\ \vdots \\ a_n \ot a_n \ot b_{n,d-1} \end{array}\right]\,. \]
  Let $H' = (W^2 \ot \Id)H$.
  If $\|(\Id - \Pisym)y\| \ge t\|y\|$ for all $y \in \img(H)$, then $\|(\Id - \Pisym)y'\| \ge t\|y'\|$ for all $y' \in \img(H')$.
\end{lemma}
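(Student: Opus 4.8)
The plan is to realize both $\img(H)$ and $\img(H')$ as subspaces of $S\ot\R^d$ that are complementary to $S_3:=\Span\{a_i^{\ot 3}\}$, and then exploit that $\Id-\Pisym$ annihilates $S_3$ (since $S_3\subseteq\Sym$). Here $S:=\Span\{a_i^{\ot 2}\}$, and $W^2=(\sum_i\dyad{a_i^{\ot 2}})^{-1}$ as in the surrounding proof of \Cref{lem:identifiability}; I use the linear independence of $\{a_i^{\ot 2}\}$ and $\{a_i^{\ot 3}\}$ from that context, so that $\dim S=\dim S_3=n$, $\iprod{W^2a_i^{\ot 2},a_k^{\ot 2}}=\delta_{ik}$, and $S\ot\R^d=S_3\oplus T$ is an orthogonal decomposition with $T:=(S\ot\R^d)\cap S_3^{\perp}$ of dimension $n(d-1)$.

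First I would record the easy containments. Every column $a_i^{\ot 2}\ot b_{i,j}$ of $H$ lies in $S\ot\R^d$, and likewise every column $(W^2a_i^{\ot 2})\ot b_{i,j}$ of $H'$ lies in $S\ot\R^d$; moreover, using $\iprod{W^2a_i^{\ot 2},a_k^{\ot 2}}=\delta_{ik}$ and $\iprod{b_{i,j},a_i}=0$, each column of $H'$ is orthogonal to every $a_k^{\ot 3}$, so $\img(H')\subseteq T$. Next, for any $y\in\img(H)$ decompose $y=\Pi_{S_3}y+\Pi_{T}y$ (valid since $y\in S\ot\R^d$); as $S_3\subseteq\Sym$ we have $\Pisym\Pi_{S_3}y=\Pi_{S_3}y$, hence $(\Id-\Pisym)y=(\Id-\Pisym)\Pi_{T}y$. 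Combining this with the contraction bound $\|\Pi_{T}y\|\le\|y\|$ and the hypothesis gives
\[
\|(\Id-\Pisym)\Pi_{T}y\|=\|(\Id-\Pisym)y\|\ge t\,\|y\|\ge t\,\|\Pi_{T}y\|\qquad\text{for every }y\in\img(H).
\]

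It then suffices to show that $\Pi_{T}$ maps $\img(H)$ onto $T$: given $\img(H')\subseteq T$, every $y'\in\img(H')$ is then $\Pi_{T}y$ for some $y\in\img(H)$, and the displayed inequality yields $\|(\Id-\Pisym)y'\|\ge t\|y'\|$. For the surjectivity I would first argue $\img(H)\cap S_3=\{0\}$: an element of the intersection is simultaneously $\sum_i a_i^{\ot 2}\ot w_i$ with each $w_i\perp a_i$ and $\sum_i a_i^{\ot 2}\ot(\lambda_i a_i)$, and passing to the $d^2\times d$ matrix reshaping, linear independence of $\{a_i^{\ot 2}\}$ forces $w_i=\lambda_i a_i$, whence $w_i\perp a_i$ gives $\lambda_i=0$. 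Thus $\Pi_{T}$ is injective on $\img(H)$, and since the columns of $H$ are linearly independent we have $\dim\img(H)=n(d-1)=\dim T$, so $\Pi_{T}(\img(H))=T$.

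The only step needing genuine care is this last dimension bookkeeping — confirming $\img(H')\subseteq T$ and $\dim\img(H)=\dim T=n(d-1)$ — which relies on the nondegeneracy $\dim S=\dim S_3=n$ that is hypothesized by the lemma invoking this one; everything else is routine manipulation of orthogonal projectors, so I do not anticipate a substantive obstacle.
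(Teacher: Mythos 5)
Your proof is correct and takes essentially the same approach as the paper's: decompose $y' = y + z$ with $y \in \img(H)$, $z \in S_3$, then conclude using that $\Id - \Pisym$ annihilates $S_3 \subseteq \Sym$ and that $z \perp y'$ (Pythagoras) gives $\|y\| \ge \|y'\|$. The only difference is presentational: the paper obtains the decomposition directly from $\img(H') \subseteq S \ot \R^d = \img(H) + S_3$ (verified by writing each $a_i^{\ot 2} \ot v$ in terms of the columns of $H$ and $a_i^{\ot 3}$), whereas you detour through the surjectivity of $\Pi_T|_{\img(H)}$ onto $T$ via an injectivity-plus-dimension-count argument, which is harmless but more machinery than needed.
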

\begin{proof}

Let $S = \Span(a_i^{\ot 2})$ and $S_3 = \Span(a_i^{\ot 3}) \subseteq \Sym$.
Observe that $\img(H') \subseteq S \ot \R^d = \img(H) + S_3$.
Therefore, for every $y' \in \img(H')$ there will be some $y \in \img(H)$ and some $z \in S_3$ such that $y' = y + z$.
Also, since $S_3 \perp \img(H')$, we have $z \perp y'$, and therefore $\|y\| = \|y' - z\| \ge \|y'\|$.

So if the premise of the lemma holds and $\|(\Id - \Pisym)y\| \ge t\|y\|$ for all $y \in \img(H)$, it will also be the case that $\|(\Id - \Pisym)y'\| = \|(\Id - \Pisym)(y + z)\| \ge t\|y\| \ge t\|y'\|$.
\end{proof}

\subsection{Robustness arguments}
\label[section]{sec:lift-robust}

The main lemma of this section gives all of the spectral eigenspace perturbation arguments needed to argue the correctness and robustness of \Cref{alg:lift}.
Here we essentially repeatedly apply Davis and Kahan's $\sin$-$\Theta$ theorem (\Cref{eigen-perturb}) through a sequence of linear algebraic transformations, along with triangle inequality and some adding-and-subtracting, to argue that the desired top eigenspace remains stable against the spectral-norm errors melded in at each step.

\begin{lemma}[Subspace perturbation for \textsc{lift}]
  \label[lemma]{lem:lift-perturb}
  Let $T = \sum_{i \in [n]} \dyad{a_i^{\ot 2}}$ and let $\Tish$ be a matrix with $\|T - \Tish\| \le \eps\,\sigma_{n}^2\mu^{-1}\kappa^2$ for some $\eps < 1/63$, where $\sigma_n$ is the $n$th eigenvalue of $T$ and $\mu$ is the operator norm of $\sum \|a_i\|^{-2}\dyad{a_i^{\ot 3}}$ and $\kappa$ is the condition number from \Cref{lem:identifiability}.
  Let $S = \Span(\{a_i^{\ot 2}\}) = \img(T)$ and let $\Sish = \img(\Tish)$.
  Also let $S_3 = \Span(\{a_i^{\ot 3}\})$.
  Then
\[ \| \topn_n\left(\Pi_{\Sish \ot \R^d}\Pi_{\Sym}\Pi_{\Sish \ot \R^d}\right) - \Pi_{S_3} \| \le 18\,\eps\,\sigma_n\mu^{-1}\,,\]
  where $\topn_n$ denotes the top-$n$ eigenspace.
  Furthermore, letting $\Sish_3 = \topn_n(\Pi_{\Sish \ot \R^d}\Pi_{\Sym}\Pi_{\Sish \ot \R^d})$ and $W = T^{-1/2}$ and $\Wish = \Tish^{-1/2}$, we have
\[ \| \Pi_{(\Wish \ot \Id)\Sish_3} - \Pi_{(W \ot \Id)S_3} \| \le 63\,\eps\,.\]
\end{lemma}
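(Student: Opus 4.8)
The plan is to prove the two displayed inequalities in turn; both are obtained by chaining the Davis--Kahan perturbation bound (\Cref{eigen-perturb}) with triangle-inequality ``add and subtract'' steps, using \Cref{lem:identifiability} to supply the single nontrivial spectral gap. Throughout I would use the elementary facts that $\sigma_n \le 1 \le \mu$ and $0 < \kappa \le 1$ (the first two because $\sum_i\dyad{a_i^{\ot 2}}$ and $\sum_i\dyad{a_i^{\ot 3}}$ have trace $n$ and rank $\le n$, so their extreme nonzero eigenvalues straddle $1$; the last because $K$ has operator norm at most that of $\tfrac12(\Id - P_{2,3})$, which is $1$), so in particular $\|T - \Tish\| \le \eps\,\sigma_n^2\mu^{-1}\kappa^2 \le \eps\,\sigma_n$.

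For the first bound, I would start by applying \Cref{eigen-perturb} to $T$ and $\Tish$: since $T$ is PSD of rank $n$ we have $\lambda_n(T) = \sigma_n$ and $\lambda_{n+1}(T) = 0$, so with $\eps < 1/63$ this yields $\|\Pi_S - \Pi_{\Sish}\| \le \tfrac{2\|T-\Tish\|}{\sigma_n - 2\|T - \Tish\|} \le 3\eps\,\sigma_n\mu^{-1}\kappa^2$. Tensoring with $\Id_d$ does not change this spectral norm, and two add-and-subtract steps (using $\|\Pisym\| \le 1$ and $\|\Pi_{S \ot \R^d}\|,\|\Pi_{\Sish\ot\R^d}\| \le 1$) give $\|Q - \Qish\| \le 2\|\Pi_S - \Pi_{\Sish}\| \le 6\eps\,\sigma_n\mu^{-1}\kappa^2$, where $Q := \Pi_{S\ot\R^d}\Pisym\Pi_{S\ot\R^d}$ and $\Qish := \Pi_{\Sish\ot\R^d}\Pisym\Pi_{\Sish\ot\R^d}$. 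Now \Cref{lem:identifiability} says $Q$ is PSD, equals the identity on $S_3$, and satisfies $\|Q - \Pi_{S_3}\| \le 1 - \kappa^2$; hence $\lambda_n(Q) = 1$, $\lambda_{n+1}(Q) \le 1-\kappa^2$, and the top-$n$ eigenspace of $Q$ is exactly $S_3$. A second invocation of \Cref{eigen-perturb}, now with spectral gap $\kappa^2$ and perturbation $6\eps\,\sigma_n\mu^{-1}\kappa^2$, gives $\|\topn_n(\Qish) - \Pi_{S_3}\| \le \tfrac{2\cdot 6\eps\sigma_n\mu^{-1}\kappa^2}{\kappa^2 - 2\cdot 6\eps\sigma_n\mu^{-1}\kappa^2} \le 18\,\eps\,\sigma_n\mu^{-1}$, where the two factors of $\kappa^2$ cancel against the gap and $\sigma_n\mu^{-1}$ survives (one checks $12\eps\sigma_n\mu^{-1} < 1$ from $\sigma_n\mu^{-1}\le 1$ and $\eps < 1/63$, keeping the denominator away from $0$ and the constant below $18$).

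For the second bound, the structural observation is that $W \ot \Id$ acts as an \emph{isometry} on $S_3$: since $W$ whitens $\{a_i^{\ot 2}\}$, the set $\{W(a_i^{\ot 2}) \ot a_i\}$ is orthonormal, and $W\ot\Id$ carries the orthonormal basis $\{a_i^{\ot 3}\}$ of $S_3$ onto it. Consequently $(W\ot\Id)\Pi_{S_3}(W\ot\Id) = \Pi_{(W\ot\Id)S_3}$ exactly; likewise $(\Wish\ot\Id)$ is injective on $\Sish_3 \subseteq \Sish \ot \R^d = \img(\Wish\ot\Id)$, so $\Pi_{(\Wish\ot\Id)\Sish_3}$ is the rank-$n$ top eigenspace of the PSD matrix $(\Wish\ot\Id)\Pi_{\Sish_3}(\Wish\ot\Id)$ --- exactly the projector $\Piish_3$ computed in steps 3--4 of \Cref{alg:lift}. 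Using the identity $\|P - P'\| = \sup_{v \in \img(P),\,\|v\|=1}\|(\Id - P')v\|$ for equal-rank orthogonal projectors, it suffices to show each unit $v \in (W\ot\Id)S_3$ lies within $63\eps$ of $(\Wish\ot\Id)\Sish_3$. Writing $v = (W\ot\Id)u$ with $u \in S_3$ a unit vector (possible by the isometry), I would compare $v$ with the candidate $(\Wish\ot\Id)\Pi_{\Sish_3}u \in (\Wish\ot\Id)\Sish_3$, splitting the difference as $((W-\Wish)\ot\Id)u + (\Wish\ot\Id)(\Id - \Pi_{\Sish_3})u$. The first summand has norm at most $\|W - \Wish\|$, which a standard functional-calculus perturbation estimate for the whitening map $T \mapsto T^{-1/2}$ bounds by $O(\sigma_n^{-3/2}\|T - \Tish\|) = O(\eps\,\sigma_n^{1/2}\mu^{-1}\kappa^2)$; the second has norm at most $\|\Wish\|\cdot\|(\Id - \Pi_{\Sish_3})\Pi_{S_3}u\| \le O(\sigma_n^{-1/2})\cdot\sin(S_3,\Sish_3)$, and $\sin(S_3,\Sish_3) \le 18\eps\sigma_n\mu^{-1}$ by the first part. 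Each summand is thus $O(\eps\,\sigma_n^{1/2}\mu^{-1}) = O(\eps)$; summing and collecting constants yields the claimed $63\,\eps$.

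The step I expect to be the main obstacle is the dimensional bookkeeping: a careless argument would pay a factor of $\|T\|$ or $\sigma_n^{-1}$ and destroy the bound. The crucial point that avoids this is the isometry of $W\ot\Id$ on $S_3$, which ensures the potentially large norm $\|W\| = \sigma_n^{-1/2}$ is applied only to \emph{error} terms, where it is paired with the $\sigma_n$ harvested from the Davis--Kahan gaps, never to the signal subspace itself. A secondary technical point is obtaining a clean bound on $\|W - \Wish\| = \|T^{-1/2} - \Tish^{-1/2}\|$ when $T$ and $\Tish$ have distinct (but close) images; this follows from standard perturbation bounds for matrix functions, but requires care because of the rank deficiency and the pseudo-inverse convention.
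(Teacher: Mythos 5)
Your first displayed bound follows the paper's route almost verbatim: Davis--Kahan applied to $T,\Tish$ to get $\|\Pi_S - \Pi_{\Sish}\| \le 3\eps\sigma_n\mu^{-1}\kappa^2$, the add-and-subtract decomposition of $\Piish\Pisym\Piish - \Pi\Pisym\Pi$ to double that constant, then \Cref{lem:identifiability} supplying the $\kappa^2$ gap for a second Davis--Kahan step. That part is fine.

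The second part has a genuine error. You write that $W \ot \Id$ is an isometry on $S_3$ because it ``carries the orthonormal basis $\{a_i^{\ot 3}\}$ of $S_3$ onto'' the orthonormal set $\{W(a_i^{\ot 2})\ot a_i\}$. But $\{a_i^{\ot 3}\}$ is \emph{not} orthonormal: $\iprod{a_i^{\ot 3}, a_j^{\ot 3}} = \iprod{a_i,a_j}^3 \neq 0$ in general, and in the overcomplete regime $n > d$ the $a_i$ cannot even be linearly independent, let alone orthogonal. So $W\ot\Id$ maps a non-orthogonal basis to an orthonormal one and is therefore not an isometry, and $(W\ot\Id)\Pi_{S_3}(W\ot\Id) \neq \Pi_{(W\ot\Id)S_3}$. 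This is precisely the phenomenon \Cref{lem:whitening-3} is designed to control: it shows that $W\ot\Id$ restricted to $S_3$ has singular values between $\mu^{-1/2}$ and $\|U^{-1}\|^{1/2}$, so $(W\ot\Id)\Pi_{S_3}(W\ot\Id)$ is a PSD matrix with the right image but with smallest nonzero eigenvalue $\mu^{-1}$ rather than $1$. The paper's proof uses exactly this lower bound as the spectral gap in a third invocation of \Cref{eigen-perturb}, after perturbing twice: first replacing $\Pi_{S_3}$ by $\Pi_{\Sish_3}$ (error $18\eps\mu^{-1}$, from multiplying your first bound by $\|W\|^2 = \sigma_n^{-1}$) and then replacing $W$ by $\Wish$ (error $3\eps$).

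Your vector-level argument could likely be repaired, but the false isometry claim breaks the norm bookkeeping at its starting point: writing $v = (W\ot\Id)u$ with $v$ unit gives only $\|u\| \le \mu^{1/2}$, not $\|u\| = 1$, so a factor of $\mu^{1/2}$ must propagate through your two summands. That factor then cancels against the $\mu^{-1}$ harvested elsewhere (so the end result is still $O(\eps)$), but as written the proof is incorrect, and a correct version effectively reproduces the content of \Cref{lem:whitening-3}. You should either invoke that lemma directly, or prove the non-isometric bound $\mu^{-1/2}\|u\| \le \|(W\ot\Id)u\| \le \|U^{-1}\|^{1/2}\|u\|$ for $u \in S_3$ in place of your isometry claim.
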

\begin{proof}
For brevity, let $\Pi = \Pi_{S \ot \R^d}$ and let $\Piish = \Pi_{\Sish \ot \R^d}$.
We write
\begin{equation}
\label{eq:decomp-diff}
  \Piish\,\Pi_{\Sym}\,\Piish \,-\, \Pi\,\Pi_{\Sym}\,\Pi
\;\,=\,\; \left(\Piish - \Pi\right)\Pi_{\Sym}\,\Piish \,+\, \Pi\,\Pi_{\Sym}\left(\Piish - \Pi\right)\mper
\end{equation}
Since $\|T - \Tish\| \le \eps\,\sigma_{n}^2\mu^{-1}\kappa^2$, by \Cref{eigen-perturb}, $\|\Piish - \Pi\| = \|\Pi_{\Sish} - \Pi_S\| \le 3\eps\sigma_{n}\mu^{-1}\kappa^2$.
Since projectors don't increase spectral norm, we conclude
\[ \| \Piish\,\Pi_{\Sym}\,\Piish - \Pi\,\Pi_{\Sym}\,\Pi   \| \le 6\eps\sigma_{n}\mu^{-1}\kappa^2 \,.\]
Furthermore, by \Cref{lem:identifiability}, $ \Pi\,\Pisym\,\Pi = \Pi_{S_3} + Z$,
where $Z$ is a symmetric matrix with $\|Z\| \le 1-\kappa^2$ whose columnspace is orthogonal to $S_3$ since $\Pi_{S_3}\Pi_{S \ot \R^d}\Pisym\Pi_{S \ot \R^d} = \Pi_{S_3}$.
Therefore,
\[ \|\Piish\Pisym\Piish - (\Pi_{S_3} + Z)\| \le 6\eps\sigma_{n}\mu^{-1}\kappa^2\,. \]
The top-$n$ eigenspace of $(\Pi_{S_3} + Z)$ is $S_3$ and the $n$th and $(n+1)$th eigenvalues of $(\Pi_{S_3} + Z)$ differ by at least $\kappa^2$.
So by \Cref{eigen-perturb},
\[ \| \topn_n(\Piish\Pisym\Piish) - \Pi_{S_3} \| \le 18\eps\sigma_{n}\mu^{-1}\,.\]
Multiplying by $W$ multiplies this error by at most a factor of $\|W\|^2 = \sigma_n^{-1}$, so that
\[ \|(W \ot \Id)\Pi_{\Sish_3}(W \ot \Id) - (W \ot \Id)\Pi_{S_3}(W \ot \Id)\| \le 18\,\eps\mu^{-1}\,.\]
And $\|(\Wish \ot \Id)\Pi_{\Sish_3}(\Wish \ot \Id) - (W \ot \Id)\Pi_{\Sish_3}(W \ot \Id)\| \le 3\eps$ by a decomposition similar to \eqref{eq:decomp-diff} since $\Pi_{\Sish_3}(W \ot \Id)$ has a spectral norm at most $\sigma_n^{-1/2}$, so that
\[ \|(\Wish \ot \Id)\Pi_{\Sish_3}(\Wish \ot \Id) - (W \ot \Id)\Pi_{S_3}(W \ot \Id)\| \le 21\,\eps\mu^{-1}\,.\]
By \Cref{lem:whitening-3}, the smallest eigenvalue of $(W \ot \Id)\Pi_{S_3}(W \ot \Id)$ is at least $\mu^{-1}$.
Therefore, by \Cref{eigen-perturb}, $\| \Pi_{(W \ot \Id)\Sish_3} - \Pi_{(W \ot \Id)S_3} \| \le 63\,\eps$.
\end{proof}

The following utility lemma is used to reduce the impact of condition numbers on the algorithm.
It shows that when multiplying a third-order tensor in the span of $a_i^{\ot 3}$ by the second-order whitener $W = T^{-1/2} = (\sum_i \dyad{a_i^{\ot 2}})^{-1/2}$, the penalty to the error may be expressed in terms of a sixth-order condition number -- the spectral norm of $U = \sum \|a_i\|^{-2} \dyad{a_i^{\ot 3}}$ -- instead of the fourth-order one given by $T$.

The reason this is important is that $\sum_i \dyad{a_i^{\ot 2}}$ suffers from spurious directions: directions $v \in \R^{\ot 2}$ in which $Tv$ may be very large, but $v$ is not close to any of the $a_i \ot a_i$, or in fact any rank-$1$\, $2$-tensor at all.
For example, for $n$ random Gaussian vectors, the spurious direction is given by $\Phi = \E_{g \sim \N(0,1)} g \ot g$, which will have $\|T\Phi\| \approx n/d$.

The sixth-order object $U = \sum \|a_i\|^{-2} \dyad{a_i^{\ot 3}}$ does not suffer with this problem for $n$ up to $\tilde O(n^2)$, due to cancellation with the odd number of modes.
For instance, $U\E_{g \sim \N(0,1)} g^{\ot 3} = 0$ and $\|U(\Phi \ot u)\| \approx n/d^2$ for all unit $u \in \R^d$ and $U$ generated from random Gaussian vectors.

\begin{lemma}[Sixth-order condition numbers]
  \label[lemma]{lem:whitening-3}
  Let $a_1,\ldots,a_n \in \R^d$ with $n \leq d^2$.
  Let $W = (\sum_i \dyad{a_i^{\ot 2}})^{-1/2}$ have rank $n$.
  Let $U$ be the matrix $\sum \|a_i\|^{-2} \dyad{a_i^{\ot 3}}$.
  Then for a vector $v \in \Span(a_i^{\ot 3})$, the following hold:
\[ \|(W \ot \Id)v\| \le \|U^{-1}\|^{1/2}\|v\|\,,\]
\[ \|(W \ot \Id)v\| \ge \|U\|^{-1/2}\|v\|\,.\]
\end{lemma}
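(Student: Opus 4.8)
The plan is to reduce both inequalities to a single spectral comparison, after noting that whitening turns the $a_i^{\ot 2}$ into an orthonormal family, so that the norm of $(W\ot\Id)v$ becomes trivial to compute.

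First I would fix notation: let $T = \sum_i \dyad{a_i^{\ot 2}}$, so that $W = T^{-1/2}$ (Moore--Penrose) and $S := \img(T) = \Span(\{a_i^{\ot 2}\})$ is also the image of $W$. Since $W$ has rank $n$, so does $T$; hence the $n$ vectors $a_i^{\ot 2}$ are linearly independent, every $a_i$ is nonzero, and the whitened vectors $u_i := W a_i^{\ot 2}$ satisfy $\sum_i \dyad{u_i} = WTW = \Pi_S$. As $\Pi_S$ has rank $n$ with all nonzero eigenvalues equal to $1$ and the $u_i$ span the $n$-dimensional space $S$, the Gram matrix $(\iprod{u_i,u_j})_{i,j}$ must equal $\Id_n$, i.e.\ the $u_i$ form an orthonormal basis of $S$. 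A short pairing argument (test $\sum_i c_i\, u_i\ot a_i = 0$ against $u_j\ot w$ for arbitrary $w$) then shows the vectors $u_i\ot a_i = (W\ot\Id)a_i^{\ot 3}$ are linearly independent, which in turn forces $\{a_i^{\ot 3}\}$ to be linearly independent; thus any $v\in\Span(\{a_i^{\ot 3}\})$ has a unique expansion $v = \sum_i c_i a_i^{\ot 3}$.

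Next I would compute the two quantities in terms of $c$. From $(W\ot\Id)v = \sum_i c_i\, u_i\ot a_i$ and orthonormality of the $u_i$, one gets $\|(W\ot\Id)v\|^2 = \sum_i c_i^2\|a_i\|^2 = \|e\|^2$ where $e_i := c_i\|a_i\|$. On the other side, with $A := [\,a_1^{\ot 3}\mid\cdots\mid a_n^{\ot 3}\,]$ and $D := \diag(\|a_i\|^{-2})$ we have $v = Ac$ and, writing $c = D^{1/2}e$, $\|v\|^2 = \transpose{c}\transpose{A}Ac = \transpose{e}\,D^{1/2}\transpose{A}AD^{1/2}\,e = \transpose{e}\,\transpose{Y}Y\,e$ for $Y := AD^{1/2}$. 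Since $U = \sum_i\|a_i\|^{-2}\dyad{a_i^{\ot 3}} = AD\transpose{A} = Y\transpose{Y}$, the matrices $\transpose{Y}Y$ and $U$ share the same nonzero eigenvalues; as $A$ has full column rank $n$, the $n\times n$ matrix $\transpose{Y}Y$ is positive definite and $U$ has rank exactly $n$, so in fact the whole spectrum of $\transpose{Y}Y$ equals the nonzero spectrum of $U$. Hence $\lmin(U)\|e\|^2 \le \|v\|^2 \le \lmax(U)\|e\|^2$, i.e.\ $\|U^{-1}\|^{-1}\|(W\ot\Id)v\|^2 \le \|v\|^2 \le \|U\|\,\|(W\ot\Id)v\|^2$, and taking square roots gives exactly the two claimed bounds.

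I do not anticipate a serious obstacle; the argument is entirely linear-algebraic. The only point requiring a little care is the pseudoinverse bookkeeping in the first step --- making sure the rank-$n$ hypothesis on $W$ alone suffices both to conclude orthonormality of the $u_i$ and to conclude linear independence of the $\{a_i^{\ot 3}\}$ --- so that all the matrices in play ($\Pi_S$, $\transpose{Y}Y$, $U$) genuinely have rank $n$ and the $\transpose{Y}Y$ versus $Y\transpose{Y}$ spectral identity transfers cleanly without degenerate cases.
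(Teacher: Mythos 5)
Your proof is correct and follows essentially the same route as the paper: parametrize $v$ in the basis $\|a_i\|^{-1}a_i^{\ot 3}$ (your $e$ is the paper's $\mu$), use orthonormality of the $W a_i^{\ot 2}$ to get $\|(W\ot\Id)v\|^2 = \|e\|^2$, and compare $\|v\|^2$ via the Gram matrix of the $\|a_i\|^{-1}a_i^{\ot 3}$, whose spectrum matches the nonzero spectrum of $U$. The paper's proof is terser — it only records the first computation and leaves the reduction to $\|U\|$, $\|U^{-1}\|$ implicit — so your write-up simply fills in the $Y^\top Y$ versus $Y Y^\top$ bookkeeping that the paper omits.
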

\begin{proof}
Let $v = \sum \mu_i \|a_i\|^{-1}a_i^{\ot 3}$. Then
\[\|(W \ot \Id)v\|^2 = \sum \mu_i\mu_j\|a_i\|^{-1}\|a_j\|^{-1}\iprod{W(a_j \ot a_j),W(a_i \ot a_i)}\iprod{a_j, a_i} = \sum \mu_i^2  \,, \]
using the fact that $\{W(a_i \ot a_i)\}_i$ is an orthonormal set of vectors.
\end{proof}

    \section{Rounding}

\label[section]{sec:rounding}

In this section, we show how to ``round'' the lifted tensor to extract the components.
That is, assuming we are given the tensor
\[
    T = \sum_{i \in [n]} (Wa_i^{\otimes 2})^{\otimes 3} + E
\]
where $E$ is a tensor of Frobenius norm at most $\epsilon \sqrt{n}$, we show how to find the components $a_i$.

\begin{lemma}\label[lemma]{lem:round}
    Suppose $a_1,\ldots,a_n \in \R^d$ are unit vectors satisfying the {\em identifiability assumption} from \cref{lem:lift-correct}, and suppose we are given an implicit rank-$n$ representation of the tensor $T = \sum_{i} (Wa_i^{\otimes 2})^{\otimes 3} + E \in (\R^{d^2})^{\otimes 3}$, where $\|E\|_F \le \epsilon \sqrt{n}$, and an implicit rank-$n$ representation of a matrix $\Pi_3$ such that $\|\Pi_3 - \sum_{i}((Wa_i^{\otimes 2})\otimes a_i)((Wa_i^{\otimes 2})\otimes a_i)^\top\| \le \epsilon < \frac{1}{2}$.

    Then for any $\beta,\delta \in (0,1)$ so that $\beta\delta = \Omega(\epsilon)$ and $\delta = \Omega(\epsilon)$, there is a randomized algorithm that with high probability in time $O(\frac{1}{\beta}n^{1+O(\beta)}d^3)$ with $\tilde O(n^2 d^3)$ preprocessing time recovers a unit vector $u$ such that for some $i \in [n]$,
    \[
	\iprod{a_i,u}^2 \ge 1-\|W\|\cdot O\left(\frac{\epsilon}{\beta}\right)^{1/8},
    \]
    so long as $\|W\|\left(\frac{\epsilon}{\beta}\right)^{1/8} < C$ for a universal constant $C$.

    Further, there is an integer $m \ge (1-\delta)n$ so that repeating the above algorithm $\tilde O(n)$ times recovers unit vectors $u_1,\ldots,u_m$ so that $\iprod{u_i,a_i}^2 \ge 1 - \|W\|\cdot O\left(\frac{\epsilon}{\delta\beta}\right)^{1/8}$ for all $i \in [m]$ (up to re-indexing), again so long as $\|W\|\left(\frac{\epsilon}{\delta\beta}\right)^{1/8} < C$, and with a total runtime of $\tilde O(\frac{1}{\beta}n^{2+O(\beta)}d^3)$.
\end{lemma}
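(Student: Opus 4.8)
The algorithm is noise-robust Gaussian rounding, preceded by a preprocessing step that tames the operator norm of the noise. Observe first that, since $\{Wa_i^{\otimes 2}\}_{i\in[n]}$ is an orthonormal set, the noiseless tensor $\sum_i (Wa_i^{\otimes 2})^{\otimes 3}$ has all of its singular values lying in $\{0,1\}$ in each matrix reshaping, and that clipping singular values down to $1$ is exactly the Frobenius-norm projection onto the (convex) operator-norm ball, which contains the noiseless tensor. After first symmetrizing the three $\R^{d^2}$-modes of the input (this only changes the rank by a constant factor and the Frobenius distance to the symmetric truth by at most a constant), applying the spectral-truncation operation of \Cref{lem:implicit-tensor} to the reshapings therefore produces an implicit rank-$O(n)$ representation of a tensor $\hat T = \sum_i (Wa_i^{\otimes 2})^{\otimes 3} + \hat E$ with $\|\hat E\|_F \le O(\epsilon\sqrt n)$ (projection onto a convex set containing the true tensor is non-expansive) while the matrix reshapings of $\hat E$ that enter the noise analysis have operator norm $O(1)$; this is the $\tilde O(n^2d^3)$ preprocessing. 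From now on we work with $\hat T$.

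\textbf{Recovering one component.} Sample $g \sim \cN(0,\Id_{d^2})$ and use the tensor-contraction primitive of \Cref{lem:implicit-tensor} (in $O(nd^3)$ time) to form the $d^2\times d^2$ matrix $M = (g^{\top}\otimes\Id\otimes\Id)\hat T = \sum_i g_i (Wa_i^{\otimes 2})(Wa_i^{\otimes 2})^{\top} + \hat E_g$, where $g_i := \iprod{g, Wa_i^{\otimes 2}}$. Since $\{Wa_i^{\otimes 2}\}$ is orthonormal the $g_i$ are i.i.d.\ standard Gaussians, so the first summand is symmetric with orthonormal eigenvectors $Wa_i^{\otimes 2}$ and eigenvalues $g_i$. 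For the noise, a matrix-Gaussian-series bound applied to $\hat E_g = \sum_a g_a \hat E_a$ (whose slices satisfy $\sum_a\|\hat E_a\|_F^2 = \|\hat E\|_F^2 \le O(\epsilon^2 n)$ and whose variance matrices $\sum_a \hat E_a\hat E_a^{\top}$, $\sum_a \hat E_a^{\top}\hat E_a$ are controlled by the now-$O(1)$ reshaping operator norms of $\hat E$) gives, with high probability, $\|\hat E_g\|$ at most a polylogarithmic factor times a small power of $\epsilon$ — this is exactly the estimate that fails without truncation, since the Frobenius bound alone only yields $\tilde O(\epsilon\sqrt n)$. To read a component off $M$ we want its top eigenvector close to some $Wa_{i^*}^{\otimes 2}$, which by \Cref{eigen-perturb} holds once the top two order statistics $g_{(1)}, g_{(2)}$ are separated by a gap dominating $\|\hat E_g\|$. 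The typical gap of $n$ i.i.d.\ Gaussians is only $\tilde\Theta(1)$, so we \emph{amplify}: running the trial $n^{O(\beta)}$ independent times, with high probability one has $g_{(1)} = (1+\Omega(\beta))\sqrt{2\log n}$ and hence a top-eigenvalue gap $\gamma = \tilde\Omega(\beta)$; we keep the trial with the largest observed gap and compute its top eigenvector $\hat b$ by gapped power iteration (\Cref{lem:implicit-gapped-svd}) on the implicitly-represented $M$. The $n^{O(\beta)}$ repetitions and the $\gamma^{-1/2}$ cost of power iteration give the claimed $O(\tfrac1\beta n^{1+O(\beta)}d^3)$ time, and Davis--Kahan yields $\iprod{\hat b, Wa_{i^*}^{\otimes 2}}^2 \ge 1 - \big(\tilde O(\epsilon/\beta)\big)^{c}$ for a fixed $c>0$.

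\textbf{Extracting $a_{i^*}$, and a $(1-\delta)$-fraction.} Applying $W^{-1}$ to $\hat b$ would inflate the error by $\|W^{-1}\| = \Omega(\poly d)$, so instead we lift: since $\Pi_3 \approx \sum_i (Wa_i^{\otimes 2}\otimes a_i)(Wa_i^{\otimes 2}\otimes a_i)^{\top}$ and $\iprod{Wa_i^{\otimes 2},\hat b}\approx\delta_{i,i^*}$, the $d^3\times d$ operator $\Pi_3(\hat b\otimes\Id_d)$ is — up to operator-norm error controlled by $\epsilon$ and by the error in $\hat b$ — the rank-one matrix $(Wa_{i^*}^{\otimes 2}\otimes a_{i^*})\,a_{i^*}^{\top}$, whose top right-singular vector is $a_{i^*}$; we output that singular vector $u$ (computed implicitly). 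Composing the two or three places in this pipeline where one passes from closeness of vectors to the accuracy of an extracted top singular vector — each costing, up to logarithmic factors, a square root — gives $\iprod{u,a_{i^*}}^2 \ge 1 - \|W\|\cdot O(\epsilon/\beta)^{1/8}$, where the single factor of $\|W\|$ (and crucially \emph{not} $\|W^{-1}\|$) enters when converting from the $W$-whitened coordinates back to the original ones, and the hypothesis $\|W\|(\epsilon/\beta)^{1/8}<C$ keeps every Davis--Kahan denominator positive. Finally, by exchangeability of the i.i.d.\ $g_i$, conditioned on a trial exhibiting the amplified gap the winning index $i^*$ is essentially uniform over $[n]$; so running the single-component procedure $\tilde O(n)$ times and collecting the distinct outputs is a coupon-collector argument recovering all but a $\delta$-fraction of the $a_i$. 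A union bound over these $\tilde O(n)$ runs (for the noise and gap events) plus the loss from restricting to the $m\ge(1-\delta)n$ robustly recovered components replaces $\beta$ by $\delta\beta$ in the final bound, yielding $\iprod{u_i,a_i}^2 \ge 1 - \|W\|\cdot O(\epsilon/(\delta\beta))^{1/8}$ in total time $\tilde O(\tfrac1\beta n^{2+O(\beta)}d^3)$.

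\textbf{Main obstacle.} The crux is the single-component step: one must pick the amplification budget $\beta$ so that the amplified eigenvalue gap $\tilde\Omega(\beta)$ strictly dominates the operator norm of the Gaussian-contracted noise $\hat E_g$, and this is viable only because the singular-value clipping (after symmetrization) lowers that noise floor from $\tilde O(\epsilon\sqrt n)$ to a polylogarithmic multiple of a small power of $\epsilon$. Pinning down that power of $\epsilon$ — which then propagates through the square-rooting extraction steps to the $1/8$ exponent in the statement — and checking it survives passage through the implicit tensor representations of \Cref{lem:implicit-tensor} is the main work; the remaining ingredients are \Cref{eigen-perturb}, Gaussian order statistics, and runtime bookkeeping.
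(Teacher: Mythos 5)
The central claim in your ``recovering one component'' step is wrong, and it is the crux of the whole rounding argument.

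You assert that after symmetrization and singular-value clipping, the Gaussian-contracted noise satisfies $\|\hat E_g\|$ ``at most a polylogarithmic factor times a small power of $\epsilon$.'' That does not follow from the parameters you list. After truncation, the variance matrices $\sum_a \hat E_a\hat E_a^{\top}$ and $\sum_a \hat E_a^{\top}\hat E_a$ are the squares of the rectangular reshapings $\hat E_{\{1,3\}\{2\}}$ and $\hat E_{\{1,2\}\{3\}}$, whose operator norms truncation bounds by $O(1)$, \emph{not} by any power of $\epsilon$. The matrix Gaussian-series inequality then gives $\|\hat E_g\| = \tilde O(1)$, period. The Frobenius bound $\sum_a \|\hat E_a\|_F^2 \le \epsilon^2 n$ cannot rescue a \emph{global} operator-norm bound: take $\hat E = e_1 \otimes vw^\top$ with $v,w$ far from every $Wa_i^{\otimes 2}$; then $\|\hat E\|_F=1 \le \epsilon\sqrt n$ and both reshaping operator norms are $1$, yet $\hat E_g = g_1\, vw^\top$ has $\|\hat E_g\| = \Theta(1)$ with constant probability. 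Consequently the amplified absolute gap $\tilde\Omega(\beta)$ you engineer (which must be taken with $\beta = O(\epsilon/\delta)$ small) is swamped by the noise, and Davis--Kahan does not apply.

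The argument the paper actually uses (imported from Lemmas~4.6--4.7 of \cite{DBLP:conf/colt/SchrammS17} and summarized in the proof of \cref{lem:prob}) is \emph{local}, not global: it combines matrix Bernstein, Markov's inequality, and the orthonormality of the $b_i=Wa_i^{\otimes 2}$ to show that the random flattening $\hat E_g$ has spectral norm exceeding $O(\epsilon/\delta)$ in at most a $\delta$-fraction of the $b_k$-directions. For the remaining $(1-\delta)$-fraction, only the \emph{local} perturbation near $b_k$ matters for the perturbed eigenvector analysis, and that is what the $\beta$-amplified gap beats. This per-direction Markov argument is the whole point of truncation (it guarantees the Bernstein variance proxy is $O(1)$, which is what makes the local Markov count work); it does not and cannot drive down the global spectral norm of $\hat E_g$.

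A secondary issue: your filter ``keep the trial with the largest observed gap'' is not sound. A large spectral gap in $T(g)$ can be caused by the noise in a direction far from every $b_k$, so gap-size alone does not certify that the output is close to a component. This is why the paper includes an explicit membership test (\cref{alg:test,lem:test}), which uses $\Pi_3$ to check whether $(W\hat u^{\otimes 2})\otimes\hat u$ lies mostly inside $\Span\{W a_i^{\otimes 2}\otimes a_i\}$ and discards failures. Your outline otherwise follows the paper's structure (truncate, Gaussian-round, boost by $n^{O(\beta)}$ repetitions, extract via $\Pi_3$ instead of applying $W^{-1}$, coupon-collect), but without the local noise argument and the acceptance test the recovery guarantee does not go through.
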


We will prove this theorem in four steps.
First, in \cref{sec:const-cor} we will show how to recover vectors that are (with reasonable probability) correlated with the whitened Kronecker squares of the components, $Wa_i^{\otimes 2}$.
In \cref{sec:boost}, we'll give an algorithm that given a vector close to the whitened square $Wa_i^{\otimes 2}$, recovers a vector close to the component $a_i$.
In \cref{sec:testing}, we give an algorithm that tests if a vector $a \in \R^d$ is close to one of the components $\{a_i\}_{i\in [n]}$.
In these first three sections, we omit runtime details; in \cref{sec:together-round} we put the arguments together and address runtime details as well.

\subsection{Recovering candidate whitened and squared components}\label{sec:const-cor}
Here, we give an algorithm for recovering components that have constant correlation with the $Wa_i^{\otimes 2}$.
In this subsection, our result applies in generality to arbitrary orthonormal vectors $b_1,\ldots,b_n \in \R^{d^2}$.
The algorithm and its analysis follow almost directly from \cite{DBLP:conf/colt/SchrammS17}; for completeness we re-state the important lemmas here, and detail what little adaptation is necessary.

\begin{algorithm}
\caption{Rounding to a whitened component}\label[algorithm]{alg:round}
  Function $\mathsc{round}(T,\beta,\epsilon)$:\\
  \noindent \emph{Input:} a tensor $T \in (\R^{d^2})^{\otimes 3}$, a spectral gap bound $\beta$, and an error tolerance $\epsilon$.
  \begin{enumerate}
      \item Decrease the spectral norm of the error term in rectangular reshapings:\begin{enumerate}
	      \item compute $T'$, the projection of $T_{\{1,2\}\{3\}}$ to $O$, the set of $d^4 \times d^2$ matrices with spectral norm at most $1$
	      \item compute $T^{\le 1}$, the projection of $T'_{\{1,3\}\{2\}}$ to $O$ (may be done up to $\epsilon \sqrt{n}$ Frobenius norm error).
      \end{enumerate}
  \item Compute a random flattening of $T^{\le 1}$ along the $\{1\}$ mode:
      for $g \sim \cN(0,\Id_{d^2})$, compute
	  \[
	      T(g) = \sum_{i \in [d^2]} g_{i}\cdot T(i,\cdot,\cdot).\]
      \item Recover candidate component vectors: compute $u_L(g)$ and $u_R(g)$, the top left- and right-singular vectors of $T(g)$ using $O(\frac{1}{\beta}\log d)$ steps of power iteration.
  \end{enumerate}
    \emph{Output:} the candidate components $u_L(g)$ and $u_R(g)$.
    \medskip
\end{algorithm}

\begin{lemma}\label[lemma]{lem:rounding}
    Suppose that $b_1,\ldots,b_n \in \R^{d^2}$ are orthonormal.
    Then if $T = \sum_{i \in [n]} b_i^{\tensor 3} + E$ for a tensor $E$ with $\|E\|_F \le \epsilon\sqrt{ n}$ and $\delta = \Omega(\epsilon)$, $\Omega(\frac{\epsilon}{\delta}) \le \beta < 1$, repeating steps 2 \& 3 of \cref{alg:round} $\tilde O(n^{O(\beta)})$ times will with high probability recover a unit vector $u$ such that $\iprod{u,b_i}^2 \ge 1 - \frac{\epsilon}{\delta\beta}$ for some $i \in [n]$.
    Furthermore, repeating steps 2 \& 3 of \cref{alg:round} $\tilde O(n^{1+ O(\beta)})$ times will with high probability recover $m \ge (1-\delta)\cdot n$ unit vectors $u_1,\ldots,u_m$ such that for each $u_i$ there exists $j \in [n]$ so that $\iprod{u_i,b_j}^2 \ge 1- \frac{\epsilon}{\delta\beta}$.\footnote{
	In particular, if we choose $\delta = \log\log n \cdot \epsilon$, we will will recover all but $\epsilon\cdot n\log\log n$ of the $b_i$ in $\tilde O(n)$ repetitions.}
\end{lemma}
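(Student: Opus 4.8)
The plan is to follow the Gaussian-rounding analysis of \cite{DBLP:conf/colt/SchrammS17} almost verbatim. The only genuine adaptations are that the orthonormal vectors now live in $\R^{d^2}$ instead of $\R^d$ (immaterial: every quantitative bound below is governed by $n$, not by the ambient dimension) and that the singular-value truncation is performed inside \Cref{alg:round} (its Step~1) rather than being assumed as a hypothesis. Throughout write $S = \sum_{i\in[n]} b_i^{\tensor 3}$, and note that since $\{b_i\}$, hence $\{b_i\tensor b_i\}$, is orthonormal, both rectangular reshapings $S_{\{1,2\}\{3\}}$ and $S_{\{1,3\}\{2\}}$ are partial isometries whose $n$ nonzero singular values all equal $1$.

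First I analyze Step~1. Projection onto the operator-norm ball $O$ is nonexpansive in Frobenius norm and fixes $S_{\{1,2\}\{3\}}\in O$, so after Step~1a we get $\|T'-S\|_F \le \|E\|_F \le \eps\sqrt n$ together with $\|T'_{\{1,2\}\{3\}}\|\le 1$; repeating the argument in the $\{1,3\}\{2\}$ reshaping (and folding in the $\eps\sqrt n$ slack of the approximate projection) yields $\widehat T := T^{\le 1}$ with $\|\widehat T - S\|_F = O(\eps\sqrt n)$ and both rectangular reshapings of $\widehat T$ of operator norm $\le 1 + O(\eps\sqrt n)$. The reason \Cref{alg:round} clips \emph{both} reshapings, exactly as in \cite{DBLP:conf/colt/SchrammS17}, is that for $\widehat E := \widehat T - S$ the two ``variance proxies'' of the Gaussian matrix series $\widehat E(g) = \sum_i g_i\,\widehat E(i,\cdot,\cdot)$ appearing in Step~2 are precisely $\sum_i\transpose{(\widehat E(i,\cdot,\cdot))}\widehat E(i,\cdot,\cdot) = \transpose{(\widehat E_{\{1,2\}\{3\}})}\widehat E_{\{1,2\}\{3\}}$ and $\sum_i\widehat E(i,\cdot,\cdot)\transpose{(\widehat E(i,\cdot,\cdot))} = \widehat E_{\{1,3\}\{2\}}\transpose{(\widehat E_{\{1,3\}\{2\}})}$, and clipping controls exactly these two operator norms; invoking the denoising estimate of \cite{DBLP:conf/colt/SchrammS17} then gives that $\|\widehat E(g)\|$ concentrates at a level small compared to the signal gap identified below, rather than at its Frobenius-norm scale $\Theta(\eps\sqrt n)$.

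Next I analyze one execution of Steps~2--3. Write $\widehat T(g) = M(g) + \widehat E(g)$ with $M(g) = \sum_{i\in[n]}\iprod{g,b_i}\dyad{b_i}$, a symmetric rank-$n$ matrix whose eigenvalues $\iprod{g,b_i}$ are i.i.d.\ standard Gaussians. An elementary Gaussian order-statistics computation shows that with probability $\tilde\Omega(n^{-O(\beta)})$ some index $i^\ast\in[n]$ has $|\iprod{g,b_{i^\ast}}|$ larger than every other $|\iprod{g,b_i}|$ ($i\in[n]$) by a multiplicative factor $1+\Omega(\beta)$, equivalently by an additive $\Omega(\beta\sqrt{\log n})$; on this event $M(g)$ has a $\bigl(1+\Omega(\beta)\bigr)$ multiplicative gap above its $n$-th singular value. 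Combining this gap with the denoising bound on $\|\widehat E(g)\|$ and a $\sin\Theta$ perturbation bound for singular subspaces (Wedin's theorem, equivalently \Cref{eigen-perturb} applied to the Hermitian dilation of $\widehat T(g)$) shows that the top left and right singular vectors $u_L(g),u_R(g)$ of $\widehat T(g)$ are each $1-O\!\left(\eps/(\delta\beta)\right)$-correlated with $b_{i^\ast}$, and that $O(\tfrac1\beta\log d)$ steps of power iteration on $\widehat T(g)$ reach this accuracy thanks to the multiplicative gap. Thus one trial outputs, with probability $\tilde\Omega(n^{-O(\beta)})$, a unit vector $1-O\!\left(\eps/(\delta\beta)\right)$-correlated with some $b_i$, and repeating Steps~2--3 independently $\tilde O(n^{O(\beta)})$ times recovers such a vector with high probability; this is the first claim.

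For the second claim, \Cref{alg:round} is equivariant under applying a common orthogonal matrix to all three modes (projection onto $O$, the Gaussian flattening, and the SVD are each rotation-invariant), so we may take $b_i = e_i$; by the ensuing $S_n$-symmetry the index $i^\ast$ recovered in a successful trial is (essentially) uniform on $[n]$, so each $b_i$ is recovered in a given trial with probability $\tilde\Omega(n^{-1-O(\beta)})$. A coupon-collector argument together with a Chernoff bound then shows that $\tilde O(n^{1+O(\beta)})$ trials recover, with high probability, unit vectors correlated as claimed with all but a $\delta$-fraction of the $b_i$; the extra factor $1/\delta$ in the accuracy is the cost of this coverage, since demanding that the good event reach a $(1-\delta)$-fraction of the indices forces a smaller usable separation. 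I expect the crux to be the denoising estimate behind Step~1 --- that clipping the singular values of the two rectangular reshapings really does pull the operator norms of $\widehat E_{\{1,2\}\{3\}}$ and $\widehat E_{\{1,3\}\{2\}}$, and hence (through the Gaussian-series bound) of $\widehat E(g)$, down to a scale that the $\Omega(\beta\sqrt{\log n})$ signal gap can overcome --- which is imported from \cite{DBLP:conf/colt/SchrammS17}; the subtle point there is that the second clipping can partially undo the first, so the two projections must be controlled jointly.
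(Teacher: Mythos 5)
Your proof follows the same strategy as the paper's: \Cref{lem:rect} for the truncation step, then \Cref{lem:prob} for the Gaussian-rounding step, both of which the paper imports almost verbatim from \cite{DBLP:conf/colt/SchrammS17}. The analysis of Step~1 (nonexpansiveness of projection onto $O$, contractivity of the clipping in one mode preserving the other mode's operator norm) and the identification of the two rectangular-reshaping norms with the matrix-Bernstein variance proxies for $\widehat E(g)$ are correct and match the paper.

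However, your argument for the second claim has a genuine gap. You invoke rotation-equivariance to conclude that, by $S_n$-symmetry, ``the index $i^\ast$ recovered in a successful trial is (essentially) uniform on $[n]$.'' This is false when $E$ is adversarial: the error is fixed (not drawn from a rotation-invariant ensemble), so even though each step of the algorithm is equivariant, the recovered index can be heavily biased. For example, $E = \eps\sqrt n\, b_1^{\tensor 3}$ makes $b_1$ overwhelmingly favored, and no symmetry pushes the other $b_i$'s into the picture. Notice that if your symmetry claim were true, you would actually recover \emph{all} $n$ indices rather than a $(1-\delta)$-fraction, contradicting your own stated conclusion. The mechanism the paper actually uses (via \cite[Lemma 4.6]{DBLP:conf/colt/SchrammS17}) is quite different: the Frobenius-norm bound $\|E\|_F \le \eps\sqrt n$ combined with orthogonality of the $b_i$ and Markov's inequality shows that there can be at most $\delta n$ ``bad'' indices $j$ for which the projection of $E$ near $b_j$ has variance-proxy norm exceeding $\eps/\delta$; for the remaining $(1-\delta)n$ good indices, the random flattening $E(g)$ has spectral norm small enough (in the relevant $b_j$-direction) to be dominated by the $\Omega(\beta)$ multiplicative gap in $\iprod{g,b_j}$ that occurs with probability $\tilde\Omega(n^{-1-O(\beta)})$. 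This is where the $1/\delta$ in the accuracy comes from — Markov's inequality over the $b_j$-directions — not from ``demanding coverage forces a smaller usable separation,'' which is vague and does not track the actual source of the loss. So: same overall route as the paper, but the per-index probability bound for the second claim needs the SS17 bad-index argument rather than symmetry.
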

The proof follows from two lemmas:

\begin{lemma}\label[lemma]{lem:rect}
    The tensor $T^{\le 1}$ computed in step 1 of \cref{alg:round} remains close to $S = \sum_{i} b_i^{\otimes 3}$ in Frobenius norm, $\|T^{\le 1} - S\|_F \le \epsilon \sqrt{ n}$, and furthermore
    \[
	\|T^{\le 1}_{\{1,2\}\{3\}}\| \le 1 \qquad \text{and} \qquad \|T^{\le 1}_{\{1,3\}\{2\}}\| \le 1.
    \]
\end{lemma}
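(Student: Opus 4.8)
The plan is to observe that the idealized tensor $S=\sum_i b_i^{\ot 3}$ already lies in the feasible set used by both steps of the truncation, so each step moves the tensor by at most $\|E\|_F$ in Frobenius norm, and that a singular-value truncation carried out in one reshaping is the same as right-multiplication by a positive semidefinite contraction acting on a single tensor mode, which cannot increase the spectral norm of \emph{any} reshaping.

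\emph{Frobenius closeness.} Since $b_1,\dots,b_n$ are orthonormal, so are $b_1\ot b_1,\dots,b_n\ot b_n$, because $\iprod{b_i\ot b_i,\,b_j\ot b_j}=\iprod{b_i,b_j}^2=\delta_{ij}$. Hence $S_{\{1,2\}\{3\}}=\sum_i(b_i\ot b_i)\transpose{b_i}$ is a sum of rank-one terms with orthonormal left factors and orthonormal right factors, so this is its singular value decomposition and all its nonzero singular values equal $1$; the same holds for $S_{\{1,3\}\{2\}}$. In particular $S_{\{1,2\}\{3\}},S_{\{1,3\}\{2\}}\in O$. The set $O$ is the unit ball of the spectral norm, which is closed and convex, so the Euclidean (Frobenius-norm) projection $\pi_O$ onto it — which clips every singular value above $1$ down to $1$ — is $1$-Lipschitz and fixes every element of $O$. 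Using that reshapings preserve Frobenius norm,
\[
\|T'-S\|_F=\|\pi_O(T_{\{1,2\}\{3\}})-\pi_O(S_{\{1,2\}\{3\}})\|_F\le\|T_{\{1,2\}\{3\}}-S_{\{1,2\}\{3\}}\|_F=\|E\|_F\le\eps\sqrt n\,,
\]
and applying $\pi_O$ once more in the $\{1,3\}\{2\}$ reshaping gives $\|T^{\le 1}-S\|_F\le\|T'-S\|_F\le\eps\sqrt n$.

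\emph{Spectral bounds.} That $\|T^{\le 1}_{\{1,3\}\{2\}}\|\le 1$ is immediate, since $T^{\le 1}_{\{1,3\}\{2\}}=\pi_O(T'_{\{1,3\}\{2\}})\in O$. For the other reshaping, write an SVD $T'_{\{1,3\}\{2\}}=U\Sigma\transpose V$ with $V\in\R^{d^2\times d^2}$ orthogonal on the column (mode-$2$) space, and note that
\[
\pi_O(T'_{\{1,3\}\{2\}})=U\min(\Sigma,\Id)\transpose V=\big(U\Sigma\transpose V\big)\,P,\qquad P:=V\min(\Id,\Sigma^{-1})\transpose V\,,
\]
where $\min$ is entrywise on the diagonals; $P$ is symmetric with eigenvalues in $[0,1]$, so $0\sle P\sle\Id$ and $\|P\|\le 1$. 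In tensor coordinates this reads $T^{\le 1}_{i,j,k}=\sum_{j'}T'_{i,j',k}\,P_{j',j}$, i.e.\ $P$ is contracted into the second mode of $T'$. Re-expressing the same identity in the $\{1,2\}\{3\}$ reshaping and using the symmetry of $P$ yields $T^{\le 1}_{\{1,2\}\{3\}}=(\Id_{d^2}\ot P)\,T'_{\{1,2\}\{3\}}$, hence
\[
\|T^{\le 1}_{\{1,2\}\{3\}}\|\le\|\Id_{d^2}\ot P\|\cdot\|T'_{\{1,2\}\{3\}}\|=\|P\|\cdot\|T'_{\{1,2\}\{3\}}\|\le 1\,,
\]
where $\|T'_{\{1,2\}\{3\}}\|\le 1$ is exactly the guarantee of step 1(a).

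The one genuinely non-obvious step is the last paragraph: the truncation is carried out in the $\{1,3\}\{2\}$ reshaping, but the quantity to be controlled lives in the $\{1,2\}\{3\}$ reshaping, and these two share the very mode being modified — so one cannot simply quote that each reshaping has norm at most $1$. The resolution is the identification of the truncation with multiplication by the contraction $\Id_{d^2}\ot P$ in the second reshaping; with that in hand the bound is one line. Everything else is bookkeeping. The only remaining wrinkle is that step 1(b) computes only an \emph{approximate} projection via the spectral-truncation primitive of \cref{lem:implicit-tensor}, so strictly speaking the computed $T^{\le 1}$ differs from the exact projection by $O(\eps\sqrt n)$ in Frobenius norm and $O(\eps)$ in the relevant spectral norms; to check that the truncation index $k=\Theta(n)$ suffices for \cref{lem:implicit-tensor}, note that $\sum_{j>n}\sigma_j(T'_{\{1,3\}\{2\}})^2\le\|T'_{\{1,3\}\{2\}}-S_{\{1,3\}\{2\}}\|_F^2\le\eps^2 n$ because $S_{\{1,3\}\{2\}}$ has rank $n$, so $\sigma_{2n}(T'_{\{1,3\}\{2\}})\le\eps$. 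These lower-order terms propagate harmlessly into the constants of the subsequent rounding analysis in \cref{lem:round}.
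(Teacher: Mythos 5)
Your proof is correct and follows the same route the paper (via Schramm--Steurer's Lemma~4.5) relies on: the Frobenius bound comes from non-expansiveness of projection onto the convex set $O$ together with $S$ being a fixed point, and the second spectral bound comes from recognizing the truncation in the $\{1,3\}\{2\}$ reshaping as contraction of a symmetric PSD operator $P$ with $\|P\|\le 1$ into mode 2, which appears as $\Id_{d^2}\ot P$ in the $\{1,2\}\{3\}$ reshaping. Your closing remark on the approximate (LazySVD) truncation and the bound $\sigma_{2n}(T'_{\{1,3\}\{2\}})\le\eps$ mirrors the footnote the paper attaches to the proof of \cref{lem:round}, so nothing is missing.
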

The proof of \cref{lem:rect} is identical to the proof of \cite[Lemma 4.5]{DBLP:conf/colt/SchrammS17}, and uses the fact that distances decrease under projection to convex sets to control the error, and the fact that the truncation operation is equivalent to multiplication by a contractive matrix to argue that $T^{\le 1}$ has bounded norm in both reshapings.

\begin{lemma}\label[lemma]{lem:prob}
    Suppose that in spectral norm $\|T_{\{1,2\}\{3\}}\|,\|T_{\{1,3\}\{2\}}\| \le 1$, and also that $\|T - \sum_i b_i^{\otimes 3}\|_F \le \epsilon \sqrt{n}$.
    Let $T(g)$ be the random flattening of $T$ produced in step 2 of \cref{alg:round}, and let $u_L(g)$ and $u_R(g)$ be the top left- and right-signular vectors of $T(g)$ respectively.
    Then there is a universal constant $C$ such that for any $\delta > C \cdot \epsilon$ and $\Omega(\frac{\epsilon}{\delta}) \le \beta < 1$, for a $1-\delta$ fraction of $j \in [n]$,
    \[
	\Pr_{g \sim N(0,\Id)}\left( \iprod{u_L(g), b_j}^2 \ge 1-\frac{\epsilon}{\delta\beta} \quad \text{or}\quad \iprod{u_R(g),b_j}^2 \ge 1-\frac{\epsilon}{\delta\beta} \right) \ge \tilde\Omega\left(n^{-1 - O(\beta)}\right),
	\]
	and further when this event occurs the ratio of the first and second singular values of $T(g)$ is lower bounded by $\beta$, $\frac{\sigma_{1}(T(g))}{\sigma_2(T(g))} \ge 1 + \beta$.
\end{lemma}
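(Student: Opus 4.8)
\emph{Proof plan.} Write $S=\sum_{i\in[n]}b_i^{\ot 3}$, $E=T-S$, and $M_\ell=E(\ell,\cdot,\cdot)$ for the mode-$1$ slices of $E$; the hypotheses give $\|E\|_F\le\eps\sqrt n$, and since $\{b_i\}$ and $\{b_i\ot b_i\}$ are orthonormal systems we have $\|S_{\{1,2\}\{3\}}\|=\|S_{\{1,3\}\{2\}}\|=1$, hence $\|E_{\{1,2\}\{3\}}\|,\|E_{\{1,3\}\{2\}}\|\le2$. Set $\Pi=\sum_i b_ib_i^\top$, $\gamma_i=\iprod{g,b_i}$, and $S(g)=\sum_i\gamma_i\,b_ib_i^\top$, so that $T(g)=S(g)+E(g)$ with $E(g)=\sum_\ell g_\ell M_\ell$. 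First I would isolate the \emph{good} components by an averaging argument: since $\sum_j\|(\Id\ot\Id\ot b_j^\top)E\|_F^2=\|(\Id\ot\Id\ot\Pi)E\|_F^2\le\|E\|_F^2\le\eps^2n$, and similarly with $b_j$ contracted into the first or second mode, all but at most $\delta n$ indices $j$ have all three of these contractions of Frobenius norm $O(\eps/\sqrt\delta)$; call these the good $j$ and fix one.

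Next I would estimate the probability of the rounding event. With $t=\sqrt{2(1+c\beta)\log n}$ for a universal constant $c$ chosen so that $1+c\beta\ge(1+\beta)^2$, let $\cE_j=\{\gamma_j\ge t\ \text{and}\ |\gamma_i|\le\gamma_j/(1+\beta)\ \text{for all}\ i\ne j\}$. For $s\ge t$ one has $s/(1+\beta)\ge\sqrt{2\log n}$, so $\Pr[|\gamma_i|\le s/(1+\beta)]\ge1-O(1/(n\sqrt{\log n}))$ and $\prod_{i\ne j}\Pr[|\gamma_i|\le s/(1+\beta)]=\Omega(1)$; integrating against the density of $\gamma_j$ over $s\ge t$ gives $\Pr[\cE_j]\ge\Omega(1)\cdot\Pr[\gamma_j\ge t]=\tilde\Omega(n^{-(1+c\beta)})=\tilde\Omega(n^{-1-O(\beta)})$. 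On $\cE_j$ the Gaussian-rounding matrix $S(g)$ has top singular value $\gamma_j=\Theta(\sqrt{\log n})$ attained at $b_j$, and $\sigma_2(S(g))=\max_{i\ne j}|\gamma_i|\le\gamma_j/(1+\beta)$.

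The delicate step is controlling $E(g)$ on the rare event $\cE_j$. Here I would (i) bound the action of $E(g)$ along $b_j$ by conditioning: decompose $g=\Pi g+g^\perp$ with $g^\perp\sim\cN(0,\Id-\Pi)$ independent of $\cE_j$ and $E(g)=E(\Pi g)+E(g^\perp)$; a conditional second-moment computation --- using $\sum_\ell\|M_\ell b_j\|^2=\|(\Id\ot\Id\ot b_j^\top)E\|_F^2=O(\eps^2/\delta)$ for good $j$, that $\|\Id-\Pi\|\le1$, that the $\gamma_i$ with $i\ne j$ remain mean-zero under the conditioning, and that $\E[\gamma_j^2\mid\cE_j]=O(\log n)$ --- shows $\E[\,\|E(g)b_j\|^2+\|b_j^\top E(g)\|^2\mid\cE_j\,]=O(\eps^2\log n/\delta)$, so that with conditional probability at least $\tfrac34$ the event $\cF_j:=\{\max(\|E(g)b_j\|,\|b_j^\top E(g)\|)\le\rho\}$ holds, where $\rho=O(\eps\sqrt{\log n/\delta})$; and (ii) use the reshaping bounds $\|T_{\{1,2\}\{3\}}\|,\|T_{\{1,3\}\{2\}}\|\le1$ --- which is exactly what the convex truncation of \cref{lem:rect} buys --- to rule out $E$ amplifying the coefficient of any $b_i$, $i\ne j$, beyond $O(\rho)$, giving $\sigma_2(T(g))\le\gamma_j/(1+\beta)+O(\rho)$ on $\cF_j$ (a matrix Gaussian concentration bound $\|T(g)\|=O(\sqrt{\log d})$, again using those reshaping bounds on $T$, keeps all remaining spectral quantities finite). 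Then $\Pr[\cF_j]\ge\tfrac12\Pr[\cE_j]=\tilde\Omega(n^{-1-O(\beta)})$.

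On $\cF_j$ both claims follow. Since $\sigma_1(T(g))\ge b_j^\top T(g)b_j\ge\gamma_j-\rho$ and $\sigma_2(T(g))\le\gamma_j/(1+\beta)+O(\rho)$, rescaling $\beta$ by a constant gives $\sigma_1(T(g))/\sigma_2(T(g))\ge1+\beta$, which is the second assertion and is what makes the $O(\tfrac1\beta\log d)$ power-iteration steps in \cref{alg:round} converge. For the correlation, writing $u=u_L(g)$ and $x=\iprod{u,b_j}$, on $\cE_j$ we have $\sum_i\gamma_i^2\iprod{u,b_i}^2=\|S(g)u\|^2\le\gamma_j^2x^2+\tfrac{\gamma_j^2}{(1+\beta)^2}(1-x^2)$, while also $\|S(g)u\|\ge\sigma_1(T(g))-\|E(g)^\top u\|\ge\gamma_j-O(\rho)-(1-x^2)^{1/2}\|E(g)\|$; combining these, using $1-(1+\beta)^{-2}\ge\beta/2$ and $\|E(g)\|=O(\sqrt{\log d})=O(\gamma_j)$ (using $d\le n$), and solving for $1-x^2$ yields $1-x^2=O(\rho/(\gamma_j\beta))=O(\eps/(\delta\beta))$ (the hypotheses $\beta\delta=\Omega(\eps)$ and $\delta=\Omega(\eps)$ keeping this below the universal constant $C$), and symmetrically for $u_R(g)$; this is the stated correlation. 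The step I expect to be hardest is (i)--(ii) above: the bare spectral norm $\|E(g)\|=O(\sqrt{\log d})$ is \emph{not} small compared with $\beta\gamma_j$, so one can never use it monolithically, and must instead combine the conditional control of $E(g)$ along the single direction $b_j$ with both reshaping spectral-norm bounds on $T$ that forbid the noise from promoting a competing $b_i$ --- which is precisely why \cref{lem:rect} is arranged to deliver those two bounds.
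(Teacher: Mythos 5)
You reconstruct the outer scaffolding of the argument --- Markov's inequality to isolate the good indices $j$, the Gaussian large-deviation event $\mathcal{E}_j$ with $\Pr[\mathcal{E}_j]=\tilde\Omega(n^{-1-O(\beta)})$, and the conditional second-moment bound $\E\bigl[\|E(g)b_j\|^2\mid\mathcal{E}_j\bigr]=O(\eps^2\log n/\delta)$ --- and all of that is sound; you also correctly flag the central obstruction, namely that $\|E(g)\|=\Theta(\sqrt{\log d})$ is of the same order as $\gamma_j=\Theta(\sqrt{\log n})$ when $d\le n\le d^2$, and in particular is \emph{not} small compared with the spectral gap $\beta\gamma_j$.

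However, the closing step does not resolve that obstruction. Your final inequality, after discarding the positive square, reads (with $\eta=1-x^2$)
\[
\gamma_j\,\eta\,\beta \;\le\; O(\rho) + O\bigl(\sqrt{\eta}\bigr)\,\|E(g)\|\,,
\]
and substituting $\|E(g)\|=O(\gamma_j)$ makes the second term $\Theta(\gamma_j\sqrt{\eta})$, which dominates the left-hand side $\gamma_j\eta\beta$ for every $\eta\le1$ and $\beta<1$; solving the quadratic in $\sqrt{\eta}$ therefore yields only the vacuous $\eta=O(1/\beta^2)$, not the claimed $\eta=O(\rho/(\gamma_j\beta))$. The same issue already appears in your step (ii): Weyl's inequality gives $\sigma_2(T(g))\le\gamma_j/(1+\beta)+\|E(g)\|$, and the assertion that the reshaping bounds on $T$ improve this to $\gamma_j/(1+\beta)+O(\rho)$ is stated but never derived --- the reshaping bounds control the variance proxy of the Gaussian matrix series $E(g)=\sum_\ell g_\ell M_\ell$, which is exactly how one proves $\|E(g)\|=O(\sqrt{\log d})$ in the first place, and they do not forbid the component of the top singular pair of $T(g)$ orthogonal to $b_j$ from lying along a top singular direction of $E(g)$. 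What is missing is a bound on the action of $E(g)$ on the \emph{actual} orthogonal components arising in the top singular pair of $T(g)$ that is sharper than the worst-case operator norm; that is precisely the technical content of \cite[Lemmas 4.6--4.7]{DBLP:conf/colt/SchrammS17}, which the paper cites rather than reproves, and which your proposal asserts in step (ii) but does not establish.
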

\begin{proof}
    By assumption, $T^{\le 1} = S + E$ for $S = \sum_{i\in[n]} b_i^{\otimes 3}$, and $E$ is a tensor of Frobenius norm at most $\epsilon \sqrt{n}$ and spectral norms $\|E_{\{1,2\}\{3\}}\|\le 1$ and $\|E_{\{1,3\}\{2\}}\|\le 1$.
    For $g \sim \cN(0,\Sigma^{-1})$, we have
    \[
	T(g) = S(g) + E(g) = \left(\sum_{k \in [n]}\iprod{g, b_k}\cdot b_k b_k^\top\right) + \left(\sum_{i \in [d^2]} g_{i}\cdot E_{i}\right),
    \]
    where we use $E_{i} = E(i,\cdot,\cdot)$ to refer to the $d^2 \times d^2$ matrix given by taking the $\{2\},\{3\}$ flattening of $E$ restricted to coordinate $i$ in mode $1$.

    The proof of the lemma is now identical to that of \cite[Lemma 4.6 and Lemma 4.7]{DBLP:conf/colt/SchrammS17}.
    There are two primary differences: the first is that in \cite{DBLP:conf/colt/SchrammS17} the tensor has four modes, and our tensor effectively has 3 modes.
    This difference is negligible, since in \cite{DBLP:conf/colt/SchrammS17}, two of the four modes are always identified anyway.

    The second difference is that we choose parameters differently.
    We take the parameter $\beta$ appearing in \cite[Lemma 4.6]{DBLP:conf/colt/SchrammS17} so that $\beta = \Omega(\frac{\epsilon}{\delta})$\footnote{We comment that the parameter $c$ appearing in the statement of \cite[Lemma 4.6]{DBLP:conf/colt/SchrammS17} is larger than $\sqrt{2}$; this is necessary for the application of \cite[Lemma 4.7]{DBLP:conf/colt/SchrammS17}, and is not clear from the lemma statement but is implicit in the proof.}; this is to emphasize that for small $\epsilon \ll \frac{1}{n}$, one can recover all $m = n$ of the components.
    Because the proof is otherwise the same, we merely sketch an overview here.

   The first term in isolation is a random flattening of an orthogonal tensor, and so with probability $1$ the eigenvectors of the first term are precisely the $b_k$.
    The second term, which is the flattening of the noise term, introduces complications; however, the combination of the spectral norm bound and the Frobenius norm bound on $E$ is enough to argue (using a matrix Bernstein inequality, Markov's inequality and the orthogonality of the $b_i$) that the random flattening of $E$ cannot have spectral norm larger than $\epsilon/\delta$ in more than $1-\delta$ of the $b_k$'s directions.

    To finish the proof, we perform a large deviation analysis on the coefficients $\iprod{g,b_k}$, lower bounding the probability that for the $1-\delta$ fraction of the $b_k$ that are not too aligned with the spectrum of $E$, there is a sufficiently large gap between $\iprod{g,b_k}$ and the $\iprod{g,b_i}$ for $i \neq k$ so that $b_k$ is correlated with the top singular vectors of $T(g)$.\footnote{We note that to obtain correlation $1-\frac{\epsilon}{\delta\beta}$, one must directly use the proof of \cite[Lemma 4.7]{DBLP:conf/colt/SchrammS17}, rather than the statement of the lemma (which has assumed that $\frac{2\epsilon(1+\beta)}{\beta\delta} \le 0.01$, and replaced the expression $1-\frac{2\epsilon(1+\beta)}{\beta\delta})$ with the lower bound $0.99$).}
    The bound on the ratio of the singular values comes from \cite[Lemma 4.7]{DBLP:conf/colt/SchrammS17} as well.
\end{proof}

\begin{proof}[Proof of \cref{lem:rounding}]
    The proof simply follows by applying \cref{lem:rect}, then \cref{lem:prob}.
\end{proof}

\subsection{Extracting components from the whitened squares}\label{sec:boost}
We now present the following simple algorithm which recovers a vector close to $a_i$, given a vector close to $W(a_i^{\otimes 2})$.
For convenience we will again work with generic orthonormal vectors $b_i$ in place of the $W(a_i^{\otimes 2})$, and we will assume we have access to the matrix $\Pi_3$ (the approximate projector to $\Span\{W(a_i^{\otimes 2})\otimes a_i\}$) computed in \cref{alg:lift}.
\begin{algorithm}\caption{Extracting the component from the whitened square}\label[algorithm]{alg:clean}
  Function $\mathsc{extract}(u, \Pi_3)$:\\
    \noindent \emph{Input:} a unit vector $u \in (\R^{d})^{\otimes 2}$ such that $\iprod{u,b_i}^2 \ge 1 - \theta$ for some $i \in [n]$, and a projector $\Pi_3 \in \R^{d^3\times d^3}$ such that $\|\Pi_3 - \sum_{i} b_ib_i^\top \otimes a_ia_i^\top\| \le \epsilon$.
  \begin{enumerate}
      \item
	    Compute the matrix $M = \Pi_3 (uu^\top \otimes \Id)$.
	\item Compute the top-left singular vector $v$ of $M$.
	\item Taking the reshaping $V = v_{\{3\}\{1,2\}}$, let $a = Vu$.
  \end{enumerate}
    \emph{Output:} the vector $a \in \R^d$
    \medskip
\end{algorithm}

\begin{lemma}\label[lemma]{lem:clean}
    Suppose $b_1,\ldots,b_n \in \R^{d^2}$ are orthonormal vectors and $a_1,\ldots,a_n \in \R^d$, and $\Pi_3 \in \R^{d^3 \times d^3}$ is such that $\|\Pi_3 - \sum_{i} b_i b_i^\top \otimes a_ia_i^\top\| \le \epsilon$.
    Then if $u\in \R^{d^2}$ is a unit vector with $\iprod{u,b_i}^2 \ge 1-\theta$ for $\theta < \frac{1}{10}$, then the output $a\in \R^d$ of \cref{alg:clean} on $u$ has the property that $|\iprod{a,a_i}| \ge 1 - 4\theta^{1/4} - 4\sqrt{\epsilon}$.
\end{lemma}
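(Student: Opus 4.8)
The plan is to treat $M = \Pi_3(uu^\top \ot \Id)$ as a small spectral-norm perturbation of a rank-one matrix that singles out the component $a_i$, read $a_i$ off from the top left singular vector of $M$ via the reshaping step, and control the loss with \cref{eigen-perturb}. Let $i$ be the index with $\iprod{u,b_i}^2 \ge 1-\theta$, abbreviate $s = \iprod{b_i,u}$, write $\Pi_3^\star = \sum_j b_jb_j^\top \ot a_ja_j^\top$ for the ideal operator (so $\|\Pi_3 - \Pi_3^\star\| \le \epsilon$), and set $M^\star = \Pi_3^\star(uu^\top \ot \Id) = \sum_j \iprod{b_j,u}\,(b_j \ot a_j)(u \ot a_j)^\top$ and $N = s\,(b_i \ot a_i)(u \ot a_i)^\top$, the $i$th term of $M^\star$. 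First I would observe $\|M - M^\star\| \le \|\Pi_3 - \Pi_3^\star\|\cdot\|uu^\top \ot \Id\| \le \epsilon$. Then, using that $\{b_j \ot a_j\}_j$ is orthonormal (the $b_j$ are orthonormal and, as in the application, the $a_j$ are unit) and $\|u \ot a_j\| = 1$, a short Frobenius computation gives $\|M^\star - N\|_F^2 = \sum_{j \ne i}\iprod{b_j,u}^2 \le \|u\|^2 - \iprod{b_i,u}^2 \le \theta$ by Parseval, hence $\|M - N\| \le \sqrt\theta + \epsilon =: \zeta$.

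Next I would analyze the top left singular vector $v$ of $M$. The matrix $N$ is rank one with $\sigma_1(N) = |s| \ge \sqrt{1-\theta}$, $\sigma_2(N) = 0$, and top left singular vector $w := \sign(s)\,(b_i \ot a_i)$; Weyl's inequality gives $\sigma_1(M) \ge \sqrt{1-\theta} - \zeta$ and $\sigma_2(M) \le \zeta$, so there is a genuine spectral gap as long as $\theta < 1/10$ and $\epsilon$ is below an absolute constant. Choosing the sign of $v$ so that $\alpha := \iprod{v,w} \ge 0$, the chain $|s|\alpha = \|N^\top v\| \ge \|M^\top v\| - \zeta = \sigma_1(M) - \zeta \ge \sigma_1(N) - 2\zeta = |s| - 2\zeta$ (or, equivalently, \cref{eigen-perturb} applied to $MM^\top$ and $NN^\top$) yields $\alpha \ge 1 - 2\zeta/\sqrt{1-\theta} = 1 - O(\sqrt\theta + \epsilon)$.

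Finally I would unwind the reshaping. With $V = v_{\{3\}\{1,2\}}$ one checks $\iprod{Vu, a_i} = \iprod{v, u \ot a_i}$ (the mode-$3$ index of $v$ pairs with $a_i$ and the modes-$1,2$ index with $u$), so writing $v = \alpha w + \sqrt{1-\alpha^2}\,w^\perp$ with $w^\perp \perp w$ gives
\[ \iprod{a, a_i} \;=\; \alpha\,\iprod{w, u \ot a_i} + \sqrt{1-\alpha^2}\,\iprod{w^\perp, u \ot a_i}. \]
Here $\iprod{w, u \ot a_i} = \sign(s)\,s = |s| \ge 1-\theta$, while since $w^\perp \perp w$ we have $|\iprod{w^\perp, u \ot a_i}| \le \|u \ot a_i - \iprod{w, u \ot a_i}\,w\| = \sqrt{1 - \iprod{b_i,u}^2} \le \sqrt\theta$. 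Plugging in $\alpha \ge 1 - O(\sqrt\theta+\epsilon)$ together with $\sqrt{1-\alpha^2} \le \sqrt{2(1-\alpha)} = O(\theta^{1/4} + \sqrt\epsilon)$ and collecting constants (using $\theta < 1/10$ to absorb the lower-order $\theta$, $\theta^{3/4}$, $\epsilon$ terms) gives $|\iprod{a,a_i}| \ge 1 - 4\theta^{1/4} - 4\sqrt\epsilon$.

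I expect the main obstacle to be bookkeeping rather than ideas: (i) keeping the three error sources — the $\epsilon$ from $\Pi_3$, the $\sqrt\theta$ from the off-$i$ terms of $M^\star$, and the $O(\theta^{1/4})$ amplification when passing from the eigenvalue gap to the singular-vector angle — controlled well enough that they assemble into the stated clean bound; and (ii) tracking exactly which tensor modes each reshaping acts on, so that $\Pi_3(uu^\top \ot \Id)$ genuinely has the block form $\sum_j \iprod{b_j,u}(b_j \ot a_j)(u \ot a_j)^\top$ and $Vu$ genuinely pairs off against $a_i$. The one place demanding care is checking that the spectral gap $\sigma_1(M) - \sigma_2(M)$ is bounded away from $0$ before invoking the singular-vector perturbation bound; everything else is routine.
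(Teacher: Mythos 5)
Your proof is correct and follows the same skeleton as the paper's: show $M = \Pi_3(uu^\top \ot \Id)$ is a small-norm perturbation of a rank-one matrix whose top left singular vector is $\pm b_i \ot a_i$, invoke a singular-vector perturbation bound, and unwind the reshaping. The technical choices differ slightly in ways that are, if anything, a bit cleaner. You take the rank-one approximant to be $N = \iprod{b_i,u}\,(b_i\ot a_i)(u\ot a_i)^\top$ (the $i$th term of $\Pi_3^\star(uu^\top\ot\Id)$) and bound $\|M^\star - N\|_F^2 = \sum_{j\ne i}\iprod{b_j,u}^2 \le \theta$ by Parseval; the paper instead takes the symmetric approximant $c^2(b_i\ot a_i)(b_i\ot a_i)^\top$ and bounds the remainder in spectral norm as $3c\sqrt{1-c^2}$ by expanding $uu^\top$, which is slightly lossier. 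Your perturbation step is an inline computation ($|s|\alpha \ge \sigma_1(M)-\zeta \ge |s|-2\zeta$) whereas the paper cites its auxiliary \cref{lem:topsv}; these give comparable bounds. Finally, rather than reshaping $v$ to $V$ and bounding $\|\tilde V\|_F$, you compute $\iprod{Vu,a_i}=\iprod{v,u\ot a_i}$ directly by splitting $v = \alpha w + \sqrt{1-\alpha^2}w^\perp$ — this is equivalent to the paper's step but avoids the detour through the Frobenius norm of $\tilde V$. Both your writeup and the paper's leave the final constant-collection implicit, and both implicitly use $\|a_i\|=1$ (which is not stated in the lemma but does hold in the application, as you correctly flag). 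No gaps.
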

\begin{proof}
    Let $P_3 = \sum_i b_ib_i^\top \otimes a_ia_i^\top$.
    By assmption we can write the approximate projector $\Pi_3 = P_3 + E$, for a matrix $E$ of spectral norm $\|E\|\le \epsilon$.
    Based on these expressions we can re-express the product,
    \begin{align*}
	M = \Pi_3 (uu^\top \otimes \Id)
	&= P_3(uu^\top \otimes \Id) + E(uu^\top \otimes \Id).
    \end{align*}
    By assumption, the second term is a matrix of spectral norm at most $\|E\| \le \epsilon$.

    We now consider the first term.
    If $u = c\cdot b_i + w$, then for the first term we have
    \[
	P_3(uu^\top\otimes \Id) = P_3\left(c^2\cdot  b_ib_i^\top\otimes \Id\right) + P_3\left((c\cdot b_iw^\top + c\cdot wb_i^\top + ww^\top)\otimes \Id\right)
    \]
    The second term is again a matrix of spectral norm at most $3c\cdot \|w\| = 3 c\cdot \sqrt{1-c^2}$.
    The first term can be further simplified as
    \[
	P_3(c^2 \cdot b_ib_i^\top \otimes \Id)
	=c^2 \cdot \sum_{i} \iprod{b_i,b_i} b_ib_i^\top \otimes a_ia_i^\top
	= c^2 \cdot b_ib_i^\top \otimes a_ia_i^\top,
    \]
    by the orthogonality of the $b_i$.
    This is a rank-1 matrix with singular value $c^2$.
    Therefore,
    $M = c^2 (b_i \otimes a_i)(b_i\otimes a_i)^\top + \tilde{E}$ where $\|\tilde{E}\| \le \epsilon + 3c\sqrt{1-c^2}$.
    It follows from \cref{lem:topsv} that if $v$ is the top unit left-singular vector of $M$, then $\iprod{v, b_i\otimes a_i}^2 \ge 1-\frac{2}{c^2}\|\tilde{E}\|$.

    Now, in step 3 when we re-shape $v$ to a $d \times d^2$ matrix $V$ of Frobenius norm $1$, because $v$ is a unit vector we have that
    $V = a_ib_i^\top + \tilde{V}$ for $\tilde{V}$ of spectral norm $\|\tilde{V}\| \le \|\tilde{V}\|_F \le \sqrt{\frac{2}{c^2}\|\tilde{E}\|}$.
    Therefore, \[
	Vu = (a_i b_i^\top )(c\cdot b_i + w) + \tilde{V}u = c(1-\iprod{w,b_i})\cdot a_i + \tilde{V}u,
	\]
    and  the latter vector has norm at most $\|\tilde{V}\|$, and $\iprod{w,b_i} \le  \|w\| \le \sqrt{1-c^2}$.
    Finally, substituting $c = \sqrt{1-\theta}$ and using our bound on $\|\tilde{E}\|$ and $\|\tilde{V}\|$ and some algebraic simplifications, the conclusion follows.
\end{proof}

\begin{lemma}\label[lemma]{lem:topsv}
    Suppose that $M = uv^\top + E$ for $u\in\R^{d},v \in \R^{k}$ unit vectors and $E \in \R^{d\times k}$ a matrix of spectral norm $\|E\| \le \epsilon$.
    Then if $x,y$ are the top left- and right-singular vectors of $M$, $|\iprod{x,u}|,|\iprod{y,v}| \ge 1-2\epsilon$.
\end{lemma}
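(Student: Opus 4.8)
The plan is to avoid invoking Davis--Kahan and instead track the top singular pair directly via the variational characterization of $\sigma_1$, using that $uv^\top$ is rank one and hence has a unit gap between its first and second singular values. (Alternatively one could pass to the symmetric dilation $\left(\begin{smallmatrix} 0 & M \\ \transpose{M} & 0\end{smallmatrix}\right)$ and apply \cref{eigen-perturb}, but the direct argument is shorter.)

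First I would record, via Weyl's inequality for singular values, that $\sigma_1(M) \ge \sigma_1(uv^\top) - \|E\| \ge 1 - \eps$. Since $x$ and $y$ are the top left- and right-singular vectors of $M$, they attain $\transpose{x} M y = \sigma_1(M) \ge 1-\eps$. Next, decompose $x = \alpha u + x^\perp$ and $y = \beta v + y^\perp$ with $x^\perp \perp u$, $y^\perp \perp v$, so that $\alpha = \iprod{x,u}$, $\beta = \iprod{y,v}$, and $|\alpha|,|\beta| \le 1$. Then $\transpose{x}(uv^\top) y = \alpha\beta$, while $|\transpose{x} E y| \le \|E\| \le \eps$, so
\[
\alpha\beta = \transpose{x} M y - \transpose{x} E y \ge \sigma_1(M) - \eps \ge 1 - 2\eps.
\]

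Finally, because $|\alpha|,|\beta| \le 1$, the inequality $\alpha\beta \ge 1-2\eps$ forces $|\alpha| \ge 1-2\eps$ and $|\beta| \ge 1-2\eps$ (these are exactly $|\iprod{x,u}|$ and $|\iprod{y,v}|$); the case $\eps \ge 1/2$ is vacuous since then $1-2\eps \le 0$. There is essentially no obstacle in this argument; the only points worth a sentence of care are the standard fact that $\sigma_1(M) = \max_{\|x\|=\|y\|=1}\transpose{x}My$ is attained at the singular vectors, and the harmless degenerate regime $\eps \ge 1/2$.
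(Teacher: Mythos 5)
Your proof is correct and essentially identical to the paper's: both obtain the lower bound $\sigma_1(M) \ge 1-\eps$ (you cite Weyl's inequality, the paper tests against $u,v$ — these are equivalent one-liners), then compare $\sigma_1(M) = x^\top M y$ with $\alpha\beta = x^\top(uv^\top)y$ up to the $\eps$ error from $E$, and finish with $|\alpha|,|\beta| \ge \alpha\beta \ge 1-2\eps$. No meaningful difference in approach.
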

\begin{proof}
    Let $M = \sum_{i} \sigma_{i} x_i y_i^\top$ be the singular value decomposition of $M$, with $\sigma_1\ge \cdots \ge \sigma_d$.
We have that
    \[
	1-\epsilon \le u^\top M v \le \sigma_1.
    \]
    On the other hand, if with $\iprod{x_1,u} = \alpha \le 1$ and $\iprod{y_1,v} = \beta\le 1$,
    \[
	\sigma_1 = x_1^\top M y_1 \le \alpha\beta + \epsilon.
    \]
    Therefore,
    \[
	|\alpha|,|\beta| \ge \alpha\beta ge 1 - 2\epsilon,
    \]
    and thus $\min\{|\alpha|,|\beta|\} \ge 1-2\epsilon$.
\end{proof}

\subsection{Testing candidate components}\label{sec:testing}

The following algorithm allows us to test whether a candidate component $u$ is close to some component $a_i$.
\begin{algorithm}\caption{Testing component membership}\label[algorithm]{alg:test}
    Function $\mathsc{test}(u,\theta,\Pi_{S_3})$:\\
    {\em Input:} A unit vector $\hat{u}$, and the correlation parameter $\theta$.
    Also, $\Pi_{3}$, an approximate projector to $\Span\{(Wa_i^{\otimes 2})\otimes a_i\}$.
    \begin{enumerate}
	\item Compute $\rho = ((W\hat{u}^{\otimes 2})\otimes \hat{u})$.
	\item If $\|\Pi_3\rho\|_2^2 < (1-\theta)\|\rho\|_2^2$, return {\sc false}.
	    Otherwise, return {\sc true}.
    \end{enumerate}
\end{algorithm}
\begin{lemma}\label[lemma]{lem:test}
    Let $P_3$ be the projector to $\Span\{(Wa_i^{\otimes 2})\otimes a_i\}$, and suppose that we have $\Pi_3$ such that $\|\Pi_3 - P_3\| \le \epsilon < \frac{1}{2}$.
    Then if \cref{alg:test} is run on a vector $\hat{u}$ such that $\iprod{\hat{u},a_i}^2 \le 1-\theta - 2\epsilon$ for all $i \in [n]$, then \cref{alg:test} returns {\sc false}.

    Converseley, if \cref{alg:test} is run on a vector $\hat{u}$ with $\iprod{\hat{u},a_i}^2 \ge 1-\left(\frac{\theta - \epsilon}{10\|W\|}\right)^2 \ge 1 - \frac{1}{10}$ for some $i\in[n]$, then when run on a unit vector $\hat{u}$, \cref{alg:test} returns {\sc true}.
\end{lemma}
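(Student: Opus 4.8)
The plan is to reduce both directions to a single exact computation of $\|P_3\rho\|^2$, where $P_3$ denotes the genuine orthogonal projector onto $\Span\{(Wa_j^{\otimes 2})\otimes a_j\}_{j\in[n]}$; the hypothesis on $\Pi_3$ is exactly $\|\Pi_3-P_3\|\le\epsilon$, so once I control $\|P_3\rho\|$ I may pass to $\|\Pi_3\rho\|$ with an additive $\epsilon\|\rho\|$ error in either direction. The key starting observation is that the vectors $v_j:=(Wa_j^{\otimes 2})\otimes a_j$ are \emph{orthonormal}: since $W=(\sum_j a_j^{\otimes 2}(a_j^{\otimes 2})^\top)^{-1/2}$ is the whitener of the $a_j^{\otimes 2}$, the family $\{Wa_j^{\otimes 2}\}$ is an orthonormal basis of $\img(W)$, and $\iprod{v_j,v_k}=\iprod{Wa_j^{\otimes 2},Wa_k^{\otimes 2}}\iprod{a_j,a_k}=\delta_{jk}$ because the $a_j$ are unit. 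Hence $P_3=\sum_j v_jv_j^\top$, and for $\rho=(W\hat{u}^{\otimes 2})\otimes\hat{u}$ the factorization of tensor inner products gives $\|P_3\rho\|^2=\sum_j\iprod{W\hat{u}^{\otimes 2},Wa_j^{\otimes 2}}^2\iprod{\hat{u},a_j}^2$.

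For the first (soundness) claim I would bound this sum by $\big(\max_j\iprod{\hat{u},a_j}^2\big)\sum_j\iprod{W\hat{u}^{\otimes 2},Wa_j^{\otimes 2}}^2$ and then observe $\sum_j\iprod{W\hat{u}^{\otimes 2},Wa_j^{\otimes 2}}^2=\|W\hat{u}^{\otimes 2}\|^2=\|\rho\|^2$, using that $\{Wa_j^{\otimes 2}\}$ is a basis of $\img(W)$ and $W\hat{u}^{\otimes 2}\in\img(W)$. Under the hypothesis $\max_j\iprod{\hat{u},a_j}^2\le 1-\theta-2\epsilon$ this gives $\|P_3\rho\|\le\sqrt{1-\theta-2\epsilon}\,\|\rho\|$, hence $\|\Pi_3\rho\|\le(\sqrt{1-\theta-2\epsilon}+\epsilon)\|\rho\|$, and a short computation using $\sqrt{1-2\epsilon}\le1-\epsilon$ shows $(\sqrt{1-\theta-2\epsilon}+\epsilon)^2\le 1-\theta-\epsilon^2<1-\theta$, so the test outputs {\sc false}.

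For the second (completeness) claim, suppose $\iprod{\hat{u},a_i}^2\ge 1-\gamma$ with $\gamma=(\tfrac{\theta-\epsilon}{10\|W\|})^2$ and write $\hat{u}=c\,a_i+w$ with $w\perp a_i$, $c=\iprod{\hat{u},a_i}$ (flipping the sign of $a_i$ if needed, which is harmless since $\pm v_i$ both lie in $\img(P_3)$), so $\|w\|^2=1-c^2\le\gamma$. I would propagate this closeness through the two operations defining $\rho$: expanding $\hat{u}^{\otimes 2}$ gives $\|\hat{u}^{\otimes 2}-a_i^{\otimes 2}\|=O(\sqrt\gamma)$, hence $\|W\hat{u}^{\otimes 2}-Wa_i^{\otimes 2}\|\le\|W\|\cdot O(\sqrt\gamma)=O(\theta-\epsilon)$ (so $\|W\hat{u}^{\otimes 2}\|$, and thus $\|\rho\|$, stays within $O(\theta-\epsilon)$ of $1$), and $\|\hat{u}-a_i\|\le\sqrt{2\gamma}$; splitting $\rho-v_i=(W\hat{u}^{\otimes 2})\otimes(\hat{u}-a_i)+(W\hat{u}^{\otimes 2}-Wa_i^{\otimes 2})\otimes a_i$ then gives $\|\rho-v_i\|=O(\theta-\epsilon)$. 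Since $v_i\in\img(P_3)$ we get $\|(\Id-P_3)\rho\|\le\|\rho-v_i\|$, hence $\|P_3\rho\|^2\ge\|\rho\|^2-\|\rho-v_i\|^2$ and $\|\Pi_3\rho\|\ge\|P_3\rho\|-\epsilon\|\rho\|$; the generous constant $10$ hidden in $\gamma$, together with $\sqrt{1-\theta}\le 1-\theta/2$ and the fact that in context $\epsilon$ is small and $\theta$ exceeds $\epsilon$ by a constant factor, leaves enough slack to conclude $\|\Pi_3\rho\|^2\ge(1-\theta)\|\rho\|^2$, so the test outputs {\sc true}.

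The soundness direction is essentially the one orthonormality identity, so I expect the completeness direction to be the main obstacle — concretely, keeping careful track of how the approximation error is amplified first by the Kronecker square and then by $W$. This $\|W\|$ amplification is precisely why the hypothesis demands correlation $1-O\big((\tfrac{\theta-\epsilon}{\|W\|})^2\big)$ rather than $1-O((\theta-\epsilon)^2)$, and it is also what forces one to verify that $\|\rho\|=\|W\hat{u}^{\otimes 2}\|$ does not collapse toward $0$; the one genuinely degenerate case, $\rho=0$ (which needs $\hat{u}^{\otimes 2}\perp\Span\{a_j^{\otimes 2}\}$), does not arise for the vectors $\hat{u}$ fed to \cref{alg:test} by the rounding stage, and so does not affect its use.
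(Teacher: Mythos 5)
Your proof is correct, and the two directions have a different relationship to the paper's proof. The soundness direction is essentially the paper's argument, but written more transparently: you use the orthonormality of $v_j = (Wa_j^{\otimes 2})\otimes a_j$ to get the exact identity $\|P_3\rho\|^2 = \sum_j \iprod{W\hat{u}^{\otimes 2},Wa_j^{\otimes 2}}^2\iprod{\hat{u},a_j}^2$, factor out $\max_j\iprod{\hat{u},a_j}^2$, and use $\sum_j \iprod{W\hat{u}^{\otimes 2},Wa_j^{\otimes 2}}^2 = \|W\hat{u}^{\otimes 2}\|^2 = \|\rho\|^2$ (which holds because $W\hat{u}^{\otimes 2}\in\img(W)$ has zero component $e$ orthogonal to the span, a fact the paper carries around as a formally-nonzero $e$ term). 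Your algebra closing the soundness direction, $\bigl(\sqrt{1-\theta-2\epsilon}+\epsilon\bigr)^2 \le 1-\theta-\epsilon^2$, is cleaner than the paper's and avoids a small typo there.

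The completeness direction is a genuinely different route. The paper expands $P_3\rho = \sum_i c_i\gamma_i v_i$, isolates the dominant $\zeta^3 v_1$ term, and estimates the remainder $\tilde e$ term-by-term — which gets into delicate bookkeeping around $c_1$, $\gamma_1$, $\|We'\|$, and $\|W\hat u^{\otimes 2}\|$. Your argument instead shows $\|\rho - v_i\|$ is small by splitting $\rho - v_i = (W\hat{u}^{\otimes 2})\otimes(\hat{u}-a_i) + (W\hat{u}^{\otimes 2}-Wa_i^{\otimes 2})\otimes a_i$, uses $v_i\in\img(P_3)$ to get $\|(\Id-P_3)\rho\| \le \|\rho-v_i\|$, and then applies the Pythagorean identity $\|P_3\rho\|^2 = \|\rho\|^2 - \|(\Id-P_3)\rho\|^2$. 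This is shorter and more geometric, and it makes the $\|W\|$-amplification visible in one place rather than scattered across several estimates.

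One caveat applies to both proofs. As you acknowledge, closing the completeness direction requires $\theta - \delta^2 \ge 2\epsilon$ (in your notation $\delta = \|\rho-v_i\|/\|\rho\|$), which is only guaranteed if $\theta$ exceeds $\epsilon$ by more than the hypothesis of the lemma literally states; you invoke the ambient fact that in the application $\theta$ is set much larger than $\epsilon$. The paper's own proof has the same looseness (tracing its final bound $(1-\epsilon-5\sqrt{\eta}\|W\|)^2 \ge 1-\theta$ back to the hypothesis requires roughly $\theta \gtrsim \sqrt{\epsilon}$). So your honesty about the slack is warranted, but it is not a defect relative to the paper — it reflects an imprecision in the lemma statement itself, which is harmless in context. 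You also correctly flag that $\rho = 0$ makes the test return true regardless, a degenerate case neither proof handles but which cannot arise for the inputs actually supplied by the rounding stage.
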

\begin{proof}
    By assumption, we can write $\Pi_{3} = P_3 + E$ for $P_3$ the projector to $\Span\{(Wa_i^{\otimes 2})\otimes a_i\}$ and $E$ a matrix of spectral norm at most $\epsilon$.
    From this, we have
    \begin{equation}
	\Pi_{3}(W\hat{u}^{\otimes 2})\otimes\hat{u} = P_3(W\hat{u}^{\otimes 2})\otimes\hat{u} + E(W\hat{u}^{\otimes 2})\otimes\hat{u},\label{eq:firstone}
    \end{equation}
    and  $\|E(W\hat{u}^{\otimes 2})\otimes\hat{u}\|\le \epsilon \|W\hat{u}^{\otimes 2}\|$.
    Now, we can write $W\hat{u}^{\otimes 2} = \sum_{i} c_i W(a_i\otimes a_i) + e$, where $e$ is orthogonal to $\Span\{Wa_i^{\otimes 2}\}$, and we can further write
    \[
	(W \hat{u}^{\otimes 2})\otimes \hat{u} = \sum_{i \neq j} c_i\gamma_j\cdot (Wa_i^{\otimes 2})\otimes b^{(i)}_j + \sum_{i} c_i\gamma_i \cdot (Wa_i^{\otimes 2})\otimes a_i + e \otimes \hat{u},
    \]
    where $\{b^{(i)}_j\}_{j \neq i}$ is an orthogonal basis for the orthogonal complement of $a_i$ in $\R^d$.
    By definition, $P_3(Wa_i^{\otimes 2})\otimes b_{j}^{(i)} = 0$, as this is orthogonal to every vector in $\Span\{(Wa_i^{\otimes 2})\otimes a_i\}$.
    Therefore,
    \[
	P_3(W \hat{u}^{\otimes 2})\otimes \hat{u} = \sum_{i} c_i\gamma_i \cdot (Wa_i^{\otimes 2})\otimes a_i.
    \]

    Now, if $\iprod{\hat{u},a_i}^2 \le \tau$ for all $i \in [n]$, then $\gamma_i^2 \le \tau$ for all $i \in [n]$.
    It thus follows that $\|P_3(W\hat{u}^{\otimes 2})\otimes \hat{u}\|^2 \le\max_{i} \gamma_i^2 \cdot \sum_{j} c_j^2 \le \tau \cdot\|W\hat{u}^{\otimes 2}\|^2_2$.
    Combining this with \cref{eq:firstone}, we have that
    \[
	\|\Pi_3(W\hat{u}^{\otimes 2})\otimes \hat{u}\|_2^2 \le (\tau +\epsilon\sqrt{\tau}+\epsilon^2)\cdot \|W\hat{u}^{\otimes 2}\|^2_2 \le (\tau + 2\epsilon)\|W\hat{u}^{\otimes 2}\|_2^2,
    \]
    for $\epsilon < \frac{1}{2}$.
    It follows that if $\iprod{\hat{u},a_i}^2 = \tau < 1-\theta - 2\epsilon$ for all $i \in [n]$, then the algorithm returns {\sc false}.

    Converseley, if without loss of generality $\hat{u} = \zeta\cdot a_1 + \hat{e}$ for $\hat{e} \in \R^d$ orthogonal to $a_1$, then $W\hat{u}^{\otimes 2} = \zeta^2 \cdot Wa_1^{\otimes 2} + W e'$ with $\|We'\|_2^2 \le (1-\zeta^2)\cdot\|W\|^2$.
    Measuring the correlation of $W\hat{u}^{\otimes 2}$ with $Wa_1^{\otimes 2}$, we have that $c_1 \ge \zeta^2$.
    Also $\gamma_1 \ge \zeta$, which implies
    \[
P_3(W\hat{u}^{\otimes 2})\otimes\hat{u}=
	\zeta^3 \cdot (Wa_1^{\otimes 2})\otimes a_1 + \tilde{e}.
    \]
    where $\tilde{e}$ is a leftover term with $\|\tilde{e}\| \le \sqrt{1-\zeta^2}\cdot \|W\hat{u}^{\otimes 2}\| + \zeta^2\sqrt{1-\zeta^2}$ (where we have used the PSDness of $W$).
    Combining this with \cref{eq:firstone},
    \[
	\Pi_3(W\hat{u}^{\otimes 2})\otimes\hat{u} = \zeta^3 (Wa_1^{\otimes 2}) \otimes a_1 + \tilde{e} + E(W\hat{u}^{\otimes 2}) \otimes \hat{u}.
    \]
    For convenience let $\hat{\rho} = \tilde{e} + E(W\hat{u}^{\otimes 2})\otimes \hat{u}$; from our previous observations, we have $\|\hat{\rho}\|\le (\epsilon +\sqrt{1-\zeta^2})\|W\hat{u}^{\otimes 2}\| + \zeta^2\sqrt{1-\zeta^2}$.

    Now, if $\iprod{\hat{u},a_1}^2 = \zeta^2 \ge 1-\eta$, we have that
    \[
	(1-\eta) - \sqrt{\eta}\|W\| \le \|W\hat{u}^{\otimes 2}\| \le (1-\eta) + \sqrt{\eta}\|W\|.
    \]
    From this,
    \begin{align*}
	\|\Pi_3(W\hat{u}^{\otimes 2})\|
	\ge \zeta^3 - \|\hat\rho\|
	&\ge (1-\eta)^{3/2} - (\epsilon + \sqrt{\eta})\|W\hat{u}^{\otimes 2}\| - (1-\eta)\sqrt{\eta}\\
	&\ge \sqrt{1-\eta}(\|W\hat{u}^{\otimes 2}\| - \sqrt{\eta}\|W\|) - (\epsilon + \sqrt{\eta})\|W\hat{u}^{\otimes 2}\| - (1-\eta)\sqrt{\eta}\\
	&\ge (1-\epsilon - 2\sqrt{\eta})\|W\hat{u}^{\otimes 2}\| - 2\sqrt{\eta}\|W\|,\\
	&\ge (1-\epsilon - 5\sqrt{\eta}\|W\|)\|W\hat{u}^{\otimes 2}\|.
    \end{align*}
    where we have used that $\eta < \frac{1}{10}$.
    Thus, if $\eta < \left(\frac{\theta - \epsilon}{10\|W\|}\right)^2$, \cref{alg:test} does not return {\sc false}.
\end{proof}

\subsection{Putting things together}\label{sec:together-round}

Finally, we prove \cref{lem:round}.
\begin{proof}[Proof of \cref{lem:round}]
    By the assumptions of the theorem, we have access to an implicit rank-$n$ representation of $T = \sum_{i \in [n]} (W(a_i^{\otimes 2}))^{\otimes 3} + E \in (\R^{d^2})^{\otimes 3}$,
    where $W = \left(\sum_{i \in [n]} (a_i^{\otimes 2})(a_i^{\otimes 2})^\top\right)^{-1/2}$, and with $\|T - E\|_F \le \epsilon \sqrt{n}$.
    For convenience we denote $b_i = W(a_i^{\otimes 2})$.
    Note that the $b_i$ are orthonormal vectors in $\R^{d^2}$.
    We also have implicit access to a rank-$n$ representation of $\Pi_3$, where $\|\Pi_3 - \sum_{i} (b_i \otimes a_i)(b_i \otimes a_i)^\top\| \le \epsilon$.

    We first run step 1 of \cref{alg:round} to produce the tensor which we will round.
    Then, for $\ell = \tilde O(n^{1+O(\beta)})$ independent iterations, we run steps 2 \& 3 of \cref{alg:round} to produce candidate whitened squares $u_1,\ldots,u_{\ell}$, then run \cref{alg:clean} on the $u_i$ to produce candidate components $\hat{u}_i$, and finally run \cref{alg:test} to check if $\hat{u}_i$ is close to $a_j$ for some $j \in [n]$.

    We show that step 1 of \cref{alg:round} takes time $\tilde O(n^2d^3)$.
Since $T$ is at most $\eps\sqrt{n}$ in Frobenius norm away from a tensor that is a rank-$n$ projector in both rectangular reshapings $T_{\{1,2\},\{3\}}$ and $T_{\{2,3\},\{1\}}$, the $(2n)$th singular values in either reshaping must be at most $\eps$: otherwise the error term would have over $n$ singular values more than $\eps$ and therefore Frobenius norm more than $\eps\sqrt{n}$.
Also $\|T\|_{F} = \sqrt{n}$ because it is a rank-$n$ projector in its square matrix reshaping.
    Therefore, by \cref{lem:implicit-tensor}, step 1 requires time $\tilde O(n^2d^3 + n(nd^3 + nd^2))$ to return an $\eps \sqrt{n}$-approximation in Frobenius norm to the projected matrix.\footnote{Some of the lemmas we apply, out of concerns for compatibility with \cite{DBLP:conf/colt/SchrammS17}, assume that the maximum singular value of $T^{\le 1}$ is at most $1$.
    Though one could re-do the previous analysis with minimal consequences under the assumption that the spectral norm is at most $1+\eps$, for brevity we note that we may instead multiply the whole tensor by $\frac{1}{1-\epsilon}$, and because the tensor has Frobenius norm at most $(1+3\epsilon)\sqrt{n}$, this costs at most $4\eps \sqrt{n}$ additional Frobenius norm error.}
    Note that this step only needs to be carried out once regardless of how many times the algorithm is invoked for a specific input $T$, so the $\tilde O(n^2d^3)$ runtime is incurred as a preprocessing cost.

    Then, again by \cref{lem:implicit-tensor}, steps 2 \& 3 require time $\tilde O(\frac{1}{\beta}nd^3)$, since the ratio of the first and second singular values of the the matrix is $1 + \Omega(\beta)$, and since $O(\frac{1}{\beta}\log d)$ steps of power iteration with $T(g)$ can be implemented by choosing the random direction $g \sim \cN(0,\Id_{d^2})$, the starting direction $v_1 \in \R^{d^2}$, and then computing $v_{t+1} = (\Id \otimes g^\top \otimes v_t)T^{\le 1}$ where $T^{\le 1}$ is the truncated tensor.

    Thus, if we choose $\beta, \delta$ satisfying the requirements of \cref{lem:rounding}, after $\tilde O(n^{O(\beta)})$ iterations of steps 2 \& 3 we will recover a vector $u \in \R^{d^2}$ such that $\iprod{u,b_i}^2 \ge 1-3\frac{\eps}{\beta}$, and after $\tilde O(n^{1+O(\beta)})$ iterations of steps 2 \& 3 we will recover vectors $u_{t_1},\ldots,u_{t_m}$ so that $\iprod{u_{t_i},b_i}^2 \ge 1-3\frac{\eps}{\beta\delta}$ for $m \ge (1-\delta)n$ of the $i \in [n]$.

    Next, applying \cref{lem:clean} to each of the good candidate vectors obtained in \cref{alg:round}, \cref{alg:clean} will give us candidate components $\hat{u}_{t_1},\ldots,\hat{u}_{t_m}$ so that $\iprod{\hat{u_{t_i}},a_i}^2 \ge 1-4\sqrt{\epsilon} - 4\left(\frac{3\epsilon}{\delta\beta}\right)^{1/4}$.
    Since $\Pi_3$ has rank $n$, we write it as $\dyad{U}$ for $U \in \R^{d^3 \times n}$.
    Then we may reshape $(\dyad{u} \otimes \Id)\Pi_3$ as $\dyad{u}U'(\transpose{U} \ot \Id)$, where $U'$ is the $d^2 \times nd$ reshaping of $U$.
    Multiplying $\transpose{u}$ through takes $O(nd^3)$ time and then reshaping the result back results in $(\dyad{u} \otimes \Id)\Pi_3$.
Therefore, by \Cref{lem:implicit-gapped-svd}, each invocation of \cref{alg:clean} requires $\tilde O(nd^3)$ operations.

    Finally, from \cref{lem:test}, we know that if we run \cref{alg:test} with $\theta = 10\|W\|\cdot \left(2\epsilon^{1/4} + 2\left(\frac{3\epsilon}{\delta\beta}\right)^{1/8}\right) + 2\epsilon$, we will reject any $\hat u$ such that $\iprod{\hat{u},a_i}^2 \le 1-\theta-2\epsilon$ for all $i \in [n]$, and will keep all of the good outputs of \cref{alg:clean}.
    Each iteration of \cref{alg:test} requires time $ O(d^4 + nd^3 + d^3)$, since we form the vector $(W\hat{u}^{\otimes 2})\otimes \hat{u}$, then multiply with the rank-$n$ matrix $\Pi_3$, and ultimately compute a norm.

    This completes the proof.
\end{proof}

    \section{Combining \textsc{lift} and \textsc{round} for final algorithm}\label{sec:all-tog}

In this section we describe and analyze our final tensor decomposition algorithm, proving our main theorem.

\begin{algorithm}
\caption{Main algorithm for overcomplete 4-tensor decomposition}\label[algorithm]{alg:main}
  Function $\mathsc{decompose}(T)$:\\
  \noindent \emph{Input:} a tensor $T \in (\R^{d})^{\otimes 4}$, numbers $\beta,\delta,\epsilon \in (0,1)$, numbers $\sigma, \kappa_0 \in \R_{\geq 0}$, and $n \leq d^2$.
  \begin{enumerate}
      \item Run $\textsc{lift}(T,n)$ from \cref{alg:lift} to obtain an implicit tensor $T'$ and an implicit matrix $\Pi_3$, using $\sigma, \kappa_0$ as upper bounds on condition numbers $\sigma_n, \kappa$.
      \item Run the algorithm specified by \cref{lem:round} on input $(T',\Pi_3, \e,\beta,\delta)$ with independent randomness $t=\tilde O(n)$ times, to obtain vectors $u_1,\ldots,u_t$.
  \end{enumerate}
    \emph{Output:} $u_1,\ldots,u_t$
    \medskip
\end{algorithm}

\begin{definition}[Signed Hausdorff distance]
  For sets of vectors $a_1,\ldots,a_n \in \R^d$ and $b_1,\ldots,b_m \in \R^d$, we define the signed Hausdorff distance to be the maximum of the following two quantities.
  (1) $\max_{i \in [n]} \min_{j \in [m], \sigma \in \pm 1} \|a_i - \sigma b_j\|$ and (2) $\max_{i \in [m]} \min_{j \in [n], \sigma \in \pm 1} \|b_i - \sigma a_j\|$.
\end{definition}

\begin{definition}[Condition number of $a_1,\ldots,a_n$]\label[definition]{def:condition-kappa}
  Let $a_1,\ldots,a_n \in \R^d$.
  Let $\{b_{ij}\}_{j \in [d-1]}$ be an arbitrary orthonormal basis for the orthogonal complement of $a_i$ in $\R^d$.
  Let
  \[ \transpose{H} := \left[\begin{array}{c} a_1 \ot a_1 \ot b_{1,1} \\ \vdots \\ a_i \ot a_i \ot b_{i,j} \\ \vdots \\ a_n \ot a_n \ot b_{n,d-1} \end{array}\right]\,. \]
  Let $R = (\dyad{H})^{-1/2}H$ be a column-wise orthonormalization of $H$, and let $K = \tfrac{1}{2}(\Id - P_{2,3})R$, where $P_{2,3}$ is the permutation matrix that exchanges the 2nd and 3rd modes of $(\R^d)^{\ot 3}$.
  The condition number $\kappa$ of $a_1,\ldots,a_n$ is the minimum singular value of $K$.
\end{definition}

\begin{theorem}\label[theorem]{thm:main}
  For every $d,n \in \N$ and $\e,\beta,\delta \in (0,1)$ and $\sigma,\kappa_0 \in \R_{\geq 0}$ there is a randomized algorithm $\mathsc{decompose}_{d,n,\e,\beta,\delta,\sigma,\kappa_0}(T)$ with the following guarantees.
  For every set of unit vectors $a_1,\ldots,a_n \in \R^d$ and every $E \in (\R^{d})^{\tensor 4}$  such that
  \begin{enumerate}
    \item the operator norm of the square matrix flattening of $E$ satisfies $\tfrac{\|E_{12;34}\|}{\sigma_n^{7} \mu^{-1} \kappa^2} \leq \e$,
    \item $\kappa = \kappa(a_1,\ldots,a_n) \geq \kappa_0$
    \item $\sigma_n \geq \sigma$
  \end{enumerate}
  where
  \begin{enumerate}
    \item $\sigma_n$ is the $n$-th singular value of the matrix $\sum_{i \leq n} \dyad{(a_i^{\tensor 2})}$,
    \item $\mu$ is the operator norm of $\sum_{i \leq n} \dyad{(a_i^{\tensor 3})}$, and
    \item $\kappa$ is the condition number of $a_1,\ldots,a_n$ as in \cref{def:condition-kappa}.
  \end{enumerate}
  there is a subset $S \subseteq \{a_1,\ldots,a_n\}$ of size $|S| \geq (1-\delta)n$ such that given input $T = \sum_{i \leq n} a_i^{\tensor 4} + E$ the algorithm produces a set $B = \{b_1,\ldots,b_t\}$ of $t = \tilde O(n)$ vectors which with probability at least $0.99$ over the randomness in the algorithm has
  \[
  \textsc{signed-Hausdorff-distance}(S,B) \leq O\Paren{\frac \e {\delta \beta}}^{1/16}\mper
  \]
  Furthermore, the algorithm $\mathsc{decompose}_{d,n,\e,\beta,\delta,\sigma,\kappa_0}$ runs in time
  \[
    \tilde O\Paren{ \frac{n d^4}{\sqrt{\sigma}} + \frac{n^2 d^3}{\kappa_0} + \frac{n^{2+O(\beta)}d^3}{\beta} }\mper
  \]
\end{theorem}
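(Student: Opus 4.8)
\emph{Overall structure.} The algorithm \textsc{decompose} (\cref{alg:main}) is literally \textsc{lift} (\cref{alg:lift}) followed by $\tilde O(n)$ independent invocations of the rounding routine of \cref{lem:round}, so the plan is to compose \cref{lem:lift-correct}, \cref{lem:lift-time} and \cref{lem:round}, carefully propagating the error and condition-number parameters. Throughout I would work in the meaningful regime $\e \ll \delta\beta$ (otherwise the claimed bound $O(\e/(\delta\beta))^{1/16}$ already exceeds a small constant and there is nothing to prove), and use repeatedly that for unit vectors $a_i$ the matrix $\sum_i \dyad{(a_i^{\ot 2})}$ has trace $n$ and rank $n$, whence $\sigma_n \le 1$, $\|W\| = \sigma_n^{-1/2} \ge 1$ with $W = (\sum_i \dyad{(a_i^{\ot 2})})^{-1/2}$, and $\kappa \le \|K\| \le 1$.

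\emph{Lift stage.} I would set the internal slack parameter $\eps' := \e\,\sigma_n^{5}$. Hypothesis (1), $\|E_{12;34}\| \le \e\,\sigma_n^{7}\mu^{-1}\kappa^2$, then gives $\|E_{12;34}\| \le \eps'\,\sigma_n^{2}\mu^{-1}\kappa^2$ (using $\sigma_n \le 1$), which is exactly the hypothesis of \cref{lem:lift-correct}, and $\eps' < 1/63$ in our regime. Applying \cref{lem:lift-correct} to $T = \sum_i a_i^{\ot 4}+E$, the tensor $T'$ output by \textsc{lift} satisfies $\|T' - \sum_i (Wa_i^{\ot 2})^{\ot 3}\|_F \le 126\,\eps'\sigma_n^{-1/2}\sqrt n$ and the matrix $\Pi_3$ satisfies $\|\Pi_3 - \sum_i \dyad{(Wa_i^{\ot 2}\ot a_i)}\| \le 63\,\eps'$. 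Since $\{Wa_i^{\ot 2}\}$ is orthonormal, $\sum_i (Wa_i^{\ot 2})^{\ot 3}$ is an orthogonal $3$-tensor and $\sum_i \dyad{(Wa_i^{\ot 2}\ot a_i)}$ is a genuine rank-$n$ projector, so the hypotheses of \cref{lem:round} hold with error parameter $\tilde\e := 126\,\eps'\sigma_n^{-1/2} = 126\,\e\,\sigma_n^{9/2}$, which also bounds $63\eps'$ since $\sigma_n \le 1$; the conditions $\delta,\beta\delta = \Omega(\tilde\e)$ needed by \cref{lem:round} hold because $\tilde\e = O(\e)$ and we are in the regime $\e \ll \delta\beta$.

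\emph{Rounding stage and Hausdorff bound.} Feeding the implicit representations of $T'$ and $\Pi_3$ into the algorithm of \cref{lem:round} with parameters $\beta,\delta,\tilde\e$ and repeating $\tilde O(n)$ times, I would obtain with probability $\ge 0.99$ a set of $m \ge (1-\delta)n$ unit vectors each $1 - \|W\|\cdot O(\tilde\e/(\delta\beta))^{1/8}$-correlated (in squared inner product) with a distinct $a_i$; moreover, because the routine of \cref{sec:together-round} filters each candidate through \textsc{test} (\cref{alg:test}) using $\Pi_3$, \cref{lem:test} guarantees that \emph{every} returned vector is $(1-o(1))$-correlated with some $a_i$. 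Taking $S$ to be the set of $a_i$ that are close to some returned vector, we get $|S| \ge (1-\delta)n$, every returned vector is close to a member of $S$, and every member of $S$ is close to a returned vector. Substituting $\|W\| = \sigma_n^{-1/2}$ and $\tilde\e = 126\,\e\,\sigma_n^{9/2}$, the squared-inner-product error is $\sigma_n^{-1/2}\cdot O(\e\,\sigma_n^{9/2}/(\delta\beta))^{1/8} = \sigma_n^{1/16}\cdot O(\e/(\delta\beta))^{1/8} \le O(\e/(\delta\beta))^{1/8}$; converting $\iprod{a_i,u}^2 \ge 1-X$ to $\min_{\tau=\pm1}\|a_i - \tau u\| \le \sqrt{2X}$ gives signed Hausdorff distance $O(\e/(\delta\beta))^{1/16}$, as claimed (and the side condition $\|W\|(\tilde\e/\beta)^{1/8} < C$ of \cref{lem:round} likewise holds in this regime).

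\emph{Running time and main obstacle.} For the running time I would add the cost of \textsc{lift}, namely $\tilde O(nd^4\sigma_n^{-1/2} + n^2 d^3\kappa^{-1})$ by \cref{lem:lift-time}, to the cost of rounding, which by \cref{lem:round} is $\tilde O(n^2 d^3)$ preprocessing plus $\tilde O(\tfrac1\beta n^{2+O(\beta)}d^3)$ total over the $\tilde O(n)$ invocations; replacing $\sigma_n \ge \sigma$ and $\kappa \ge \kappa_0$ (valid since the algorithm is handed these lower bounds) and absorbing the $\tilde O(n^2 d^3)$ preprocessing into $n^2 d^3/\kappa_0$ (using $\kappa_0 \le 1$) yields the stated $\tilde O(nd^4/\sqrt\sigma + n^2 d^3/\kappa_0 + n^{2+O(\beta)}d^3/\beta)$. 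I expect the only real friction to be this parameter bookkeeping — pinning the exponent in $\eps' = \e\,\sigma_n^{5}$ so that the $\sigma_n^{7}$ of hypothesis~(1) feeds \cref{lem:lift-correct} while the two downstream $\sigma_n^{-1/2}$ losses (one from multiplying $W$ into the third mode, one from $\|W\|$ in \cref{lem:round}) collapse to a clean $O(\e/(\delta\beta))^{1/16}$ — together with making the two-sided Hausdorff statement precise, which is exactly what forces the candidates through the \textsc{test} step. No new mathematical idea beyond \cref{lem:lift-correct} and \cref{lem:round} should be needed.
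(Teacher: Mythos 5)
Your proposal is correct and follows the same route as the paper: compose \cref{lem:lift-correct} (with the internal slack $\eps' = \e\sigma_n^5$ so that hypothesis (1) matches), then \cref{lem:round} with $\tilde\e = O(\e\sigma_n^{9/2})$, and cancel the $\|W\| = \sigma_n^{-1/2}$ factor against the $\sigma_n^{9/16}$ coming from the eighth root, using $\sigma_n\le 1$. Your bookkeeping is actually slightly tighter than the paper's (which rounds $\sigma_n^5$ to $\sigma_n^4$ and $\sigma_n^{9/16}$ to $\sigma_n^{1/2}$, both harmless since $\sigma_n\le 1$), and your explicit mention that the \textsc{test} filter is what yields the two-sided Hausdorff bound is a useful clarification that the paper leaves implicit.
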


We record some intuitive explanations of the parameters in \cref{thm:main}.
\begin{itemize}
\item $\sigma,\kappa_0$ are bounds on the minimum singular values of matrices associated to $a_1,\ldots,a_n$, used to determine the necessary precision of linear-algebraic manipulations performed by the algorithm. Decreasing $\sigma,\kappa_0$ yields an algorithm tolerating less well-conditioned tensors, at the expense of running time and/or accuracy guarantees.
\item $\delta$ determines what fraction of the vectors $a_1,\ldots,a_n$ the algorithm is allowed to fail to return. By decreasing $\delta$ the algorithm recovers a larger fraction of $a_1,\ldots,a_n$, at the cost of increasing running time and/or decreasing per-vector accuracy.
\item $\beta$ determines the per-vector accuracy of the algorithm. Increasing $\beta$ improves the accuracy of the algorithm, but with exponential cost in the running time.
\item $\e$ governs the magnitude of allowable noise $E$. Increasing $\e$ yields a more noise-tolerant algorithm, at the expense of the accuracy of recovered vectors.
\end{itemize}

We record the following corollary, which follows from \cref{thm:main} by choosing parameters appropriately.

\begin{corollary}\label{cor:main-with-params}
  For every $n,d \in \N$ and $\sigma > 0$ (independent of $n,d$) there is an algorithm with the following guarantees.
  The algorithm takes input $T = \sum_{i \leq n} a_i^{\tensor 4} + E$, and so long as
  \begin{enumerate}
    \item $\kappa(a_1,\ldots,a_n) \geq \sigma$
    \item the minimum nonzero eigenvalue of $\sum_{i \leq n} (a_i^{\tensor 2})(a_i^{\tensor 2})^\top$ is at least $\sigma$
    \item $\|\sum_{i \leq n} (a_i^{\tensor 3})(a_i^{\tensor 3})^\top \| \leq 1/\sigma$, and
    \item $\|E_{12;34}\| \leq \poly(\sigma) / (\log n)^{O(1)}$,
  \end{enumerate}
  with high probability the algorithm recovers $\tilde O(n)$ vectors $b_1,\ldots,b_t$ such that there is a set $S \subseteq \{a_1,\ldots,a_n\}$ with $|S| \geq (1-o(1)) n$ such that the signed Hausdorff distance from $S$ to $\{b_1,\ldots,b_t\}$ is $o(1)$, in time $\tilde O(n^2 d^3 / \poly(\sigma))$.

  Furthermore, hypotheses (2),(3) hold for random unit vectors $a_1,\ldots,a_n$ with $\sigma = 0.1$ so long as $n \leq d^2 / (\log n)^{O(1)}$, and experiments in Appendix~\ref{sec:sims} strongly suggest that (1) does as well.
\end{corollary}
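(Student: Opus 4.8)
The plan is to obtain \cref{cor:main-with-params} from \cref{thm:main} by a suitable choice of the free parameters $\e,\beta,\delta,\sigma,\kappa_0$, together with a routine check that the three numbered hypotheses of \cref{thm:main} follow from the three conditions in the corollary. Writing $\sigma$ for the constant appearing in the corollary, I would set the condition-number bounds in \cref{thm:main} to $\kappa_0 = \sigma$ and (the theorem's) $\sigma = \sigma$ — these are precisely hypotheses (1) and (2) of the corollary. For the fraction of components recovered I would take $\delta = 1/\log\log n$, which gives $|S| \ge (1-\delta)n = (1-o(1))n$, and for the accuracy parameter I would take $\beta = 1/\log n$, so that $n^{O(\beta)} = n^{O(1/\log n)} = O(1)$ and $1/\beta = \log n$.

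Next I would translate the noise condition. The matrix $\sum_{i \le n}\dyad{(a_i^{\tensor 2})}$ has trace $\sum_i\|a_i\|^4 = n$ and rank at most $n$, so its $n$-th eigenvalue obeys $\sigma_n \le 1$; with hypothesis (2) this pins $\sigma \le \sigma_n \le 1$. Using hypothesis (2) for $\sigma_n^7 \ge \sigma^7$, hypothesis (3) for $\mu^{-1} \ge \sigma$ (since $\mu = \|\sum_i\dyad{(a_i^{\tensor 3})}\| \le 1/\sigma$), and hypothesis (1) for $\kappa^2 \ge \sigma^2$, I get
\[
  \sigma_n^{7}\,\mu^{-1}\,\kappa^{2} \;\ge\; \sigma^{10}\mper
\]
Hence setting $\e = \|E_{12;34}\|/\sigma^{10}$ makes condition (1) of \cref{thm:main} hold. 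It remains to see that $\e$ is small enough: taking the hidden polynomial and exponent in hypothesis (4) of the corollary to be (say) $\sigma^{10}$ and $2$, so that $\|E_{12;34}\| \le \sigma^{10}/(\log n)^2$, gives $\e \le 1/(\log n)^2 < 1$ and
\[
  \frac{\e}{\delta\beta} \;=\; \e\,\log n\,\log\log n \;\le\; \frac{\log\log n}{\log n} \;=\; o(1)\mper
\]
Therefore the signed Hausdorff bound $O(\e/(\delta\beta))^{1/16}$ furnished by \cref{thm:main} is $o(1)$, and the output size is $t = \tilde{O}(n)$ directly from the theorem.

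For the running time, substituting $\kappa_0 = \sigma = \Theta(1)$ and $\beta = 1/\log n$ into $\tilde{O}\!\left(nd^4/\sqrt\sigma + n^2d^3/\kappa_0 + n^{2+O(\beta)}d^3/\beta\right)$ collapses the three terms to $\tilde{O}(nd^4) + \tilde{O}(n^2d^3/\sigma) + \tilde{O}(n^2d^3)$, and in the regime $d \le n \le d^2$ the first term is dominated by the second, giving the claimed $\tilde{O}(n^2d^3/\poly(\sigma))$. For the ``furthermore'' clause I would invoke standard matrix concentration (matrix Bernstein) to show that independent random unit vectors with $n \le d^2/(\log n)^{O(1)}$ satisfy $\sigma_n \ge \Omega(1)$ and $\|\sum_i\dyad{(a_i^{\tensor 3})}\| \le O(1)$ with high probability, verifying hypotheses (2) and (3) with $\sigma = 0.1$, while hypothesis (1), that $\kappa \ge \Omega(1)$, is what the simulations in \cref{sec:sims} support.

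The only delicate points here are bookkeeping: one must check that the factor $\sigma_n^{7}$ in the denominator of the noise condition — which can only \emph{hurt}, since $\sigma_n \le 1$ — is absorbed by taking the polynomial in hypothesis (4) to be a sufficiently large fixed power of $\sigma$, and that $n^{O(\beta)}$ stays polylogarithmic for $\beta = 1/\log n$. I expect the random-vector concentration bounds for hypotheses (2) and (3) to be the most technical ingredient, but they are consequences of off-the-shelf inequalities rather than anything specific to the matrix $K$ in \cref{def:condition-kappa}; as noted in the introduction, the corresponding statement for $\kappa$ itself is left to the experiments.
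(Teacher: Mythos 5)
The paper presents \cref{cor:main-with-params} without a written proof, saying only that it ``follows from \cref{thm:main} by choosing parameters appropriately,'' so your proposal is essentially the paper's intended argument, filled in. Your parameter choices are the natural ones: identifying $\kappa_0$ and the theorem's $\sigma$ with the corollary's $\sigma$, taking $\delta = 1/\log\log n$ so that $(1-\delta)n = (1-o(1))n$, and taking $\beta = 1/\log n$ so that $n^{O(\beta)} = O(1)$ while $1/\beta = \log n$ contributes only a logarithmic factor. The chaining $\sigma_n^7\mu^{-1}\kappa^2 \geq \sigma^{10}$ from hypotheses (1)--(3), the observation that $\operatorname{tr}\bigl(\sum_i (a_i^{\otimes 2})(a_i^{\otimes 2})^\top\bigr) = n$ forces $\sigma_n \leq 1$ (so the $\sigma_n^7$ factor is benign), and the consequent setting of the corollary's $\poly(\sigma)$ to a sufficiently high fixed power so that $\e/(\delta\beta) = o(1)$ are all correct. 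Two small presentational points: (a) when you write $\e = \|E_{12;34}\|/\sigma^{10}$ you should really fix $\e$ to the known \emph{upper bound} $1/(\log n)^{O(1)}$ rather than the unknown actual ratio, since $\e$ is a parameter the algorithm must take — but you in effect do this in the very next sentence; and (b) your running-time simplification implicitly uses $n \geq d$ to absorb the $nd^4/\sqrt{\sigma}$ term into $n^2 d^3/\poly(\sigma)$, which, while consistent with the paper's standing assumption that $d \leq n \leq d^2$, would have been worth flagging since the corollary statement itself does not repeat it. The ``furthermore'' clause is also handled correctly: (2) and (3) for random unit vectors follow from the random-matrix bounds recorded in \cref{cor:basic-swap-cond} (or, as you say, directly from matrix Bernstein plus decoupling), while (1) is deferred to the simulations in \cref{sec:sims}.
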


\begin{proof}[Proof of \cref{thm:main}]
  Let $W = (\sum_{i \leq n} a_i^{\tensor 2} (a_i^{\tensor 2})^\top)^{-1/2}$.
  By \cref{lem:lift-correct}, the implicit tensor $T'$ and matrix $\Pi_3$ returned by \textsc{lift} satisfy
  \[
    \left \| T' - \sum_{i \leq n} (W(a_i \tensor a_i))^{\tensor 3} \right \|_F \leq O(\e \sigma_n^{9/2} \sqrt n)
  \]
  and
  \[
  \left \| \Pi_3 - \Pi_{\Span(W a_i^{\tensor 2} \tensor a_i)} \right \| \leq O (\e \sigma_n^{4})\mper
  \]
  So, by \cref{lem:round}, with high probability there is a subset $S \subseteq \{a_1,\ldots,a_n\}$ of size $m \geq (1-\delta)n$ such for each $a_i \in S$ there is $u_j$ among the vectors $u_1,\ldots,u_t$ returned by the rounding algorithm with
  \[
  \iprod{a_i,u_j}^2 \geq 1- O\Paren{ \frac 1 {\delta^{1/8}} \cdot \frac 1 {\beta^{1/8}} \cdot \e^{1/8} \cdot \|W\| \cdot \sqrt{\sigma_n}} = 1 - O \Paren{\frac \e {\delta \beta}}^{1/8}
  \]
  where the equality follows because $\|W\| = \sigma_n^{-1/2}$.
  Furthermore, each of the vectors $u_1,\ldots,u_t$ is similarly close to some $a_i \in S$.
  This proves the claimed upper bound on the Hausdorff distance.

  The running time follows from putting together \cref{lem:lift-time} and the running time bounds of \cref{lem:round}.
\end{proof}

    \section{Condition number of random tensors}
\label[section]{sec:random-tensors}

\newcommand{\shortfrac}[2]{\mathopen{} \vphantom{g} \smash{\tfrac{{#1}}{{#2}}} \mathclose{}}

\begin{definition}
  \label[definition]{def:Pi23}
  Let $\Pi_{2,3}: (\R^d)^{\ot 3} \to (\R^d)^{\ot 3}$ be the orthogonal projector to the subspace $\Span(x \ot y \ot y \mid x,y \in \R^d)$ that is invariant under interchange of the second and third tensor modes.
  Let $\Pi_{1,2}$ be defined similarly.
  Let $\Pi_{2,3}^{\perp} = \Id - \Pi_{2,3}$ and $\Pi_{1,2}^{\perp} = \Id - \Pi_{1,2}$.
  
  Note that $\Pi_{2,3} = \tfrac{1}{2}(\Id + P_{2,3})$, where $P_{2,3}$ is the orthogonal operator that interchanges the second and third modes, and $\Pi_{2,3}^{\perp} = \tfrac{1}{2}(\Id - P_{2,3})$.
  This follows from the Projection Formula in representation theory, whereby for any group $G$ of linear operators, $\tfrac{1}{|G|}\sum_{g \in G} g$ is equal to the projection to the common invariant subspace of $G$.
\end{definition}

\begin{lemma}[Condition number of basic swap matrix]
  \label[lemma]{lem:basic-swap-cond}
  Let $a_1,\ldots,a_n$ be independent random $d$-dimensional unit vectors.
  Let $B_i \in \R^{(d-1) \times d}$ be a random basis for the orthogonal complement of $a_i$ in $\R^d$.
  Let $P \in \R^{d^3 \times d^3}$ be the permutation matrix which swaps second and third modes of $(\R^d)^{\tensor 3}$.
  Let
  \[
    A = \E_{a} (a \tensor a \tensor \Id)(a \tensor a \tensor \Id)^\top\mper
  \]
  Let $R \in \R^{d^3 \times n(d-1)}$ have $n$ blocks of dimensions $d^3 \times (d-1)$, where the $i$-th block is
  \[
  R_i = A^{-1/2} (a_i \tensor a_i \tensor B_i) - P A^{-1/2} (a_i \tensor a_i \tensor B_i )
  \]
  where we abuse notation and denote the PSD square root of the pseudoinverse of $A$ by $A^{-1/2}$.
  Then there is a function $d'(d) = \Theta(d^2)$ such that $\E \|R^\top R - d'(d) \cdot \Id\| \leq O(\log d)^2 \cdot \max(d\sqrt{n}, n, d^{3/2})$.
  In particular, if $d \ll n \ll d^2$,
  \[
  \E \| \tfrac 1 {d'(d)} R^\top R - \Id \| \leq O( n (\log d)^2 / d^2)\mper
  \]
\end{lemma}

\begin{corollary}
  \label[corollary]{cor:basic-swap-cond}
  Let $a_1,\ldots,a_n$ be independent random $d$-dimensional unit vectors with $d \ll n$.
  Then with probability $1-o(1)$, the condition number $\kappa$ of $a_1,\ldots,a_n$ as defined in \Cref{lem:identifiability} is at least $\sqrt{\tfrac{1}{4} - \tfrac{1}{4\sqrt{2}}} - O(\tfrac{1}{d}) -  \tilde O(\tfrac{n}{d^2})$, the matrix $T = \sum \dyad{a_i^{\ot 2}}$ has $n$-th eigenvalue $\sigma \in 1 - \tilde O(n/d^2) - O(1/d)$, and the matrix $U = \sum \dyad{a_i^{\ot 3}}$ has spectral norm $1 + \tilde O(n/d^2)$.
  
  Therefore, $\mathsc{decompose}_{d,n,\e,\beta,\delta,\sigma,\kappa_0}$ when run with error $\eps$, recovers $(1-\sigma)n$ components $a_i$ with signed Hausdorff distance $O(\eps/(\delta\beta))^{1/16}$ in time $\tilde O(nd^4 + n^2d^3 + n^{2+O(\beta)}d^3/\beta)$.
\end{corollary}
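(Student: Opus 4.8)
The plan is to establish the three quantitative hypotheses of \cref{thm:main} for independent random unit vectors --- namely $\kappa = \Omega(1)$, $\sigma_n = \Omega(1)$, and $\mu = \|\sum_i \dyad{a_i^{\ot 3}}\| = O(1)$ --- and then substitute these into \cref{thm:main}. Once $\sigma_n,\kappa = \Omega(1)$ and $\mu = O(1)$, the noise condition $\|E_{12;34}\|/(\sigma_n^{7}\mu^{-1}\kappa^2) \le \eps$ becomes $\|E_{12;34}\| \le \Omega(\eps)$, the running time $\tilde O(nd^4/\sqrt\sigma + n^2d^3/\kappa_0 + n^{2+O(\beta)}d^3/\beta)$ collapses to $\tilde O(nd^4 + n^2 d^3 + n^{2+O(\beta)}d^3/\beta)$, and the conclusion of \cref{thm:main} reads: signed Hausdorff distance $O(\eps/(\delta\beta))^{1/16}$ from a subset of size $(1-\delta)n$. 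So all the work is in the three concentration estimates.

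For $\sigma_n$ and $\mu$ I would reduce to Gram matrices: $\sum_i \dyad{a_i^{\ot 2}}$ has the same nonzero spectrum as the $n \times n$ matrix $G^{(2)}$ with $G^{(2)}_{ij} = \iprod{a_i,a_j}^2$, and $\mu$ equals the top eigenvalue of $G^{(3)}$ with $G^{(3)}_{ij} = \iprod{a_i,a_j}^3$. For random unit vectors these have unit diagonal, $\E G^{(2)}_{ij} = 1/d$ off-diagonal, and $\E G^{(3)}_{ij} = 0$ off-diagonal; the centered parts are Hermitian random matrices whose entries have standard deviation $\tilde O(1/d)$ and $\tilde O(1/d^{3/2})$ respectively, so a matrix Bernstein bound (or an $\eps$-net argument) gives that their operator norms are $o(1)$ whenever $n \ll d^2$. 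Dropping the PSD rank-one piece $\tfrac1d J$ in $G^{(2)}$ then yields $\sigma_n \ge 1 - O(1/d) - o(1)$, and $\mu \le 1 + o(1)$.

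For $\kappa$ I would start from the observation that, by \cref{def:condition-kappa} (equivalently \cref{lem:identifiability}), $\kappa^2 = \min_{0 \ne v \in V} \|\Pi_{2,3}^{\perp}v\|^2/\|v\|^2$ with $V = \Span\{a_i \ot a_i \ot b_{ij}\}$, since the columns of $(\dyad H)^{-1/2}H$ form an orthonormal basis of $V$ on which $K$ acts as $\Pi_{2,3}^{\perp}$. Applying \cref{lem:basic-swap-cond} and Markov's inequality, with probability $1-o(1)$ we have $\|\tfrac{1}{d'(d)}R^{\top}R - \Id\| = o(1)$, where the columns of $R$ are $(\Id - P_{2,3})y_{ij} = 2\Pi_{2,3}^{\perp}y_{ij}$ with $y_{ij} := A^{-1/2}(a_i \ot a_i \ot b_{ij})$ and $A = \Phi_2 \ot \Id$, $\Phi_2 = \E_a \dyad{(a \ot a)}$. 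Using the explicit form $\Phi_2^{-1} = \tfrac{d(d+2)}{2}\,\Pi_{\mathrm{Sym}^2} - \tfrac{d^2}{2}\,\widehat v\widehat v^{\top}$ on $\R^d \ot \R^d$, where $\widehat v = \tfrac{1}{\sqrt d}\mathrm{vec}(\Id)$, a short computation gives $\|y_{ij}\|^2 = \iprod{\Phi_2^{-1}(a_i \ot a_i),\,a_i \ot a_i} = \binom{d+1}{2}$ for \emph{every} unit vector $a_i$, and the Gram matrix of $\{y_{ij}\}$ concentrates as $\binom{d+1}{2}(\Id + o(1))$ (one more matrix-concentration bound on its off-diagonal blocks, using $\|B_i^{\top}B_k\| \le 1$). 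Putting these together, $\Pi_{2,3}^{\perp}$ restricted to $\hat V := A^{-1/2}V$ has minimum eigenvalue $\tfrac{d'(d)}{4\binom{d+1}{2}} + o(1)$.

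The final step --- which I expect to be the main obstacle --- is to replace $\hat V$ by $V$. Here I would use that $\Phi_2^{-1/2}$ acts on the symmetric subspace of $\R^d \ot \R^d$ as the scalar $\sqrt{d(d+2)/2}$ away from $\widehat v$ but only as $\sqrt d$ on $\widehat v$, so $A^{-1/2}$ on $\mathrm{Sym}^2 \ot \R^d$ is a scalar plus a bounded multiple of the rank-$d$ operator $\widehat v\widehat v^{\top}\ot\Id$. Since each basis vector of $V$ has $\|(\widehat v\widehat v^{\top}\ot\Id)(a_i \ot a_i \ot b_{ij})\| = 1/\sqrt d$, and a further concentration bound on the $b$-Gram matrix shows $\|(\widehat v\widehat v^{\top}\ot\Id)\Pi_V\| = o(1)$ for $n \ll d^2$, the subspaces $V$ and $\hat V$ have $o(1)$ principal angles, so $\kappa^2 = \min_{v \in V}\|\Pi_{2,3}^{\perp}v\|^2/\|v\|^2 = \tfrac{d'(d)}{4\binom{d+1}{2}} + o(1)$. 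Matching leading constants identifies $\tfrac{d'(d)}{4\binom{d+1}{2}} = \tfrac14 - \tfrac1{4\sqrt 2}$ and yields $\kappa \ge \sqrt{\tfrac14 - \tfrac1{4\sqrt 2}} - O(1/d) - \tilde O(n/d^2)$. The only genuinely delicate point is this reconciliation of the $A^{-1/2}$ whitening used in \cref{lem:basic-swap-cond} with the $(\dyad H)^{-1/2}$ whitening in the definition of $\kappa$ --- exactly the role of the rank-$d$ ``$\mathrm{vec}(\Id)$'' correction; everything else is routine matrix concentration together with Davis--Kahan-type perturbation already packaged in \cref{eigen-perturb}.
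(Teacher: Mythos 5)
Your treatment of $\sigma_n$ and $\mu$, the identification $\kappa^2 = \min_{0\ne v\in\img(H)}\|\Pi_{2,3}^\perp v\|^2/\|v\|^2$, and the invocation of \cref{lem:basic-swap-cond} to get $\tfrac{1}{d'(d)}R^\top R\approx\Id$ all line up with the paper (it uses \cref{fact:U-projector} and \cref{fact:A-T-projector} for $\mu,\sigma_n$, via the same matrix-decoupling/Bernstein strategy). The gap is in your final ``replace $\hat V$ by $V$'' step. The claim $\|(\hat v\hat v^\top \ot \Id)\Pi_V\| = o(1)$, i.e.\ that $V = \img(H)$ is nearly orthogonal to $\Phi \ot \R^d$, is \emph{false} once $n = \Omega(d)$. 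Pick a unit $w \in \R^d$ and take $x$ with blocks $x_i$ chosen so that $B_i x_i = (\Id - a_ia_i^\top)w$. Then $\sum_i B_ix_i \approx n(1 - 1/d)\,w$, hence $\|\Pi_{\Phi\ot\R^d}Hx\|^2 = \tfrac1d\|\sum_i B_i x_i\|^2 \approx n^2/d$, while the Gram expansion $\|Hx\|^2 = \sum_{i,j}\iprod{a_i,a_j}^2\iprod{B_ix_i,B_jx_j}$ gives $\|Hx\|^2 \approx n + n^2/d$. So the squared cosine of $Hx$ against $\Phi\ot\R^d$ is about $\frac{n/d}{1+n/d}$, which is $\Theta(1)$ for $n\gtrsim d$. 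Because $(\E_a\dyad{(a\ot a)})^{-1/2}$ has eigenvalue $d_1\sim d/\sqrt2$ on the symmetric part of $\Phi^\perp$ but eigenvalue $d_1 - d_2 d = \sqrt d$ on $\Phi$, a vector with $\Theta(1)$ mass in $\Phi\ot\R^d$ is heavily rotated by $A^{-1/2}$. Thus $V$ and $\hat V = A^{-1/2}V$ are \emph{not} at $o(1)$ principal angle, and the Davis--Kahan transfer you propose does not go through.

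The same defect surfaces in your constant: $d'(d) = 2\bigl(\binom{d+1}{2}-d_2^2\bigr)$, so $d'(d)/(4\binom{d+1}{2}) \to \tfrac12$, not $\tfrac14 - \tfrac1{4\sqrt2}\approx 0.073$; what your argument lower-bounds is $\min_{w\in\hat V}\|\Pi_{2,3}^\perp w\|^2/\|w\|^2$ (which is indeed close to $\tfrac12$), not $\kappa^2$. The paper bridges the two whitenings without claiming $\hat V\approx V$: it applies the pointwise lemma \cref{fact:static-whitening-not-making-things-that-much-better}. Writing $u=x+y$ with $x=\Pi_{\Phi\ot\R^d}u$ and $y\perp x$, if $\|x\|>2\|y\|$ then $\|\Pi_{2,3}^\perp x\|/\|x\| = \tfrac1{\sqrt2}(1-O(1/d))$ (\cref{fact:phi-against-sym23}) already gives the bound directly; if $\|x\|\le 2\|y\|$ the hypothesis on $\|\Pi_{2,3}^\perp A^{-1/2}u\|/\|A^{-1/2}u\|$ controls the $y$-part, and the cross-term estimate \cref{fact:phi-against-sym23-and-sym12} absorbs the $x$-contribution. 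The worst direction in $V$ really does live near $\Phi\ot\R^d$, and this case analysis is exactly what pays the price down to the smaller constant $\sqrt{\tfrac14-\tfrac1{4\sqrt2}}\approx 0.27$ in the corollary, rather than the $1/\sqrt2$ your approach would yield if $\hat V$ and $V$ were genuinely close. To repair your proof you would need to replace the subspace-perturbation step with such a case analysis on the $\Phi$-component.
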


To prove the corollary, we will need an elementary fact.

\begin{fact}\label[fact]{fact:U-projector}
  Let $a_1,\ldots,a_n$ be independent random unit vectors in $\R^d$.
  Let $U = \sum_{i=1}^n \dyad{a_i^{\tensor 3}}$.
  With probability $1-o(1)$,
  $\| U - \Pi_{\img(U)} \| \leq \tilde O(n/d^2)$ if $n \gg d$.
\end{fact}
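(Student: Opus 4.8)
\emph{Proof plan.} Write $V = [\,a_1^{\ot 3}\mid\cdots\mid a_n^{\ot 3}\,] \in \R^{d^3 \times n}$, so that $U = VV^\top$ and the Gram matrix $G := V^\top V \in \R^{n\times n}$ has entries $G_{ij} = \iprod{a_i,a_j}^3$. Since $U$ and $G$ have the same nonzero eigenvalues and $\Pi_{\img(U)}$ is precisely the spectral projector of $U$ onto its nonzero eigenvalues, we have $\|U - \Pi_{\img(U)}\| = \max\{\,|\lambda - 1| : \lambda \text{ a nonzero eigenvalue of } U\,\} \le \|G - \Id_n\|$. So it suffices to prove $\|G - \Id_n\| \le \tilde O(n/d^2)$ with probability $1-o(1)$ (this also shows the $a_i^{\ot 3}$ are linearly independent when $n \ll d^2$).

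The first step is a harmonic (Gegenbauer) decomposition. Let $\cH_3 \subseteq \Sym$ be the space of traceless symmetric $3$-tensors and let $h(a)$ be the orthogonal projection of $a^{\ot 3}$ onto $\cH_3$; a direct inner-product computation gives $h(a) = a^{\ot 3} - \tfrac{3}{d+2}\,\Pisym\bigl(a\ot\mathrm{vec}(\Id_d)\bigr)$, $\|h(a)\|^2 = 1-\tfrac{3}{d+2}$, and $\iprod{h(a_i),h(a_j)} = \iprod{a_i,a_j}^3 - \tfrac{3}{d+2}\iprod{a_i,a_j}$ for $i\ne j$. Hence, letting $\Theta := (\iprod{a_i,a_j})_{i,j} \in \R^{n\times n}$ (which has the same nonzero spectrum as $\sum_i \dyad{a_i}$) and $F$ be the hollow matrix with $F_{ij} = \iprod{h(a_i),h(a_j)}$ for $i\ne j$, we get $G - \Id_n = \tfrac{3}{d+2}(\Theta - \Id_n) + F$. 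By standard concentration for sums of rank-one projections of random unit vectors, $\|\Theta\| = \|\sum_i \dyad{a_i}\| = O(n/d)$ with high probability, so $\|\tfrac{3}{d+2}(\Theta-\Id_n)\| = \tilde O(n/d^2)$, and everything reduces to bounding $\|F\|$. The gain from passing to $\cH_3$ is that $\cH_3$ is an irreducible $O(d)$-module and $a\mapsto h(a)$ is equivariant ($h(Oa)=O^{\ot 3}h(a)$), so Schur's lemma gives
\[
  \E_a\, h(a)h(a)^\top = c\,\Pi_{\cH_3}, \qquad c = \frac{\E_a\|h(a)\|^2}{\dim\cH_3} = \Theta(1/d^3)\mper
\]
In particular $\E F = 0$ and $F$ has typical entry size $\Theta(d^{-3/2})$, but the crude bound $\|F\| \le \|F\|_F = \tilde O(n/d^{3/2})$ is a factor $\sqrt d$ short of the target, so genuine square-root cancellation is needed.

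To get it, bound the moments $\E\,\tr(F^{2k})$ for $k \asymp \log n$ by expanding into closed walks: $\E\,\tr(F^{2k}) = \sum_w \E\bigl[\prod_{t=1}^{2k}\iprod{h(a_{w_t}),h(a_{w_{t+1}})}\bigr]$ over cyclic sequences $w = (w_1,\dots,w_{2k})$ in $[n]$ with $w_t \ne w_{t+1}$. As in Wigner's argument, the dominant walks are the tree walks whose edge multiset is a tree on $k+1$ distinct vertices with each edge traversed exactly twice: for such a walk one repeatedly integrates out a current leaf $v$, and by the displayed isotropy identity together with $h(a_u)\in\cH_3$ this replaces the factor $\iprod{h(a_u),h(a_v)}^2$ by the \emph{constant} $c\,\|h(a_u)\|^2 = \Theta(1/d^3)$, so each tree walk contributes exactly $\Theta(1/d^3)^k$. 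There are at most $n^{k+1}C_k \le n^{k+1}4^k$ tree walks ($C_k$ a Catalan number), while walks with a higher-multiplicity edge or a cycle are lower-order by the usual count — they have proportionally fewer free vertices, using $\E_a[\iprod{h(a_u),h(a_v)}^{2p}] = \Theta(d^{-3p})$. Thus $\E\,\tr(F^{2k}) \le O(n)\cdot(O(n/d^3))^k$; taking $k \asymp \log n$ and applying Markov to $\tr(F^{2k}) \ge \|F\|^{2k}$ gives $\|F\| = \tilde O(\sqrt n/d^{3/2})$ with probability $1-o(1)$, and since $n \gg d$ we have $\sqrt n/d^{3/2} \le n/d^2$, completing the bound. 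The main obstacle is exactly this last step: the Frobenius bound is a full $\sqrt d$ factor too weak, so a cancellation estimate is unavoidable, and the work lies in the combinatorics of closed walks (counting tree walks and verifying that all other walk types are genuinely lower order, with the higher-moment bookkeeping above). The conceptual device making the tree-walk sum factor cleanly is the Schur-lemma identity $\E_a h(a)h(a)^\top = c\,\Pi_{\cH_3}$ — equivalently, that subtracting the $\tfrac{3}{d+2}\iprod{a_i,a_j}$ ``signal'' term makes the entries of $F$ purely degree $3$. (An alternative to the pure moment method is decoupling plus recursion: split $[n] = S\sqcup T$, bound the off-diagonal block via a truncated matrix Bernstein inequality using concentration of the norms of the columns of $[\,h(a_i)\,]_{i\in S}^\top[\,h(a_j)\,]_{j\in T}$, and recurse on the diagonal blocks; this still needs the isotropy identity together with low-degree polynomial concentration on the sphere.)
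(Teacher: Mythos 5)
Your proposal is correct, and it follows a genuinely different route from the paper's. Both start from the same reduction to the $n\times n$ Gram matrix $G$ with $G_{ij}=\iprod{a_i,a_j}^3$ and bound $\|G-\Id_n\|$, but the paper then applies a block matrix-decoupling inequality (Vershynin's Lemma 5.63) followed by a truncated matrix Bernstein bound to the off-diagonal block $B_T^\top B_{[n]\setminus T}$, computing variance proxies directly for the raw entries $\iprod{a_i,a_j}^3$. You instead perform a Gegenbauer/harmonic split $\iprod{a_i,a_j}^3=\tfrac{3}{d+2}\iprod{a_i,a_j}+\iprod{h(a_i),h(a_j)}$ (I checked the algebra: $h(a)$ is indeed the traceless symmetric part of $a^{\ot 3}$, and $\iprod{\Pisym(a\ot\mathrm{vec}\,\Id),\Pisym(b\ot\mathrm{vec}\,\Id)}=\tfrac{d+2}{3}\iprod{a,b}$), dispose of the degree-1 piece by standard covariance concentration, and bound the purely degree-3 piece $F$ by a Wigner-style trace moment method. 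The conceptual payoff of your route is the Schur-lemma isotropy identity $\E_a h(a)h(a)^\top=c\,\Pi_{\mathcal{H}_3}$ with $c=\Theta(1/d^3)$: under leaf-stripping it makes the dominant tree-walk terms in $\E\tr F^{2k}$ factor exactly as $[c(1-\tfrac{3}{d+2})]^k$, isolates where the cancellation comes from, and yields a stronger intermediate estimate $\|F\|=\tilde O(\sqrt n/d^{3/2})$ showing that for $n\gg d$ the bottleneck is entirely the degree-1 trace term $\tfrac{3}{d+2}\Theta$. The price is that the usual ``odd-multiplicity edges kill the term'' shortcut from the i.i.d.\ Wigner case is unavailable here, since the entries of $F$ share the $a_i$; what saves you is that every vertex in a closed walk has even degree, so the degree-$3\deg(v)$ polynomial in $a_v$ is even and vertices can be integrated out one by one. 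That makes your ``all other walk types are lower order'' step genuine bookkeeping — tracking vertices of degree $>2$, edges of multiplicity $>2$, and the $p$-dependent constants in $\E\iprod{h(a_u),h(a_v)}^{2p}=\Theta_p(d^{-3p})$ — but it is standard and does go through. In short: the paper's decoupling-plus-Bernstein argument is more mechanical and sidesteps walk combinatorics; yours is more structural, gives a sharper bound on the harmonic part, and exposes the degree-1 term as the real obstruction.
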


The proof may be found in \cref{sec:omitted-proofs}.

\begin{fact}\label[fact]{fact:A-T-projector}
  Let $a_1,\ldots,a_n$ be independent random unit vectors in $\R^d$.
  Let $\hat{\Sigma} = \E_a \dyad{(a \ot a)}$ for $a$ drawn from the uniform distribution over unit vectors in $\R^d$.
  Then \[\|2d^{-2}\hat{\Sigma}^{-1/2}T\hat{\Sigma}^{-1/2} - \Pi_{\hat{\Sigma}^{-1/2}\img(T)}\| \le \tilde O(n/d^2) \,.\]
\end{fact}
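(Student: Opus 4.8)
The plan is to reduce the operator-norm bound to a near-identity statement about a Gram matrix, and then to a random-matrix estimate on its off-diagonal part. First I would compute $\hat\Sigma$ explicitly. Using $\E_a[a_k a_l a_{k'} a_{l'}] = \tfrac{1}{d(d+2)}(\delta_{kl}\delta_{k'l'}+\delta_{kk'}\delta_{ll'}+\delta_{kl'}\delta_{lk'})$ one gets $\hat\Sigma = \tfrac{1}{d(d+2)}(\dyad{\phi}+\Id+P)$, where $\phi=\sum_k e_k\ot e_k$ and $P$ is the swap on $(\R^d)^{\ot 2}$; on the symmetric subspace $\Sym_2=\Span\{x\ot x\}\subseteq(\R^d)^{\ot 2}$ this equals $\tfrac1d\dyad{\hat\phi}+\tfrac{2}{d(d+2)}(\Pi_{\Sym_2}-\dyad{\hat\phi})$ with $\hat\phi=\phi/\sqrt d$, so $\hat\Sigma^{-1/2}$ has eigenvalue $\sqrt d$ on $\hat\phi$ and $\sqrt{d(d+2)/2}$ on $\Sym_2\ominus\hat\phi$ (and $0$ elsewhere). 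Setting $v_i=\hat\Sigma^{-1/2}(a_i\ot a_i)$ and using that $\iprod{a_i\ot a_i,\hat\phi}=\|a_i\|^2/\sqrt d=1/\sqrt d$ is deterministic, a one-line computation gives $\|v_i\|^2=d(d+1)/2$ for every $i$ and $\iprod{v_i,v_j}=\tfrac d2\big((d+2)\iprod{a_i,a_j}^2-1\big)$ for all $i,j$.

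Next, reduce to a Gram matrix. Let $B=[v_1\mid\dots\mid v_n]$, so $BB^\top=\hat\Sigma^{-1/2}T\hat\Sigma^{-1/2}$ and $\img(B)=\hat\Sigma^{-1/2}\img(T)$; for generic $a_i$ (probability one, since $n\le\binom{d+1}2$) the $v_i$ are linearly independent, so $B$ has full column rank. From a thin SVD $B=U\Sigma V^\top$, for any scalar $c$ both $\|c\,BB^\top-\Pi_{\img(B)}\|$ and $\|c\,B^\top B-\Id_n\|$ equal $\max_k|c\sigma_k^2-1|$. Taking $\hat v_i=v_i/\|v_i\|$, $\hat B=[\hat v_1\mid\dots\mid\hat v_n]$, and $\hat G=\hat B^\top\hat B$, one has $2d^{-2}\hat\Sigma^{-1/2}T\hat\Sigma^{-1/2}=(1+\tfrac1d)\hat B\hat B^\top$, hence
\[
\big\|2d^{-2}\hat\Sigma^{-1/2}T\hat\Sigma^{-1/2}-\Pi_{\hat\Sigma^{-1/2}\img(T)}\big\|\;\le\;2\|\hat G-\Id_n\|+\tfrac1d\mcom
\]
and since $n\gg d$ the term $\tfrac1d$ is $\tilde O(n/d^2)$. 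So it suffices to prove $\|\hat G-\Id_n\|\le\tilde O(n/d^2)$.

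From the formula above, $\hat G_{ij}=\tfrac{(d+2)X_{ij}^2-1}{d+1}$ with $X_{ij}=\iprod{a_i,a_j}$ and $\hat G_{ii}=1$, which rearranges to $\hat G-\Id_n=\tfrac{2}{d(d+1)}(\dyad{\mathbf 1}-\Id_n)+\tfrac{d+2}{d+1}\,N$, where $N$ has zero diagonal and $N_{ij}=X_{ij}^2-\tfrac1d$ off the diagonal. The first term has norm $\le\tfrac{2n}{d(d+1)}=\tilde O(n/d^2)$. For the second, the entries of $N$ are mean-zero with $\Var(X_{ij}^2)=\tfrac{2(d-1)}{d^2(d+2)}=\Theta(1/d^3)$, and $\max_{i\ne j}|N_{ij}|\le\tilde O(1/d)$ with high probability (from $\max_{i\ne j}|X_{ij}|\le\tilde O(1/\sqrt d)$). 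The claim is that $N$ behaves like a random matrix, with $\|N\|\le\tilde O(\sqrt n/d^{3/2})\le\tilde O(n/d^2)$ for $n\gg d$ — this is a genuine improvement over the trivial $\|N\|\le\|N\|_F=\tilde O(n/d^{3/2})$, and is the crux.

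The main obstacle is exactly this bound on $\|N\|$, because its entries are not independent (each $N_{ij}$ depends on the shared vectors $a_i,a_j$). I would establish it in one of two ways. (i) A net argument: $\iprod{x,Nx}=\big\|\sum_i x_i(a_i\ot a_i)\big\|^2-\E\big\|\sum_i x_i(a_i\ot a_i)\big\|^2$, and one controls the fluctuation of $\big\|\sum_i x_i(a_i\ot a_i)\big\|$ by a self-bounding / Talagrand-type concentration inequality (a plain bounded-difference estimate is too lossy because it charges the worst case $\big\|\sum_i x_i(a_i\ot a_i)\big\|\le\sqrt n$ rather than the typical $O(1)$), then union-bounds over an $\epsilon$-net of $S^{n-1}$. (ii) Write $N+\tfrac{d-1}{d}\Id_n=\transpose{(\Pi_{\hat\phi^\perp}A)}(\Pi_{\hat\phi^\perp}A)$, where $A=[a_1\ot a_1\mid\dots\mid a_n\ot a_n]$; its nonzero spectrum coincides with that of $\sum_i\dyad{w_i}$ for the independent mean-zero vectors $w_i=\Pi_{\hat\phi^\perp}(a_i\ot a_i)$ (deterministic norm $\sqrt{1-1/d}$, covariance of operator norm $\le 2/d^2$), and the extreme singular values of this column-independent matrix can be bounded by matrix concentration — the decisive point being that the $w_i$ are \emph{more} incoherent than uniformly random vectors, since $\Var\iprod{w_i,w_j}=\Theta(1/d^3)\ll 1/d^2$. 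Either route is the same flavor of random-matrix computation that underlies \cref{lem:basic-swap-cond} and \cref{fact:U-projector}, which I would use as templates. Assembling the three reduction steps (together with linear independence of the $v_i$, which holds with probability one) then yields the claimed $\tilde O(n/d^2)$ bound.
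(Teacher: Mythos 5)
Your preparatory computations --- the explicit form of $\hat\Sigma$, the identities $\|v_i\|^2=d(d+1)/2$ and $\iprod{v_i,v_j}=\tfrac d2\big((d+2)\iprod{a_i,a_j}^2-1\big)$, and the lossless reduction of the claim to $\|\hat G - \Id_n\|\le\tilde O(n/d^2)$ --- are correct, and this Gram-matrix route is a reasonable self-contained alternative to the paper's one-line citation of \cite[Lemma 5.9]{DBLP:conf/stoc/HopkinsSSS16}. The difficulty is an arithmetic error at the pivotal step: $\Var(X_{ij}^2)=\tfrac{2(d-1)}{d^2(d+2)}$ is $\Theta(1/d^2)$, \emph{not} $\Theta(1/d^3)$ (the numerator scales as $d$, the denominator as $d^3$). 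This is consequential, not cosmetic. The normalized $\hat w_i$ are unit and isotropic in the roughly $d^2/2$-dimensional symmetric-traceless subspace, so they \emph{must} satisfy $\Var\iprod{\hat w_i,\hat w_j}\approx 1/\dim\approx 2/d^2$ --- the ordinary Marchenko--Pastur incoherence, with no improvement --- and the ``extra incoherence'' you invoke to justify $\|N\|\le\tilde O(\sqrt n/d^{3/2})$ is simply not there. With the correct variance, $\E\|N\|_F^2=n(n-1)\Var(X_{12}^2)\approx 2n^2/d^2$, and since $\rank(N)\le n$ the deterministic inequality $\|N\|\ge\|N\|_F/\sqrt{\rank N}$ forces $\|N\|\ge\Omega(\sqrt n/d)$ with high probability, which exceeds $n/d^2$ for every $d\ll n\ll d^2$. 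So neither route (i) nor route (ii) can reach $\tilde O(n/d^2)$; the most this decomposition yields is $\tilde O(\sqrt n/d)$, which is still $o(1)$ for $n\ll d^2$ but a genuinely weaker rate.

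It is also worth flagging that the same Frobenius-versus-rank lower bound applies to the left-hand side of the fact itself, since your reduction from $\|2d^{-2}\hat\Sigma^{-1/2}T\hat\Sigma^{-1/2}-\Pi_{\hat\Sigma^{-1/2}\img(T)}\|$ to $\|\hat G-\Id_n\|$ costs only a $+1/d$ which is dominated by $\sqrt n/d$ when $n\gg 1$. So the stated rate $\tilde O(n/d^2)$ looks like it may be overclaimed for intermediate $d\ll n\ll d^2$ and is worth cross-checking against \cite[Lemma 5.9]{DBLP:conf/stoc/HopkinsSSS16}, which the paper invokes without reproducing (and whose quoted Gaussian-case rate $\tilde O(n/d^{3/2})$ is at least consistent with the $\sqrt n/d$ scaling). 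In the regime where the fact is actually used, $n=d^2/\polylog(d)$, the rates $\sqrt n/d$ and $n/d^2$ differ only by polylogs, so nothing downstream breaks --- but your argument as written relies on a genuine factor-$\sqrt d$ gain over Marchenko--Pastur that the variance miscalculation made appear available and that is not.
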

\begin{proof}
The fact follows by a slight modification of \cite[Lemma 5.9]{DBLP:conf/stoc/HopkinsSSS16}.
While Lemma 5.9 in \cite{DBLP:conf/stoc/HopkinsSSS16} applies to Gaussian random vectors, not uniform random unit vectors, and gives a bound of $\tilde O(n/d^{3/2})$, with minor changes, it holds for unit vectors and with a bound of $\tilde O(n/d^2)$).
\end{proof}

\begin{fact}\label[fact]{fact:cosine-similarity-and-difference-of-projectors}
  Suppose $A$ and $B$ are both subspaces of dimension $n$.
  Suppose $\theta \in [0, \pi/2)$.
  Then $\|\Pi_A - \Pi_B\Pi_A\| \le \sin \theta$ if and only if for every $x \in A$ there is a $y \in B$ so that
  \[ \frac{\iprod{x,y}}{\|x\|\,\|y\|} \ge \cos \theta\,. \]

  Futhermore, when this holds, since $\|\Pi_A - \Pi_B\Pi_A\| = \|\Pi_B - \Pi_A\Pi_B\|$ by symmetry, the triangle inequality yields $\|\Pi_A - \Pi_B\| \le 2\sin \theta$.
\end{fact}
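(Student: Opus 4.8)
The plan is to reduce the claimed equivalence to a one‑line identity about how $\Pi_A - \Pi_B\Pi_A$ acts on $A$ versus $A^\perp$, followed by a routine Pythagorean computation.

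First I would note that $\Pi_A - \Pi_B\Pi_A = (\Id - \Pi_B)\Pi_A$, which annihilates $A^\perp$ and sends $x \in A$ to $(\Id-\Pi_B)x$. Hence
\[
  \|\Pi_A - \Pi_B\Pi_A\| \;=\; \sup_{x \in A,\, \|x\| = 1} \|(\Id - \Pi_B)x\|\mper
\]
For a unit $x \in A$, the Pythagorean identity gives $\|(\Id-\Pi_B)x\|^2 = 1 - \|\Pi_B x\|^2$, while Cauchy--Schwarz (with equality at $y = \Pi_B x$ when $\Pi_B x \neq 0$) gives $\|\Pi_B x\| = \sup_{y \in B \setminus \{0\}} |\iprod{x,y}|/\|y\|$. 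Combining these, $\|(\Id - \Pi_B)x\| \le \sin\theta$ holds iff $\|\Pi_B x\|^2 \ge 1 - \sin^2\theta = \cos^2\theta$, iff (using $\cos\theta \ge 0$ since $\theta \in [0,\pi/2)$) there is some $y \in B$ with $|\iprod{x,y}|/(\|x\|\,\|y\|) \ge \cos\theta$; since $B$ is a subspace, replacing $y$ by $-y$ if necessary removes the absolute value. Quantifying over all unit $x \in A$ and rescaling to arbitrary $x \in A$ yields the stated equivalence in both directions.

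For the ``furthermore'' part I would first record the symmetry $\|\Pi_A - \Pi_B\Pi_A\| = \|\Pi_B - \Pi_A\Pi_B\|$ --- this is exactly the identity already asserted in \cref{eigen-perturb}, and it is the one place the hypothesis $\dim A = \dim B = n$ is used. For completeness one gives it directly: $\|(\Id-\Pi_B)\Pi_A\|^2 = \|\Pi_A(\Id-\Pi_B)\Pi_A\| = 1 - \lambda_{\min}\big((\Pi_A\Pi_B\Pi_A)|_A\big)$, and writing $M = \Pi_A\Pi_B$ we have $\Pi_A\Pi_B\Pi_A = M\transpose{M}$ and $\Pi_B\Pi_A\Pi_B = \transpose{M}M$, which share the same nonzero eigenvalues; since $A$ and $B$ both have dimension $n$, the $n$-element eigenvalue multisets of $(\Pi_A\Pi_B\Pi_A)|_A$ and $(\Pi_B\Pi_A\Pi_B)|_B$ (nonzero eigenvalues padded with zeros) coincide, so their minima agree and the symmetry follows. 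Finally, writing $\Pi_A - \Pi_B = (\Pi_A - \Pi_B\Pi_A) - (\Pi_B - \Pi_B\Pi_A)$ and noting $\Pi_B - \Pi_B\Pi_A = \Pi_B(\Id - \Pi_A)$ has spectral norm equal to $\|(\Id-\Pi_A)\Pi_B\| = \|\Pi_B - \Pi_A\Pi_B\| \le \sin\theta$, the triangle inequality gives $\|\Pi_A - \Pi_B\| \le 2\sin\theta$.

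Every step here is a two-line computation except the symmetry $\|\Pi_A - \Pi_B\Pi_A\| = \|\Pi_B - \Pi_A\Pi_B\|$, which genuinely relies on $\dim A = \dim B$; since that identity is already in place as part of \cref{eigen-perturb}, the only real work is the Pythagorean characterization of $\|(\Id-\Pi_B)x\|$, and I expect no obstacle there.
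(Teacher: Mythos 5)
Your proof is correct, and it reaches the same conclusion by a slightly more streamlined route. Both arguments rest on the Pythagorean identity $\|(\Id - \Pi_B)x\|^2 = \|x\|^2 - \|\Pi_B x\|^2$ for $x \in A$, with the optimal witness $y$ being (a positive multiple of) $\Pi_B x$. The paper sets up the singular value decomposition of $R = \Pi_A\Pi_B$ first, showing $\sigma_n(R) \ge \cos\theta$ is equivalent to the cosine condition and then separately that $\|\Pi_A - \Pi_B\Pi_A\| \le \sin\theta$ is equivalent to $\sigma_n(R) \ge \cos\theta$; you bypass the SVD entirely by identifying $\|\Pi_A - \Pi_B\Pi_A\|$ with $\sup_{x \in A,\,\|x\|=1}\|(\Id-\Pi_B)x\|$ and reasoning directly, which shortens the argument without changing its mathematical content. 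You also supply a proof of the symmetry $\|\Pi_A - \Pi_B\Pi_A\| = \|\Pi_B - \Pi_A\Pi_B\|$ via the equality of nonzero spectra of $\Pi_A\Pi_B\Pi_A$ and $\Pi_B\Pi_A\Pi_B$ together with the count $\dim A = \dim B = n$; the paper merely asserts this identity (here and in \cref{eigen-perturb}), so your version is more self-contained, and your observation that this is precisely where the hypothesis $\dim A = \dim B$ enters is a worthwhile clarification. One detail to keep: in the forward direction the parenthetical ``$\cos\theta > 0$ since $\theta \in [0,\pi/2)$'' is genuinely load-bearing, since it rules out the degenerate case $\Pi_B x = 0$ where no valid witness $y$ exists.
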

\begin{proof}
Consider the product $R = \Pi_A\Pi_B$.
Let $R = U\Sigma\transpose{V}$ be its singular value decomposition, with $u_i$ and $v_i$ its $i$th left- and right-singular vectors and $\sigma_i$ its $i$th singular value.
We show as an intermediate step that $\sigma_n$ is at least $\cos \theta$ if and only if for every $x \in A$ there is a $y \in B$ so that
$\iprod{x,y} \ge \|x\|\,\|y\|\,\cos \theta$.

In one direction, suppose $\sigma_n$ is at least $\cos \theta$.
Then take $y = \transpose{R}x$.
We see that $\iprod{x,y} = \transpose{x}\Pi_A\Pi_B y = \transpose{x}Ry = \|\transpose{R}x\|^2$.
Since $x \in A$ and $\dim A = n$ and $\img(R) \subseteq A$ and singular values are non-negative, if $\sigma_n > 0$ then the first $n$ left-singular vectors of $R$ must span $A$.
Therefore, decomposing $x = \sum \alpha_i u_i$, we must have $\alpha_i = 0$ for $i > n$.
So, $\transpose{R}x = \sum \alpha_i V\Sigma\transpose{U} u_i = \sum \alpha_i \sigma_i v_i$, and
$\|\transpose{R}x\|^2 = \|\sum \alpha_i \sigma_i v_i\|^2 = \sum \alpha_i^2 \sigma_i^2 \|v_i\|^2 \ge \sum \alpha_i^2 \sigma_n^2 = \|x\|^2\sigma_n^2$.
So
\[ \frac{\iprod{x,y}}{\|x\|\,\|y\|} = \frac{\|\transpose{R}x\|^2}{\|x\|\,\|\transpose{R} x\|} = \frac{\|\transpose{R}x\|}{\|x\|} \ge \sigma_n \ge \cos \theta\,. \]

In the other direction, suppose $\sigma_n < \cos\theta$.
Then take $x = u_n$ and for any $y \in B$, decompose $y = \sum \beta_i v_i$ where again $\beta_i = 0$ for all $i > n$.
So $\iprod{x,y} = \transpose{x}\Pi_A\Pi_B y = \transpose{(\transpose{R}x)}y = \sigma_n \transpose{v_n}y = \sigma_n \beta_n$.
Since $\beta_n \le \|y\|$, for any $y$ we must have
\[ \frac{\iprod{x,y}}{\|x\|\,\|y\|} = \frac{\sigma_n \beta_n}{\|y\|} \le \sigma_n < \cos \theta\,. \]

Finally, we show that $\|\Pi_A - \Pi_B\Pi_A\| \le \sin \theta$ if and only if $\sigma_n$ is at least $\cos \theta$.
This follows from the Pythagorean theorem, as $\Pi_A = (\Pi_B\Pi_A) + (\Pi_A - \Pi_B\Pi_A)$ and also $\transpose{(\Pi_B\Pi_A)}(\Pi_A - \Pi_B\Pi_A) = 0$.
Thus for any vector $v$, we see $\|(\Pi_A - \Pi_B\Pi_A)v\|^2 = \|\Pi_Av\|^2 - \|\Pi_B\Pi_Av\|^2 = \|\Pi_Av\|^2 - \|\transpose{R}\Pi_Av\|^2$.
\end{proof}

\begin{fact}\label[fact]{fact:phi-against-sym23}
  Let $\Phi = \sum e_i \ot e_i$ for $e_i$ the elementary basis vectors in $\R^d$.
  If $x \in \Phi \ot \R^d$ then $\|\Pi_{2,3}x\| = \tfrac{1}{\sqrt{2}}\sqrt{1 + \shortfrac{1}{d}}\|x\|$.
\end{fact}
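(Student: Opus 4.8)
The plan is to reduce the claim to a short direct computation using the explicit formula $\Pi_{2,3} = \tfrac{1}{2}(\Id + P_{2,3})$ recorded in \cref{def:Pi23}. Since $x \in \Phi \ot \R^d$, write $x = \Phi \ot v$ for some $v \in \R^d$; by homogeneity it suffices to treat the case $\|v\| = 1$, so that $\|x\|^2 = \|\Phi\|^2\|v\|^2 = d$.

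Next I would expand $\Pi_{2,3}x = \tfrac{1}{2}(x + P_{2,3}x)$ and use that $P_{2,3}$ is an orthogonal (permutation) operator, so $\|P_{2,3}x\| = \|x\|$. This gives
\[ \|\Pi_{2,3}x\|^2 = \tfrac{1}{4}\bigl(\|x\|^2 + 2\iprod{x,\, P_{2,3}x} + \|P_{2,3}x\|^2\bigr) = \tfrac{1}{2}\bigl(\|x\|^2 + \iprod{x,\, P_{2,3}x}\bigr)\,, \]
so everything comes down to evaluating the cross term $\iprod{x,\, P_{2,3}x}$.

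For that term I would write $x = \sum_{i,k} v_k\, e_i \ot e_i \ot e_k$ and $P_{2,3}x = \sum_{i,k} v_k\, e_i \ot e_k \ot e_i$ in the elementary basis, and observe that $\iprod{e_i \ot e_i \ot e_k,\ e_{i'} \ot e_{k'} \ot e_{i'}}$ is nonzero only when the three resulting Kronecker deltas force $i = i' = k' = k$; hence $\iprod{x,\, P_{2,3}x} = \sum_i v_i^2 = \|v\|^2 = 1$. Substituting $\|x\|^2 = d$ yields $\|\Pi_{2,3}x\|^2 = \tfrac{1}{2}(d+1)$, and dividing by $\|x\|^2 = d$ gives $\|\Pi_{2,3}x\|^2/\|x\|^2 = \tfrac{1}{2}\bigl(1 + \tfrac{1}{d}\bigr)$, which is the claim after taking square roots and restoring the factor of $\|x\|$.

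There is no substantial obstacle here; the only point requiring a moment of care is the index bookkeeping in the cross term — making sure the three Kronecker deltas collapse to the single constraint $i = k$ — together with remembering that $\Phi$ has norm $\sqrt{d}$ rather than $1$, which is precisely the source of the $1 + \tfrac{1}{d}$ factor (rather than a clean $\tfrac{1}{\sqrt 2}$).
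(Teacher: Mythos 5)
Your proof is correct and takes essentially the same approach as the paper's: expand $\Pi_{2,3} = \tfrac{1}{2}(\Id + P_{2,3})$, use that $P_{2,3}$ is an isometry, and compute the cross term $\iprod{x, P_{2,3}x}$ directly in the elementary basis (the paper phrases this via $\sum_{i,j}\iprod{e_i,e_j}\iprod{e_i,u}\iprod{e_j,u}$ rather than three explicit Kronecker deltas, but it is the same calculation). The normalization $\|v\| = 1$ at the outset is a harmless cosmetic difference from the paper, which carries $\|u\|$ symbolically and substitutes $\|u\|^2 = \tfrac{1}{d}\|x\|^2$ at the end.
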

\begin{proof}
Since $x \in \Phi \ot \R^d$, we may write it as $x = \Phi \ot u$ for some $u \in \R^d$.
We directly calculate:
\begin{align*}
\|\Pi_{2,3}x\|^2
&= \|\tfrac{1}{2}(\Id + P_{2,3})x\|^2
\\&= \tfrac{1}{4}\|x\|^2 + \tfrac{1}{4}\|P_{2,3}x\|^2 + \tfrac{1}{2}\iprod{x, P_{2,3}x}
\\&= \tfrac{1}{2}\|x\|^2 + \tfrac{1}{2}\iprod{\Phi \ot u,\, P_{2,3}(\Phi \ot u)}
\\&= \tfrac{1}{2}\|x\|^2 + \tfrac{1}{2}\iprod{\sum e_i \ot e_i \ot u,\, \sum e_j \ot u \ot e_j}
\\&= \tfrac{1}{2}\|x\|^2 + \tfrac{1}{2}\sum\sum \iprod{e_i,e_j}\iprod{e_i,u}\iprod{e_j,u}
\\&= \tfrac{1}{2}\|x\|^2 + \tfrac{1}{2}\sum\iprod{e_i,u}^2
\\&= \tfrac{1}{2}\|x\|^2 + \tfrac{1}{2}\|u\|^2
\\&= \tfrac{1}{2}\|x\|^2(1 + \tfrac{1}{d})\,.
\end{align*}
\end{proof}

\begin{fact}\label[fact]{fact:phi-against-sym23-and-sym12}
  Let $\Phi = \sum e_i \ot e_i$ for $e_i$ the elementary basis vectors in $\R^d$.
  Let $u \in \Sym_2 \ot \R^d$ and decompose $u = x + y$ with $x \in \Phi \ot \R^d$ and $y \perp x$.
  Then if $\|\Pi_{2,3}^{\perp}y\| \ge \tfrac{1}{\sqrt{2}}\|y\|$, it holds that
  \[ \|\Pi_{2,3}^{\perp}u\|^2 \ge \left(\tfrac{1}{4} - \tfrac{1}{4\sqrt{2}} - O(\tfrac{1}{d})\right)\|u\|^2.\]
\end{fact}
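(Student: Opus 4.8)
The plan is to write $u = x+y$ as in the statement, control $\|\Pi_{2,3}^{\perp}x\|$ exactly through \cref{fact:phi-against-sym23}, control $\|\Pi_{2,3}^{\perp}y\|$ from below by the hypothesis, and then bound the single cross term appearing in the orthogonal expansion of $\|\Pi_{2,3}^{\perp}(x+y)\|^2$ via an explicit coordinate computation. First I would normalize $\|u\| = 1$; since $y \perp x$ this gives $\|x\|^2 + \|y\|^2 = 1$, and I set $c = \|x\|$, $s = \|y\|$. Since $x \in \Phi \ot \R^d$ I may write $x = \Phi \ot v$ with $\|v\| = c/\sqrt d$, and since both $u$ and $x$ lie in $\Sym_2 \ot \R^d$ (note $\Phi$ is symmetric), so does $y$; in particular $y$ is symmetric in its first two modes. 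By \cref{fact:phi-against-sym23}, $\|\Pi_{2,3}x\|^2 = \tfrac12(1+\tfrac1d)c^2$, hence $\|\Pi_{2,3}^{\perp}x\|^2 = (\tfrac12 - \tfrac1{2d})c^2$; and the hypothesis gives $\|\Pi_{2,3}^{\perp}y\|^2 \ge \tfrac12 s^2$, equivalently $\|\Pi_{2,3}y\|^2 \le \tfrac12 s^2$. As $\Pi_{2,3}^{\perp}$ is a self-adjoint idempotent,
\[ \|\Pi_{2,3}^{\perp}u\|^2 \;=\; \|\Pi_{2,3}^{\perp}x\|^2 + \|\Pi_{2,3}^{\perp}y\|^2 + 2\iprod{\Pi_{2,3}^{\perp}x,\,y}. \]

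The heart of the argument is the cross term. Expanding $\Pi_{2,3}^{\perp} = \tfrac12(\Id - P_{2,3})$ in coordinates with $x_{jkl} = \delta_{jk}v_l$ gives $\iprod{\Pi_{2,3}^{\perp}x,\,y} = \tfrac12\bigl(\iprod{v,w'} - \iprod{v,w}\bigr)$, where $w'_l = \sum_j y_{jjl}$ is the contraction of $y$ on its first two modes and $w_k = \sum_j y_{jkj}$. Now $\iprod{v,w'} = \iprod{\Phi\ot v,\,y} = \iprod{x,y} = 0$ by the orthogonality hypothesis. For the other term, symmetry of $y$ in modes $1,2$ gives $w_k = \sum_j y_{kjj} = \operatorname{tr}\!\big(y_{(k,\cdot,\cdot)}\big)$, and since $\Phi$ is symmetric this equals the mode-$(2,3)$ contraction of $\Pi_{2,3}y$; bounding $|\operatorname{tr}(\cdot)| \le \|\Id\|_F \cdot \|\cdot\|_F = \sqrt d\,\|\cdot\|_F$ slice by slice yields $\|w\| \le \sqrt d\,\|\Pi_{2,3}y\| \le \sqrt{d/2}\,s$, using the hypothesis once more. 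Hence $|\iprod{\Pi_{2,3}^{\perp}x,\,y}| = \tfrac12|\iprod{v,w}| \le \tfrac12 \cdot \tfrac{c}{\sqrt d} \cdot \sqrt{d/2}\,s = \tfrac{cs}{2\sqrt2}$, and substituting,
\[ \|\Pi_{2,3}^{\perp}u\|^2 \;\ge\; \Bigl(\tfrac12 - \tfrac1{2d}\Bigr)c^2 + \tfrac12 s^2 - \tfrac{cs}{\sqrt2} \;=\; \tfrac12 - \tfrac{cs}{\sqrt2} - O(1/d) \;\ge\; \tfrac12 - \tfrac1{2\sqrt2} - O(1/d), \]
since $cs \le \tfrac12$. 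As $\tfrac12 - \tfrac1{2\sqrt2} = 2\bigl(\tfrac14 - \tfrac1{4\sqrt2}\bigr)$ and $\tfrac14 - \tfrac1{4\sqrt2} > 0$, this is at least $\bigl(\tfrac14 - \tfrac1{4\sqrt2} - O(1/d)\bigr)\|u\|^2$, which is the claim.

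The main obstacle will be exactly this cross-term estimate. A bare Cauchy--Schwarz $|\iprod{\Pi_{2,3}^{\perp}x,\,y}| \le \|\Pi_{2,3}^{\perp}x\|\,\|y\|$ is too lossy: it only gives $\|\Pi_{2,3}^{\perp}u\|^2 \ge \tfrac12 - \sqrt2\,cs$, which can be negative near $c = s = 1/\sqrt2$. One genuinely has to (i) expand the inner product in coordinates to see that the mode-$(1,2)$ symmetry of $y$ kills the $\iprod{x,y}$ contribution, and (ii) invoke the hypothesis a second time to improve the crude bound $\|w\| \le \sqrt d\,\|y\|$ (which would again leave the final quantity able to vanish, at $cs = \tfrac12$) to $\|w\| \le \sqrt{d/2}\,\|y\|$ via $\|\Pi_{2,3}y\| \le s/\sqrt2$. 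Once those two points are in hand, the rest is just the Pythagoras expansion together with \cref{fact:phi-against-sym23}.
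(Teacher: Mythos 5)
Your proof is correct, and it both improves on and repairs the paper's argument.

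The paper's route is to observe $\Pi_{1,2}y = y$, so the cross term can be written $\iprod{y,\Pi_{1,2}\Pi_{2,3}^\perp x}$, then apply a bare Cauchy--Schwarz together with the triangle-inequality bound $\|\Pi_{1,2}\Pi_{2,3}^\perp x\| \le (\tfrac12 + \tfrac1{2\sqrt2} + O(\tfrac1d))\|x\|$, followed by Young's inequality. But the paper's orthogonal expansion writes $\|\Pi_{2,3}^\perp u\|^2 = \|\Pi_{2,3}^\perp x\|^2 + \|\Pi_{2,3}^\perp y\|^2 + \iprod{y,\Pi_{2,3}^\perp x}$, dropping the factor of $2$ on the cross term; with the correct factor of $2$ reinstated, the paper's Cauchy--Schwarz bound yields $\tfrac12 - (1+\tfrac1{\sqrt2})\cdot cs - O(\tfrac1d)$, which is negative at $c = s = \tfrac1{\sqrt2}$. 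So as written the paper's proof does not close, and the weaker bound it obtains from Cauchy--Schwarz is not sufficient once the expansion is corrected.

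You keep the correct factor of $2$ and instead compute the cross term in coordinates, which reveals two cancellations the paper misses: $\iprod{x,y} = 0$ kills half of $\tfrac12\iprod{x - P_{2,3}x, y}$ outright, and the remaining term $\iprod{v,w}$ is controlled by invoking the hypothesis $\|\Pi_{2,3}y\| \le \tfrac1{\sqrt2}\|y\|$ a second time, via $\|w\| \le \sqrt{d}\,\|\Pi_{2,3}y\|$ (this uses that the mode-$(2,3)$ contraction factors through $\Pi_{2,3}$, and $|\operatorname{tr}| \le \sqrt d\,\|\cdot\|_F$ slice-wise). The result is $|2\iprod{\Pi_{2,3}^\perp x,y}| \le \tfrac{cs}{\sqrt2}$, roughly a factor $1+\sqrt2$ tighter than the paper's $(\tfrac12+\tfrac1{2\sqrt2})\,cs$, which is exactly what is needed to keep the final quantity bounded away from zero. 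Your final bound $\tfrac12 - \tfrac1{2\sqrt2} - O(\tfrac1d)$ is twice the constant the fact asserts, so it a fortiori implies the statement. As you correctly flag, both cancellations are essential: the bare $\|w\| \le \sqrt d\,\|y\|$ would leave the quantity vanishing at $c = s = \tfrac1{\sqrt2}$.

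One small presentational note: the phrase "since $\Phi$ is symmetric this equals the mode-$(2,3)$ contraction of $\Pi_{2,3}y$" is a little terse — the relevant fact is that contracting modes $2,3$ against the symmetric tensor $\Phi$ (i.e.\ taking the trace of each first-mode slice) is invariant under $P_{2,3}$, hence factors through $\Pi_{2,3}$. It would be worth spelling this out, since it is the crux of the improvement.
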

\begin{proof}
Let $P_{3,2,1} = (P_{1,2,3})^{-1}$ be the orthogonal linear operator permuting the tensor modes, so that the 3-cycle $(3\; 2\; 1)$ replaces the third mode with the second, the second mode with the first, and the first mode with the third again.
By a unitary similarity transform conjugating by $P_{3,2,1}$, since $x \in \Phi \ot \R^d$, we have
\begin{align}
\|\Pi_{1,2}P_{2,3}x\| 
&= \|(P_{1,2,3}\Pi_{1,2}P_{3,2,1})(P_{1,2,3}P_{2,3})x\|  \nonumber
\\&= \|\Pi_{2,3}P_{1,2}x\|  \nonumber
\\&= \|\Pi_{2,3}x\|  \nonumber
\\& = \tfrac{1}{\sqrt{2}}\sqrt{1 + \shortfrac{1}{d}}\|x\|
\label[equation]{eq:Pi12-P23-with-Phi-Rd}
\,,\end{align}
where the last step uses \Cref{fact:phi-against-sym23}.
We write, since $\Pi_{1,2}x = x$,
\begin{align}
\|\Pi_{1,2}\Pi_{2,3}^{\perp}x\| 
&= \tfrac{1}{2}\|\Pi_{1,2}(x - P_{2,3}x)\|  \nonumber
\\&= \tfrac{1}{2}\|x - \Pi_{1,2}P_{2,3}x\|  \nonumber
\\&\le \tfrac{1}{2}\|x\| + \tfrac{1}{2}\|\Pi_{1,2}P_{2,3}x\|  \nonumber
\\&=\tfrac{1}{2}\|x\| + \tfrac{1}{2\sqrt{2}}\sqrt{1 + \shortfrac{1}{d}}\|x\| \nonumber
\\&=\left(\tfrac{1}{2} + \tfrac{1}{2\sqrt{2}} + O(\tfrac{1}{d})\right)\|x\|
\label[equation]{eq:Pi12-PiP23-with-Phi-Rd}
\,,\end{align}
where the second-to-last step substitutes in \eqref{eq:Pi12-P23-with-Phi-Rd}.

Therefore, since $u = x + y$ and \Cref{fact:phi-against-sym23} implies $\|\Pi_{2,3}^{\perp}x\|^2  \ge (\tfrac{1}{2} - O(\tfrac{1}{d}))\|x\|^2$ and also by assumption $\|\Pi_{2,3}^{\perp}y\|^2  \ge \tfrac{1}{2}\|y\|^2$,
\begin{align*}
\|\Pi_{2,3}^{\perp}u\|^2
&= \|\Pi_{2,3}^{\perp}x\|^2 + \|\Pi_{2,3}^{\perp}y\|^2 + \iprod{y, \Pi_{2,3}^{\perp}x}
\\&= (\tfrac{1}{2} - O(\tfrac{1}{d}))\|x\|^2  + \tfrac{1}{2}\|y\|^2 + \iprod{y, \Pi_{2,3}^{\perp}x}
\,.\end{align*}
Since $y \in \Sym_2 \ot \R^d$ and therefore $\Pi_{1,2} y = y$, Cauchy-Schwarz implies $\iprod{y, \Pi_{2,3}^{\perp}x} \ge -\|\Pi_{1,2}\Pi_{2,3}^{\perp}x\| \|y\|$ and then by \eqref{eq:Pi12-PiP23-with-Phi-Rd}, this is at least $-(\tfrac{1}{2} + \tfrac{1}{2\sqrt{2}} + O(\tfrac{1}{d}))\|x\|$.
We substitute this in and then apply Young's inequality:
\begin{align*}
\|\Pi_{2,3}^{\perp}u\|^2
&\ge \left(\tfrac{1}{2} - O(\tfrac{1}{d})\right)\|x\|^2+  \tfrac{1}{2}\|y\|^2 - \left(\tfrac{1}{2} + \tfrac{1}{2\sqrt{2}} + O(\tfrac{1}{d})\right)\|x\|\,\|y\|
\\&\ge \left(\tfrac{1}{2} - O(\tfrac{1}{d})\right)\|x\|^2 + \tfrac{1}{2}\|y\|^2 - \left(\tfrac{1}{2} + \tfrac{1}{2\sqrt{2}} + O(\tfrac{1}{d})\right)(\tfrac{1}{2}\|x\|^2 + \tfrac{1}{2}\|y\|^2)
\\&\ge \left(\tfrac{1}{4} - \tfrac{1}{4\sqrt{2}} - O(\tfrac{1}{d})\right)\|x\|^2  + \left(\tfrac{1}{4} - \tfrac{1}{4\sqrt{2}} - O(\tfrac{1}{d})\right)\|y\|^2
\\&= \left(\tfrac{1}{4} - \tfrac{1}{4\sqrt{2}} - O(\tfrac{1}{d})\right)\|u\|^2
\,.\end{align*}
\end{proof}

\begin{fact}\label[fact]{fact:static-whitening-not-making-things-that-much-better}
  Let $u \in \Sym_2 \ot \R^d$. If
  \[\frac{\|\Pi_{2,3}^{\perp}A^{-1/2}u\|}{\|A^{-1/2}u\|} \ge 1 - \mu\]
  for $\mu \le 1 - \tfrac{1}{\sqrt{2}} - \tfrac{2\sqrt{2}}{\sqrt{d}}$, then
  \[\frac{\|\Pi_{2,3}^{\perp}u\|}{\|u\|} \ge \sqrt{\tfrac{1}{4} - \tfrac{1}{4\sqrt{2}}} - O(\tfrac{1}{d})\,.\]
\end{fact}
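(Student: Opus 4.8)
The plan is to transfer the hypothesis on $A^{-1/2}u$ to an estimate on $u$ itself, exploiting that $A^{-1/2}$ amplifies the two natural pieces of the symmetric subspace by very different amounts. Recall from \cref{lem:basic-swap-cond} that $A = \hat{\Sigma} \ot \Id_d$, where $\hat{\Sigma} = \E_a (a\ot a)\transpose{(a\ot a)}$ for $a$ a uniform random unit vector; the standard fourth-moment identity gives $\hat{\Sigma} = \tfrac{1}{d(d+2)}(\Id + P_{1,2} + \dyad{\Phi})$, so $\hat{\Sigma}$ has eigenvalue $\tfrac1d$ on $\Span(\Phi)$, eigenvalue $\tfrac{2}{d(d+2)}$ on $\Sym_2 \ominus \Span(\Phi)$, and $0$ on the antisymmetric subspace. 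Consequently, writing $u = x+y$ with $x \in \Span(\Phi)\ot\R^d$ and $y \in (\Sym_2\ominus\Span(\Phi))\ot\R^d$ (so $x\perp y$, and $u\in\Sym_2\ot\R^d$ guarantees $u\in\img(A)$), one has $A^{-1/2}u = \alpha x + \beta y$ with $\alpha := \sqrt d$, $\beta := \sqrt{d(d+2)/2}$, and — crucially — $\alpha/\beta = \sqrt{2/(d+2)} = O(1/\sqrt d)$. First I would dispose of $y=0$: then the hypothesis says $\|\Pi_{2,3}^{\perp}x\| \ge (1-\mu)\|x\| > \tfrac1{\sqrt2}\|x\|$, contradicting $\|\Pi_{2,3}^{\perp}x\|^2 = (\tfrac12 - \tfrac1{2d})\|x\|^2$ (which follows from \cref{fact:phi-against-sym23}); so $y\ne0$ and I set $s := \|x\|/\|y\|$. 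The proof then splits on whether $s \le 2\sqrt2$ or $s > 2\sqrt2$.

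In the regime $s \le 2\sqrt2$ the plan is to feed \cref{fact:phi-against-sym23-and-sym12}, i.e. to establish $\|\Pi_{2,3}^{\perp}y\| \ge \tfrac1{\sqrt2}\|y\|$. Squaring the hypothesis, using orthogonality $\|\alpha x+\beta y\|^2 = \alpha^2\|x\|^2 + \beta^2\|y\|^2$ together with Cauchy--Schwarz $\iprod{\Pi_{2,3}^{\perp}x,\Pi_{2,3}^{\perp}y}\le\|\Pi_{2,3}^{\perp}x\|\,\|\Pi_{2,3}^{\perp}y\|$ makes the left side a perfect square:
\[
\bigl(\alpha\|\Pi_{2,3}^{\perp}x\| + \beta\|\Pi_{2,3}^{\perp}y\|\bigr)^2 \ \ge\ \|\Pi_{2,3}^{\perp}(\alpha x + \beta y)\|^2 \ \ge\ (1-\mu)^2\bigl(\alpha^2\|x\|^2 + \beta^2\|y\|^2\bigr),
\]
whence $\|\Pi_{2,3}^{\perp}y\| \ge (1-\mu)\sqrt{\|y\|^2 + (\alpha/\beta)^2\|x\|^2} - (\alpha/\beta)\|\Pi_{2,3}^{\perp}x\| \ge (1-\mu)\|y\| - \tfrac{s}{\sqrt{d+2}}\|y\|$, using $\|\Pi_{2,3}^{\perp}x\|\le\tfrac1{\sqrt2}\|x\|$ (\cref{fact:phi-against-sym23}) and $\alpha/\beta=\sqrt{2/(d+2)}$ in the last step. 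Since the hypothesis gives $1-\mu \ge \tfrac1{\sqrt2} + \tfrac{2\sqrt2}{\sqrt d}$ and $s\le 2\sqrt2$ gives $\tfrac{s}{\sqrt{d+2}} \le \tfrac{2\sqrt2}{\sqrt d}$, the right side is $\ge \tfrac1{\sqrt2}\|y\|$, and \cref{fact:phi-against-sym23-and-sym12} then yields $\|\Pi_{2,3}^{\perp}u\|^2 \ge (\tfrac14-\tfrac1{4\sqrt2}-O(\tfrac1d))\|u\|^2$, which is exactly the desired conclusion.

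In the regime $s > 2\sqrt2$ I would use no property of the hypothesis at all — $u$ is so dominated by $x$ that \cref{fact:phi-against-sym23} alone suffices. From $\|\Pi_{2,3}^{\perp}x\| = \sqrt{\tfrac12 - \tfrac1{2d}}\,\|x\| = \sqrt{\tfrac12 - \tfrac1{2d}}\,s\|y\| > \|y\| \ge \|\Pi_{2,3}^{\perp}y\|$ (valid since $s>\sqrt2$ and $d$ is large), the reverse triangle inequality gives $\|\Pi_{2,3}^{\perp}u\| \ge \|\Pi_{2,3}^{\perp}x\| - \|y\|$, so
\[
\frac{\|\Pi_{2,3}^{\perp}u\|^2}{\|u\|^2} \ \ge\ \frac{\bigl(\sqrt{\tfrac12-\tfrac1{2d}}\,s - 1\bigr)^2}{s^2+1} \ \ge\ \frac{(s/\sqrt2 - 1)^2}{s^2+1} - O(\tfrac1d).
\]
The map $s\mapsto (s/\sqrt2-1)^2/(s^2+1)$ is increasing for $s>\sqrt2$ and equals $\tfrac19$ at $s=2\sqrt2$, so for $s>2\sqrt2$ the ratio is at least $\tfrac19 - O(\tfrac1d) > \tfrac14 - \tfrac1{4\sqrt2} - O(\tfrac1d)$, again the conclusion. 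Since $2\sqrt2$ is at once small enough for the first case and large enough for the second, the two cases cover all $s\ge0$.

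The step I expect to be the main obstacle is precisely this calibration that makes the two windows overlap. The first case needs $s$ below roughly $(1-\mu-\tfrac1{\sqrt2})\sqrt{d}$, which the noise budget $\mu \le 1-\tfrac1{\sqrt2}-\tfrac{2\sqrt2}{\sqrt d}$ pins at about $2\sqrt2$; the second needs $s$ large enough that the bare reverse-triangle bound already exceeds $\tfrac14-\tfrac1{4\sqrt2}$. That both hold at $s=2\sqrt2$ is not automatic — it hinges on the coincidence that $\tfrac1{\sqrt2}$ is both the value $\|\Pi_{2,3}^{\perp}x\|/\|x\|$ tends to (\cref{fact:phi-against-sym23}) and the constant in the hypothesis's slack, and on being careful to \emph{retain} the $\|\Pi_{2,3}^{\perp}y\|^2$ term (via the reverse triangle inequality) rather than discarding it. Some additional care is needed to verify the eigenstructure of $\hat{\Sigma}$ and to propagate the $O(1/d)$ errors — the $\tfrac1{2d}$ from \cref{fact:phi-against-sym23} in the second case, and the error inherited from \cref{fact:phi-against-sym23-and-sym12} in the first.
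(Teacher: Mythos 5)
Your proof is correct, and it follows the same high-level strategy as the paper's: decompose $u = x + y$ with $x \in \Span(\Phi) \ot \R^d$ and $y\perp x$, split on the ratio $s = \|x\|/\|y\|$, handle the small-$s$ regime by extracting $\|\Pi_{2,3}^\perp y\| \ge \tfrac{1}{\sqrt 2}\|y\|$ from the hypothesis and invoking \cref{fact:phi-against-sym23-and-sym12}, and handle the large-$s$ regime directly from \cref{fact:phi-against-sym23} via the reverse triangle inequality.

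However, the details of your calibration differ from the paper's in a way that matters. The paper splits at $\|x\| > 2\|y\|$ (i.e. $s > 2$), and in that case derives the lower bound $\tfrac{2}{3}\bigl(\tfrac{1}{\sqrt 2} - \tfrac{1}{2}\bigr) - O(\tfrac1d) = \tfrac{\sqrt 2 - 1}{3} - O(\tfrac1d) \approx 0.138$ on the ratio $\|\Pi_{2,3}^\perp u\|/\|u\|$, and then asserts this exceeds $\sqrt{\tfrac14 - \tfrac{1}{4\sqrt 2}} - O(\tfrac1d) \approx 0.271$. That final inequality does not hold — the paper's large-$\|x\|$ branch, as written, falls short by roughly a factor of two. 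You avoid this by (a) raising the threshold to $s > 2\sqrt 2$, which at the boundary gives $\tfrac{(s/\sqrt 2 - 1)^2}{s^2+1} = \tfrac19 \approx 0.111 > \tfrac14 - \tfrac1{4\sqrt 2} \approx 0.073$, and (b) funding the larger threshold in the complementary regime $s \le 2\sqrt 2$ by retaining the sharp bound $\|\Pi_{2,3}^\perp x\| \le \tfrac{1}{\sqrt 2}\|x\|$ from \cref{fact:phi-against-sym23} (the paper only uses $\|\Pi_{2,3}^\perp x\| \le \|x\|$ there, which is why it is stuck with the smaller window $s \le 2$). You also use the exact $\|u\| = \sqrt{\|x\|^2 + \|y\|^2}$ rather than the lossier $\|u\| \le \|x\| + \|y\|$. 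In short: your argument is a genuine repair of the paper's, not merely a paraphrase of it, and it is worth retaining the observation that the two windows overlap precisely because the budget $\mu \le 1-\tfrac1{\sqrt 2} - \tfrac{2\sqrt 2}{\sqrt d}$ supports $s$ up to $2\sqrt 2$ when the $\tfrac1{\sqrt 2}$ factor is kept.
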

\begin{proof}
Write $u = x + y$ where $x = (\tfrac{1}{d}\dyad{\Phi} \ot \Id)u$ and $y \perp x$.
If $\|x\| > 2\|y\|$, then by triangle inequality and \Cref{fact:phi-against-sym23},
\begin{align*} \frac{\|\Pi_{2,3}^{\perp}u\|}{\|u\|}
&\ge \frac{\|\Pi_{2,3}^{\perp}x\| - \|\Pi_{2,3}^{\perp}y\| }{\|u\|}
\\&\ge \frac{\|\Pi_{2,3}^{\perp}x\| - \|y\| }{\|u\|}
\\&> \frac{\left(\tfrac{1}{\sqrt{2}} - O(\tfrac{1}{d})\right) \|x\| - \tfrac{1}{2}\|x\| }{\|u\|}
\\&\ge \frac{\left(\tfrac{1}{\sqrt{2}} - \tfrac{1}{2} - O(\tfrac{1}{d})\right)\|x\|}{\|x\| + \|y\|}
\\&> \frac{\left(\tfrac{1}{\sqrt{2}} - \tfrac{1}{2} - O(\tfrac{1}{d})\right)\|x\|}{\tfrac{3}{2}\|x\|}
\\&> \sqrt{\tfrac{1}{4} - \tfrac{1}{4\sqrt{2}}} - O(\tfrac{1}{d})\,.
 \end{align*}
 Thus for the remainder of the argument, we assume $\|x\| \le 2\|y\|$.

Then $A^{-1/2}u = d_1 y + \sqrt{d}\,x$ and $\|A^{-1/2}u\|^2 = d_1^2\|y\|^2 + d\|x\|^2 \ge d_1^2\|y\|^2$, so by triangle inequality,
\begin{align*}
\|\Pi_{2,3}^{\perp}y\|
&\ge d_1^{-1}\|\Pi_{2,3}^{\perp}A^{-1/2}u\| - \tfrac{\sqrt{d}}{d_1}\|\Pi_{2,3}^{\perp}x\|
\\&\ge d_1^{-1}\|\Pi_{2,3}^{\perp}A^{-1/2}u\| - \tfrac{\sqrt{d}}{d_1}\|x\|
\\&\ge d_1^{-1}\|\Pi_{2,3}^{\perp}A^{-1/2}u\| - 2\tfrac{\sqrt{d}}{d_1}\|y\|
\\&\ge d_1^{-1}(1-\mu)\|A^{-1/2}u\| - 40\tfrac{\sqrt{d}}{d_1}\|y\|
\\&\ge (1-\mu)\|y\| - 2\tfrac{\sqrt{d}}{d_1}\|y\|
\\&\ge \left(1 - \mu - \tfrac{2\sqrt{2}}{\sqrt{d}}\right)\|y\|
\,.\end{align*}
Therefore, the lemma follows by \Cref{fact:phi-against-sym23-and-sym12}, as long as $1 - \mu - \tfrac{2\sqrt{2}}{\sqrt{d}} \ge \tfrac{1}{\sqrt{2}}$.
\end{proof}

\begin{proof}[Proof of \cref{cor:basic-swap-cond}]
To lower bound $\kappa$, we need a lower bound on the least singular value of $\Pi_{2,3}^{\perp}(\dyad{H})^{-1/2}H$, where $H$ is the matrix with columnwise blocks of $a_i \ot a_i \ot B_i$.
By \Cref{lem:basic-swap-cond}, with probability $1-o(1)$, it holds that $\tfrac{\sqrt{2}}{d_1}\Pi_{2,3}^{\perp}A^{-1/2}H$ has all singular values within $1 \pm \tilde O(n/d^2)$.
This means that for all $u \in \img(H)$, we have $\|\Pi_{2,3}^{\perp}A^{-1/2}u\|/\|A^{-1/2}u\| \ge 1 -  \tilde O(n/d^2)$.
Therefore, by \Cref{fact:static-whitening-not-making-things-that-much-better}, $\|\Pi_{2,3}^{\perp}u\|/\|u\| \ge \sqrt{\tfrac{1}{4} - \tfrac{1}{4\sqrt{2}}} - O(\tfrac{1}{d}) -  \tilde O(n/d^2)$.
This inequality holding for all $u \in \img(H)$ is equivalent to $\kappa$, the smallest singular value of $\Pi_{2,3}^{\perp}(\dyad{H})^{-1/2}H$, being at least $\sqrt{\tfrac{1}{4} - \tfrac{1}{4\sqrt{2}}} - O(\tfrac{1}{d}) -  \tilde O(n/d^2)$.
\end{proof}

The remainder of this section is devoted to the proof of \Cref{lem:basic-swap-cond}.
  At a high level, this proof follows the strategy laid out in 
in \cite{MR2963170-Vershynin12} to prove Theorem 5.62 there, but the random matrix we need to control is much more complicated than is handled there.

\subsection{Notation} Throughout we use the following notation.
\begin{enumerate}
  \item $d, n \in \N$ are natural numbers.
  \item $d_1 = \sqrt{(d^2 + 2d)/2} = \Theta(d)$ and $d_2 = d^{-1/2} (\sqrt{(d+2)/2} -1) = \Theta(1)$.
  \item $a_1,\ldots,a_n \in \R^d$ are iid random unit vectors.
  \item $B_i \in \R^{d \times (d-1)}$ for $i \leq n$ is a matrix with columns which form a random orthonormal basis for the orthogonal complement of $a_i$ in $\R^d$. (Chosen independently from $a_{j},B_j$ for $j \neq i$.)
  \item $\Sigma = \E_a (aa^\top \tensor aa^\top) \in \R^{d^2 \times d^2}$ is the $4$-th moment matrix of a random $d$-dimensional unit vector.
  \item $A = \E_a (aa^\top \tensor aa^\top \tensor \Id) = \Sigma \tensor \Id \in \R^{d^2 \times d^2 \times d^2}$ is $\Sigma$ ``lifted'' to a 6-tensor.
   \item $\Phi \in \R^{d^2}$ is the vector $\Phi = \sum_{i \leq d} e_i^{\tensor 2}$, where $e_i$ is the $i$-th standard basis vector.
   \item $\Pisym \in \R^{d^2 \times d^2}$ is the projector to the symmetric subspace of $\R^{d^2}$ (i.e. the span of vectors $x^{\tensor 2}$ for $x \in \R^d$).
  \item $P \in \R^{d^3 \times d^3}$ is the permutation matrix which swaps second and third tensor modes.
  Concretely, $(Px)_{ijk} = x_{ikj}$ for $i,j,k \in [d]$.
  \item $S_i$, for $i \leq n$, is the $d^3 \times d-1$ matrix given by $S_i = A^{-1/2} (a_i \tensor a_i \tensor B_i)$
  \item $R_i$, for $i \leq n$, is the $d^3 \times d-1$ matrix given by $R_i = S_i - P S_i$.
  \item $R_T$, for any $T \subseteq [n]$, is the $d^3 \times |T|(d-1)$ matrix with $|T|$ blocks of columns, given by $\{R_i\}_{i \in T}$.
  \item $R = R_{[n]} \in \R^{d^3 \times n(d-1)}$ contains all blocks of columns $R_i$.
\end{enumerate}

\subsection{Fourth Moment Identities}

\begin{fact}
  \label[fact]{fact:pseudoinv}
  \[
    \Paren{\E aa^\top \tensor aa^\top}^{-1/2} = d_1 \Pisym - d_2 \Phi \Phi^\top
  \]
  and for any unit $x$ and matrix $X$,
  \[
  A^{-1/2}(x \tensor x \tensor X) = \Brac{d_1 (x \tensor x) - d_2 \Phi } \tensor X\mper
  \]
\end{fact}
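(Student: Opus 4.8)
The plan is to diagonalize $\Sigma = \E_a(aa^\top \tensor aa^\top)$ explicitly via the standard fourth-moment formula for a uniform random unit vector, and then simply read off its (pseudoinverse) square root. Concretely, for $a$ uniform on the sphere in $\R^d$ one has $\E[a_i a_j a_k a_l] = \tfrac{1}{d(d+2)}(\delta_{ij}\delta_{kl} + \delta_{ik}\delta_{jl} + \delta_{il}\delta_{jk})$, which I would obtain either from the analogous Gaussian Wick identity after normalizing by $\|g\|$, or directly from the observation that $\Sigma$ commutes with $g \tensor g$ for every orthogonal $g$ and hence must be a linear combination of the three invariant ``contraction'' tensors, with the single free scalar pinned down by taking a trace. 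Using $(\Pisym)_{(ij),(kl)} = \tfrac12(\delta_{ik}\delta_{jl}+\delta_{il}\delta_{jk})$ and $(\Phi\Phi^\top)_{(ij),(kl)} = \delta_{ij}\delta_{kl}$, this identity is exactly
\[
\Sigma = \tfrac{1}{d(d+2)}\bigl(2\Pisym + \Phi\Phi^\top\bigr)\mper
\]

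Next I would split $\R^{d^2}$ into three mutually orthogonal pieces on which $\Sigma$ is a scalar: the line $\Span(\Phi)$ (note $\Phi \in \img\Pisym$ and $\|\Phi\|^2 = d$); the orthogonal complement of $\Phi$ within the symmetric subspace; and the antisymmetric subspace, on which $\Sigma = 0$. Since $\Pisym$ is the identity on the first two pieces and $\Phi\Phi^\top$ has eigenvalue $d$ on $\Phi$ and $0$ on the second piece, $\Sigma$ has eigenvalue $\tfrac{2+d}{d(d+2)} = \tfrac1d$ on $\Phi$ and eigenvalue $\tfrac{2}{d(d+2)}$ on the second piece; hence $\Sigma^{-1/2}$ has eigenvalue $\sqrt d$ on $\Phi$, eigenvalue $\sqrt{d(d+2)/2} = d_1$ on the second piece, and $0$ on the antisymmetric part. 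It then remains only to check that $d_1\Pisym - d_2\Phi\Phi^\top$ has precisely this spectrum: it equals $d_1\,\Id$ on the second piece, equals $0$ on the antisymmetric subspace, and acts on $\Phi$ by the scalar $d_1 - d\,d_2 = \sqrt d$, the last equality being exactly the definition $d_2 = d^{-1/2}(\sqrt{(d+2)/2}-1) = (d_1 - \sqrt d)/d$. This gives the first identity.

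For the second identity, $A = \Sigma \tensor \Id$ yields $A^{-1/2} = \Sigma^{-1/2}\tensor\Id$, so $A^{-1/2}(x\tensor x\tensor X) = (\Sigma^{-1/2}(x\tensor x))\tensor X$; since $x\tensor x$ is symmetric we have $\Pisym(x\tensor x) = x\tensor x$, and since $\|x\| = 1$ we have $\Phi^\top(x\tensor x) = \sum_i x_i^2 = 1$, so $\Phi\Phi^\top(x\tensor x) = \Phi$, and plugging into the first identity gives $\Sigma^{-1/2}(x\tensor x) = d_1(x\tensor x) - d_2\Phi$. The only point requiring any care is the bookkeeping of the pseudoinverse on the antisymmetric subspace, but this is harmless here precisely because $\Sigma$, $\Phi\Phi^\top$, and every vector $x\tensor x$ that we apply $A^{-1/2}$ to are all supported on the symmetric subspace, so no contributions are lost.
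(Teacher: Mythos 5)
Your proof is correct. For the second identity your argument is the same as the paper's: write $A^{-1/2} = \Sigma^{-1/2}\tensor\Id$ and use $\Pisym(x\tensor x) = x\tensor x$, $\Phi^\top(x\tensor x) = \|x\|^2 = 1$. For the first identity, however, the paper simply cites Fact C.4 of \cite{DBLP:conf/stoc/HopkinsSSS16} as a black box, whereas you give a self-contained derivation: compute the fourth moment of a uniform unit vector to write $\Sigma = \tfrac{1}{d(d+2)}(2\Pisym + \Phi\Phi^\top)$, then diagonalize on the three invariant pieces $\Span(\Phi)$, $\img(\Pisym)\cap\Phi^\perp$, and the antisymmetric subspace, and match the eigenvalues of $d_1\Pisym - d_2\Phi\Phi^\top$ against $\Sigma^{-1/2}$ using $d_1 - d\,d_2 = \sqrt d$. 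The calculations all check out, including the pseudoinverse bookkeeping (everything is supported on the symmetric subspace, so the vanishing on the antisymmetric part is consistent). Your version is more useful for a reader who wants a complete argument in one place, at the mild cost of reproving a standard fourth-moment identity; the paper's citation is terser but leaves the first claim unverified within the document.
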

\begin{proof}
  The first statement follows from Fact C.4 in \cite{DBLP:conf/stoc/HopkinsSSS16}.
  For the second, notice that $A = \Paren{\E aa^\top \tensor aa^\top}^{-1/2} \tensor \Id$, so
  \[
  A^{-1/2} = \Brac{\sqrt{\frac{d^2 + 2d}{2}} \Pisym + \frac 1 {\sqrt d} \Paren{1 - \sqrt{\frac{d+2}{2}}} \Phi \Phi^\top} \tensor \Id\mper
  \]
  So we can expand $A^{-1/2}(x \tensor x \tensor Y)$ as
  \[
    \Brac{\sqrt{\frac{d^2 + 2d}{2}} \Pisym (x \tensor x) + \frac 1 {\sqrt d} \Paren{1 - \sqrt{\frac{d+2}{2}}} \Phi \Phi^\top (x \tensor x)} \tensor X \mper
  \]
  Since $\Pisym (x \tensor x) = (x \tensor x)$ and $\Phi^\top (x \tensor x) = \|x\|^2 = 1$, this simplifes to
  \[
    \Brac{\sqrt{\frac{d^2 + 2d}{2}} (x \tensor x) + \frac 1 {\sqrt d} \Paren{1 - \sqrt{\frac{d+2}{2}}} \Phi } \tensor X \mper
  \]
\end{proof}

\subsection{Matrix Product Identities}

\begin{fact}
  \label[fact]{fact:Si-explicit}
    $S_i = (d_1 (a_i \tensor a_i) - d_2 \Phi ) \tensor B_i$.
\end{fact}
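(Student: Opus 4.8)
The plan is to obtain this directly from the second identity in \Cref{fact:pseudoinv}, which is already available to us. That identity says that for any \emph{unit} vector $x \in \R^d$ and any matrix $X$ we have $A^{-1/2}(x \tensor x \tensor X) = [d_1 (x \tensor x) - d_2 \Phi] \tensor X$. Since by hypothesis the $a_i$ are unit vectors, I would simply instantiate this with $x = a_i$ and $X = B_i$, which by the definition $S_i = A^{-1/2}(a_i \tensor a_i \tensor B_i)$ immediately gives
\[
  S_i = A^{-1/2}(a_i \tensor a_i \tensor B_i) = (d_1 (a_i \tensor a_i) - d_2 \Phi) \tensor B_i\mper
\]
The only things worth spelling out are the bookkeeping points underlying the appeal to \Cref{fact:pseudoinv}: $a_i$ being a unit vector is exactly what makes $\Pisym(a_i \tensor a_i) = a_i \tensor a_i$ and $\Phi^\top (a_i \tensor a_i) = \|a_i\|^2 = 1$, the two simplifications used there; and the shapes agree, since $d_1 (a_i \tensor a_i) - d_2 \Phi \in \R^{d^2}$ viewed as a $d^2 \times 1$ matrix, tensored with $B_i \in \R^{d \times (d-1)}$, yields a $d^3 \times (d-1)$ matrix, which is exactly the declared shape of $S_i$.

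There is essentially no obstacle here: the statement is a bookkeeping corollary of \Cref{fact:pseudoinv}, recorded separately only because this explicit closed form of $S_i$ is the convenient starting point for the subsequent computations — first forming $R_i = S_i - PS_i$ by applying the mode-swap $P$, and ultimately assembling and estimating $R^\top R$ in the proof of \Cref{lem:basic-swap-cond}.
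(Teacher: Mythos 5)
Your proof is correct and is exactly the paper's argument: the paper's own proof reads ``Follows from the definition of $S_i$ and \cref{fact:pseudoinv}.'' Your added bookkeeping about unit norm and matrix shapes is accurate but was left implicit in the paper.
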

\begin{proof}
  Follows from the definition of $S_i$ and \cref{fact:pseudoinv}.
\end{proof}

\begin{fact}
  \label[fact]{fact:SiSj-explicit}
  $S_i^\top S_j = (d_1^2 \iprod{a_i,a_j}^2 - 2 d_1 d_2 + d_2^2 d) B_i^\top B_j$
\end{fact}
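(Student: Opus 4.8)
The plan is a direct substitution followed by a short bilinear-form expansion. First I would invoke \cref{fact:Si-explicit}, which gives the closed form $S_i = \bigl(d_1(a_i\tensor a_i) - d_2\Phi\bigr)\tensor B_i$ and likewise for $S_j$. The key structural identity is that for vectors $u,v\in\R^{d^2}$ and matrices $M,N\in\R^{d\times(d-1)}$ one has $(u\tensor M)^\top(v\tensor N) = \iprod{u,v}\cdot M^\top N$; applying this with $u = d_1(a_i\tensor a_i) - d_2\Phi$, $v = d_1(a_j\tensor a_j) - d_2\Phi$, $M = B_i$, $N = B_j$ immediately yields
\[
  S_i^\top S_j \;=\; \iprod{\,d_1(a_i\tensor a_i) - d_2\Phi,\; d_1(a_j\tensor a_j) - d_2\Phi\,}\cdot B_i^\top B_j\,.
\]

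The remaining step is to expand the scalar inner product into its four terms. The term $d_1^2\iprod{a_i\tensor a_i,\,a_j\tensor a_j}$ equals $d_1^2\iprod{a_i,a_j}^2$ by the standard Kronecker identity $\iprod{x\tensor x, y\tensor y} = \iprod{x,y}^2$. Each cross term $-d_1 d_2\iprod{a_i\tensor a_i,\Phi}$ equals $-d_1 d_2\|a_i\|^2 = -d_1 d_2$, since $\iprod{a\tensor a,\Phi} = \sum_{k\le d}\iprod{a,e_k}^2 = \|a\|^2$ and the $a_i$ are unit vectors (likewise for the $a_j$ term), contributing $-2d_1 d_2$ in total. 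Finally $d_2^2\iprod{\Phi,\Phi} = d_2^2 d$ because $\Phi = \sum_{k\le d} e_k^{\tensor 2}$ is a sum of $d$ orthonormal vectors. Collecting the four contributions gives the coefficient $d_1^2\iprod{a_i,a_j}^2 - 2d_1 d_2 + d_2^2 d$, which is exactly the claimed expression.

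There is no real obstacle here: the only point requiring any care is keeping track of the normalization $\|a_i\|^2 = 1$, which is what makes the two cross terms reduce to the clean constant $-2d_1 d_2$ (without it one would carry $\|a_i\|^2$, $\|a_j\|^2$ factors), and remembering that $\Phi$ has squared norm $d$ rather than $1$. The bilinear-form identity $(u\tensor M)^\top(v\tensor N) = \iprod{u,v}M^\top N$ is the one piece of tensor-reshaping bookkeeping worth stating explicitly, but it is routine. I would present the whole argument in three or four displayed lines.
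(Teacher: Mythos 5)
Your proposal is correct and follows exactly the paper's route: substitute the closed form from \cref{fact:Si-explicit}, pull the Kronecker factor $B_i^\top B_j$ out via $(u\ot M)^\top(v\ot N) = \iprod{u,v}M^\top N$, and simplify the scalar using $\Phi^\top(a_i\ot a_i)=1$ and $\Phi^\top\Phi = d$. You merely spell out the four-term expansion that the paper compresses into a single step.
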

\begin{proof}
Expanding via \cref{fact:Si-explicit}, 
  \begin{align*}
    S_i^\top S_j & = [(d_1 (a_i \tensor a_i) - d_2 \Phi) \tensor B_i]^\top [(d_1 (a_j \tensor a_j) - d_2 \Phi) \tensor B_j]\\
    & = (d_1^2 \iprod{a_i,a_j}^2 - 2 d_1 d_2 + d_2^2 d) B_i^\top B_j\mper
  \end{align*}
  Here we have used that $\Phi^\top (a_i \tensor a_i) = \|a_i\|^2 = 1$ and that $\Phi^\top \Phi = \|\Phi\|^2 = d$.
\end{proof}

\begin{fact}
  \label[fact]{fact:mode-swap-unitary}
  $P = P^\top = P^{-1}$ and hence $P^2 = \Id$
\end{fact}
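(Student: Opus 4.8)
The plan is to verify each of the three claimed identities directly from the coordinate description $(Px)_{ijk} = x_{ikj}$. First I would observe that $P$ is the permutation matrix associated with the bijection $\tau$ of index triples sending $(i,j,k)$ to $(i,k,j)$; concretely, $P = \sum_{i,j,k} e_{(i,k,j)} \transpose{e_{(i,j,k)}}$ where $e_{(i,j,k)}$ denotes the standard basis vector of $(\R^d)^{\ot 3}$ indexed by $(i,j,k)$.

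From this I would compute $P^2$ by applying the coordinate rule twice: for any $x$, $(P^2 x)_{ijk} = (Px)_{ikj} = x_{ijk}$, since swapping the second and third modes is an involution (i.e. $\tau \circ \tau = \mathrm{id}$). Hence $P^2 = \Id$, which in particular shows $P^{-1} = P$. Next I would use the fact that any permutation matrix is orthogonal: the columns of $P$ are distinct standard basis vectors, so $\transpose{P} P = \Id$, giving $\transpose{P} = P^{-1}$. Combining the two displays, $\transpose{P} = P^{-1} = P$, which is exactly the claim $P = \transpose{P} = P^{-1}$; the assertion $P^2 = \Id$ then follows immediately (or was already established directly).

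There is no real obstacle here — the statement is a bookkeeping fact about the mode-swap operator, and the only thing to be careful about is consistency of the indexing convention $(Px)_{ijk} = x_{ikj}$ with the expression for $P$ as a sum of rank-one matrices, so that "applying $P$ twice" is correctly identified with composing the underlying transposition of modes with itself.
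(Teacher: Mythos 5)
Your proof is correct and complete. The paper itself offers no argument here (it says only ``Exercise''), so there is nothing to compare against; your verification — involution of the mode-swap giving $P^2=\Id$, and orthogonality of a permutation matrix giving $\transpose{P}=P^{-1}$ — is exactly the routine check the authors left to the reader.
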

\begin{proof}
  Exercise.
\end{proof}

\begin{fact}
  \label[fact]{fact:mode-swap-1}
  For any matrics $B,B' \in \R^{d \times m}$, we have $(\Phi \tensor B)^\top P (\Phi \tensor B') = B^\top B'$
\end{fact}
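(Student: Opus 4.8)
The plan is to verify the identity directly, entrywise, by carefully unwinding the tensor-index conventions. First write $\Phi = \sum_{i \le d} e_i \tensor e_i$, so that $\Phi \tensor B$ is the $d^3 \times m$ matrix whose $\ell$-th column is $\sum_{i \le d} e_i \tensor e_i \tensor B_{\cdot,\ell}$, where $B_{\cdot,\ell} \in \R^d$ denotes the $\ell$-th column of $B$; equivalently, the entry of $\Phi \tensor B$ in row $(i,j,k) \in [d]^3$ and column $\ell$ is $[\,i = j\,]\cdot B_{k,\ell}$.

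Next I would apply $P$. Since $P$ swaps the second and third tensor modes via $(Px)_{ijk} = x_{ikj}$, we get $(P(\Phi \tensor B'))_{(i,j,k),\ell'} = (\Phi \tensor B')_{(i,k,j),\ell'} = [\,i = k\,]\cdot B'_{j,\ell'}$; equivalently, the $\ell'$-th column of $P(\Phi \tensor B')$ is $\sum_{i \le d} e_i \tensor B'_{\cdot,\ell'} \tensor e_i$.

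Then the $(\ell,\ell')$ entry of $(\Phi \tensor B)^\top P(\Phi \tensor B')$ is the sum over $(i,j,k) \in [d]^3$ of $[\,i = j\,]\,B_{k,\ell}\cdot[\,i = k\,]\,B'_{j,\ell'}$. The product of indicators $[\,i=j\,][\,i=k\,]$ forces $i = j = k$, so the triple sum collapses to $\sum_{i \le d} B_{i,\ell}\,B'_{i,\ell'} = (B^\top B')_{\ell,\ell'}$, which is exactly the claim. (Equivalently, one can argue coordinate-free by expanding both copies of $\Phi$ and using $\iprod{e_i \tensor e_i \tensor B_{\cdot,\ell},\, e_j \tensor B'_{\cdot,\ell'} \tensor e_j} = \iprod{e_i,e_j}\iprod{e_i,B'_{\cdot,\ell'}}\iprod{B_{\cdot,\ell},e_j}$ and summing over $i,j$, where orthonormality of $\{e_i\}$ kills all terms with $i \ne j$.) There is no genuine obstacle here; the only point requiring care is to match the mode-swap convention $(Px)_{ijk} = x_{ikj}$ from the notation list with the placement of the two copies of $e_i$ coming from the two factors $\Phi$ — once $P$ has ``rotated'' the second $\Phi$ into modes $1$ and $3$, orthonormality of the standard basis forces the single surviving summation index and yields $B^\top B'$.
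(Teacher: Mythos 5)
Your proof is correct and takes essentially the same approach as the paper: both expand the $(\ell,\ell')$ entry of $(\Phi\ot B)^\top P(\Phi\ot B')$ as a sum over $[d]^3$, and observe that the two indicator factors coming from the two copies of $\Phi$ (together with the mode swap) force all three indices to coincide, collapsing the sum to $\sum_i B_{i\ell}B'_{i\ell'}$.
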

\begin{proof}
  The $s,t$-th entry of $(\Phi \tensor B)^\top P (\Phi \tensor B')$ is given by $\sum_{uvw \leq d} \Phi_{uv} B_{sw} \Phi_{uw} B'_{tv}$. The only nonzero terms come from $u=v=w$, because otherise $\Phi_{uv}\Phi_{uw} = 0$. So this simplifes to $\sum_{u \leq d} B_{su} B'_{tu} = (B^\top B')_{st}$.
\end{proof}

\begin{fact}
  \label[fact]{fact:mode-swap-2}
  For any vectors $a,a' \in \R^d$ and matrices $B,B' \in \R^{d \times (d-1)}$, we have $(a \tensor a \tensor B)^\top P (a' \tensor a' \tensor B') = \iprod{a,a'} (B^\top a')([B']^\top a)^\top$
\end{fact}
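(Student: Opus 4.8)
The plan is a direct entry-by-entry computation, exactly parallel to the proof of \cref{fact:mode-swap-1}. Write $B^{(s)}$ for the $s$-th column of $B$ and $(B')^{(t)}$ for the $t$-th column of $B'$, so that the $(s,t)$ entry of $\transpose{(a \tensor a \tensor B)}\, P\, (a' \tensor a' \tensor B')$ is the scalar $\transpose{(a \tensor a \tensor B^{(s)})}\, P\, (a' \tensor a' \tensor (B')^{(t)})$.

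First I would push $P$ onto the right-hand vector: since $P$ interchanges the second and third tensor modes, $P(a' \tensor a' \tensor (B')^{(t)}) = a' \tensor (B')^{(t)} \tensor a'$. Now the inner product of $a \tensor a \tensor B^{(s)}$ with this vector splits as a product over the three modes, giving $\iprod{a,a'}\cdot\iprod{a,(B')^{(t)}}\cdot\iprod{B^{(s)},a'}$. Identifying $\iprod{B^{(s)},a'} = (B^\top a')_s$ and $\iprod{a,(B')^{(t)}} = ((B')^\top a)_t$, this equals $\iprod{a,a'}\,(B^\top a')_s\,((B')^\top a)_t$, which is exactly the $(s,t)$ entry of the rank-one matrix $\iprod{a,a'}\,(B^\top a')\,((B')^\top a)^\top$. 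Collecting all $s \in [d-1]$ and $t \in [d-1]$ yields the stated identity.

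There is essentially no obstacle; the only care needed is tracking which mode each of $a$, $a'$, $B$, $B'$ occupies after the swap, and checking that the claimed right-hand side $(B^\top a')([B']^\top a)^\top$ — a length-$(d-1)$ column vector times a length-$(d-1)$ row vector — has the $(d-1)\times(d-1)$ shape matching $\transpose{(a\tensor a\tensor B)}\,P\,(a'\tensor a'\tensor B')$.
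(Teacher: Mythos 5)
Your proof is correct and follows essentially the same direct computational route as the paper: the paper peels off the first mode (contributing $\iprod{a,a'}$) and then unrolls the remaining two-mode contraction into an explicit coordinate sum, while you keep all three modes and invoke the factorization of the inner product over tensor modes after applying $P$ to the right-hand vector — a minor stylistic difference, not a different argument.
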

\begin{proof}
  Since $P$ does not touch the first mode of $(R^d)^{\tensor 3}$ it is enough to compute $(a \tensor B)^\top P'(a' \tensor B')$ where $P'$ is the mode-swap matrix for $(\R^d)^{\tensor 2}$.
  The $s,t$-th entry of this matrix is given by 
  \[
  \sum_{uv \leq d} a_u B_{sv} a'_v B'_{tu} = (\sum_{u \leq d} a_v B'_{tu})(\sum_{v \leq d} a'_v B_{sv})\mper
  \]
\end{proof}

\begin{fact}
  \label[fact]{fact:mode-swap-3}
  For any vector $a \in \R^d$ and matrices $B,B' \in \R^{d \times m}$, we have $(a \tensor a \tensor B)^\top P (\Phi \tensor B') = (B^\top a) ([B']^\top a)^\top$
\end{fact}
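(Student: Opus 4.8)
The plan is to follow the same index-chasing recipe used for \cref{fact:mode-swap-1} and \cref{fact:mode-swap-2}: expand the $(s,t)$-th entry of $(a \tensor a \tensor B)^\top P (\Phi \tensor B')$ as an explicit sum over indices in $[d]$, exploit the sparsity of $P$ and of $\Phi$ to collapse that sum, and read off the resulting outer product.

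Concretely, I would first record two coordinate facts. Indexing $(\R^d)^{\tensor 3}$ by triples, the mode-swap matrix satisfies $P_{(u,v,w),(u',v',w')} = 1$ exactly when $u=u'$, $v=w'$, and $w=v'$, and is $0$ otherwise --- this is just the identity $(Px)_{ijk}=x_{ikj}$ written in coordinates, with the convention fixed in \cref{sec:random-tensors}. And $\Phi_{uv} = 1$ when $u=v$ and $0$ otherwise. Then the $(s,t)$-th entry of $(a \tensor a \tensor B)^\top P (\Phi \tensor B')$ equals
\[
  \sum_{u,v,w}\ \sum_{u',v',w'} a_u\, a_v\, B_{ws}\; P_{(u,v,w),(u',v',w')}\; \Phi_{u'v'}\, B'_{w't}\mper
\]
The three constraints from $P$ force $u'=u$, $w'=v$, $v'=w$, so the factor $\Phi_{u'v'}$ becomes $\Phi_{uw}$, which in turn forces $u=w$; the remaining sum is
\[
  \sum_{v,w} a_w\, a_v\, B_{ws}\, B'_{vt} \;=\; \Bigparen{\sum_w a_w B_{ws}}\Bigparen{\sum_v a_v B'_{vt}} \;=\; (B^\top a)_s\,([B']^\top a)_t\mper
\]
Since this holds for every $s$ and $t$, we conclude $(a \tensor a \tensor B)^\top P (\Phi \tensor B') = (B^\top a)([B']^\top a)^\top$.

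The argument is entirely routine; the only step needing care is fixing the index convention for $P$ (which mode moves where under the swap), after which everything is bookkeeping. The one point worth flagging is the contrast with \cref{fact:mode-swap-2}: there, contracting the block $a' \tensor a'$ into the last two modes yields the extra scalar $\iprod{a,a'}$, whereas here the Kronecker-delta structure of $\Phi$ instead merely collapses a summation index (identifying it with the row index of $B$), which is exactly why no inner-product factor survives in this case.
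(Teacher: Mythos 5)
Your proof is correct and uses essentially the same approach as the paper's: expand the $(s,t)$ entry as a six-index sum, collapse three indices using the permutation constraints of $P$, then collapse one more using the Kronecker-delta structure of $\Phi$, and factor the resulting two-index sum into the stated outer product. The paper's version is terser (it writes the sum with $P$ already absorbed), but the underlying computation is identical.
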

\begin{proof}
  The $s,t$-th entry of the product is given by
  \[
  \sum_{uvw \leq d} a_u a_v B_{sw} \Phi_{uw} B'_{tv} = \sum_{uv} a_u a_v B_{su} B'_{tv} = (B^\top a)_s ([B']^\top a)_t
  \]
\end{proof}

\begin{fact}
  \label[fact]{fact:SiPSj-explicit}
  $S_i^\top P S_j = d_1^2 \iprod{a_i, a_j} (B_i^\top a_j)(B_j^\top a_i)^\top + d_2^2 B_i^\top B_j$.
\end{fact}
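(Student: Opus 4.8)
The plan is to expand $S_i^\top P S_j$ bilinearly using the explicit form of $S_i$ from \cref{fact:Si-explicit}, namely $S_i = (d_1(a_i \tensor a_i) - d_2 \Phi) \tensor B_i$, and then to identify each of the four resulting terms via the mode-swap identities \cref{fact:mode-swap-1,fact:mode-swap-2,fact:mode-swap-3}. First I would write
\[
S_i^\top P S_j = \bigl[(d_1(a_i \tensor a_i) - d_2\Phi)\tensor B_i\bigr]^\top P \bigl[(d_1(a_j\tensor a_j) - d_2\Phi)\tensor B_j\bigr],
\]
which splits into four terms by bilinearity. The $d_1^2$ term is $(a_i\tensor a_i\tensor B_i)^\top P(a_j\tensor a_j\tensor B_j)$, which \cref{fact:mode-swap-2} evaluates to $\iprod{a_i,a_j}(B_i^\top a_j)(B_j^\top a_i)^\top$. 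The $d_2^2$ term is $(\Phi\tensor B_i)^\top P(\Phi\tensor B_j)$, which \cref{fact:mode-swap-1} evaluates to $B_i^\top B_j$.

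The two cross terms both vanish. The $-d_1 d_2$ term $(a_i\tensor a_i\tensor B_i)^\top P(\Phi\tensor B_j)$ equals $(B_i^\top a_i)(B_j^\top a_i)^\top$ by \cref{fact:mode-swap-3}, and $B_i^\top a_i = 0$ since the columns of $B_i$ span the orthogonal complement of $a_i$ in $\R^d$. The other $-d_1 d_2$ term $(\Phi\tensor B_i)^\top P(a_j\tensor a_j\tensor B_j)$ is the transpose of $(a_j\tensor a_j\tensor B_j)^\top P^\top(\Phi\tensor B_i) = (a_j\tensor a_j\tensor B_j)^\top P(\Phi\tensor B_i)$ — using $P = P^\top$ from \cref{fact:mode-swap-unitary} — which by \cref{fact:mode-swap-3} equals $(B_j^\top a_j)(B_i^\top a_j)^\top$, and $B_j^\top a_j = 0$. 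Collecting the two surviving contributions, $d_1^2$ times the first and $d_2^2$ times the second, yields the claimed identity.

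Since each step is a direct application of an already-established identity, there is no substantive obstacle here; the only points demanding care are tracking the transpose in the asymmetric cross term and observing that the orthogonality relation $B_i^\top a_i = 0$ annihilates the mixed $a_i \tensor a_i$ / $\Phi$ contributions — which is precisely why the final expression, in contrast to $S_i^\top S_j$ in \cref{fact:SiSj-explicit}, contains no term proportional to $d_1 d_2$.
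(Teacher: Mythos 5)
Your proof is correct and follows the same route as the paper's: expand bilinearly via \cref{fact:Si-explicit}, evaluate each of the four resulting terms with \cref{fact:mode-swap-1,fact:mode-swap-2,fact:mode-swap-3}, and observe that the two cross terms vanish by $B_i^\top a_i = B_j^\top a_j = 0$. The paper states the two cross-term evaluations tersely, while you are slightly more careful in reducing the $(\Phi\tensor B_i)^\top P(a_j\tensor a_j\tensor B_j)$ term to the form \cref{fact:mode-swap-3} actually covers (via $P = P^\top$ and a transpose), but the argument is substantively identical.
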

\begin{proof}
  Expanding $S_i,S_j$ using \cref{fact:Si-explicit}, we obtain
  \begin{align*}
  S_i^\top P S_j & = [d_1(a_i \tensor a_i) \tensor B_i]^\top P  [d_1 (a_j \tensor a_j) \tensor B_j]\\
  & - [d_1(a_i \tensor a_i) \tensor B_i]^\top P [d_2 \Phi \tensor B_j]\\
  & - [d_2 \Phi \tensor B_i]^\top P [d_1 (a_j \tensor a_j) \tensor B_j]\\
  & + [d_2 \Phi \tensor B_i]^\top P [d_2 \Phi \tensor B_j]
  \end{align*}
  Simplifying the terms individually using \cref{fact:mode-swap-1}, \cref{fact:mode-swap-2}, and \cref{fact:mode-swap-3},
  \begin{align*}
  (a_i \tensor a_i \tensor B_i)^\top P (a_j \tensor a_j \tensor B_j) & = \iprod{a_i, a_j} (B_i^\top a_j)(B_j^\top a_i)^\top\\
  (\Phi \tensor B_i)^\top P (\Phi \tensor B_j) & = B_i^\top B_j\\
  (a_i \tensor a_i \tensor B_i)^\top P (\Phi \tensor B_j) & = 0\\
  (\Phi \tensor B_i)^\top P (a_j \tensor a_j \tensor B_j) & = 0 \mper
  \end{align*}
  where the last two equalities follow from $B_i^\top a_i = 0, B_j^\top a_j = 0$.
\end{proof}

\begin{fact}
  \label[fact]{fact:RiRj-explicit}
$R_i^\top R_j = 2(d_1^2 \iprod{a_i, a_j}^2 - 2 d_1 d_2 + d_2^2 (d-1) ) B_i^\top B_j - 2 d_1^2 \iprod{a_i, a_j} (B_i^\top a_j)(B_j^\top a_i)^\top$,
  and in particular,
    $R_i^\top R_i = 2(d_1^2 - 2 d_1 d_2 + d_2^2(d-1) ) \Id$.
\end{fact}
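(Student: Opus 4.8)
The plan is to reduce the whole computation to the two product identities already established, \cref{fact:SiSj-explicit} and \cref{fact:SiPSj-explicit}, using a single observation about the mode-swap matrix. First I would write $R_i = (\Id - P) S_i$, so that
\[
  R_i^\top R_j = S_i^\top (\Id - P)^\top (\Id - P) S_j \mper
\]
By \cref{fact:mode-swap-unitary} we have $P = P^\top$ and $P^2 = \Id$, so $(\Id - P)$ is an involution scaled by $2$: $(\Id - P)^\top(\Id - P) = (\Id - P)^2 = \Id - 2P + P^2 = 2(\Id - P)$. Hence the four-term expansion collapses to
\[
  R_i^\top R_j = 2\, S_i^\top (\Id - P) S_j = 2\bigl(S_i^\top S_j - S_i^\top P S_j\bigr) \mper
\]

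Next I would substitute the explicit formulas. \cref{fact:SiSj-explicit} gives $S_i^\top S_j = (d_1^2 \iprod{a_i,a_j}^2 - 2 d_1 d_2 + d_2^2 d)\, B_i^\top B_j$, and \cref{fact:SiPSj-explicit} gives $S_i^\top P S_j = d_1^2 \iprod{a_i,a_j}(B_i^\top a_j)(B_j^\top a_i)^\top + d_2^2\, B_i^\top B_j$. Subtracting, the coefficients of $B_i^\top B_j$ combine as $d_1^2\iprod{a_i,a_j}^2 - 2 d_1 d_2 + d_2^2 d - d_2^2 = d_1^2\iprod{a_i,a_j}^2 - 2 d_1 d_2 + d_2^2(d-1)$, and multiplying through by the leading factor $2$ yields exactly
\[
  R_i^\top R_j = 2\bigl(d_1^2 \iprod{a_i,a_j}^2 - 2 d_1 d_2 + d_2^2(d-1)\bigr) B_i^\top B_j - 2 d_1^2 \iprod{a_i,a_j}(B_i^\top a_j)(B_j^\top a_i)^\top \mper
\]
Finally I would set $i = j$: since the columns of $B_i$ form an orthonormal basis for the orthogonal complement of $a_i$, we have $B_i^\top B_i = \Id_{d-1}$ and $B_i^\top a_i = 0$, so the rank-one term vanishes and $\iprod{a_i,a_i} = 1$, leaving $R_i^\top R_i = 2(d_1^2 - 2 d_1 d_2 + d_2^2(d-1))\Id$.

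This argument is entirely routine bookkeeping and has no real obstacle; the one point worth stating cleanly is that $P$ being an involution makes $(\Id - P)$ proportional to a projector, which is what lets the earlier product identities be applied verbatim rather than re-derived. The only places to be careful are dimensional: the $d_2^2 d$ from $S_i^\top S_j$ and the $-d_2^2$ from $S_i^\top P S_j$ must be assembled into $d_2^2(d-1)$, and $B_i^\top B_i$ is the $(d-1)\times(d-1)$ identity, not the $d \times d$ one.
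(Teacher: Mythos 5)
Your proof is correct and follows the same route as the paper: expand $R_i^\top R_j = (S_i - PS_i)^\top(S_j - PS_j)$, use $P=P^\top$ and $P^2=\Id$ from \cref{fact:mode-swap-unitary} to collapse it to $2(S_i^\top S_j - S_i^\top P S_j)$, and then substitute \cref{fact:SiSj-explicit} and \cref{fact:SiPSj-explicit}. Your phrasing of the collapse as $(\Id-P)^\top(\Id-P)=2(\Id-P)$ is a slightly cleaner packaging of the same four-term expansion, and your explicit check of the $i=j$ case (using $B_i^\top B_i = \Id_{d-1}$, $B_i^\top a_i = 0$, $\|a_i\|=1$) is correct.
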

\begin{proof}
  By the definition of $R_i, R_j$ we expand
  \[
    R_i^\top R_j = (S_i - P S_i)^\top (S_j - P S_j)\mper
  \]
  We can expand the product and use \cref{fact:mode-swap-unitary} to get
  \[
    R_i^\top R_j = 2 S_i^\top S_j - 2 S_i^\top P S_j\mper
  \]
  Applying \cref{fact:SiSj-explicit} and \cref{fact:SiPSj-explicit}, we get
  \[
    R_i^\top R_j = 2 (d_1^2 \iprod{a_i,a_j}^2 - 2 d_1 d_2 + d_2^2 d) B_i^\top B_j - 2 d_1^2 \iprod{a_i, a_j} (B_i^\top a_j)(B_j^\top a_i)^\top - 2 d_2^2 B_i^\top B_j\mper
  \]
  Simplifying finishes the proof.
 \end{proof}

\subsection{Naive Spectral Norm Estimate}
We will need an upper bound on the spectral norm of the matrix $R$, which we obtain by a matrix Chernoff bound.
To prove that, we need spectral bounds on certain second moments.

\begin{fact}
  \label[fact]{fact:Si-spectral}
  For all $i \leq n$,
  \[
  \| \E S_i S_i^\top \| \leq 1\mper
  \]
\end{fact}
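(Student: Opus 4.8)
The plan is to use the explicit factorization of $S_i$ from \cref{fact:Si-explicit} together with the observation that $B_iB_i^\top$ is a projector, which collapses the whole statement into a one-line second-moment computation. First I would recall from \cref{fact:pseudoinv} that $A^{-1/2} = \Sigma^{-1/2}\tensor\Id$, so that
\[
  S_i = A^{-1/2}(a_i \tensor a_i \tensor B_i) = \bigl(\Sigma^{-1/2}(a_i \tensor a_i)\bigr) \tensor B_i\mper
\]
Writing $v(a) \defeq \Sigma^{-1/2}(a \tensor a) \in \R^{d^2}$, this gives $S_i = v(a_i) \tensor B_i$ and hence $S_i S_i^\top = \bigl(v(a_i) v(a_i)^\top\bigr) \tensor \bigl(B_i B_i^\top\bigr)$.

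Next I would use that $B_i$ has orthonormal columns spanning $a_i^\perp$, so that $B_iB_i^\top = \Id - a_i a_i^\top$ is the orthogonal projector onto $a_i^\perp$ — in particular it is a fixed PSD matrix, depending only on $a_i$ and not on the random choice of basis, with $B_iB_i^\top \preceq \Id$. Since $v(a_i)v(a_i)^\top \succeq 0$ and tensoring a fixed PSD matrix against a PSD inequality preserves it, we get $S_i S_i^\top \preceq \bigl(v(a_i)v(a_i)^\top\bigr) \tensor \Id$ pointwise in $a_i$. Taking expectations, which preserves the Löwner order, yields
\[
  \E S_i S_i^\top \;\preceq\; \Bigl(\E_a\, v(a) v(a)^\top\Bigr) \tensor \Id\mper
\]

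It then remains only to compute $\E_a v(a)v(a)^\top$. Pulling out the deterministic matrix $\Sigma^{-1/2}$ and using that $\E_a (a \tensor a)(a \tensor a)^\top = \Sigma$ by definition of $\Sigma$, we obtain $\E_a v(a)v(a)^\top = \Sigma^{-1/2}\,\Sigma\,\Sigma^{-1/2} = \Pisym$, the projector onto the symmetric subspace of $\R^{d^2}$ (here we use that $\img(\Sigma)$ is exactly that subspace and that $\Sigma^{-1/2}$ denotes the square root of the pseudoinverse, consistent with \cref{fact:pseudoinv}). Therefore $\E S_i S_i^\top \preceq \Pisym \tensor \Id \preceq \Id$, and in particular $\|\E S_i S_i^\top\| \le 1$.

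There is no real obstacle here; it is a short calculation. The only two points deserving a line of care are (i) that $B_iB_i^\top$ does not depend on the random choice of orthonormal basis, so the conditional expectation over $B_i$ given $a_i$ is literally $\Id - a_ia_i^\top$, and (ii) the identity $\Sigma^{-1/2}\Sigma\Sigma^{-1/2} = \Pisym$, which is immediate either from \cref{fact:pseudoinv} or directly from the fact that $\Sigma$ is PSD with image equal to the symmetric subspace.
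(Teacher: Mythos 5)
Your proof is correct and follows essentially the same route as the paper: both arguments reduce to the observation that $B_iB_i^\top \preceq \Id$ (since it is a projector onto $a_i^\perp$), push the expectation through, and use the definition of $A$ (equivalently $\Sigma$) to see that the resulting matrix is a projector, hence bounded by $\Id$. The only cosmetic difference is that you factor $A^{-1/2} = \Sigma^{-1/2}\otimes\Id$ first and exhibit the limit as $\Pisym\otimes\Id$, whereas the paper leaves the computation in terms of $A^{-1/2}AA^{-1/2}$; these are the same matrix.
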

\begin{proof}
  Expanding by the definition of $S_i$,
  \[
  \E S_i S_i^\top = A^{-1/2} \E (a_ia_i^\top \tensor a_ia_i^\top \tensor B_i B_i^\top) A^{-1/2} \preceq A^{-1/2} (\E a_ia_i^\top \tensor a_ia_i^\top \tensor \Id) A^{-1/2} \preceq \Id.
  \]
  since $B_iB_i^\top = \Id - a_i a_i^\top \preceq \Id$.
  The last equality uses the definition of $A$.
\end{proof}

\begin{fact}
  \label[fact]{fact:Ri-spectral}
  For all $i \leq n$,
  \[
   \| \E R_i R_i^\top \| \leq 4\mper
  \]
\end{fact}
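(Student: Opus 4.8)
The plan is to factor the fixed linear operator out of $R_i$ and take the expectation \emph{before} bounding any norms. Since $R_i = S_i - P S_i = (\Id - P)S_i$, we have $R_i R_i^\top = (\Id - P)\,S_i S_i^\top\,(\Id - P)^\top$, and because $P = P^\top$ by \cref{fact:mode-swap-unitary}, this is $(\Id - P)\,S_i S_i^\top\,(\Id - P)$. The matrix $\Id - P$ is deterministic, so linearity of expectation gives $\E R_i R_i^\top = (\Id - P)\,(\E S_i S_i^\top)\,(\Id - P)$. Now I would simply apply submultiplicativity of the spectral norm: $\|\E R_i R_i^\top\| \le \|\Id - P\|^2 \cdot \|\E S_i S_i^\top\|$. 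The first factor is at most $(\|\Id\| + \|P\|)^2 = 4$ by the triangle inequality, since $P$ is a permutation matrix (equivalently, $P$ is an orthogonal involution by \cref{fact:mode-swap-unitary}, so $\Id - P$ has eigenvalues in $\{0,2\}$ and $\|\Id - P\| = 2$). The second factor is at most $1$ by \cref{fact:Si-spectral}. Multiplying yields $\|\E R_i R_i^\top\| \le 4$.

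The one conceptual point worth flagging — and the only place a careless argument could go wrong — is that one must not instead bound $\E \|R_i R_i^\top\|$. Indeed, for a fixed realization, $\|R_i R_i^\top\| = \|R_i^\top R_i\| = 2(d_1^2 - 2d_1 d_2 + d_2^2(d-1)) = \Theta(d^2)$ by \cref{fact:RiRj-explicit}, which is far larger than $4$; the bound we want only holds because of the cancellation that shrinks $\E S_i S_i^\top$ to spectral norm $\le 1$, which in turn comes from the identity $\E(a_ia_i^\top \tensor a_ia_i^\top \tensor \Id) = A$ together with $B_iB_i^\top \preceq \Id$. So the order of operations (expectation first, norm bound second) is the whole content of the proof; beyond that there is no real obstacle, only the routine verification that $\E R_i R_i^\top$ is genuinely $(\Id - P)(\E S_i S_i^\top)(\Id - P)$, which is immediate from linearity.
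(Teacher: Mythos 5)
Your proof is correct and is essentially the same as the paper's: both take the expectation inside, isolate the deterministic permutation factors, and bound via $\|\E S_iS_i^\top\|\le 1$ from \cref{fact:Si-spectral} together with $\|P\|\le 1$. The only cosmetic difference is that you factor $R_i = (\Id-P)S_i$ and use $\|\Id - P\|\le 2$ once, whereas the paper expands $\E R_iR_i^\top$ into four terms and applies the triangle inequality term-by-term; both yield the same factor of $4$.
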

\begin{proof}
  Follows from \cref{fact:Si-spectral} and the definition of $R_i$, and the fact that $\|P\| \leq 1$, by the manipulations
  \[
  \| \E (R_i R_i^\top) \| =  \| \E S_i S_i^\top - P S_i S_i^\top - S_i S_i^\top P + P S_i S_i^\top P \| \leq 4 \| \E S_i S_i^\top \| \leq 4\mper
  \]
\end{proof}

\begin{fact}
  \label[fact]{fact:R-spectral-1}
  $\|\E RR^\top \| \leq O(n)$
\end{fact}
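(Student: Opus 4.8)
The plan is to exploit the block-column structure of $R$ and reduce the claim to the per-block second-moment bound already established in \cref{fact:Ri-spectral}. First I would write $R = [R_1, \ldots, R_n]$ with each $R_i$ the $d^3 \times (d-1)$ block of columns indexed by $i$; then ordinary block matrix multiplication gives $R R^\top = \sum_{i \leq n} R_i R_i^\top$, and hence $\E R R^\top = \sum_{i \leq n} \E R_i R_i^\top$.

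Next I would observe that each summand $\E R_i R_i^\top$ is positive semidefinite, being the expectation of the PSD matrix $R_i R_i^\top$. For sums of PSD matrices the operator norm is subadditive, since $\|M\| = \max_{\|x\| = 1} x^\top M x$ for PSD $M$ and the map $M \mapsto x^\top M x$ is linear. Combining this with \cref{fact:Ri-spectral}, which gives $\|\E R_i R_i^\top\| \leq 4$ for every $i$, yields
\[
\|\E R R^\top\| \;\leq\; \sum_{i \leq n} \|\E R_i R_i^\top\| \;\leq\; 4n \;=\; O(n)\mper
\]

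There is no real obstacle here. The only ingredients are the block decomposition of $R R^\top$, subadditivity of the operator norm on PSD matrices, and the uniform bound from \cref{fact:Ri-spectral} (itself a consequence of \cref{fact:Si-spectral} and $\|P\| \leq 1$). In particular, independence of the $a_i$ is not used for this estimate; it will be needed only later, for the matrix-Chernoff argument controlling $\|R R^\top\|$ itself rather than its expectation.
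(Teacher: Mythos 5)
Your proof is correct and follows essentially the same route as the paper, which also decomposes $\E RR^\top = \sum_{i\le n}\E R_iR_i^\top$ and invokes the triangle inequality together with \cref{fact:Ri-spectral}. The only cosmetic difference is that you phrase subadditivity of the norm as a PSD-specific fact, whereas the triangle inequality $\|\sum_i M_i\| \le \sum_i \|M_i\|$ holds for arbitrary matrices and is all that is needed.
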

\begin{proof}
  Since $\E RR^\top = \E \sum_{i \leq n} R_i R_i^\top$, this follows from the triangle inequality and \cref{fact:Ri-spectral}.
\end{proof}

\begin{fact}
  \label[fact]{fact:R-spectral-2}
  $\| \E R^\top R \| \leq 2 (d_1^2 - 2 d_1 d_2 + d_2^2(d-1)) \leq O(d^2)$
\end{fact}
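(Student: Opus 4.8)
The plan is to compute $\E R^{\top}R$ essentially exactly. View $R^{\top}R$ as an $n(d-1)\times n(d-1)$ block matrix whose $(i,j)$ block is $R_i^{\top}R_j$. By \cref{fact:RiRj-explicit} the diagonal blocks are \emph{deterministic}, $R_i^{\top}R_i = 2(d_1^2 - 2d_1 d_2 + d_2^2(d-1))\,\Id_{d-1}$, so the whole question reduces to showing that every off-diagonal block has zero expectation. Granting that, $\E R^{\top}R = 2(d_1^2 - 2d_1 d_2 + d_2^2(d-1))\,\Id_{n(d-1)}$, whence $\|\E R^{\top}R\| = 2(d_1^2 - 2d_1 d_2 + d_2^2(d-1))$, and since $d_1 = \Theta(d)$ and $d_2 = \Theta(1)$ this is $O(d^2)$.

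For an off-diagonal block, fix $i \neq j$ and use the explicit formula of \cref{fact:RiRj-explicit}:
\[
R_i^{\top}R_j = 2\bigl(d_1^2 \iprod{a_i,a_j}^2 - 2 d_1 d_2 + d_2^2(d-1)\bigr) B_i^{\top}B_j - 2 d_1^2 \iprod{a_i,a_j}\,(B_i^{\top}a_j)(B_j^{\top}a_i)^{\top}\mper
\]
I would condition on $a_i, B_i, a_j$, leaving only $B_j$ random. The scalar coefficients, and the factors $B_i^{\top}$ and $B_i^{\top}a_j$, do not involve $B_j$, so it is enough to check $\E[B_j \mid a_j] = 0$ and $\E[B_j^{\top}a_i \mid a_j] = 0$. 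Both follow from a sign symmetry: negating every column of $B_j$ yields another orthonormal basis of $a_j^{\perp}$, so $B_j$ and $-B_j$ have the same conditional law given $a_j$, forcing these conditional expectations to vanish. (Alternatively: once $a_i$ is fixed, $R_i$ is a linear function of $B_i$ by \cref{fact:Si-explicit}, so $R_i^{\top}R_j$ is bilinear in $(B_i,B_j)$ and averaging over $B_j$ kills it.) Taking the outer expectation over $a_i,B_i,a_j$, and using the independence of $(a_j,B_j)$ from $(a_i,B_i)$, gives $\E R_i^{\top}R_j = 0$.

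I do not anticipate a genuine obstacle: the argument is bookkeeping plus a one-line symmetry observation, with no concentration or perturbation input. The one point worth emphasizing is conceptual — the individual off-diagonal blocks $R_i^{\top}R_j$ are nonzero random matrices of non-negligible norm, so the estimate truly relies on cancellation \emph{in expectation}; a blockwise triangle inequality over the $n^2$ blocks would only produce a bound growing with $n$ (as weak as $O(n d^2)$). This is exactly why \cref{fact:R-spectral-2} is stated separately from the cruder $\|\E RR^{\top}\| \le O(n)$ of \cref{fact:R-spectral-1}, and it is the form needed for the subsequent matrix-Chernoff argument.
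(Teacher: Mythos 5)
Your proposal is correct and follows essentially the same approach as the paper: decompose $\E R^\top R$ blockwise, note that the diagonal blocks are deterministic by \cref{fact:RiRj-explicit}, and kill the off-diagonal blocks via independence of $R_i, R_j$ together with $\E R_j = 0$ (which the paper, like you, grounds in $\E[B_j \mid a_j] = 0$). You simply spell out the sign-symmetry justification for that last fact more explicitly than the paper's rather terse ``since $R_i^\top R_j$ is independent of $R_j$ and has expectation zero.''
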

\begin{proof}
  $R^\top R$ is a block matrix with $ij$-th block being $R_i^\top R_j$. If $i \neq j$ we have $\E R_i^\top R_j = 0$, since $R_i^\top R_j$ is independent of $R_j$ and has expectation zero.
  At the same time $R_i^\top R_i = 2 (d_1^2 - 2 d_1 d_2 + d_2^2(d-1)) \Id$.
  So, $\E R^\top R = 2 (d_1^2 - 2 d_1 d_2 + d_2^2(d-1)) \Id$.
\end{proof}

\begin{fact}
  \label[fact]{fact:R-spectral}
  $\E \|R\| \leq O(\log d \cdot \max(d,\sqrt{n}))$.
\end{fact}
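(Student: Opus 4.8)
The plan is to bound the operator norm through $\|R\|^2 = \|RR^\top\| = \lambda_{\max}\big(\sum_{i=1}^n R_i R_i^\top\big)$ and apply a matrix concentration inequality: since the pairs $(a_i, B_i)$ are independent across $i \le n$, the $d^3 \times d^3$ matrices $R_i R_i^\top$ are independent and positive semidefinite, so $RR^\top$ is a sum of $n$ independent PSD matrices, a setting tailor-made for the matrix Chernoff (or matrix Bernstein) bound. This is precisely the structure exploited in the covariance-estimation argument referenced from Vershynin.

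First I would assemble the two inputs that such an inequality requires. The mean is controlled directly by \cref{fact:R-spectral-1}: $\big\|\sum_{i} \E R_i R_i^\top\big\| = \|\E RR^\top\| \le O(n)$. The uniform operator-norm bound on the summands is, remarkably, \emph{deterministic}: by \cref{fact:RiRj-explicit} we have $R_i^\top R_i = 2(d_1^2 - 2 d_1 d_2 + d_2^2(d-1))\,\Id$, so for \emph{every} realization
\[
  \|R_i R_i^\top\| = \|R_i^\top R_i\| = 2(d_1^2 - 2 d_1 d_2 + d_2^2(d-1)) = O(d^2),
\]
using $d_1 = \Theta(d)$ and $d_2 = \Theta(1)$. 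Next I would invoke Tropp's matrix Chernoff bound for a sum of independent PSD matrices in dimension $D = d^3$: writing $\mu_{\max} := \big\|\sum_i \E R_i R_i^\top\big\|$ and letting $L := O(d^2)$ be the almost-sure bound above,
\[
  \E\,\lambda_{\max}\Big(\sum_{i=1}^n R_i R_i^\top\Big) \;\le\; O(\mu_{\max}) + O\big(L \log D\big) \;=\; O\big(n + d^2 \log d\big).
\]
(Equivalently one may center $Y_i := R_i R_i^\top - \E R_i R_i^\top$ and use matrix Bernstein: since $(R_i R_i^\top)^2 = R_i(R_i^\top R_i)R_i^\top = O(d^2)\,R_i R_i^\top$, the matrix variance satisfies $\big\|\sum_i \E Y_i^2\big\| \le O(d^2)\,\|\E RR^\top\| \le O(nd^2)$, giving $\E\|RR^\top\| \le O\big(n + d\sqrt{n\log d} + d^2 \log d\big)$; either estimate suffices.)

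Finally I would pass from $RR^\top$ back to $R$ by Jensen's inequality applied to the concave map $x \mapsto \sqrt{x}$:
\[
  \E\|R\| = \E\big(\lambda_{\max}(RR^\top)\big)^{1/2} \le \big(\E\,\lambda_{\max}(RR^\top)\big)^{1/2} \le O\big(\sqrt{n} + d\sqrt{\log d}\big) \le O\big(\log d \cdot \max(d, \sqrt n)\big),
\]
which is the claimed bound. The only step requiring any care is checking the hypotheses of the matrix concentration inequality, and both of those have essentially already been done: the almost-sure boundedness is not even a probabilistic statement here, because \cref{fact:RiRj-explicit} pins $R_i^\top R_i$ to a fixed scalar multiple of the identity, and the mean bound is \cref{fact:R-spectral-1}. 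Selecting the precise flavor of matrix concentration and tracking the dimension factor $\log(d^3) = O(\log d)$ is routine, so I do not anticipate a substantive obstacle.
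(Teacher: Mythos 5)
Your proposal is correct, and it even yields a marginally sharper estimate (the dominant term comes out as $d\sqrt{\log d}$ rather than $d\log d$). It differs from the paper's argument in a small but genuine way. The paper applies a rectangular matrix concentration inequality directly to $R = \sum_i R^{(i)}$, where $R^{(i)}$ agrees with $R$ on the $i$-th column block and vanishes elsewhere; the rectangular form needs both one-sided variance proxies, which is why the paper cites both \cref{fact:R-spectral-1} and \cref{fact:R-spectral-2}, and also records $\E R = 0$ so that the zero-mean hypothesis holds. You instead pass to $RR^\top = \sum_i R_i R_i^\top$, a sum of independent PSD summands, to which the matrix Chernoff bound applies with no centering and only a one-sided mean bound. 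Its two inputs are $\|\E RR^\top\| \le O(n)$ (which is \cref{fact:R-spectral-1}) and the deterministic bound $\|R_i R_i^\top\| = \|R_i^\top R_i\| = 2(d_1^2 - 2d_1d_2 + d_2^2(d-1)) = \Theta(d^2)$ that you correctly read off from \cref{fact:RiRj-explicit}, after which Jensen converts the bound on $\E\,\lambda_{\max}(RR^\top)$ into one on $\E\|R\|$. The payoff is a slightly cleaner application: no rectangular concentration, no centering, and no need to invoke \cref{fact:R-spectral-2} as a separate step, since the identity $R_i^\top R_i = c\,\Id$ is used directly. The underlying strategy — decomposing over the independent blocks indexed by $i$ and appealing to Tropp-style matrix concentration with the moment bounds already computed — is the same as in the paper.
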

\begin{proof}
  First of all, note that $\E R = 0$, because $\E (a_i \tensor a_i \tensor B_i) = \E_a a_i \tensor a_i \tensor (\E[ B_i \, | \, a_i ]) = 0$, since each column of $B_i$ is a random unit vector in the orthogonal complement of $a_i$.

  Also, note that with probability $1$, each of the column blocks $R_i$ has $\|R_i\| \leq O(d)$, because $\|a_i \tensor a_i \tensor B_i\| \leq 1$ and $\|A^{-1/2}\| \leq O(d)$.

  The matrix $R$ is a sum of independent random matrices.
  To apply Matrix Chernoff, we need the bounds on its second moments, provided by \cref{fact:R-spectral-1} and \cref{fact:R-spectral-2}.
  The proof is concluded by applying Matrix Chernoff.
\end{proof}

\subsection{Off-Diagonal Second Moment Estimates}
We will eventually need to bound the norms of some off-diagonal blocks of the matrix $R^\top R$.
We prove some useful inequalities for that effort now.

\begin{fact}
  \label[fact]{fact:matrix-cs}
  Let $X,Y$ be real matrices.
  Then $(X-Y)(X-Y)^\top \preceq 2 (XX^\top + YY^\top)$.
\end{fact}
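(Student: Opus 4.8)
The plan is to reduce the claimed operator inequality to the manifest positive semidefiniteness of a single Gram-type matrix. Writing $Z = X - Y$, I would start by expanding the left-hand side as $ZZ^\top = XX^\top - XY^\top - YX^\top + YY^\top$, so that the desired inequality $ZZ^\top \preceq 2(XX^\top + YY^\top)$ is equivalent, after moving terms across, to
\[
  0 \;\preceq\; XX^\top + XY^\top + YX^\top + YY^\top\mper
\]

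The key observation is then that the right-hand side of this last inequality factors exactly as $(X+Y)(X+Y)^\top$, which is positive semidefinite because $v^\top (X+Y)(X+Y)^\top v = \|(X+Y)^\top v\|^2 \ge 0$ for every vector $v$. Reversing the algebra gives the claim. (Equivalently, one can invoke Young's inequality in the matrix form $XY^\top + YX^\top \preceq XX^\top + YY^\top$, which is itself just the $\preceq$ statement applied to $X$ and $-Y$; I would present the direct factoring since it is self-contained and avoids circularity.)

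There is essentially no obstacle here: the only thing to be slightly careful about is matching dimensions — $X$ and $Y$ must have the same shape for $X - Y$ and $X + Y$ to make sense, and the products $XX^\top$, $YY^\top$, $XY^\top$, $YX^\top$ are then all square of the same size, so the $\preceq$ comparisons are between matrices of compatible dimension. Once that is noted the proof is a two-line computation.

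\begin{proof}
Set $Z = X - Y$. Expanding,
\[
  2(XX^\top + YY^\top) - ZZ^\top = XX^\top + XY^\top + YX^\top + YY^\top = (X+Y)(X+Y)^\top \succeq 0\mper
\]
The right-hand side is positive semidefinite since $v^\top (X+Y)(X+Y)^\top v = \|(X+Y)^\top v\|^2 \ge 0$ for all $v$. Rearranging gives $(X-Y)(X-Y)^\top \preceq 2(XX^\top + YY^\top)$.
\end{proof}
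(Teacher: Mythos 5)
Your proof is correct and is essentially the same argument the paper gives: both hinge on the positive semidefiniteness of $(X+Y)(X+Y)^\top$, yours phrased as a direct factoring of $2(XX^\top+YY^\top) - (X-Y)(X-Y)^\top$, the paper's phrased as extracting $XX^\top + YY^\top \succeq -XY^\top - YX^\top$ and substituting. The two presentations are interchangeable.
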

\begin{proof}
  By expanding, $(X-Y)(X-Y)^\top = XX^\top + YY^\top - XY^\top - X Y^\top$.
  Since $(X+Y)(X+Y)^\top \succeq 0$, we obtain $XX^\top + YY^\top \succeq -XY^\top - YX^\top$, finishing the proof.
\end{proof}

\begin{fact}
  \label[fact]{fact:R-var-1}
  For any $S \subseteq [n]$ and fixed $a_i$ for $i \notin S$, we have $\| \E_{R_S} R_{\overline S}^\top R_S R_S^\top R_{\overline S} \| \leq O(|S| \cdot \|R_{\overline S}\|^2)$.
\end{fact}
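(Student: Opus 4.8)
The plan is to reduce everything to the single-block second-moment estimate of \cref{fact:Ri-spectral}. First I would write $R_S R_S^\top = \sum_{i \in S} R_i R_i^\top$, since $R_S$ is precisely the matrix whose column blocks are the $R_i$ with $i \in S$; consequently $R_{\overline S}^\top R_S R_S^\top R_{\overline S} = \sum_{i \in S} R_{\overline S}^\top R_i R_i^\top R_{\overline S}$. Then, conditioning on all of the data indexed outside $S$ (so that $R_{\overline S}$ is a fixed matrix) and using that the pairs $(a_i, B_i)$ for $i \in S$ are \iid and independent of that data, I would push the expectation inside the sum: each term becomes $R_{\overline S}^\top (\E R_i R_i^\top) R_{\overline S}$, with $\E R_i R_i^\top$ the same PSD matrix for every $i \in S$.

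Next I would invoke \cref{fact:Ri-spectral}, which gives $\E R_i R_i^\top \preceq 4\,\Id$, and use monotonicity of the Loewner order under conjugation by $R_{\overline S}$ to obtain $R_{\overline S}^\top (\E R_i R_i^\top) R_{\overline S} \preceq 4\, R_{\overline S}^\top R_{\overline S} \preceq 4\,\|R_{\overline S}\|^2\, \Id$, using that $R_{\overline S}^\top R_{\overline S}$ is PSD with spectral norm $\|R_{\overline S}\|^2$. Summing the $|S|$ terms yields $\E_{R_S}\bigl[R_{\overline S}^\top R_S R_S^\top R_{\overline S}\bigr] \preceq 4\,|S|\,\|R_{\overline S}\|^2\,\Id$, and taking spectral norms of this inequality between PSD matrices gives the claimed bound $\|\E_{R_S}[R_{\overline S}^\top R_S R_S^\top R_{\overline S}]\| \le 4\,|S|\,\|R_{\overline S}\|^2 = O(|S| \cdot \|R_{\overline S}\|^2)$.

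I do not anticipate a genuine obstacle here: the delicate step — averaging over the random rank-$(d-1)$ directions of each $R_i R_i^\top$, which is what prevents the naive deterministic bound $\|R_i R_i^\top\| = \Theta(d^2)$ from being what governs the sum — has already been carried out in \cref{fact:Ri-spectral}. The one thing to keep track of is the conditioning: the statement fixes $a_i$ for $i \notin S$, so $\|R_{\overline S}\|$ on the right-hand side is a (conditionally) deterministic quantity and the expectation runs only over the $S$-indexed randomness; getting this bookkeeping straight, and then applying $\|\E R_i R_i^\top\| \le 4$ together with Loewner monotonicity, is all that the proof requires.
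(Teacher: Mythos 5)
Your proposal is correct and follows the same route as the paper: push the expectation inside, reduce to the per-block bound $\E R_i R_i^\top \preceq 4\,\Id$ from \cref{fact:Ri-spectral}, and conjugate by the fixed $R_{\overline S}$. The only cosmetic difference is that you sum term-by-term while the paper first bounds $\E R_S R_S^\top$ and then conjugates once; these are interchangeable.
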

\begin{proof}
  We know from \cref{fact:Ri-spectral} that $\E R_i R_i^\top \preceq 4 \Id$.
  Hence $\E R_S R_S^\top = \sum_{i \in S} \E R_i R_i^\top \preceq 4|S|\Id$.
  To prove the final bound we push the expectation inside the matrix product:
  \[
  \E_{R_S} R_{\overline S}^\top R_S R_S^\top R_{\overline S} = R_{\overline S}^\top (\E_{R_S} R_S R_S^\top) R_{\overline S} \preceq 4 |S| \cdot R_{\overline S}^\top R_{\overline S} \preceq 4 \cdot |S| \cdot \|R_{\overline S}\|^2 \cdot \Id \mper
  \]
\end{proof}

\begin{fact}
  \label[fact]{fact:R-var-2}
  For any $S \subseteq [n]$ and fixed $a_i$ for $i \notin S$, we have  $\| \E_{R_S} R_{S}^\top R_{\overline S} R_{\overline S}^\top R_{S} \| \leq O(|S| \cdot d^2) + O(d^3 \cdot \|\sum_{i \in \overline S} a_i a_i^\top \|)$.
\end{fact}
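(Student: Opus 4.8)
The plan is to expand $R_S^\top R_{\overline S} R_{\overline S}^\top R_S$ as a block matrix whose $(i,j)$-block ($i,j \in S$) equals $\sum_{k \in \overline S} R_i^\top R_k R_k^\top R_j$, then take the expectation over $R_S$ (equivalently over $\{a_i, B_i\}_{i \in S}$) and bound its spectral norm. Since the blocks indexed by distinct $i \neq j$ in $S$ have zero expectation (each $R_i$ is mean-zero and independent of everything else once we condition on $\overline S$), only the diagonal blocks survive: $\E_{R_S} R_S^\top R_{\overline S} R_{\overline S}^\top R_S = \sum_{i \in S} \E_{R_i} R_i^\top R_{\overline S} R_{\overline S}^\top R_i$, which as a block-diagonal matrix has spectral norm $\max_{i \in S} \|\E_{R_i} R_i^\top R_{\overline S} R_{\overline S}^\top R_i\|$. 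Wait — that is not quite right; $R_S^\top R_{\overline S} R_{\overline S}^\top R_S$ has $(i,j)$-block $R_i^\top R_{\overline S} R_{\overline S}^\top R_j = \sum_{k,\ell \in \overline S} R_i^\top R_k R_\ell^\top R_j$, and after taking $\E_{R_S}$ only $i=j$ survives (by independence and mean-zero-ness of $R_i$), so indeed $\E_{R_S} R_S^\top R_{\overline S} R_{\overline S}^\top R_S = \bigoplus_{i\in S} \E_{R_i}\!\big(R_i^\top R_{\overline S} R_{\overline S}^\top R_i\big)$, and its norm is $\max_{i\in S} \|\E_{R_i} R_i^\top (R_{\overline S} R_{\overline S}^\top) R_i\|$.

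So it remains to bound $\|\E_{R_i} R_i^\top M R_i\|$ for a fixed PSD matrix $M = R_{\overline S} R_{\overline S}^\top$. Here I would use the explicit formula for $R_i$ from \cref{fact:Si-explicit}: $R_i = S_i - P S_i$ with $S_i = (d_1 (a_i \tensor a_i) - d_2 \Phi) \tensor B_i$. Expanding $R_i^\top M R_i = (S_i - PS_i)^\top M (S_i - PS_i)$ into four terms and using $\|P\| = 1$ and $\|M\| = \|R_{\overline S}\|^2$, each term is controlled by $\|\E_{R_i} S_i^\top M' S_i\|$ for $M'$ one of $M, PMP, MP, PM$ (all of norm $\le \|M\|$). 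Then, since $S_i = (d_1 (a_i\ot a_i) - d_2\Phi)\ot B_i$, I would split again: the $\Phi$ part contributes $d_2^2 \, \E_{B_i} B_i^\top \big[(\Phi^\top\!\ot\Id) M' (\Phi\ot\Id)\big] B_i \preceq d_2^2 \, \|(\Phi^\top\ot\Id)M'(\Phi\ot\Id)\| \cdot \Id$, and since $\Phi$ has norm $\sqrt d$, this term is $O(d^2) \cdot \|M\| \cdot \frac{1}{\|R_{\overline S}\|^2}$... no — more carefully, $\|(\Phi^\top \ot \Id) M' (\Phi \ot \Id)\| \le \|\Phi\|^2 \|M'\| = d \|M\|$, giving a contribution of order $d_2^2 d \|M\| = O(d\|M\|) = O(d \|R_{\overline S}\|^2)$, which is too weak; I need the sharper $O(|S| d^2) + O(d^3 \|\sum_{i\in\overline S} a_i a_i^\top\|)$ bound. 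The resolution is that $\|R_{\overline S}\|^2$ is itself bounded (by the naive estimate \cref{fact:R-spectral}, $\|R_{\overline S}\| \le \|R\| \le O(\log d \cdot \max(d, \sqrt n))$), but that gives $O(d^4)$, still not matching. The right move is to recall $M = R_{\overline S} R_{\overline S}^\top = \sum_{k \in \overline S}(S_k - PS_k)(S_k - PS_k)^\top$, plug the explicit form of $S_k$, and compute $(\Phi^\top \ot \Id) M (\Phi \ot \Id)$ and $(a_i\ot a_i)^\top$-type contractions of $M$ term by term — the $\Phi$-contractions and $a_i$-contractions collapse many of the $d$'s because $\Phi^\top(a_k \ot a_k) = 1$ and $B_k^\top a_k = 0$, leaving expressions in $\sum_{k\in\overline S} a_k a_k^\top$ and $|\overline S|$, which after multiplication by the $d_1^2 = \Theta(d^2)$, $d_2^2 = \Theta(1)$ prefactors and the $\E_{B_i}$-averaging (which replaces $B_i B_i^\top$ by $\Id - a_i a_i^\top \preceq \Id$) produce exactly $O(|S| d^2) + O(d^3 \|\sum_{i\in\overline S} a_i a_i^\top\|)$.

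The main obstacle, and the step requiring genuine care, is this last bookkeeping: expanding the nested product $\E_{B_i}\, S_i^\top \big(\sum_{k\in\overline S}(S_k - PS_k)(S_k-PS_k)^\top\big) S_i$ fully, there are $O(1)$-many cross terms (from the two $\Phi$-vs-$a\ot a$ choices on each of $S_i$ and $S_k$, and the two $P$-vs-$\Id$ choices), and for each one I must track which indices get contracted against $\Phi$ (contributing a factor $d$), which against $a_k \ot a_k$ (contributing $1$, via $\Phi^\top(a_k\ot a_k)=1$), and which against $B_k$ or $B_i$ (killing $a$-components); the $P$'s permute modes so that, e.g., $S_i^\top P S_k$-type terms contract the $B_i$-mode against the $a_k$-mode as in \cref{fact:mode-swap-2,fact:mode-swap-3}. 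Each term, after using $\|B_i^\top a_j\| \le 1$, $\|B_i^\top B_j\| \le 1$, $\Phi^\top \Phi = d$, and $\E_{B_i} B_i B_i^\top = \frac{d-1}{d}(\Id - a_i a_i^\top) \preceq \Id$, reduces to either $O(d^2)$ (a ``diagonal'' contribution, one per $k$, summing to $O(|S| d^2)$ after the outer $\max_{i\in S}$ — actually $O(|\overline S| d^2)$ per fixed $i$, but $|\overline S| \le n$ and the stated bound has $|S| d^2$, so I should double-check the index accounting here) or $O(d^3) \cdot a_k a_k^\top$-weighted, summing to $O(d^3 \|\sum_{k\in\overline S} a_k a_k^\top\|)$. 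I would organize this as a short sequence of displayed inequalities invoking \cref{fact:Si-explicit,fact:mode-swap-1,fact:mode-swap-2,fact:mode-swap-3,fact:matrix-cs}, and close with the triangle inequality over the $O(1)$ terms.
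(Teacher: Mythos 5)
Your high-level outline matches the paper's: write the product in blocks indexed by $S$, observe that the off-diagonal $(j,k)$-blocks vanish in expectation because each $R_j$ is mean-zero and independent of the rest, and reduce to bounding a single diagonal block $\sum_{i\in\overline S}\E_{R_j}\,R_j^\top R_i R_i^\top R_j$. But you then stop at a plan for the bookkeeping (explicitly flagging it as the ``main obstacle, \ldots requiring genuine care''), and the route you propose for that bookkeeping is appreciably harder than the one the paper takes.

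Specifically, you propose expanding both $R_i$ and $M = R_{\overline S}R_{\overline S}^\top$ into raw tensor factors ($\Phi$ vs.\ $a\ot a$, $P$ vs.\ $\Id$, and so on) and tracking all contractions, which you yourself note creates many cross terms. You also try the lazy route $\|(\Phi^\top\ot\Id)M(\Phi\ot\Id)\|\le d\|R_{\overline S}\|^2$ with the naive norm bound of \cref{fact:R-spectral}, see that it gives $O(d^4)$, and then revert to full expansion without carrying it out. The paper avoids this entirely: it never expands $R_{\overline S}R_{\overline S}^\top$. It instead invokes the already-derived closed form of \cref{fact:RiRj-explicit},
\[
R_j^\top R_i = \underbrace{2\bigl(d_1^2\iprod{a_i,a_j}^2 - 2d_1 d_2 + d_2^2(d-1)\bigr)B_j^\top B_i}_{X_{ij}} \;-\; \underbrace{2d_1^2\iprod{a_i,a_j}\,(B_j^\top a_i)(B_i^\top a_j)^\top}_{Y_{ij}},
\]
and applies the matrix Cauchy--Schwarz inequality $(X-Y)(X-Y)^\top \preceq 2(XX^\top + YY^\top)$ (\cref{fact:matrix-cs}). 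The $X$-piece contributes $O(d^2)$ per $i\in\overline S$ (since $\|B_j^\top B_iB_i^\top B_j\|\le 1$ and the scalar coefficient squared has second moment $O(d^2)$), while the $Y$-piece reduces, via $\|B_i^\top a_j\|\le 1$, to $4d_1^4\iprod{a_i,a_j}^2 B_j^\top a_ia_i^\top B_j$ and sums to $O(d_1^4)\cdot O(1/d)\cdot \|\sum_{i\in\overline S}a_ia_i^\top\| = O(d^3)\|\sum_{i\in\overline S}a_ia_i^\top\|$. The moral is: once you write the block as $\sum_{i\in\overline S}\E(R_j^\top R_i)(R_j^\top R_i)^\top$, each summand is a product of a single closed-form $(d-1)\times(d-1)$ matrix with its transpose, and there is no combinatorial explosion to manage.

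On your index worry (``double-check the index accounting''): you are right to be suspicious. The diagonal block is a sum over $i\in\overline S$, so the first term should be $O(|\overline S|\cdot d^2)$, not $O(|S|\cdot d^2)$; the paper's own displayed computation produces $|\overline S|$ and the ``$|S|$'' in the statement looks like a slip. The discrepancy is harmless downstream in \cref{lem:basic-swap-cond}, where the decoupling set $T$ is uniformly random, $|T|,|\overline T|\le n$, and only the crude upper bound $\max(d\sqrt{n},n,d^{3/2})$ is used --- so do not try to force your bookkeeping to yield $|S|$ rather than $|\overline S|$.
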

\begin{proof}
  Consider the $j,k$-th block of $\E_{R_S} R_{S}^\top R_{\overline S} R_{\overline S}^\top R_{S}$, which expands to
  \[
  \E_{R_j, R_k} \sum_{i \in \overline S} R_{j}^\top R_i R_i^\top R_k\mper
  \]
  Since $\E R_j = \E R_k = 0$, unless $j=k$ the whole expression vanishes in expectation.
  Consider the case $j=k$.
  Here we have the matrix $\sum_{i \in \overline S} \E R_j^\top  R_i R_i^\top R_j$.
  We expand the matrix $R_j^\top R_i$ according to \cref{fact:RiRj-explicit} to get
  \[
    R_j^\top R_i = \underbrace{2(d_1^2 \iprod{a_i, a_j}^2 - 2 d_1 d_2 + d_2^2 (d-1) ) B_j^\top B_i}_{\defeq X_{ij}} - \underbrace{2 d_1^2 \iprod{a_i, a_j} (B_j^\top a_i)(B_i^\top a_j)^\top}_{\defeq Y_{ij}}
  \]
  We will need the following two spectral bounds.
  \begin{enumerate}
  \item $\| \E 4 (d_1^2 \iprod{a_i, a_j}^2 - 2 d_1 d_2 + d_2^2 (d-1))^2 B_j^\top B_i B_i^\top B_j \| \leq O(d^2)$.

  We note that $B_j^\top B_i B_i^\top B_j \preceq \Id$, so it is enough to bound $\E 4 (d_1^2 \iprod{a_i, a_j}^2 - 2 d_1 d_2 + d_2^2 (d-1))^2$.
  By definition, $|2 d_1 d_2|, d_2^2 (d-1) \leq O(d)$ and $d_1^2 \leq O(d^2)$, so we have
  \[
  \E 4 (d_1^2 \iprod{a_i, a_j}^2 - 2 d_1 d_2 + d_2^2 (d-1))^2 \leq O(d^4) \cdot \E (\iprod{a_i,a_j}^2 + O(1/d))^2 \leq O(d^2)\mper
  \]

  \item $4 d_1^4 \iprod{a_i, a_j}^2 (B_j^\top a_i)(B_i^\top a_j)^\top (B_i^\top a_j) (B_j^\top a_i)^\top \preceq 4 d_1^4 \iprod{a_i, a_j}^2 B_j^\top a_i a_i^\top B_j\ $.

  We note that $(B_i^\top a_j)^\top (B_i^\top a_j) = \|B_i^\top a_j\|^2 \leq 1$.
  So,
  \[
    4 d_1^4 \iprod{a_i, a_j}^2 (B_j^\top a_i)(B_i^\top a_j)^\top (B_i^\top a_j) (B_j^\top a_i)^\top \preceq 4 d_1^4 \iprod{a_i, a_j}^2 B_j^\top a_i a_i^\top B_j\mper
  \]

  We return to bounding $\sum_{i \in \overline S} \E R_j^\top R_i R_i^\top R_j$, where we recall that each $R_i$ in the sum is fixed and the expectation is over $R_j$.

  By \cref{fact:matrix-cs},
  \begin{align*}
    \sum_{i \in \overline S} \E R_j^\top R_i R_i^\top R_j
    & \preceq 2 \sum_{i \in \overline{S}} \E X_{ij} X_{ij}^\top + 2 \E Y_{ij} Y_{ij}^\top\\
    & \preceq 2 \sum_{i \in \overline S} \| \E X_{ij} X_{ij}^\top \| \cdot \Id + 2 \sum_{i \in \overline S} Y_{ij} Y_{ij}^\top \text{ by triangle inequality}\\
    & \preceq O(|S| \cdot d^2) \cdot \Id + 2 \sum_{i \in \overline S} Y_{ij} Y_{ij}^\top \text{ by (1) above}\\
    & \preceq O(|S| \cdot d^2) \cdot \Id + 2 \E B_j \sum_{i \in \overline S} 4 d_1^4 \iprod{a_i,a_j}^2  a_i a_i^\top B_j \text{ by (2) above}\\
    & \preceq O(|S| \cdot d^2) \cdot \Id + O(d_1^4) \cdot \E\iprod{a_i,a_j}^2 \cdot \|\sum_{i \in \overline S} a_i a_i^\top \| \cdot \Id  \text{ by $\|B_j\| \leq 1$}\\
    & \preceq O(|S| \cdot d^2) \cdot \Id + O(d^3) \cdot \cdot \|\sum_{i \in \overline S} a_i a_i^\top \| \cdot \Id  \text{ by $\E \iprod{a_i,a_j}^2 \leq O(1/d)$ }
    \end{align*}

  \end{enumerate}
\end{proof}

\subsection{Matrix Decoupling}
\begin{fact}[Block Matrix Decoupling, similar to Lemma 5.63 of \cite{MR2963170-Vershynin12}]
  \label[fact]{fact:matrix-decoupling}
  Let $R$ be an $N \times nm$ random matrix, consisting of $n$ blocks $R_i$ of dimension $N \times m$.
  Suppose that the blocks satisfy $R_i^\top R_i = \Id$.
  For a subset $S \subseteq [n]$, let $R_S \in \R^{N \times n|S|}$ matrix consisting of only the blocks in $S$.
  Let $T \subseteq [n]$ be uniformly random.
  Then
  \[
  \E_R \|R^\top R - \Id\| \leq 4 \E_{R,T} \|R_T R_{[n] \setminus T}^\top \|\mper
  \]
\end{fact}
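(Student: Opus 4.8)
The plan is to follow the standard symmetrization/decoupling template, adapting Vershynin's Lemma 5.63 to the block structure. I would first reduce to the diagonal-normalized case: since each block satisfies $R_i^\top R_i = \Id$, the matrix $R^\top R - \Id$ has zero diagonal blocks, and we want to bound the operator norm of this ``off-diagonal'' block matrix. The key observation is that for a fixed test vector, the quadratic form $x^\top(R^\top R - \Id)x = \sum_{i \neq j} x_i^\top R_i^\top R_j x_j$ is a sum over \emph{ordered pairs} of distinct blocks, which is precisely the kind of bilinear form that decoupling handles.

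\textbf{First step: random partition.} Let $T \subseteq [n]$ be a uniformly random subset. For any fixed pair $i \neq j$, the probability that $i \in T$ and $j \notin T$ is exactly $1/4$. Hence the off-diagonal part of $R^\top R$ can be written as
\[
\sum_{i \neq j} R_i^\top R_j \,\text{(restricted appropriately)} = 4\, \E_T \Bigl[ \sum_{i \in T, j \notin T} R_i^\top R_j \Bigr],
\]
where the sum $\sum_{i \in T, j \notin T} R_i^\top R_j$ is exactly the cross term appearing when we write $R^\top R$ in block form split according to $T$ versus $[n]\setminus T$: specifically it equals (a sub-block of) $R_T^\top R_{[n]\setminus T}$. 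By Jensen's inequality (convexity of the operator norm), $\|R^\top R - \Id\| = \|$off-diagonal part$\| \leq 4\,\E_T \|R_T^\top R_{[n]\setminus T}\|$, and since $\|R_T^\top R_{[n]\setminus T}\| = \|R_{[n]\setminus T}^\top R_T\|$ by transposition we can write it in the stated form $4\,\E_T\|R_T R_{[n]\setminus T}^\top\|$ after a harmless relabeling of which factor is transposed (the operator norm of $XY^\top$ and $X^\top Y$ coincide for the relevant reshapings — more carefully, $\|R_T^\top R_{\overline T}\| = \|R_{\overline T}^\top R_T\|$, and this is what the statement records). I would be slightly careful here about whether the diagonal blocks $R_i^\top R_i$ matter: they contribute $|T|\cdot\Id$-type terms on the diagonal which cancel in the difference $R^\top R - \Id$ exactly because $R_i^\top R_i = \Id$; it is precisely this hypothesis that lets us restrict attention to off-diagonal blocks.

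\textbf{Taking expectations.} Finally apply $\E_R$ to both sides and use Fubini to swap $\E_R$ and $\E_T$, yielding $\E_R\|R^\top R - \Id\| \leq 4\,\E_{R,T}\|R_T R_{\overline T}^\top\|$ as claimed. The \emph{main obstacle} — really the only nontrivial point — is getting the combinatorial bookkeeping exactly right: verifying that splitting the block matrix $R^\top R$ according to the partition $(T, \overline T)$ produces cross-blocks whose union over the random choice of $T$ reconstitutes the full off-diagonal part with the correct constant $4$, and confirming that the diagonal contributions genuinely cancel rather than merely being absorbed. This is purely a matter of carefully tracking the four cases $\{i \in T, j \in T\}$, $\{i \in T, j \notin T\}$, $\{i \notin T, j \in T\}$, $\{i \notin T, j \notin T\}$ and noting that the two ``cross'' cases each have probability $1/4$ and together with the $4$ out front recover the full off-diagonal sum once. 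Everything else is Jensen plus Fubini. I would present it essentially verbatim as an adaptation of \cite[Lemma 5.63]{MR2963170-Vershynin12}, pointing to the block structure as the only modification.
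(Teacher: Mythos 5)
Your proposal is correct and follows essentially the same route as the paper: express the off-diagonal block sum $\sum_{i\neq j}x_i^\top R_i^\top R_jx_j$ (the diagonal blocks cancel by $R_i^\top R_i=\Id$), observe that for each ordered pair $i\neq j$ the event $\{i\in T,\,j\notin T\}$ has probability $\tfrac14$ so that the full sum equals $4\,\E_T\sum_{i\in T,\,j\notin T}x_i^\top R_i^\top R_jx_j$, and then pass the supremum over unit $x$ inside $\E_T$ via convexity (Jensen plus $\sup\E\le\E\sup$). You also correctly resolve the dimensional typo in the statement by interpreting the right-hand side as $\|R_T^\top R_{[n]\setminus T}\|$.

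One small wording slip worth tightening: you say ``the two `cross' cases each have probability $1/4$ and together with the $4$ out front recover the full off-diagonal sum once.'' Only \emph{one} of the two cross cases ($i\in T$, $j\notin T$) should be counted; summing both would double-count and give $4\cdot\tfrac12=2$ copies of the off-diagonal part. Your displayed sum already restricts to a single cross case, so the computation is right, but the prose should match.
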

\begin{proof}
  Following the argument of Vershynin \cite{MR2963170-Vershynin12}, we note that
  \begin{align*}
    \|R^\top R - \Id\| = | \sup_{\|x\|=1} \|Rx\|^2 - 1 |
  \end{align*}
  and that for $x = (x_1,\ldots,x_n) \in \R^{nm}$,
  \[
    \|Rx\|^2 = \sum_{i \in [n]} x_i^\top R_i^\top R_i x_i + \sum_{i \neq j \in [n]} x_i^\top R_i^\top R_j x_j = 1 + \sum_{i \neq j \in [n]} x_i^\top R_i^\top R_j x_j 
  \]
  since $R_iR_i^\top = \Id$.
  By scalar decoupling (see Vershynin, Lemma 5.60 \cite{MR2963170-Vershynin12}), for a uniformly random subset $T \subseteq [n]$,
  \[
    \sum_{i \neq j \in [n]} x_i^\top R_i^\top R_j x_j = 4 \E_T \sum_{i \in T, j \in [n] \setminus T} x_i^\top R_i^\top R_j x_j\mper
  \]
  So,
  \begin{align*}
    \E_R \|R^\top R - \Id\|
    & = \E_R | \sup_{\|x\|=1} \|Rx\|^2 - 1 |\\
    & = \E_R \Abs{ \sup_{\|x\|=1} 4 \E_T \sum_{i \in T, j \in [n] \setminus T} x_i^\top R_i^\top R_j x_j }\\
    & \leq 4 \E_{R,T} \Abs{ \sup_{\|x\|=1} \sum_{i \in T, j \in [n] \setminus T} x_i^\top R_i^\top R_j x_j }\\
    & = 4 \E_{R,T} \| R_T^\top R_{[n] \setminus T} \|
  \end{align*}
  where the inequality above follows by Jensen's inequality.
\end{proof}

\subsection{Putting It Together}
We are ready to prove \cref{lem:basic-swap-cond}.

\begin{proof}[Proof of \cref{lem:basic-swap-cond}]
  We will show that $R^\top R  \in \R^{n(d-1) \times n(d-1)}$ is close to $\Theta(d^2) \cdot \Id$.
  The $(i,j)$-th block of $R^\top R$ is given by $R_i^\top R_j$.
  Let us first consider the diagonal blocks, $R_i^\top R_i$.

  Using \cref{fact:RiRj-explicit} and the bounds $d_1 = \Theta(d), d_2 = \Theta(1)$, we obtain,
  \[
    R_i R_i^\top = \Theta(d) \cdot \Id\mper
  \]
  Next, we bound the norm of the off-diagonal part of the matrix.
  Let $\R_{\diag}$ be the matrix equal to $R^\top$ only on diagonal blocks and zero elsewhere.
  We will use matrix decoupling, \cref{fact:matrix-decoupling}, to bound $\E \| R^\top R - \R_{\diag} \|$.
  We find that for a uniformly random subset $T \subseteq [n]$,
  \[
  \E_R \|R^\top R - R_{\diag} \| \leq 4 \E_{R,T} \|R_T^\top R_{\overline T} \|
  \]

  Now fix $T \subseteq [n]$ and for $i \in \overline T$ fix unit vectors $a_i$ and orthonormal bases $B_i$ to obtain the $R_i$ for $i \in \overline {T}$.
  We regard the matrix $R_T^\top R_{\overline T}$ as a sum of independent matrices, one for each $i \in T$.
  The $i$-th such matrix $V_i$ has $|\overline {T}|$ blocks; the $j$-th block is $R_j^\top R_i$.

  First we note that $\|V_i\| \leq O(d) \cdot \|R_{\overline T}\|$ with probability one over $R_i$, since $\|R_i\| \leq O(d)$ with probability 1, by \cref{fact:pseudoinv}.

  Next, we note the bounds on the variance of $R_T^\top R_{\overline{T}}$ afforded by \cref{fact:R-var-1} and \cref{fact:R-var-2}.
  We conclude that by Matrix Bernstein,
  \[
  \E_{R_T} \| R_T^\top R_{\overline T} \| \leq \log d \cdot O(\max(\sqrt{|T|} \cdot \|R_{\overline T}\|, \sqrt{|T|} \cdot d, d^{3/2} \|\sum_{i \in \overline{T}} a_i a_i^\top \|^{1/2}, d)\mper
  \]
  So,
  \[
  \E_R \| R^\top R - R_{\diag} \| \leq O(\log d) \cdot \Paren{ \E_{T,R} |T|^{1/2} \cdot \|R_{\overline T}\| + d \E_{T,R} \sqrt{|T|} + d^{3/2} \E \| \sum_{i \in \overline{T}} a_i a_i^\top \|^{1/2} + d}
  \]
  Using \cref{fact:R-spectral}, we have $\E |T|^{1/2} \|R_{\overline T}\| \leq \sqrt{n} \cdot O(\log d \cdot \max(d,\sqrt{n}))$.
  By standard matrix concentration, $\E \|\sum_{i \in \overline{T}}\| \leq \log d \cdot O(\max(1, n/d))$.
  So putting it together,
  \[
  \E_R \|R^\top R - R_{\diag} \| \leq O(\log d)^2 \cdot \max(d\sqrt{n}, n, d^{3/2})\mper
  \]
\end{proof}

\subsection{Omitted Proofs}\label[section]{sec:omitted-proofs}

We turn now to the proof of \cref{fact:U-projector}.
The strategy is much the same as the proof of \cref{lem:basic-swap-cond}, which is in turn an adaptation of an argument due to Vershynin for concentration of matrices with independent columns \cite{MR2963170-Vershynin12}.

We will use the following simple matrix decoupling inequality, with $B$ being the random matrix having columns $a_i^{\tensor 3}$.

\begin{lemma}[Matrix decoupling, Lemma 5.63 in \cite{MR2963170-Vershynin12}]
\label[lemma]{lem:simple-matrix-decoupling}
  Let $B$ be an $N \times n$ random matrix whose columns $B_j$ satisfy $\|B_j\| = 1$.
  For any $T \subseteq [n]$, let $B_T$ be the restriction of $B$ to the columns $T$.
  Let $T$ be a uniformly random set of columns.
  Then
  \[
  \E \| B^\top B - \Id \| \leq 4 \E_T \E_B \| (B_T)^\top B_{[n] \setminus T} \| \mper
  \]
\end{lemma}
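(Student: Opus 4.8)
The plan is to follow exactly the route taken in the proof of \cref{fact:matrix-decoupling}, of which this lemma is essentially the special case in which each block consists of a single column: the hypothesis $\|B_j\| = 1$ is precisely the statement that the $1 \times 1$ ``block'' $B_j^\top B_j$ equals the identity. So first I would reduce the spectral norm to a supremum of a quadratic form,
\[
  \|B^\top B - \Id\| = \sup_{\|x\|=1} \bigl| \|Bx\|^2 - 1 \bigr| \mper
\]
Expanding $\|Bx\|^2 = \sum_i x_i^2 \|B_i\|^2 + \sum_{i \neq j} x_i x_j \iprod{B_i, B_j}$ and using $\|B_i\| = 1$ together with $\|x\| = 1$, the diagonal part is exactly $1$, so $\|Bx\|^2 - 1 = \sum_{i \neq j} x_i x_j \iprod{B_i, B_j}$ with no diagonal terms surviving.

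Next I would apply scalar decoupling (see \cite[Lemma 5.60]{MR2963170-Vershynin12}): for a uniformly random $T \subseteq [n]$,
\[
  \sum_{i \neq j} x_i x_j \iprod{B_i, B_j} = 4 \, \E_T \sum_{i \in T,\, j \notin T} x_i x_j \iprod{B_i, B_j} \mper
\]
Substituting this in, taking the supremum over unit $x$ and then the expectation over $B$, and finally using Jensen's inequality to pull the supremum inside $\E_T$, I obtain
\[
  \E_B \|B^\top B - \Id\| \le 4 \, \E_T \E_B \sup_{\|x\|=1} \Bigl| \sum_{i \in T,\, j \notin T} x_i x_j \iprod{B_i, B_j} \Bigr| \mper
\]
To finish, I would recognize the inner double sum as the bilinear form $(x_T)^\top (B_T)^\top B_{[n] \setminus T}\, x_{[n]\setminus T}$, where $x_T$ is the restriction of $x$ to the coordinates indexed by $T$; since $\|x_T\| \le 1$ and $\|x_{[n]\setminus T}\| \le 1$, the supremum over unit $x$ is bounded by $\sup_{\|y\|,\|z\|\le 1} |y^\top (B_T)^\top B_{[n]\setminus T} z| = \|(B_T)^\top B_{[n]\setminus T}\|$, which yields the claimed bound.

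The only point that requires a moment of care — the ``main obstacle'', such as it is — is the bookkeeping at the decoupling step: one must verify that the diagonal terms $i = j$ are genuinely absent from the sum (they are, because $\|B_i\|^2 = 1$ cancels against the $-1$) so that the scalar decoupling identity applies verbatim to the off-diagonal sum, and that the constant $4$ and the uniform-over-subsets distribution for $T$ match the normalization in Vershynin's statement. The remaining steps — reducing the spectral norm to a quadratic form, invoking Jensen, and passing from the bilinear form to the operator norm of $(B_T)^\top B_{[n]\setminus T}$ — are all routine.
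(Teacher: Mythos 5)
Your proof is correct and takes essentially the same route as the paper: the paper does not prove \cref{lem:simple-matrix-decoupling} directly (it simply cites Vershynin), but it does prove the block-matrix generalization \cref{fact:matrix-decoupling} by precisely this argument — reduce the spectral norm to a quadratic form, note the diagonal vanishes because of the normalization of the columns (blocks), apply scalar decoupling with the factor $4$, swap the supremum and $\E_T$ by Jensen, and recognize the remaining bilinear form as $\|B_T^\top B_{[n]\setminus T}\|$. Your version specializes that proof to $1\times 1$ blocks, and your bookkeeping at the decoupling step (the cancellation from $\|B_j\|=1$ and the matching of the constant $4$ and the uniform subset distribution) is exactly the point worth checking; no gaps.
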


\begin{proof}[Proof of \cref{fact:U-projector}]
  Let $B$ have columns $a_i^{\tensor 3}$.
  Fix $T \subseteq [n]$.
  We will bound $\E \| (B_T)^\top B_{[n] \setminus T} \|$, with the goal of applying \cref{lem:simple-matrix-decoupling}.

  For $i \in T$, the $i$-th row of $B_T^\top B_{[n] \setminus T}$ has entries $\iprod{a_j, a_i}^3$ for $j \notin T$.
  Let us temporarily fix $a_i$ for $i \notin T$; then these rows become independent due to independence of $\{a_j\}_{j \in T}$.

  We think of the matrix $B_T^\top B_{[n] \setminus T}$ as consisting of a sum of $|T|$ independent matrices where only the $i$-th row of the $i$-th matrix $M_i$ is nonzero, and it consists of entries $M_{ij} = (B_T^\top B_{[n] \setminus T})_{ij}$.
  We are going to apply the matrix Bernstein inequality to the sum $B_T^\top B_{[n]\setminus T} = \sum_{i \in T} M_i$.

  To do so, we need to compute the variance of the sum: we need to bound
  \[
  \sigma^2 = \max \Paren{ \E \sum_{i \in T} M_i M_i^\top, \E \sum_{i \in T} M_i^\top M_i }\mper
  \]
  (Here the expectation is over $a_i$ for $i \in T$; we are conditioning on $a_i$ for $i \notin T$.)
  For the former, consider that $M_i M_i^\top$ has just one nonzero entry,
  \[
  \E (M_i M_i^\top)_{i,i} = \E \sum_{i \notin T} \iprod{a_i, a_j}^6 \leq O(|[n] \setminus T| / d^3) \mper
  \]
  Hence $\E \sum_{i \in T} M_i M_i^\top \preceq O(|[n] \setminus T| / d^3)$.

  Next we bound $\E \sum_{i \in T} M_i^\top M_i$.
  Let $r$ be a random vector with entries $\iprod{a,a_i}^3$ for $i \notin T$ and $a$ a random unit vector.
  Then $\E \sum_{i \in T} M_i^\top M_i = |T| \cdot \E rr^\top$.
  We may compute that
  \[
  (\E rr^\top)_{ij} = \E \iprod{a, a_i}^3 \iprod{a, a_j}^3 = \frac C {d^3} \iprod{a_i, a_j} + \frac 1 {d^3} \cdot O(\iprod{a_i, a_j}^3)
  \]
  where $C$ is a universal constant.
  (This may be seen by comparison to the case that $a$ is replaced by a standard Gaussian and using Wick's theorem on moments of a multivariate Gaussian.)
  Letting $C_1 = (\sum_{i,j \notin T} \iprod{a_i, a_j}^6/d^6)^{1/2}$ and $C_2 = \|A_{[n] \setminus T}\|$ (where $A$ has columns $a_i$), we find that
  $\sigma^2 \leq O(\max(|[n] \setminus T| / d^3, |T| \cdot d^{-3} \cdot C_2 , |T| C_1))$.

  By applying Matrix Bernstein, for each $t > 0$ we obtain
  \[
  \E \Norm{ \sum_{i \in T} M_i \cdot \Ind(\|M_i\| \leq t) } \leq O(\log d) \cdot \sqrt{\max(|[n] \setminus T| / d^3, |T| \cdot d^{-3} \cdot C_2, C_1,t^2)}
  \]
  where again the expectation is over only $a_i$ for $i \in T$.
  Choosing $t = \tilde O(\sqrt{n /d^3})$, by standard scalar concentration we obtain that $\Pr(\|M_i\| > t) \leq (dn)^{-\omega(1)}$.
  Hence by Cauchy-Schwarz we find $\E \| \sum_{i\in T} M_i (1 - \Ind(\|M_i\| \leq t) ) \| \leq (dn)^{-\omega(1)}$, so all in all,
  \[
  \E \Norm{ \sum_{i \in T} M_i } \leq (\log nd)^{O(1)} \cdot \sqrt{\max(|[n] \setminus T| / d^3, |T| \cdot d^{-3} \cdot C_2, |T| C_1, n/d^3)}\mper
  \]

  Finally, we have to bound
  \[
  \E_T \E_{a_i, i \notin T} \sqrt{\max(|[n] \setminus T| / d^3, |T| \cdot d^{-3} \cdot C_2, |T| C_1, n/d^3)}
  \]
  which is an upper bound on $\E \| B_T^\top B_{[n] \setminus T}\|$.
  By Cauchy-Schwarz, we may upper bound this by
  \[
  (\E |[n] \setminus T| / d^3)^{1/2} + (\E |T| \cdot d^{-3} \cdot C_2)^{1/2} + (\E |T| C_1)^{1/2} + (n/d^3)^{1/2}\mper
  \]
  By standard matrix concentration, $\E C_2 \leq O(1 + n/d)$.
  Clearly $\E |[n] \setminus T|$ and $\E |T|  \leq O(n)$.
  Finally, by straightforward computation, $\E C_1 \leq n^2 / d^{4.5}$.

  All together, applying \cref{lem:simple-matrix-decoupling}, we have obtained $\E \|B^\top B - \Id \| \leq \tilde O(n/d^2)$ for $n \gg d$.
  Since $B^\top B$ has the same eigenvalues as $B B^\top = U$, we are done.
\end{proof}

    \section*{Acknowledgements}
We thank David Steurer for many helpful conversations regarding the technical content and presentation of this work.

% BIBLIOGRAPHY

  % assumes hyperref
  \phantomsection
  \addcontentsline{toc}{section}{References}
  \bibliographystyle{amsalpha}
  \bibliography{bib/mathreview,bib/dblp,bib/custom,bib/scholar}

\appendix

% APPENDIX

    \section{Tools for analysis and implementation}
\label[section]{sec:toolbox-appendix}

\restatelemma{lem:implicit-tensor}
\begin{proof}\
\nopagebreak[4]
\vspace{-1em}
\paragraph{Tensor contraction}
We start with multiplication of two vectors $x,y \in \R^{d^2}$ into two of the modes of $\T$.
Without loss of generality (by interchange of $U$ and $V$), there are two cases: we want either to compute the vector flattening of $\transpose{(x \ot \Id_d)}U\transpose{V}(y \ot \Id_d)$, or, expressing $x = \sum_{i=0}^d r_i\ot s_i$, we want $\sum_i \transpose{(\Id_d \ot \Id_d \ot r_i)}U\transpose{V}(y \ot s_i)$.
For both these cases, we first compute $\transpose{V}(y \ot \Id_d)$.

We compute $\transpose{V}(y \ot \Id_d)$ as
$[\transpose{V}(y \ot \Id_d)]_{\cdot;i} = \transpose{V}{}_{\cdot;(\cdot,\cdot,i)}y$.
This is a concatenation of $d$ different matrix-vector multiplications using $n \times d^2$ matrices, and so it takes $O(nd^3)$ time.

Then to find $(\transpose{x} \ot \Id_d)U\transpose{V}(y \ot \Id_d)$, we simply repeat the above procedure to find $(\transpose{x} \ot \Id_d)U$ and then multiply the $d \times n$ and $n \times d$ matrices together in $O(nd^2)$ time.

To find $\sum_i \transpose{(\Id_d \ot \Id_d \ot r_i)}U\transpose{V}(y \ot s_i)$ after finding the rank decomposition $x = \sum_{i=0}^d r_i\ot s_i$ which takes $O(d^3)$ time by SVD, we multiply each $s_i$ into our computed value of $\transpose{V}(y \ot \Id_d)$ to obtain $d$ different $n$-dimensional vectors $t_i = \transpose{V}(y \ot s_i)$.
Since there are $d$ of these vectors and each is a matrix-vector multiplication with an $n \times d$ matrix, this takes $O(nd^2)$ time.
Then $\sum_i \transpose{(\Id_d \ot \Id_d \ot r_i)}Ut_i$ can be reshaped as a multiplication of a $d^2 \times nd$ reshaping of $U$ with the vector $\sum_i t_i \otimes r_i$.
It takes $O(nd^3)$ time to perform the matrix-vector multiplication, and $O(nd^2)$ time to sum up $\sum_i t_i \otimes r_i$.

\paragraph{Spectral truncation}Next, we truncate the larger-than-$1$ singular values of the $(\{1\},\{2,3\})$ and $(\{3\},\{1,2\})$ matrix reshapings $R \in \R^{d^2 \times d^4}$ of $\T$.
Without loss of generality, suppose we are in the $(\{3\},\{1,2\})$ case.
In this case, we would like to find the right-singular vectors and singular values of the operator that takes $y \in \R^{d^2}$ to the vector flattening of the $d \times d^3$ matrix $U\transpose{V}(y \ot \Id_d)$.
Letting $Z$ be the $nd \times d^2$ reshaping of $V$, this is the same as $(U \ot \Id_d)Zy$, which shares its right-singular vectors with $M:= \transpose{Z}(\transpose{U}U \ot \Id_d)Z$.

We claim that matrix-vector multiplication by $M$ can be implemented in $O(nd^3)$ time, with $O(n^2d^3)$ preprocessing time for computing the product $\transpose{U}U$.
The matrix-vector multiplications by $Z$ and $\transpose{Z}$ take time $O(nd^3)$, and then multiplying $Zy$ by $\transpose{U}U \ot \Id_d$ is reshaping-equivalent to multiplying $\transpose{U}U$ into the $n \times d$ matrix reshaping of $Zy$, which takes $O(n^2d)$ time with the precomputed $n \times n$ matrix $\transpose{U}U$.
Therefore, LazySVD~\cite[Corollary 4.4]{allen2016lazysvd} takes time $\tilde O(n^2d^3\delta^{-1/2})$ to yield a rank-$k$ eigendecomposition $P\Lambda\transpose{P}$ such that $\|M^{1/2} - P\Lambda^{1/2}\transpose{P}\| \le (1+\delta)\rho_k$.

To obtain the output $\T'$ of this procedure, let $(P\Lambda^{1/2}\transpose{P} - \Id)^{>0}$ be $P\Lambda^{1/2}\transpose{P} - \Id$ with all of its nonpositive eigenvalues removed: this may be implemented by removing nonpositive entries from $\Lambda^{1/2} - \Id$.
Then implicitly multiply $(\Id - (P\Lambda^{1/2}\transpose{P} - \Id)^{>0})$ into the third mode of $\T$ (although this matrix has rank larger than $n$, we may implement it by implicitly subtracting $(\Id \ot \Id \ot (P\Lambda^{1/2}\transpose{P} - \Id)^{>0})\T$ from $\T$).
We are trying to approximate multiplying $(\Id - (M^{1/2}-\Id)^{>0})$ into the third mode of $\T$, so let $\Delta = (M^{1/2}-\Id)^{>0} -(P\Lambda^{1/2}\transpose{P} - \Id)^{>0}$ be the difference.
Then $\|\Delta\| \le \|M^{1/2} - P\Lambda^{1/2}\transpose{P}\| \le (1 + \delta)\rho_k$, so that we suffer an additive error of at most $(1+\delta)\rho_k$ in spectral norm.
And the final error in the low-rank representation is $(\Delta \ot \Id)U\transpose{V}$.
Since $\|\Delta \ot \Id\| \le (1 + \delta)\rho_k$ and $U\transpose{V}$ has Frobenius norm $\|\T\|_F$, we find a final error of $(1 + \delta)\rho_k\|\T\|_F$ in Frobenius norm.

\paragraph{Implicit matrix multiplication}Finally, to implicitly multiply a $d^2 \times d^2$ rank-$n$ matrix $R$ into a mode of $\T$, simply store the singular value decomposition $R = P\Sigma\transpose{Q}$.
Whenever a vector needs to be multiplied into that mode in the future, multiply that vector by $R$ before carrying out the implicit tensor operation as previously specified, and if a vector needs to be output from that mode, multiply it by $\transpose{R}$ before outputting.
This incurs a time cost of $O(nd^2)$ per operation.

A special case arises in the spectral truncation operation, where we do not allow implicit multiplication to have been done in the second mode.
Suppose then without loss of generality that $R$ was multiplied into the first mode of $\T$ and we truncate the $(\{3\},\{1,2\})$ matrix reshaping.
Then we will have to compute $\transpose{U}(\dyad{R} \ot \Id_d)U$ instead of $\transpose{U}U$ in the preprocessing step.
This can be done by multiplying $\transpose{R} = Q\Sigma\transpose{P}$ with the $d^2 \times nd$ reshaping of $U$, which takes $O(n^2d^3)$ time per future spectral truncation operation.
\end{proof}

\section{Notes on \cref{fig:algs}}
\label[section]{sec:table-notes}
We record a few notes on parameter regimes used to compare various algorithms for tensor decomposition in \cref{fig:algs}.

\begin{itemize}
  \item Robust algorithms with algebraic assumptions often require $\|E\| \leq \sigma(a_1,\ldots,a_n)$, where $\sigma(a_1,\ldots,a_n)$ is some measure of well-conditioned-ness of $a_1,\ldots,a_n$, the details of which may vary from algorithm to algorithm. In this table we report results for the setting that $\sigma(a_1,\ldots,a_n) \geq \Omega(1)$; such values of $\sigma$ (for all the notions of well-conditioned-ness represented) are achieved by random $a_1,\ldots,a_n$.
  \item The algorithm of \cite{DBLP:journals/jmlr/AnandkumarGJ17} is phrased for 3-tensors rathern than 4-tensors; this is the origin of the rank bound $n \leq d^{1.5}$ rather than $d \leq n^2$ achieved by algorithms for $4$-tensors.
  In general for $k$-tensors one expects efficient algorithms to tolerate overcompleteness $n \leq d^{k/2}$ (despite tensor rank factorizations remaining unique for much larger $n$), so the overcompleteness guarantee of \cite{DBLP:journals/jmlr/AnandkumarGJ17} is comparable to the other algorithms.
  \item We have estimated the running time of the SoS algorithm of \cite{DBLP:conf/focs/MaSS16} by assuming that the semidefinite programs involved are solved using standard black-box techniques (e.g the ellipsiod method).
  \end{itemize}

    \section{Simulations for condition number of random tensors}
\label[section]{sec:sims}

In this section we report on computer simulations which strongly suggest that if the components $a_1,\ldots,a_n$ are $n \ll d^2$ random unit vectors from a variety of ensembles, then with high probability $\kappa(a_1,\ldots,a_n) \geq \Omega(1)$. 
The ensembes include:
\begin{enumerate}
\item {\em Spherical measure:} $a_1,\ldots,a_n \in \R^d$ are i.i.d. random unit vectors (see \cref{fig:sphere}).
\item {\em Sparse:} $a_1,\ldots,a_n \in \R^d$ are sampled i.i.d. by choosing $\frac{1}{4}d$ coordinates in $[d]$ uniformly at random, sampling each of those coordinates from $\cN(0,1)$, and setting the rest to $0$ (see \cref{fig:sparse}).
\item {\em Hypercube:} $a_1,\ldots,a_n \in \R^d$ are i.i.d. samples from $\{0,1\}^d$ (see \cref{fig:cube}).
\item {\em Spiked covariance:} $a_1,\ldots,a_n \in \R^d$ are sampled from $\cN(0,\Id + \lambda \cdot uu^\top)$ for a random unit vector $u$ and $\lambda >0$. 
We note that in this case, though the covariance matrix of $a_1,\ldots,a_n$ has condition number $O(\frac{1}{\lambda})$, our experimental results support the hypothesis that $\kappa(a_1,\ldots,a_n) = \Omega(1)$ for $\lambda$ as large as $\lambda = \frac{1}{2}d$ (see \cref{fig:spike}).
\end{enumerate}
These ensembles are designed to capture a number of characteristics of real data which we would like the condition number to be robust to: sparsity, discrete values, and correlations (of relatively extreme magnitude).

In each of these cases, we computed $\kappa$ for several values of $n,d$ on with $a_1,\ldots,a_n$ taken to be i.i.d. uniformly random unit vectors.
Our results are consistent with the hypothesis that (with high probability) $\kappa(a_1,\ldots,a_n) \ge c - \tilde{O}(n/d^2)$ for some absolute constant $c \approx \frac{1}{2}$.

\begin{figure}
\centering
\includegraphics[scale=0.7]{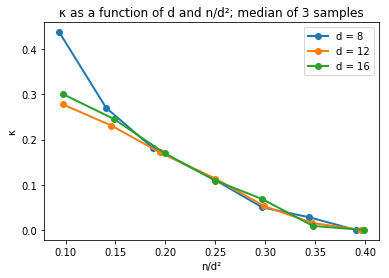}
\caption{Condition number as a function of dimension $d$ and lifted overcompleteness $n/d^2$ for vectors sampled from the spherical measure.}
\label{fig:sphere}
\end{figure}
\begin{figure}
\centering
\includegraphics[scale=0.7]{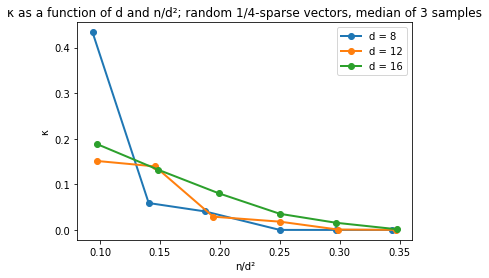}
\caption{Condition number as a function of dimension $d$ and lifted overcompleteness $n/d^2$ for random sparse vectors.}
\label{fig:sparse}
\end{figure}
\begin{figure}
\centering
\includegraphics[scale=0.7]{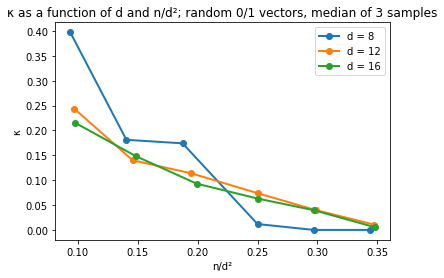}
\caption{Condition number as a function of dimension $d$ and lifted overcompleteness $n/d^2$ for vectors sampled from the Boolean hypercube.}
\label{fig:cube}
\end{figure}
\begin{figure}
\centering
\includegraphics[scale=0.7]{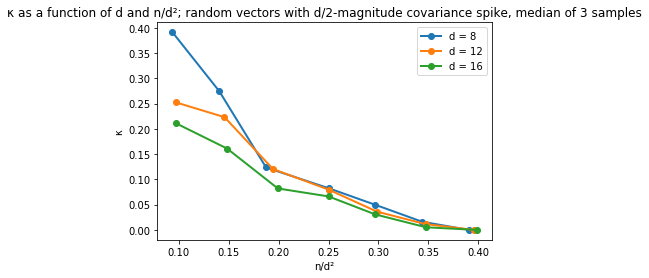}
\caption{Condition number as a function of dimension $d$ and lifted overcompleteness $n/d^2$ for vectors sampled from $\cN(0,\Id + \frac{1}{2}d\cdot uu^\top)$ for a random unit vector $u$.}
\label{fig:spike}
\end{figure}

We expect the values of $d,n$ employed here -- $d \approx 10, n \approx 100$, so that $\kappa$ is the condition number of a certain random matrix of dimensions about $10^3 \times 10^3$ -- to be predictive of the asymptotic behavior of $\kappa(a_1,\ldots,a_n)$, because the spectra of random matrices display strong concentration even in relatively small dimensions.

We also note that the hypothesis that $\kappa > c - \tilde{O}(n/d^2)$ is well supported by the fact that relatively standard techniques from random matrix theory yield the same bound for a closely related random matrix to $K(a_1,\ldots,a_n)$ from \cref{def:kappa}.
In particular, the following may be proved by a long but standard calculation, using Matrix Bernstein and decoupling inequalities:

\begin{lemma}[Condition number of basic swap matrix]
  \label[lemma]{lem:basic-swap-cond}
  Let $a_1,\ldots,a_n$ be independent random $d$-dimensional unit vectors.
  Let $B_i \in \R^{(d-1) \times d}$ be a random basis for the orthogonal complement of $a_i$ in $\R^d$.
  Let $P \in \R^{d^3 \times d^3}$ be the permutation matrix which swaps second and third modes of $(\R^d)^{\tensor 3}$.
  Let
  \[
    A = \E_{a} (a \tensor a \tensor \Id)(a \tensor a \tensor \Id)^\top\mper
  \]
  Let $R \in \R^{d^3 \times n(d-1)}$ have $n$ blocks of dimensions $d^3 \times (d-1)$, where the $i$-th block is
  \[
  R_i = A^{-1/2} (a_i \tensor a_i \tensor B_i) - P A^{-1/2} (a_i \tensor a_i \tensor B_i )
  \]
  where we abuse notation and denote the PSD square root of the pseudoinverse of $A$ by $A^{-1/2}$.
  Then there is a function $d'(d) = \Theta(d^2)$ such that $\E \|R^\top R - d'(d) \cdot \Id\| \leq O(\log d)^2 \cdot \max(d\sqrt{n}, n, d^{3/2})$.
  In particular, if $d \ll n \ll d^2$,
  \[
  \E \| \tfrac 1 {d'(d)} R^\top R - \Id \| \leq O( n (\log d)^2 / d^2)\mper
  \]
\end{lemma}

The matrix $R$ from this lemma differs from $K$ only in the use of $A^{-1/2}$ in place of $(H_1^\top H_1)^{-1/2}, (H_2^\top H_2)^{-1/2}$.
While we expect $A^{-1/2}$ (a non-random matrix) to be close to both $(H_1^\top H_1)^{-1/2}, (H_2^\top H_2)^{-1/2}$ (at least in subspaces close to $Im (H_1)$ and $Im(H_2)$, respectively) establishing this is a challenging task in random matrix theory -- in particular, both inverses of random matrices and spectra of random matrices with dependent entries are notoriously difficult to analyze.
We leave this challenge to future work.

\end{document}